\newcommand{\DocumentType}{thesis} 
\newcommand{\DocumentLanguage}{en} 
\newcommand{\PaperSize}{a4paper} 
\newcommand{\Twosided}{true} 
\renewcommand{\cleardoublepage}{\clearpage\if@twoside\ifodd\c@page\else\thispagestyle{plain}\hbox{}\newpage\if@twocolumn\hbox{}\newpage\fi\fi\fi}
\newcommand{\emptydoublepage}{\clearpage\if@twoside\ifodd\c@page\else\thispagestyle{empty}\hbox{}\newpage\if@twocolumn\hbox{}\newpage\fi\fi\fi}%
\newcommand{\emptypage}{\clearpage\thispagestyle{empty}\hbox{}\newpage\if@twocolumn\hbox{}\newpage\fi}%
\renewcommand\l@table{\l@figure}%
\renewcommand\l@lstlisting{\l@figure}%
\renewcommand{\@pnumwidth}{1.85em}
\renewcommand{\fps@figure}{hbtp}%
\renewcommand{\fps@table}{hbtp}%
\newcommand{\newchapter}[2]{\FloatBarrier\chapter{#1}\label{chp:#2}}
\newcommand{\newchapterNoNr}[2]{\FloatBarrier\chapter*{#1}\label{chp:#2} \addcontentsline{toc}{chapter}{#1}}
\newcommand{\newsection}[2]{\FloatBarrier\vspace{5mm}\section{#1}\label{sec:#2}}%
\newcommand{\newsubsection}[2]{\FloatBarrier\vspace{3mm}\subsection{#1}\label{sec:#2}}%
\newcommand{\newsubsubsection}[2]{\vspace{2mm}\subsubsection{#1}\label{sec:#2}}%
\newcommand{\nxtpar}{\par\medskip}
\newcommand{\openingquote}[2]{\hfill\parbox[t]{0.55\textwidth}{\itshape\raggedleft{"#1"}\\\footnotesize -- #2}\nxtpar}%
\newcommand{\pwd}{.} 
\newcommand{\definition}{=}
\newcommand{\vm}[1]{\ensuremath{\bm{#1}}}
\renewcommand{\log}{\ensuremath{\text{log}}}
\newcommand{\bigO}{\mathcal{O}}
\DeclareMathOperator{\E}{\mathbb{E}}
\DeclareMathOperator*{\argmax}{arg\,max}
\DeclareMathOperator*{\argmin}{arg\,min}
\DeclareMathOperator{\arctanh}{atanh}
\DeclareMathOperator\arcoth{arcoth}
\DeclareMathOperator{\sgn}{sgn}
\renewcommand{\Re}[1]{\ensuremath{\text{Re}\!\left\{#1\right\}}}
\renewcommand{\Im}[1]{\ensuremath{\text{Im}\!\left\{#1\right\}}}
\newcommand{\REAL}{\ensuremath{\mathbb{R}}}
\newcommand{\REALPos}{\mathbb{R}_+^*}
\newcommand{\INTEGER}{\ensuremath{\mathbb{Z}}}
\newcommand{\INTEGERPos}{\mathbb{Z}_+^*}
\newcommand{\COMPLEX}{\ensuremath{\mathbb{C}}}
\newcommand{\POLY}{\ensuremath{\mathbb{K}}}
\newcommand{\stationaryPoint}{\circ}				
\newcommand{\sampleSpace}[1]{\mathcal{\uppercase{#1}}}   		
\newcommand{\productSpace}[2]{\mathcal{\uppercase{#1}}^{#2}}
\newcommand{\RV}[2][]				
{\ifthenelse{\equal{#1}{}}
	{X_{#2}} {\uppercase{#1}_{#2}}}			
\newcommand{\RVSet}[2]  
{\ifthenelse{\equal{#1}{}}
	{\mathbf{X}_{#2}} {\mathbf{\uppercase{#1}}_{#2}}}	
\newcommand{\RVval}[2][]
{\ifthenelse{\equal{#1}{}}
	{x_{#2}} {\lowercase{#1}_{#2}}}			
\newcommand{\RVvalSet}[2][]				
{\ifthenelse{\equal{#1}{}}
	{\mathbf{x}_{#2}} {\mathbf{\lowercase{#1}}_{#2}}}	
\newcommand{\pmf}[1]{P_{#1}}				
\newcommand{\marginals}[1]{\pmf{#1}}			
\newcommand{\joint}{\pmf{\setOfNodes}(\RVvalSet[]{})}	
\newcommand{\jointApprox}{\pmfApprox{\setOfNodes{}}(\RVvalSet{})}
\newcommand{\pmfApprox}[1]{\tilde{P}_{#1}}		
\newcommand{\pseudomarginals}[1][]			
{\ifthenelse{\equal{#1}{}}
	{{\tilde{P}_{{{B}}}}} {{\tilde{P}_{{{B}[#1]}}}}}	
\newcommand{\pseudomarginalsMinGlobal}[1][]{\ifthenelse{\equal{#1}{}}{\pseudomarginals^*}{\pseudomarginals[#1]^{*}}}
\newcommand{\pseudomarginalsMinLocal}[1][]{\ifthenelse{\equal{#1}{}}{\pseudomarginals^{m}}{\pseudomarginals[#1]^{m}}}
\newcommand{\pseudomarginalsMinLocalNeg}[1][]{\ifthenelse{\equal{#1}{}}{\pseudomarginals^{n}}{\pseudomarginals[#1]^{n}}}
\newcommand{\pseudomarginalsExact}{P_{B}}
\newcommand{\singleExact}[1]{\pmf{\RV{#1}}}
\newcommand{\singleExactVal}[2]{\singleExact{#1}(\RV{#1}=#2)}
\newcommand{\singleApprox}[1]{\pmfApprox{\RV{#1}}}
\newcommand{\pairwiseExact}[2]{\pmf{\RV{#1},\RV{#2}}}
\newcommand{\pairwiseApprox}[2]{\pmfApprox{\RV{#1},\RV{#2}}}
\newcommand{\pairwiseShort}[2]{\Phi(\RVval{#1},\RVval{#2})}
\newcommand{\pairwiseSBP}[3]{\Phi_{#3}(\RVval{#1},\RVval{#2})}
\newcommand{\localShort}[1]{\Phi(\RVval{#1})}
\newcommand{\localSBP}[2]{\Phi_{#2}(\RVval{#1})}
\newcommand{\MPolytope}{\mathbb{M}}			
\newcommand{\LPolytope}{\mathbb{L}}			
\newcommand{\graph}{\mathcal{G}}		
\newcommand{\setOfNodes}[1][]
{\ifthenelse{\equal{#1}{}}
	{\mathbf{X}}{\mathbf{\uppercase{#1}}}}	
\newcommand{\setOfEdges}{\mathbf{E}}		
\newcommand{\edge}[2]{(#1,#2)}			
\newcommand{\neighbors}[1]{\partial({#1})}	
\newcommand{\neighborsWO}[2]			
{\{\neighbors{#1} \backslash \RV{#2}\}}
\newcommand{\nodeDegree}[1]{d_{#1}}		
\newcommand{\averageDegree}[1][]{\hat{d}_{#1}}	
\renewcommand{\path}{\mathcal{P}}		
\newcommand{\ugm}[1][]{\mathcal{U}_{#1}}	
\newcommand{\clique}[1]{{C}_{#1}}		
\newcommand{\setOfCliques}{\mathbf{C}}		
\newcommand{\setOfPotentials}{ \Psi}		
\newcommand{\setOfPotentialsSBP}[1]{\Psi_{#1}}
\newcommand{\potential}[1]{\Phi_{#1}}
\newcommand{\potentialCliquePairwise}[2]{
	\Phi_{C_{(#1,#2)}}(\RVval{#1},\RVval{#2})}
\newcommand{\pairwise}[3]{ 
	\Phi_{\RV[#1]{#2},\RV[#1]{#3}} (\RVval[#1]{#2},\RVval[#1]{#3})}
\newcommand{\local}[2]{\Phi_{\RV[#1]{#2}}(\RVval[#1]{#2}) }
\newcommand{\partitionFunction}{\mathcal{Z}}
\newcommand{\partitionBethe}{\mathcal{Z}_B}
\newcommand{\partitionBetheLocalMin}[1]{\partitionBethe^{#1}}
\newcommand{\coupling}[2]{J_{#1#2}}
\newcommand{\field}[1]{\theta_{#1}}
\newcommand{\parameter}{(\field{},\coupling{}{})}
\newcommand{\energy}[1]{E\big( #1 \big)}
\newcommand{\entropy}[1]{S\big( #1 \big)}
\newcommand{\FGibbs}[1] {\ifthenelse{\equal{#1}{}}{\mathcal{F_G}}{\mathcal{F_G}(#1)}}
\newcommand{\FB}{\mathcal{F_B}}
\newcommand{\EB}{E_{\mathcal{B}}}
\newcommand{\SB}{S_{\mathcal{B}}}
\newcommand{\FBGlobalMin}{\FB^*}
\newcommand{\FBStationary}{\FB^{\circ}}
\newcommand{\FBLocalMin}[1]{\FB^{#1}}
\newcommand{\meanMinLocal}[1]{m_{#1}}
\newcommand{\msg}[4][]						
{\ifthenelse{\equal{#4}{}} 
	{\mu^{#1}_{#2 #3}(x_{#3})} {\mu^{#1}_{#2 #3}(\RV{#3}=#4)}}
\newcommand{\msgReparam}[3][]{\nu^{#1}_{#2 #3}}
\newcommand{\fpMsg}[2]{\msg[\stationaryPoint]{#1}{#2}{}}	
\newcommand{\msgNorm}[3][]{\alpha^{#1}_{#2 #3}}
\newcommand{\setOfMessages}[1][]  {\vm{\mu}^{#1}}	
\newcommand{\setOfMsgReparam}[1][]{\vm{\nu}^{#1}}
\newcommand{\fpSetOfMessages}[1][]
{\ifthenelse{\equal{#1}{}} 
	{\vm{\mu}^{\stationaryPoint}}
	{\vm{\mu}^{(#1)}}}
\newcommand{\fpSetOfMessagesReparam}[1][]
{\ifthenelse{\equal{#1}{}} 
	{\vm{\nu}^{\stationaryPoint}}
	{\vm{\nu}^{(#1)}}}
\newcommand{\setOfNorm}[1][]{\vm{\alpha}^{#1}}		
\newcommand{\fpSetOfNorm}{\setOfNorm[\stationaryPoint]}
\newcommand{\BP}{\mathcal{BP}}
\newcommand{\mapBP}[2][]{\mathcal{BP}_{#1}\left(#2\right)}
\newcommand{\BPD}{\text{BP}_{\text{D}}}
\newcommand{\SBP}{\text{SBP}}
\newcommand{\BPVariant}[1]{\mathcal{BP}_{\text{#1}}}
\newcommand{\mean}[1]{m_{#1}}
\newcommand{\correlation}[2]{\chi_{#1 #2}}
\newcommand{\meanAvg}{\langle \mean{} \rangle}
\newcommand{\meanAvgApprox}{\langle \tilde{\mean{}} \rangle}
\newcommand{\mse}[1][]{\text{MSE}}
\newcommand{\mseb}{\text{MSE}_{\mathcal{B}}}
\newcommand{\fp}[1]{#1^{\stationaryPoint}}			
\newcommand{\stateVec}[1]{\vm{#1}}			 	
\newcommand{\map}{\mathcal{F}}					
\newcommand{\gradient}{\nabla}
\newcommand{\Hessian}{\nabla^2}
\newcommand{\equationSystem}[1]{\vm{#1}}
\newcommand{\eq}[1][]{f_{#1}}					
\newcommand{\setOfBPEq}{\setOfEq (\setOfMessages, \setOfNorm )}	
\newcommand{\BPEq}[1][]{\eq[#1]  (\setOfMessages, \setOfNorm )}	
\newcommand{\setOfStartEq}{\startSys (\setOfMessages, \setOfNorm )}
\newcommand{\coeff}[1]{a_{#1}}
\newcommand{\Jacobian}[1][]{\mathcal{F}_{#1}'(\fpSetOfMessagesReparam)}
\newcommand{\JacobianGeneral}[1]{\mathcal{F}'(#1)}
\newcommand{\Spectrum}[2][]{\Lambda{#1}\big(#2\big)}
\newcommand{\Radius}[2][]{\rho_{#1}\big(#2\big)}
\newcommand{\ev}[1]{\lambda_{#1}}
\newcommand{\startSys}{\equationSystem{Q}}
\newcommand{\setOfEq}{\equationSystem{F}}				
\newcommand{\homotopy}{\equationSystem{H}}
\newcommand{\variety}{\mathbf{V}}
\newcommand{\ideal}[1]{\langle{#1}_1,\ldots,{#1}_s\rangle}
\newcommand{\polytope}[1]{S_{#1}}
\newcommand{\polytopeLifted}[1]{\hat{S}_{#1}}
\newcommand{\allConfigurations}{\RVvalSet{} \in \sampleSpace{X}^N}	
\newcommand{\partitionBetheWithP}{\partitionBethe(\pseudomarginals)}
\newcommand{\realSol}{\variety_{\REALPos}(\setOfEq)}
\newcommand{\decisionP}{\mathcal{D}_P}
\newcommand{\decisionZ}{\mathcal{D}_Z}
\newcommand{\scaling}{\zeta}		
\newcommand{\iteration}[1]{{#1}}	
\newcommand{\FRegion}{(F)}
\newcommand{\AFRegion}{(AF)}
\newcommand{\PRegion}{(P)}
\newcommand{\eqSys}{\setOfBPEq}
\newcommand{\fb}{\FB}
\newcommand{\zbApproximate}{\partitionBethe}
\newcommand{\ScaleMatrix}{\boldsymbol{W}}
\newcommand{\cavityField}[3][]{h^{#1}_{#2#3}}
\newcommand{\patch}[1]{\graph_{#1}}
\newcommand{\patchRV}[1]{\setOfNodes_{{#1}}}
\newcommand{\patchEdges}[1]{\setOfEdges_{{#1}}}
\newcommand{\setOfStableSol}{\mathbf{S}}
\newcommand{\setOfMinimaSol}{\mathbf{M}}
\newcommand{\setOfAllSol}{\mathbf{T}}
\newcommand{\fixedPointTuple}[1]{\big(\partitionBethe^{#1},\pseudomarginals^{#1}  \big)}
\newcommand{\Region}{(II)}
\newcommand{\EPartition}[1]{E_{\partitionFunction}(#1)}
\newcommand{\EMarginal}[1]{E_{P}(#1)}
\newcommand{\effectiveField}[1]{\tilde{\field{#1}}}
\newcommand{\mismatch}[2]{Q_i({#1},{#2})}
\newcommand{\startp}{\pseudomarginalsMinLocal(\scaling=0)}
\newcommand{\terminalp}{\pseudomarginalsMinLocal(\scaling=1)}
\newcommand{\pbs}[1]{\let\temp=\\#1\let\\=\temp}%
\definecolor{bkred}{rgb}{0.9,0,0}
\definecolor{bkgreen}{rgb}{0,0.67,0}
\definecolor{bkblue}{rgb}{0,0,0.75}
\newcommand{\DocumentTitle}{Understanding the Behavior \\ of Belief Propagation}
\newcommand{\DocumentSubtitle}{Convergence Properties, Approximation Quality, and Solution Space Analysis}
\newcommand{\ShortTitle}{Understanding the Behavior of Belief Propagation} 
\newcommand{\DocumentAuthor}{Dipl.-Ing. Christian Knoll, BSc.}
\newcommand{\DocumentDate}{Graz, November 8, 2019}
\newcommand{\ThesisType}{PhD Thesis}
\newcommand{\Organizations}{Signal Processing and Speech Communications Laboratory \\ Graz University of Technology, Austria} 
\newcommand{\Supervisors}{Assoc.Prof. Dipl.-Ing. Dr.mont. Franz Pernkopf} 
\newcommand{\Assessors}{Assoc.Prof. Dipl.-Ing. Dr.mont. Franz Pernkopf \\ Dr. Adrian Weller}
\newcommand{\SpecialNote}{}
\newcommand{\ConfidNote}{November 8, 2019}
\newcommand{\ContentBox}[2]{\ifthenelse{\boolean{DisplayContentBoxes}}{\FloatBarrier\nxtpar\colorbox{yellow}{\parbox{\textwidth}{\footnotesize#1\par\hrulefill\par Number of pages: #2}}\nxtpar}{}}
\theoremstyle{plain}
\newtheorem{lm}{Lemma}
\newtheorem{thm}{Theorem}
\newtheorem{prop}{Proposition}
\newtheorem{cor}{Corollary}[thm]
\theoremstyle{definition}
\newtheorem{defn}{Definition}[section]
\newtheorem{ex}{Example}
\renewcommand{\emptyset}{\text{\O}}
\newenvironment{example}%
{\begin{leftbar}\begin{ex}}%
		{\end{ex}\end{leftbar}}
{\begin{leftbar}}%
	{\end{leftbar}}
\begin{document}
\begin{titlepage}
	\vspace*{-1cm}
	\hfill
	\begin{minipage}{4cm}
		\includegraphics[width=40mm]{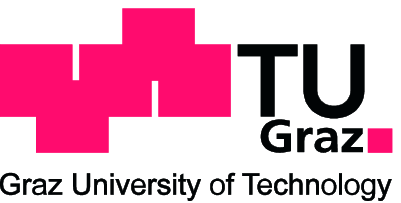}
	\end{minipage}\\
	\vspace*{2.2cm}
	\begin{center}
		\hrulefill
		\textsc{\large~\ThesisType~}
		\hspace{-2mm} \hrulefill \\[5.5mm]
		\parbox{\textwidth}{\centering\scshape\huge\linespread{0.9}\selectfont\DocumentTitle}
		\\[4mm] \hrulefill \\[2mm]
		{\centering\parbox{0.9\textwidth}{\centering\Large\DocumentSubtitle}} \\[2.0cm] \par
		\ifthenelse{\equal{\DocumentLanguage}{en}}{
			conducted at the \\ \Organizations
			\\[1cm]\par by \\ \DocumentAuthor
			\\[2.0cm]\par Supervisors: \\ \Supervisors
			\\[1cm]\par Assessors/Examiners: \\ \Assessors
			\\[1.75cm]\par
		}{}
		\ifthenelse{\equal{\DocumentLanguage}{de}}{
			durchgeführt am \\ \Organizations
			\\[1cm]\par von \\ \DocumentAuthor
			\\[2.5cm]\par Betreuer: \\ \Supervisors
			\\[1cm]\par Begutachter: \\ \Assessors
			\\[1.75cm]\par
		}{}
		\vfill{\flushright \DocumentDate}\\
		{\scriptsize\vspace*{3mm}\SpecialNote\vspace*{-8mm}}
	\end{center}
\end{titlepage}\emptydoublepage

\ifthenelse{\equal{\DocumentType}{thesis}}{\pagestyle{empty}\pagenumbering{roman}}{}



\ifthenelse{\equal{\DocumentType}{thesis}}{\emptydoublepage \thispagestyle{empty} \vspace*{1cm}
	
	\ifthenelse{\equal{\DocumentLanguage}{en}}{
		\begin{center}\Large\bfseries Statutory Declaration\end{center}\vspace*{1cm}
		\noindent I declare that I have authored this thesis independently, that I have not used other than the declared sources$/$resources, and that I have explicitly marked all material which has been quoted either literally or by content from the used sources.
		The text document uploaded to \mbox{TUGRAZonline} is identical to the present doctoral dissertation.
		\par\vspace*{4cm}
		\centerline{
			\begin{tabular}{m{1.5cm}cm{1.5cm}m{3cm}m{1.5cm}cm{1.5cm}}
				\cline{1-3} \cline{5-7}
				& date & & & & (signature) &\\
		\end{tabular}}
	}
	
	\ifthenelse{\equal{\DocumentLanguage}{de}}{
		\begin{center}\Large\bfseries Eidesstattliche Erklärung\end{center}\vspace*{1cm}
		Ich erkläre an Eides statt, dass ich die vorliegende Arbeit selbstständig verfasst, andere als die angegebenen Quellen$/$Hilfsmittel nicht benutzt, und die den benutzten Quellen wörtlich und inhaltlich entnommene Stellen als solche kenntlich gemacht habe.
		\par\vspace*{4cm}
		\centerline{
			\begin{tabular}{m{1.5cm}cm{1.5cm}m{3cm}m{1.5cm}cm{1.5cm}}
				\cline{1-3} \cline{5-7}
				& Graz, am & & & & (Unterschrift) &\\
		\end{tabular}}
	}
	
}{}

\emptydoublepage
\tableofcontents

\ifthenelse{\equal{\DocumentType}{thesis}}{\emptydoublepage\pagestyle{scrheadings}\pagenumbering{arabic}\mainmatter}

\ifthenelse{\equal{\DocumentType}{thesis}}{\setcounter{page}{9}}{}

\emptydoublepage

\renewcommand{\pwd}{chapter0}
\newchapterNoNr{Abstract}{abstract}
Probabilistic graphical models are a powerful concept for modeling high-dimensional distributions.
Besides modeling distributions, probabilistic graphical models also provide an elegant framework for performing statistical inference;
because of the high-dimensional nature, however, one must often use approximate methods for this purpose.

Belief propagation performs approximate inference, is efficient, and looks back on a long success-story.
Yet, in most cases, belief propagation lacks any performance and convergence guarantees.
Many realistic problems are presented by graphical models with loops, however, in which case belief propagation is neither guaranteed to provide accurate estimates nor that it converges at all.

This thesis investigates how the model parameters influence the performance of belief propagation.
We are particularly interested in their influence on (i) the number of fixed points, (ii) the convergence properties, and (iii) the approximation quality.
For this purpose, we take a different perspective on belief propagation and realize that the fixed points define a set of polynomial equations -- albeit a large one.
Solving polynomial equations of this size is problematic; 
nonetheless, we present  the numerical polynomial homotopy continuation method that is capable of solving the fixed point equations, the solutions of which are the fixed points of belief propagation.

The solutions to the fixed point equations give us knowledge of the whole solution space and serve as a stepping stone for analyzing belief propagation's properties.
In particular, we observe a large variety of marginal accuracy across all fixed points.
This, to some degree, explains the large discrepancy in the performance of belief propagation.
Another important aspect of belief propagation's fixed points is their stability, that is if belief propagation can -- at least in principle -- converge to a given fixed point.
Existing stability analyses were limited to models without local potentials for the lack of knowing the solution space.
The capability to solve the fixed point equations thus allows us to extend the stability analysis to a wide range of models.
In doing so, we obtain novel insights into how the model parameters and the model size affect the stability.
In particular, we find that strong pairwise potentials degrade the performance, whereas strong local potentials enhance the performance.

Moreover, our theoretical findings inspire a simple, yet powerful, modification of belief propagation.
We present self-guided belief propagation that starts from a simple model (for which belief propagation obtains the exact solutions) and iteratively adapts it to the desired model.
As the model is modified, self-guided belief propagation keeps track of the solution;
this way, it improves upon standard belief propagation and it obtains the best possible solution for attractive models with unidirectional local potentials.
For more general models, we empirically show that self-guided belief propagation maintains its favorable properties, converges more often, and is superior in terms of marginal accuracy.

Finally, we question whether the global minimum of the Bethe free energy provides the most accurate marginals.
This is a common conjecture that inspired a range of methods that aim to minimize the Bethe free energy.
In the past, the studied models were either too simplistic or too complex as to the true nature of this relationship.
Therefore, we must first introduce a novel class of models -- termed patch potential models -- which are simple enough so that we can compute all fixed points.
Yet, patch potential models are complex enough to possess a rich and non-trivial solution space.
A study of this solution space proves this conjecture wrong and, additionally, explains the nature of the difference between accurate marginals and good approximations of the free energy.

  \emptydoublepage
\newchapterNoNr{Acknowledgments}{acknowledgments}
There are a lot of people who influenced this thesis and made working on it much more enjoyable.
Here I want to express my gratitude for all of you.

First of all, I must thank my supervisor Franz Pernkopf for being such a great advisor.
You somehow always managed to find time for stimulating discussions, both about research and about life in general.
Most importantly, however, you not only granted me the freedom to pursue my own ideas but you also believed in me when I struggled to do so.

Thank you, Bernhard and Gernot for supervising my Master's thesis.
It is because of you I had such a good experience back then which made me seriously consider pursuing a PhD at this lab.

I want to thank all members of the SPSC I had the pleasure to meet during my time here -- you are responsible for creating an awesome environment to work in.
There are several people I want to thank in particular:
Robert and Sebastian for the warm welcome I received when starting here, for the thought-provoking discussions, and for showing me how to survive in academia.
Josef for being the best room-mate I can think of, for sharing the white-board with me, and for introducing me to space-exploration (as well as for discussing all kinds of research questions).
Wolfgang for always having an open door and an open mind (and a good supply of sweets).
Johannes for our endless discussions about research, teaching, and -- most importantly -- everything else.
Thomas for making sure that our kitchen is always well-stocked with the essentials.
Stefan, Erik, Jamilla, Michi, Martin T., Elmar, Johanna and Alex for regularly testing the capacity of our coffee-machine -- you certainly enriched the time I spent standing in the queue.
I am pretty sure I forgot to mention some of you -- please forgive me and do not take this personally!

Looking back at how I got here, I must thank those who probably shaped me the most during my whole life: A big thank you to my parents and my sister.
Thanks for your support, for always being honest, and for listening to me whenever I need someone to talk to.

Finally, I must thank my own small family.
Thank you, Jakob and Hanna, for constantly reminding me that there are things in life far more important than my research.
And, of course, I own my deepest gratitude to you, Kati!
Thank you for your endless support; 
thank you for sometimes understanding me better than I do myself; 
and thank you for learning me how to appreciate and enjoy even the little success-stories in life.

\emptydoublepage

\newchapterNoNr{Notational Conventions}{notation}
\begin{table}
		\begin{tabular}{p{0.3\linewidth} p{0.7\linewidth}}
			\textbf{General Notation} & \\
			\toprule
			$\REALPos$ \hspace*{5cm }& positive real numbers, excluding zero \\
			$\INTEGERPos$ & positive integer numbers, excluding zero\\
			$\gradient$ \hspace*{5cm }& gradient \\
			$\Hessian$ \hspace*{5cm }&  Hessian  \\
			$D(\pmfApprox{\setOfNodes{}}||\pmf{\setOfNodes{}})$ & Kullback Leibler divergence between $\pmfApprox{\setOfNodes{}}$ and $\pmf{\setOfNodes{}}$\\
			$\Re{\cdot}$, $\Im{\cdot}$ & real and imaginary part \\
			$i$ & imaginary unit\\
			$|\cdot|$ & magnitude or absolute value\\
			$\oplus$ & exclusive disjunction\\
			$\sgn (\cdot)$ & sign function\\
			$n\pmod{m}$ & n modulo m\\
			$\mathbf{1}$ & indicator function\\
			$\log(\cdot)$ & natural logarithm\\
			$||\cdot||_p$ & $l_p$-norm\\
			$\mathcal{N}(\mu,\sigma^2)$ & Gaussian distribution with mean $\mu$ and variance $\sigma^2$\\
			$\mathcal{U}(a,b)$ & uniform distribution on $(a,b)$
		\end{tabular}
	\end{table}
	\begin{table}
		\begin{tabular}{p{0.3\linewidth} p{0.7\linewidth}}
			\textbf{Probability} & \\
			\toprule
			$\RV{}, \RV[y]{},\ldots $ \hspace*{5cm }& random variables \\
			$\sampleSpace{X}, \sampleSpace{Y},\ldots$ \hspace*{5cm }& range of a discrete random variable \\
			$\RVval[x]{}, \RVval[y]{},\ldots $ \hspace*{5cm }& value of random variable \\
			$\setOfNodes{}$ \hspace*{5cm }& set of random variables \\
			$\RVvalSet{}$ \hspace*{5cm }& values for a set of random variables (configuration) \\
			$\sampleSpace{X}^N$ \hspace*{5cm }& product space, range of $\setOfNodes$\\
			$\pmf{\RV{}}(\RVval{})$ \hspace*{5cm }& probability distribution of $\RV{}$ \\
			$\pmfApprox{X}(x)$ & approximation of the probability distribution of $\RV{}$\\
			$\pseudomarginals$ \hspace*{5cm }& pseudomarginals (set of all singleton and pairwise marginals) \\
			$Z_i$, $Z_{ij}$ \hspace*{5cm }& normalization terms for approximated marginals \\
			$\pmf{\RV{}| \RV[y]{}}(\RVval{}|\RVval[y]{})$ \hspace*{5cm }& conditional probability distribution of $\RV{}$ given    $\RV[y]{}$\\
			
			$\pmf{\setOfNodes{}}(\RVvalSet{})$ \hspace*{5cm }& Joint probability distribution of $\setOfNodes{}$ \\
			$\E(x)$ & expected value of $\RV{}$\\
			$\mean{i} $ \hspace*{5cm }& mean of $\RV{i}$\\
			$\meanAvg$ & expected mean of $\setOfNodes$\\
			$\correlation{i}{j}$ & correlation between $\RV{i}$ and $\RV{j}$\\

		\end{tabular}
	\end{table}

	\begin{table}
		\begin{tabular}{p{0.3\linewidth} p{0.7\linewidth}}
			\multicolumn{2}{l}{\textbf{Graphs}} \\
			\toprule
			$\graph$ \hspace*{5cm }& undirected graph \\
			$\RV{1}, \RV{2}, \ldots$ \hspace*{5cm }& nodes \\
			$\setOfNodes{}$ \hspace*{5cm }& set of nodes \\
			$N$ \hspace*{5cm }& number of nodes \\
			$\edge{i}{j}$ \hspace*{5cm }& edge between $\RV{i}$ and $\RV{j}$ \\
			$m$ & number of the edge $\edge{i}{j}$ under some ordering\\
			$\setOfEdges{}$ \hspace*{5cm }& set of edges \\
			
			$\neighbors{i}$ \hspace*{5cm }& neighbors of $\RV{i}$ \\
			$\nodeDegree{i}$ \hspace*{5cm }& degree of $\RV{i}$ \\
			$\averageDegree[\graph]$ \hspace*{5cm }& average degree of $\graph$ \\
			$\graph'$ \hspace*{5cm }& subgraph of $\graph$ induced by $\setOfNodes'$\\
			$\vm{A}$ \hspace*{5cm }& adjacency matrix \\

		\end{tabular}
	\end{table}
	
	\begin{table}
		\begin{tabular}{p{0.3\linewidth} p{0.7\linewidth}}
			\multicolumn{2}{l}{\textbf{Graphical Models}} \\
			\toprule
			$\ugm$ \hspace*{5cm }& undirected graphical model \\
			$\clique{i}$ \hspace*{5cm }& clique \\
			$\clique{(i,j,\ldots)}$ \hspace*{5cm }& clique consisting of $\RV{i},\RV{j},\ldots$ \\
			$\setOfCliques$ \hspace*{5cm }& set of cliques \\
			$\potential{\clique{{i}}}(\RVvalSet{\clique{i}})$ \hspace*{5cm }& clique-potential \\
			$\setOfPotentials$ \hspace*{5cm }& set of potentials \\
			$\local{x}{i}$, $\localShort{i}$ \hspace*{5cm }& local potential \\
			$\pairwise{x}{i}{j}$, $\pairwiseShort{i}{j}$ \hspace*{5cm }& pairwise potential \\
			$\partitionFunction$ \hspace*{5cm }& partition function \\ 
			$E(\RVvalSet{})$ \hspace*{5cm }& energy of the configuration $\RVvalSet{}$ \\
			$\coupling{i}{j}$ \hspace*{5cm }& coupling \\
			$\field{i}$ \hspace*{5cm }& local field \\
			$\sigma_{i}$ \hspace*{5cm }& spin \\
			$\bm{\sigma}$ \hspace*{5cm}& configuration of spins\\
			$\beta$ \hspace*{5cm }& inverse temperature \\
			$H$ \hspace*{5cm }& external magnetic field \\
			$J_{\path}$\hspace*{5cm }& product of couplings along path $\path$\\
		\end{tabular}
	\end{table}
	\begin{table}
		\begin{tabular}{p{0.3\linewidth} p{0.7\linewidth}}
			\multicolumn{2}{l}{\textbf{Dynamical Systems and Equation-Systems}} \\
			\toprule
			$\stateVec{x}$ \hspace*{5cm }& state vector \\
			$\stateVec{x}(n), \stateVec{x}^n$ & state vector $\stateVec{x}$ at time index $n$\\
			$\fp{\stateVec{x}}$ \hspace*{5cm }& fixed point of $\stateVec{x}$\\
			$U(\stateVec{x})$ & $\epsilon$-neighborhood of $\stateVec{x}$\\
			$\map(\cdot)$ \hspace*{5cm }& discrete-time map \\
			$\vm{B}$ & system matrix \\
			$\JacobianGeneral{\stateVec{x}}$ & Jacobian matrix\\
			$\ev{i}$ & eigenvalue\\
			$\ev{max}$& eigenvalue with the largest magnitude\\
			$\Spectrum{\vm{A}}$ & set of all eigenvalues for the matrix $\vm{A}$\\
			$\Radius{\vm{A}}$ & spectral radius of the matrix $\vm{A}$\\

			$\eq[i](\stateVec{x})$ \hspace*{5cm }& function or polynomial equation \\
			$\coeff{k}$ \hspace*{5cm }& polynomial coefficient \\
			$\POLY[x_1,\ldots,x_n]$ \hspace*{5cm }& polynomial ring \\
			
			$\setOfEq(\stateVec{x}), \setOfEq(x_1,\ldots, x_n)$ \hspace*{5cm }& system of polynomial equations\\
			
			$\variety(\setOfEq)$ \hspace*{5cm }& variety (set of solutions) of $\setOfEq(\stateVec{x})$ \\
			$\variety_{\REAL}(\setOfEq), \variety_{\REALPos}(\setOfEq)$ \hspace*{5cm }& variety of $\setOfEq(\stateVec{x})$ for the real and the positive real numbers\\
			
			$\polytope{i}$ & convex hull of the exponent vectors of $\eq[i]$\\
			$V(\polytope{i})$ & volume of the polytope $\polytope{i}$\\
			$M(\polytope{i},\polytope{j})$ & mixed volume of $\polytope{i}$ and $\polytope{j}$\\
			$\omega_i$ & lifting for the polynomial $\eq[i]$ \\ 
			$\omega_i(a)$ & lifting function for the exponent-vector $a$\\
			$\ideal{f}$ \hspace*{5cm }& ideal of $\setOfEq$ \\
			$d_t$ & total degree\\
			BKK & Bernshtein-Kushnirenko-Khovanskii bound\\
			$\homotopy(\stateVec{x},t)$ & homotopy\\
			$\startSys(\stateVec{x})$ & start system\\
			$t$ & goes from $0$ to $1$ to deform the homotopy\\
			$\gamma$ & random complex number\\
			
		\end{tabular}
	\end{table}
	\begin{table}
		\begin{tabular}{p{0.3\linewidth} p{0.7\linewidth}}
			\multicolumn{2}{l}{\textbf{Belief Propagation, Variational Approximations}} \\
			\toprule
			$\msg[n]{i}{j}{}$ \hspace*{5cm }&  message from $\RV{i}$ to $\RV{j}$\\
			$\setOfMessages[n]$ & set of messages\\
			$n$ \hspace*{5cm }&  iteration index\\
			$\msgNorm[n]{i}{j}$ \hspace*{5cm }&  message normalization for $\msg[n]{i}{j}{}$\\
			$\fpMsg{i}{j}{}$ & fixed point message\\
			$\setOfMessages[\stationaryPoint]$ & fixed point of belief propagation\\
			
			$\energy{\pmfApprox{\setOfNodes{}}}$ \hspace*{5cm }&  average energy\\
			$\entropy{\pmfApprox{\setOfNodes{}}}$ \hspace*{5cm }&  entropy\\
			$\mathcal{F_H}, \FGibbs{}, \FB$ \hspace*{5cm }& Helmholtz-, Gibbs-, and Bethe- free energy  \\
			$\EB(\pseudomarginals)$ \hspace*{5cm }& average (Bethe) energy \\
			$\SB(\pseudomarginals)$ \hspace*{5cm }& Bethe entropy \\
			$\MPolytope(\graph), \MPolytope$\hspace*{5cm }& marginal polytope \\
			$\LPolytope(\graph), \LPolytope$ \hspace*{5cm }& local polytope \\
			$\partitionBethe$ \hspace*{5cm }& Bethe partition function \\
			$\FBGlobalMin, \FBLocalMin{m}, \FBStationary$ \hspace*{5cm }& global minimum, local minimum, and stationary point of $\FB$ \\
			$\pseudomarginals^{*}, \pseudomarginals^{m}, \pseudomarginals^{\circ}$ & pseudomarginals at the global minimum, a local minimum, and a stationary point of $\FB$\\
			$\setOfStableSol$ & set of stable belief propagation fixed points\\
			$\setOfMinimaSol$ & set of fixed points corresponding to local minima of $\FB$\\
			$\setOfAllSol$    & set of all fixed points\\
			$\EPartition{m} $ \hspace*{5cm }& error of the partition function approximation \\
			$\EMarginal{m} $ \hspace*{5cm }& error of the approximated marginals \\
			$\BP\big(\cdot\big) $ \hspace*{5cm }& mapping induced by belief propagation \\
			$\decisionP$ \hspace*{5cm }& evaluation function for the pseudomarginals \\
			$ \decisionZ $ \hspace*{5cm }& evaluation function for the Bethe partition function \\
			$r_{ij}^{n}$ \hspace*{5cm }& message-residual\\
			$\vm{r}^n$ & set of residuals\\
			$\epsilon$ & damping factor of belief propagation with damping\\
			$\setOfBPEq$ & fixed point equations of belief propagation\\
			$\msgReparam[n]{i}{j}$ & reparameterized messages \\
			$\vm{\nu}$ & set of reparameterized messages \\
			$h_{ij}$ & cavity field\\
			$\pseudomarginals^{T}$ & weighted combination of all fixed points\\
			$\pseudomarginals^{M}$ & weighted combination of all fixed points  belonging to local minima\\
			$\pseudomarginals^{MAX}$ & fixed point maximizing the partition function\\
			$J_{A}, J_{C}, J_{C} (\graph,\theta)$ & critical values of coupling strength at the onset of phase-transitions\\
			$\kappa_{\theta}$ & field-dependent scaling term for Jacobian matrix\\
		\end{tabular}
	\end{table}

	\begin{table}
		\begin{tabular}{p{0.3\linewidth} p{0.7\linewidth}}
			\multicolumn{2}{l}{\textbf{Error Correcting Codes}} \\
			\toprule
			$\epsilon$ \hspace*{5cm }& error-probability \\
			$Y_i$ & variable node\\
			$f_a$ & factor node\\
			$r_{Ai}^n(y_i)$& message from factor to variable\\
			$q_{iA}^n(y_i)$& message from variable to factor\\
			$\alpha_{iA}^n$& normalization term\\
		\end{tabular}
	\end{table}

	\begin{table}
		\begin{tabular}{p{0.3\linewidth} p{0.7\linewidth}}
			\multicolumn{2}{l}{\textbf{Self-Guided Belief Propagation}} \\
			\toprule
			$\scaling_k$ \hspace*{5cm }& scaling term \\
			$K$ & number of models considered by self-guided belief propagation (length of $\{\scaling_k\}$)\\
			$\ugm[K]$ & undirected graphical model for $\scaling_k$\\
			$\setOfPotentials_k$ & set of potentials for $\scaling_k$\\
			$\pairwiseSBP{i}{j}{k},\localSBP{i}{k}$ & pairwise and local potentials for $\scaling_k$\\
			$\setOfMessages[\circ]_\iteration{k}$ & fixed point of BP for the model $\ugm[k]$\\
			$c(\scaling)$ & solution path\\
			$N_{BP}$ & maximum number of belief propagation iterations\\
		\end{tabular}
	\end{table}
	
	\begin{table}
		\begin{tabular}{p{0.3\linewidth} p{0.7\linewidth}}
			\multicolumn{2}{l}{\textbf{Patch Potential Models}} \\
			\toprule
			$\patch{i}$ \hspace*{5cm }& patch of the graph $\graph$\\
			$P^m_{\mu}$ & fraction of runs converging to the $m$\textsuperscript{th} fixed point\\
			$\pseudomarginals^{r},\pseudomarginals^{q}$ & fixed points with all marginals biased to one state\\
			$\pseudomarginals^{p}$ & state-preserving fixed point\\
			$\pseudomarginals^{u},\pseudomarginals^{v},\ldots$ & all other fixed points\\
			$\effectiveField{i}$ & effective field for $\RV{i}$\\
			$Q_i(k,l)$ & mismatch between $\pseudomarginals^{k}$ and $\pseudomarginals^{l}$\\
			$\setOfEdges_P$ & set of all boundary edges\\
			$\setOfEdges_C$ & set of edges between variables that favor different states\\
			$N_f$ & number of flipped variables\\
			$N_c$ & number of state-preserving variables\\
			$\Delta \SB$ & difference of the Bethe entropy between two fixed points\\
		\end{tabular}
	\end{table}

  \emptydoublepage
\newchapter{Introduction}{intro}
\openingquote{Uncertainty is an uncomfortable position. \\But certainty is an absurd one.}{Voltaire}
\renewcommand{\pwd}{}

\newsection{Motivation}{intro:motivation}
	Every realistic domain, in accordance with the opening quote, contains some degree of uncertainty:
	not only is every human imperfect -- with each of our decisions being subject to uncertainty --
	but also has every artificial system only access to partial information.
	When reasoning under such uncertainties, one must take all available information into account and infer the quantity of interest.
	This task of probabilistic reasoning is one of the central problems of statistical inference\footnote{
		One may argue that statistical inference provides the common ground for a myriad of scientific fields ranging from mathematics over empirical sciences to philosophy~\cite{efron2016computer}. We comply with this perspective and provide a broad context to the rather specific topic of this thesis.}
	and, additionally, lies at the center of every decision-making process.

	Most inference-problems of practical relevance belong to the realm of multivariate statistics and involve many interacting variables.
	As the number of variables increases, the corresponding distributions become increasingly hard to grasp and thus require compact representations; particularly if one wishes to retain some interpretability.
	Fortunately, such a representation exists in the form of probabilistic graphical models;
	these models (as suggested by their name) rely on the proven capabilities of graphs\footnote{ 
		Graphs naturally emerge in a wide range of problems that include applications in social sciences, biology, physics, and communications. 
		Graph theory constitutes a concept with many persuasive properties that warrant their wide-spread usage.
		First, graphs lend themselves for visual representations that reveal the underlying problem-structure, often hidden initially, to the human  observer.
		Second, manipulations and computations on graphs are well established and provide an extensive tool-box to tackle a problem once formulated as graph.}
	and represent complex interactions in an intuitive and expressive way.

	We will study two of the most fundamental problems of inference.
	These are:
	computing the \emph{marginal distribution} and evaluating the \emph{partition function}.
	Both problems suffer from the increased complexity when working with many variables.
	Assistance comes in the form of graphical models once again;
	not only do they represent the problems efficiently, but they also warrant efficient inference algorithms that exploit the graph structure and thus facilitate the task of inference.
	Note, however, that graphical models are not a panacea and while inference becomes tractable for some models (e.g., trees) it remains NP-hard for graphs with loops.
	
	Besides being of theoretical interest, such loopy graphs arise in a  multitude of practical applications; ranging from  statistical signal-, speech-, and image-processing, to statistical physics, medical diagnosis systems, and error-correcting codes.
	For all these important problems, exact inference methods, however, are destined to fail, which  substantiates the need for efficient approximation methods.

	One particularly prominent approximation method focuses on local interactions and infers the global model-behavior thereof.
	Consider, for example, a large group of friends that want to celebrate a party and need to find a suitable date.
	If all friends come together and discuss their preferences in one large negotiation, they would find an optimal date;
	but this seems overly complicated and impracticable. 
	Instead, one could hope to find an acceptable date that works for most by negotiating in a distributed manner:
	assume that everybody discusses his preferences only with his closest friends (and that these sub-groups sufficiently overlap), then a reasonable agreement on the date will be found that will suit at least the majority of the group.
	Note how this focus on local interactions comes relatively intuitive when dealing with large, complex systems.
	The graphical model's structure is exploited precisely in such a way by belief propagation that performs  local interactions -- in the form of exchanging messages between neighboring nodes -- to approximate the marginal distribution and the partition function.\\

	Let us briefly summarize the main strength of belief propagation (BP); 
	it is the ability to efficiently perform approximate inference on models where (because of loops) exact inference
	becomes infeasible.
	BP often works remarkably well in this context, although theoretical results fail to explain BP's empirical success.
	Yet, portraying BP as an outright success-story would not correspond to the truth either -- and indeed,
	while BP often performs approximate inference in a very efficient manner it completely fails to do so other times.
	For this lack of reliability, BP is sometimes confronted with skepticism.
	Instead of only hoping for a reasonable performance of BP, we would, ideally, like to have some performance guidelines established.
	Stating whether a given model is well-suited for the application of BP, such guidelines must focus on the following two aspects.

	First, one needs to understand the underlying reasons for the failure of BP.
	Although we still lack a rigorous understanding of why BP fails, it is empirically well-established that BP fails because:
	(i) multiple solutions exist with varying accuracy; 
	(ii) one or all fixed points are unstable so that BP does not converge.
	
	Second, one needs to derive performance guarantees that take the model specifications into account.
	If we want to better understand BP we will not only need to understand how the model specifications influence (i) the convergence properties and (ii) the approximation quality but also how both properties relate to each other.
	Note that both properties are directly related to each other for simple models (e.g., graphs with a single loop or small grid graphs), i.e., the better the approximation quality the faster BP converges~\cite{weiss2000correctness,ihler07}.
	Such a relation, however, does not generalize to more complex models~\cite{weller2013approximating}.
	
	To summarize, it will be important to understand how specific models affect the performance of BP and if BP can be expected to perform well.
	This will also increase the reliability of BP.\\
	

	To a large degree, we owe our current understanding of BP to concepts from statistical physics.
	As it turns out, there is a fundamental connection between many concepts in computer science and in physics (cf.~\cite{mezard2009, welling2003approximate, tatikonda2002}).
	Most notably, the fixed points of BP are in a one-to-one correspondence with the stationary points of the Bethe free energy.

	But, although the Bethe free energy can provide many insights, it remains an intricate function that is hard to analyze.
	Also, most theoretical results on the Bethe free energy only hold for restricted model classes,
	where typically all variables sit on a regular grid and the model is specified by relatively few parameters.
	How those insights carry over to more general models is an open question.

	Moreover,  the Bethe free energy fails to reveal whether a given fixed point is stable and under which conditions BP converges to it.
	The Bethe free energy also fails to explain why and how certain modifications of BP (e.g., scheduling or damping) often help to achieve convergence.\\

	The quest for a better understanding of BP is thus ongoing, with the hope that new insights will recognize certain model classes for which it is safe to utilize BP, i.e., for which BP converges fast while maintaining the desired accuracy.
	
	The overarching aim of this thesis is to extend the current understanding of BP, the reason for this being twofold: 
	first, to theoretically understand for which problems and applications BP can be expected to perform well;
	and second, to utilize those theoretical insights and modify BP to enhance its capabilities.
	
	We specifically address the question of how knowledge of the solution space can advance our current understanding of BP.
	As the complete set of solutions is generally not available, we must first develop a way to obtain all fixed points for a given model.
	We then start with the analysis of relatively small and simple models and, by successively building upon our obtained insights, extend our analysis to increasingly more complex models.
	Finally, we take a comprehensive view at the solution space and study the relation between the convergence properties, approximation quality, and the number of fixed points.
	This approach will provide several insights into the behavior of BP, extend the current theoretical understanding and  open the door for practical considerations that enhance the performance of BP.
	In particular, our findings suggest a modification of BP that enforces convergence toward accurate fixed points, thus improving the approximation quality.
	
\newsection{Five Relevant PhD Theses}{intro:5thesis}

As discussed, BP touches a diverse set of scientific fields. 
It is the common ground between all those fields that provides a solid foundation for the analysis of BP.
The work presented in this thesis therefore builds upon a vast body of literature.
The following five PhD theses cover a wide range of the recent developments and are highly relevant for the current thesis.
Therefore, this section contains a brief overview of how they shaped the current thesis in particular.\\

The connection between exponential representations of distributions, information geometry, and approximate inference methods on probabilistic graphical models is revealed in the thesis of \textbf{Martin Wainwright}, submitted at the Massachusetts Institute of Technology in 2002~\cite{wainwright_thesis}.
His thesis introduces the concept of reparameterization.
This concept casts BP as one specific instance in a more general class of message passing algorithms (termed tree-based reparameterization (TRP)).
Moreover, TRP suggests solving a particular sequence of simpler sub-problems over spanning trees in the graph~\cite{wainwright2003tree-scheduling} in order to enhance the convergence properties.
Additionally, important insights with respect to the marginal accuracy are obtained in the form of an exact expression for the marginal error, that -- although being infeasible to evaluate in general -- suggests computable bounds on the marginal error.
The proposed bounds rely on the approximation of the log-partition function which advocates a close connection between  both quantities;
this connection nicely connects to the present work where we inspect this relationship in great detail in Chapter~\ref{chp:accuracyBP}.
Another powerful concept is the consideration of BP as an optimization problem (over the local polytope).
In the conclusion, the author proposes tracing the evolution of the pseudomarginals while relaxing the marginal- to the local polytope with the prospect of practical and theoretical consequences; 
a similar evolution of the pseudomarginals lies at the core of Chapter~\ref{chp:selfguided}.

The overarching aim of the thesis of \textbf{Joris Marten Mooij}, submitted at the Radboud Universiteit Nijmegen in 2008~\cite{mooij_thesis}, is to understand and improve belief propagation, which closely resembles the aim of the current thesis.
In a nutshell, his thesis investigates the relationship between accuracy, uniqueness, and convergence properties of belief propagation's fixed points.
One particular insightful contribution was the consideration of belief propagation as a dynamical system.
This led to conditions for stability and uniqueness of a fixed point, as well as insights into the relation between those properties.
The analysis, however, was restricted to vanishing local potentials.
We adhere to the spirit of considering belief propagation as a dynamical system and extend the analysis to models with arbitrary parameters in Chapter~\ref{chp:solutionsBP}, providing novel theoretical insights.

The application of belief propagation to inference problems arising in the context of sensor networks is the main motivation in the thesis of \textbf{Alexander Ihler}~\cite{ihler_thesis}, submitted at the Massachusetts Institute of Technology in 2005. 
His thesis focused on studying the fundamental limitations of belief propagation.
Studying the vulnerability of the approximation quality with respect to errors in the messages was one particularly important aspect with the prospect of validating whether approximating the messages is a viable option.
Notably, this leads to a couple of interesting theoretical results on the accuracy of belief propagation in general, and the introduction of error-bounds on the marginal accuracy specifically.
While these bounds are only valid for cycle-free graphs,
they predict the performance of belief propagation reasonably well in the presence of loops as well.
This is one of the few results that analyze the error in the marginals, while much of the literature focuses on the approximation error of the partition function.
We agree with the author on the importance of assessing the marginal accuracy and will particularly focus on how the model parameters influence the marginal accuracy (cf. Chapter~\ref{chp:solutionsBP} and~\ref{chp:accuracyBP}).

The thesis~\cite{weller_thesis} of \textbf{Adrian Weller}, submitted at the Columbia University in 2014, focuses on the variational interpretation of belief propagation and the Bethe approximation in particular.
His work provides interesting insights into the differences between log-partition function estimates, singleton marginals, and pairwise marginals for both the (non-convex) Bethe approximation and convex variational approaches. 
In particular, it becomes evident that different approximation methods do not affect the marginals and the log-partition function in the same way;
the accuracy of the log-partition function may for example remain the same whereas the accuracy of the marginals increases.
This observations raises the question of how both quantities are related, an important question that will be the main focus of Chapter~\ref{chp:accuracyBP}.
Besides the theoretical relevance of the obtained insights, his thesis further proposes an approximation of the Bethe function, which extends preceding work~\cite{shin2012complexity} by approximating the global minimum.
This approximation method converges in polynomial runtime for attractive models and serves as an important comparison for our proposed method in Chapter~\ref{chp:selfguided}.

A direct relationship between belief propagation and graph geometry is established in the thesis of \textbf{Yusuke Watanabe}~\cite{watanabe_thesis}, submitted at The Graduate University for Advanced Studies, SOKENDAI in 2010; 
this relationship provides novel insights into the behavior of belief propagation. 
In particular, the \emph{graph-zeta function} relates the convergence properties of belief propagation to the shape of the Bethe free energy.\footnote{
	In addition to demonstrating the wide range of fields that play an important role for our current understanding of belief propagation, the graph-zeta function also serves as a foundation for recent developments in spectral clustering~\cite{saade2014spectral}.}
Moreover, the results impose some general properties on the solution space. 
These properties are of relevance for belief propagation as well and, for example, demonstrate that the overall number of fixed points is always odd.
Computing the number of fixed points is also of central interest throughout the current thesis, and Chapter~\ref{chp:solutionsBP} and~\ref{chp:accuracyBP} in particular.

\newsection{Contribution and Outline}{intro:outline}
A large part of the contributions to this thesis has previously been published; a list of the corresponding publications is presented below. 
Several parts, however, have been significantly reworked and restructured in order to align nicely with the structure of the thesis.

\begin{itemize}
	\item \cite{knoll_scheduling}
	Christian Knoll, Michael Rath, Sebastian Tschiatschek, and Franz Pernkopf.
	\newblock Message scheduling methods for belief propagation.
	\newblock In {\em Proceedings of ECML PKDD}, pages 295--310. Springer, 2015.
	
	\item \cite{knoll_ws_fixedpoints} 
	Christian Knoll, Franz Pernkopf, Dhagash Mehta, and Tianran Chen.
	\newblock Fixed point solutions of belief propagation.
	\newblock In {\em NIPS-Workshop: Advances in Approximate Bayesian Inference},
	2016.
	\item \cite{knoll_stability}
	Christian Knoll and Franz Pernkopf.
	\newblock On loopy belief propagation -- local stability analysis for
	non-vanishing fields.
	\newblock In {\em Proceedings of {UAI}}, 2017.
	\item \cite{knoll_fixedpoints}
	Christian Knoll, Dhagash Mehta, Tianran Chen, and Franz Pernkopf.
	\newblock Fixed points of belief propagation -- an analysis via polynomial
	homotopy continuation.
	\newblock {\em IEEE Transactions on Pattern Analysis and Machine Intelligence},
	2018.
	\item \cite{knoll_sbp} 
	Christian Knoll, Florian Kulmer, and Franz Pernkopf.
	\newblock Self-guided belief propagation--a homotopy continuation method.
	\newblock {\em arXiv preprint arXiv:1812.01339}, 2018.
	\item \cite{knoll_accuracy} 
	Christian Knoll and Franz Pernkopf.
	\newblock Belief propagation: Accurate marginals or accurate partition function
	-- where is the difference?
	\newblock In {\em Proceedings of {UAI}}, 2019.
\end{itemize}

It is the very nature of most PhD theses to pinpoint and raise multiple questions before providing some -- hopefully insightful -- answers to them.
As is often the case, this thesis tackles very specific problems in an already highly specialized field.
One aspect, however, that widens the scope and that made working on this thesis particularly interesting is that
belief propagation is applied in various scientific fields ranging from information theory and signal processing to statistical physics and artificial intelligence.
Our study of belief propagation brings insights from all these fields together and benefits from a particularly large toolbox, fueled by such diverse inputs.
%
%
While it is interesting to delve into each particular field and emerge oneself in all the associated details, it is nearly impossible to hide one's past.
Therefore, and for the purpose of a consistent notation, we settle for the language used in the statistics and machine learning community, point at relations to associated scientific fields if appropriate, and draw from them if beneficial.

Writing a coherent thesis, even after completing the research tasks, remains an extensive task that should ideally serve the interested reader.
Therefore, this thesis -- rather than being a conglomerate of the presented results -- offers a thorough introduction and sticks to one coherent story.
In doing so, we identify an inherent structure that lends itself to a segmentation into two major parts.\\

The first part (Chapter 2 - 4) provides all relevant background and serves as the preparation for the subsequent chapters.

The whole thesis resides in the context of probabilistic graphical models, which are introduced in \textbf{Chapter 2}.
We briefly discuss the relevant background from probability- and graph-theory and introduce pairwise graphical models that are the main focus of this work.
Moreover, we define the problem of inference  with a particular focus on efficient exact methods.

Belief propagation (BP) is introduced as a method of approximate inference  in \textbf{Chapter 3}.
After discussing some of the most serious issues of BP, we describe some of the underlying reasons for failure of BP and introduce alternative characterizations of BP.
In particular, we introduce BP as a variational method and connect the properties of BP with the energy landscape of the Bethe free energy.
Additionally, we  cast BP as a dynamical system. 
It turns out that this characterizations provides a general framework, encompassing a wide range of approximate inference methods.
Moreover, the consideration as a dynamical system suggests various ways to enhance the properties of BP.

\textbf{Chapter 4} provides the reader with the most important background of dynamical system theory and algebraic geometry.
While this chapter does not contain novel insights, it provides us with the relevant tools for analyzing BP in detail.
In particular, 
the most prominent methods for solving system of equations are introduced and discussed.\\

%

The second part (Chapter 5-7) builds upon the knowledge developed so far, presents the major results and extends the current understanding of BP's behavior.
%

The consideration of BP as a dynamical system provides the foundation of the results developed in \textbf{Chapter 5}.
We apply tools from dynamical systems theory, analyze the solution space of BP, and gain new insights into the behavior of BP;
the focus on the whole solution space instead of just a single fixed point reveals how the number of fixed points, the approximation quality of the individual fixed points, and the convergence properties are related to each other.
One major issue is that, despite being conceptually straightforward, it is problematic to find the set of \emph{all} fixed points. 
We present how the set of all BP fixed points can be computed by using the numerical polynomial homotopy continuation (NPHC) method.
This allows us to assess and compare the accuracy of the individual BP fixed points and weighted combinations thereof.
Moreover, the knowledge of all fixed points allows us to extend the local stability analysis -- previously restricted to models with vanishing local potentials -- to more general models.
This generalization also explains the role of the local potentials and reveals how strong local potentials enhance the convergence properties.

%

The focus of \textbf{Chapter 6} is to exploit the theoretical finding 
that local and pairwise potentials play an opposing role regarding the performance of BP.
In this chapter we propose one way to account for this observation and to enhance the performance of BP.
%
%
By modifying BP according to a homotopy continuation method we account for this observation and incorporate the pairwise potentials only gradually.
This procedure 
is deterministic, converges to a uniquely defined fixed point, and thus resolves the dependence on a well-chosen initialization.
Experiments on a wide range of models reveal that the proposed method increases the performance without increasing the computational burden;
in particular we exemplify that the results are at least as accurate as BP, if BP converges, 
and that accurate results are often obtained, even if BP fails to converge.
This empirical analysis is further supplemented with a theoretical analysis that proves optimality -- i.e., the proposed method converges to the global minimum of the Bethe free energy that constitutes the fixed point with the most accurate marginals -- for restricted models that have all local potentials favoring the same state.

All models considered so far adhere to the common conjecture that accuracy with respect to the marginals and with respect to the partition function are  interchangeable.
This conjecture suggests that the most accurate marginals of BP are to be found at the global minimum of the Bethe free energy.
In \textbf{Chapter 7} we aim to validate this assumption.
We therefore introduce \emph{patch potential models} that simplify the analysis significantly.
The parameter space of patch potential models is split into multiple regions with fundamentally different properties.
Elaborating on one specific region we first exemplify why there is no strict relationship between the accuracy of the marginals and the partition function and then provide sufficient and necessary conditions for a fixed point to be optimal  with respect to approximating  both.

Finally, \textbf{Chapter 8} concludes this thesis, summarizes the results obtained, and discusses the extent to which our understanding of BP has changed.
Furthermore, the most pressing questions left unanswered are indicated, hence providing a pointer to potential future research directions.

  \emptydoublepage
\newchapter{Background}{background}
\openingquote{A mind is like a parachute. \\It doesn't work if it is not open.}{ Frank Zappa}
\renewcommand{\pwd}{background}

This chapter introduces the necessary background and lays the foundation for the subsequent chapters.
The focus of this thesis lies at problems that arise in the context of probabilistic graphical models. Accordingly, we begin with a brief introduction to probability theory, in Section~\ref{sec:background:probability}, and to graph theory, in Section~\ref{sec:background:graphs}, before we finally unite both concepts in probabilistic graphical models, in Section~\ref{sec:background:pgm}.
Section~\ref{sec:background:models} discusses one specific class of probabilistic graphical models, \emph{undirected binary pairwise models}, that are studied extensively throughout the thesis.
We finally devote Section~\ref{sec:background:inference} to the tasks summarized under the notion of \emph{inference}.

\newsection{Probability Theory}{background:probability}
We begin with a brief review of probability theory and introduce the most relevant notions. 
Note that we will restrict our focus to discrete random variables throughout this thesis.
Let us therefore consider a discrete random variable $\RV{}$ that maps from the sample space $\Omega$ to a discrete finite set  $\mathcal{X}$, i.e., $X:\Omega \rightarrow \mathcal{X}$.

We denote the probability mass function that assigns a probability to the generic value $\RVval{} \in \sampleSpace{X}$ by the shorthand notation $\pmf{\RV{}}(\RVval{} )\definition\pmf{\RV{}}(X=\RVval{})$. 
Likewise, let $\RV{}$ and $\RV[Y]{}$ be two discrete random variables; then, with slight abuse of notation we define the joint distribution according to
\begin{align}
	\pmf{\RV[x]{},\RV[Y]{}}(\RVval[x]{},\RVval[y]{}) \definition \pmf{\RV[X]{},\RV[Y]{} }\left((\RV[X]{}=\RVval[x]{})\cap (\RV[Y]{}=\RVval[y]{})\right).
\end{align}
Now let us consider a set of $N$ discrete random variables $\setOfNodes = \{X_1,X_2,\ldots,X_N \}$ with the joint distribution $\pmf{\setOfNodes}(\mathbf{x})$ 
where the range of $\setOfNodes$ is the product space $\mathcal{X}^N \definition \sampleSpace{X}_1 \times \cdots \times \sampleSpace{X}_N$.
We further denote the configuration for a given set of random variables by $\RVvalSet{} = \{x_1,x_2,\ldots,x_N \}$.
The marginal probability distribution for a subset  $\mathbf{Y}  \subset \setOfNodes$ is obtained by summing out all variables $\RV{i}$ that are not in $\mathbf{Y}$ according to
\begin{align}
	\pmf{\mathbf{Y}}(\mathbf{y}) \definition \sum_{\RV{i} \in \{\mathbf{X}\backslash\mathbf{Y}\}} \sum_{\RVval{i} \in \sampleSpace{X}_i} \pmf{\setOfNodes}(\mathbf{x}).
	\label{eq:marginals}
\end{align}

(Conditional) statistical independence between random variables is a particularly important property when dealing with probabilistic graphical models. We say that $\setOfNodes$ is conditionally independent of $\setOfNodes[Y]$ given $\setOfNodes[Z]$ whenever $ \pmf{\setOfNodes[X],\setOfNodes[Y]|\setOfNodes[Z]}(\mathbf{x},\mathbf{y}|\mathbf{z}) = \pmf{\mathbf{X}|\setOfNodes[Z]}(\mathbf{x}|\mathbf{z}) \pmf{\mathbf{Y}|\setOfNodes[Z]}(\mathbf{y}|\mathbf{z})$ holds for all configurations $\RVvalSet{} \in\sampleSpace{x}^N$, $\RVvalSet[y]{} \in\sampleSpace{Y}^N$, and $\RVvalSet[z]{}\in\sampleSpace{z}^N$.

Distributions are often characterized by their expectations.
For a discrete random variable $\RV{}$ we define its expectation under the distribution $\pmf{\RV{}}(x)$ according to
\begin{align}
	\E(x) \definition \sum_{x\in\mathcal{X}} x \cdot \pmf{\RV{}}(x).
\end{align}


\newsection{Graph Theory}{background:graphs}
One of the most persuasive properties of probabilistic graphical models is the representation of joint probability distributions by using graphs. 
This section gives a self-contained introduction to the basics of graph theory.
We refer the interested reader to one of the many available books on graph theory for a more in-depth treatment (e.g., \cite{korte,diestel}).

Let us consider a \emph{graph} $\graph = (\setOfNodes,  \setOfEdges)$ with a set of \emph{nodes} (or vertices)  $\setOfNodes = \{\RV{1},\dots,\RV{N}\}$ and a set of \emph{edges} $\setOfEdges$. 
A graph is either directed or undirected and consists of directed or undirected edges respectively.
We denote an undirected edge by $\edge{i}{j} \in \setOfEdges$ (or equivalently by $\edge{j}{i}$) if it joins two nodes $\RV{i} \in \setOfNodes$ and $\RV{j} \in \setOfNodes$.
Note that, for the remainder of this thesis, we will only consider undirected graphs and further restrict our focus to \emph{simple graphs} -- 
that come without parallel edges (that would join the same pair of nodes twice) and without self-loops (i.e., $i\neq j$). 

Let us consider an edge $\edge{i}{j} \in \setOfEdges$, then $\RV{j}$ is a \emph{neighbor} of $\RV{i}$ and vice versa; the set of neighbors for any variable $\RV{i} \in \setOfNodes$ is defined by 
\begin{align}
	\neighbors{i} \definition \{\RV{j} \in \setOfNodes : (i,j) \in \setOfEdges \}.
\end{align}
The \emph{degree} of a node $\RV{i}$ is the number of incident edges and is consequently defined by the cardinality of the neighbor-set, i.e., by
\begin{align}
	\nodeDegree{i} \definition |\neighbors{i}|.
\end{align}
We further denote the \emph{average degree} of $\graph$ by $\averageDegree[\graph] \definition \frac{2|\setOfEdges|}{N}$.
A \emph{path} between two nodes $\RV{i},\RV{k} \in \setOfNodes$ is a graph $\path = (\{\RV{i},\ldots \RV{k} \}, \{(i,i+1),\ldots, (k-1,k) \} )$, where $\RV{i}$ and $\RV{k}$ are the end-nodes of $\path$. 
A path goes from $\RV{i}$ to $\RV{k}$ if there is an edge-progression that connects those two nodes.
\begin{defn}[Loop]
	A loop is a path from a node $\RV{i}$ back to the same node along a sequence of edges, i.e.,  a path $\path = (\{\RV{i},\RV{i+1},\ldots, \RV{k}\},\{\edge{i}{i+1},\ldots,\edge{k-1}{k}\})$ exists for $k=i$.
	
\end{defn}
Depending on the notational conventions, loops are sometimes also referred to as cycles or circuits.
Note that every loop has to contain at least three edges, since we do not allow for self-loops.

A graph is said to be \emph{connected} if, for every pair of nodes $\RV{i}$ and $\RV{j}$, a path exists that connects $\RV{i}$ to $\RV{j}$.
We only consider connected graphs in this thesis.\\

In the context of this thesis, there are certain graph types that are particularly relevant; 
most notably these include the following:
\begin{defn}[Tree]
	A connected graph that does not contain any loops is a tree.
\end{defn}

\begin{defn}[Complete Graph]
	A complete graph $\graph = (\setOfNodes, \setOfEdges)$ has every pair of nodes $\RV{i}$ and $\RV{j}$ connected by an edge $\edge{i}{j}$; i.e., $\graph$ is fully connected with  
	$\setOfEdges = \{\edge{i}{j}: \RV{i}, \RV{j} \in \setOfNodes, i \neq j\}$.
	It follows that every node has an equal degree $\nodeDegree{i} = N-1$.
	See Figure~\ref{fig:complete} for an illustration of complete graphs of size one to four.
\end{defn}
\begin{figure}
	\centering
	\subfigure{\raisebox{17mm}{\includegraphics{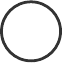} \vspace*{20cm} }} \hspace{1cm}
	\subfigure{\raisebox{17mm}{\includegraphics{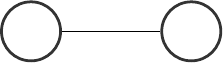} }} \hspace{1cm}
	\subfigure{{\includegraphics{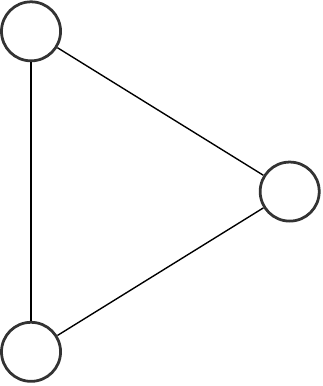} }} \hspace{1cm}
	\subfigure{{\includegraphics{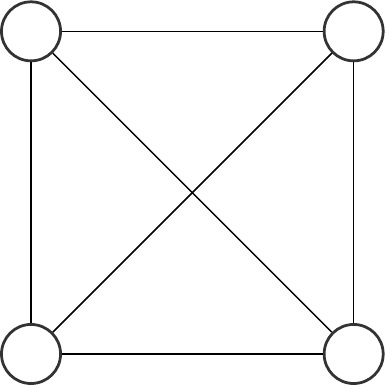} }}
	\caption{Complete graphs of size $N=1$ to $N=4$.}
	\label{fig:complete}
\end{figure}

\begin{defn}[Grid Graph]
	An $m \times n$ grid graph, or lattice graph, is a graph with $N=mn$ nodes that has all edges aligned along the square lattice. See Figure~\ref{fig:grid} for an illustration of a two-dimensional grid graph.
\end{defn}
\begin{figure}
	\centering
	\includegraphics{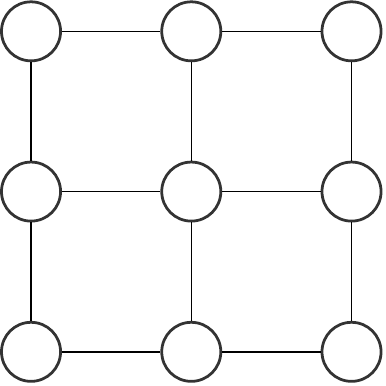}
	\caption{Two-dimensional grid graph of size $N=3\times3$}
	\label{fig:grid}
\end{figure}

It will often be necessary to consider only a part of the graph; we call this a subgraph.
\begin{defn} [Subgraph]
	Let $\graph = (\setOfNodes, \setOfEdges)$ be a graph and let $\setOfNodes' \subseteq \setOfNodes$.
	Then, the subgraph (or graph-component) $\graph' = (\setOfNodes', \setOfEdges')$ is induced by $\setOfNodes'$, where $\setOfEdges' = \{\edge{i}{j}\in \setOfEdges: \RV{i}, \RV{j} \in \setOfNodes'\}$.
\end{defn}
Complete subgraphs are of particular relevance and are often referred to as cliques.
\begin{defn} [Clique]
	Let $\graph' = (\setOfNodes', \setOfEdges')$ be a subgraph. 
	If $\graph'$ is complete we call it a clique and refer to it as $\clique{i}$.
	If any adjacent node $\RV{j} \in \setOfNodes \backslash \setOfNodes' : \edge{i}{j} \in \setOfEdges, \RV{i} \in \setOfNodes'$ exists that, by adding, would render $\clique{i}$ not complete anymore, $\clique{i}$ is a maximal clique.
	%
\end{defn}
See  Figure~\ref{fig:complete} that depicts the maximal cliques of size one to four.
Note that any graph $\graph=(\setOfNodes, \setOfEdges)$ is complete if and only if $\graph$ is a maximal clique.

\begin{defn}[Regular Graph]
	Let $\graph = (\setOfNodes, \setOfEdges)$ be a graph where all nodes $\RV{i} \in \setOfNodes$ have equal degree
	$\nodeDegree{i} = d$.
	Then, we refer to $\graph$ as a d-regular graph.
\end{defn}
It follows that the average degree for a regular graph equals the degree of all individual nodes, i.e., $\averageDegree[\graph] = \nodeDegree{i}$.

\begin{defn}[Bipartite Graph]
	A bipartite Graph $\graph = (\setOfNodes, \setOfEdges)$ is a graph that decomposes into two disjoint subgraphs $\setOfNodes[Y]$ and $\setOfNodes[Z]$ where $\setOfNodes = \setOfNodes[Y] \cup \setOfNodes[Z]$ and $\setOfNodes[Y] \cap \setOfNodes[Z] = \emptyset$ so that every edge connects those two subgraphs; i.e., for all $\RV[y]{i} \in \setOfNodes[Y]$ we have $\neighbors{\RV[Y]{i}} \in \setOfNodes[Z]$ and vice versa.
\end{defn}

\begin{defn}[Adjacency Matrix]
	The adjacency matrix $\vm{A}$ represents the connections of a finite graph. 
	$\vm{A}$ is a $0-1$ matrix with rows and columns indexed by the set of nodes so that $a_{ij}=1$ if and only if $\edge{i}{j}\in \setOfEdges$.
	Note that $\vm{A}$ is symmetric for undirected graphs.
\end{defn}


\newsection{Probabilistic Graphical Models}{background:pgm}
Probabilistic graphical models provide a compact representation of joint distributions and are particularly well suited for representing distributions with many random variables.
The straightforward specification of a distribution in $N$ random variables requires one to define 
and store the probabilities of all $|\sampleSpace{X}|^N$ configurations; 
considering the fact that typical problems often have hundreds of random variables renders such an approach impracticable. 
The representation of the joint distribution by the means of a graph on the other hand is intuitive and comes with the advantage of being interpretable by humans.
Moreover, and of even greater relevance, the graph exploits the statistical dependencies of the distribution and makes them explicit.
This is required if confronted with (even moderately) high-dimensional joint distributions.

Probabilistic graphical models come in different forms that represent the statistical dependencies in slightly different ways.
These include the most prominent types such as factor graphs, Bayesian networks, and Markov random fields. We will, however, restrict our focus to Markov random fields (which are also termed undirected graphical models).

\newsubsection{Undirected Graphical Models (Markov Random Fields)}{background:pgm:ugm}
A probabilistic graphical model $\ugm \definition (\graph,\setOfPotentials)$ 
consists of an undirected graph $\graph = (\setOfNodes, \setOfEdges)$ and a set of $K$ potentials (sometimes called clique potentials, or compatibility functions) $\setOfPotentials$
that defines the joint distribution $\pmf{\setOfNodes}(\RVvalSet{})$. 

There is a certain elegance to graphical models;
one that relates the structural properties of the graph to the properties of the associated joint distribution.
Specifically, a one-to-one correspondence between the set of nodes $\setOfNodes = \{\RV{1},\ldots,\RV{N}\}$ and the set of random variables\footnote{
	We consider only random variables with a discrete alphabet $\sampleSpace{X}$ although the framework would generalize to continuous random variables with $\sampleSpace{X}=\mathbb{R}$ as well.}
holds. 
Additionally, each edge $\edge{i}{j}$ represents the existence of a statistical dependency between $\RV
{i}$ and $\RV{j}$.

The correspondence between edges and statistical dependencies manifests itself into the global Markov property.
Let $\setOfNodes[u] \subset \setOfNodes[x]$ be a subset of nodes;
we say that $\setOfNodes[u]$ separates two disjoint (sets of) nodes $\setOfNodes[v]$ and $\setOfNodes[w]$ (i.e., $\setOfNodes[v] \cap \setOfNodes[w] = \emptyset$) if every path connecting the sets $\setOfNodes[V]$ and $\setOfNodes[W]$ contains at least one variable  $\RV[u]{i} \in \setOfNodes[u]$.
Removing the subgraph induced by $\setOfNodes[U]$ from $\graph$ thus eliminates all paths between $\setOfNodes[v]$ and $\setOfNodes[w]$.
Let us assume that $\setOfNodes[u]$ separates $\setOfNodes[v]$ and $\setOfNodes[w]$ as discussed for $\graph$;
then a joint distribution $\pmf{\setOfNodes}(\RVvalSet{})$ satisfies the global Markov property with respect to $\graph$ if
\begin{align}
	\pmf{\setOfNodes[v],\setOfNodes[w]|\setOfNodes[u]}(\RVvalSet[v]{}, \RVvalSet[w]{}| \RVvalSet[u]{}) = \pmf{\setOfNodes[v] | \setOfNodes[u]}(\RVvalSet[v]{} | \RVvalSet[u]{}) \pmf{\setOfNodes[w]|\setOfNodes[u]}(\RVvalSet[w]{}| \RVvalSet[u]{}).
	\label{eq:independencies}
\end{align}
See Figure~\ref{fig:independence} for a visualization of a graph that satisfies~\eqref{eq:independencies}.
\begin{figure}
	\begin{center}
		\includegraphics[width=0.7\linewidth]{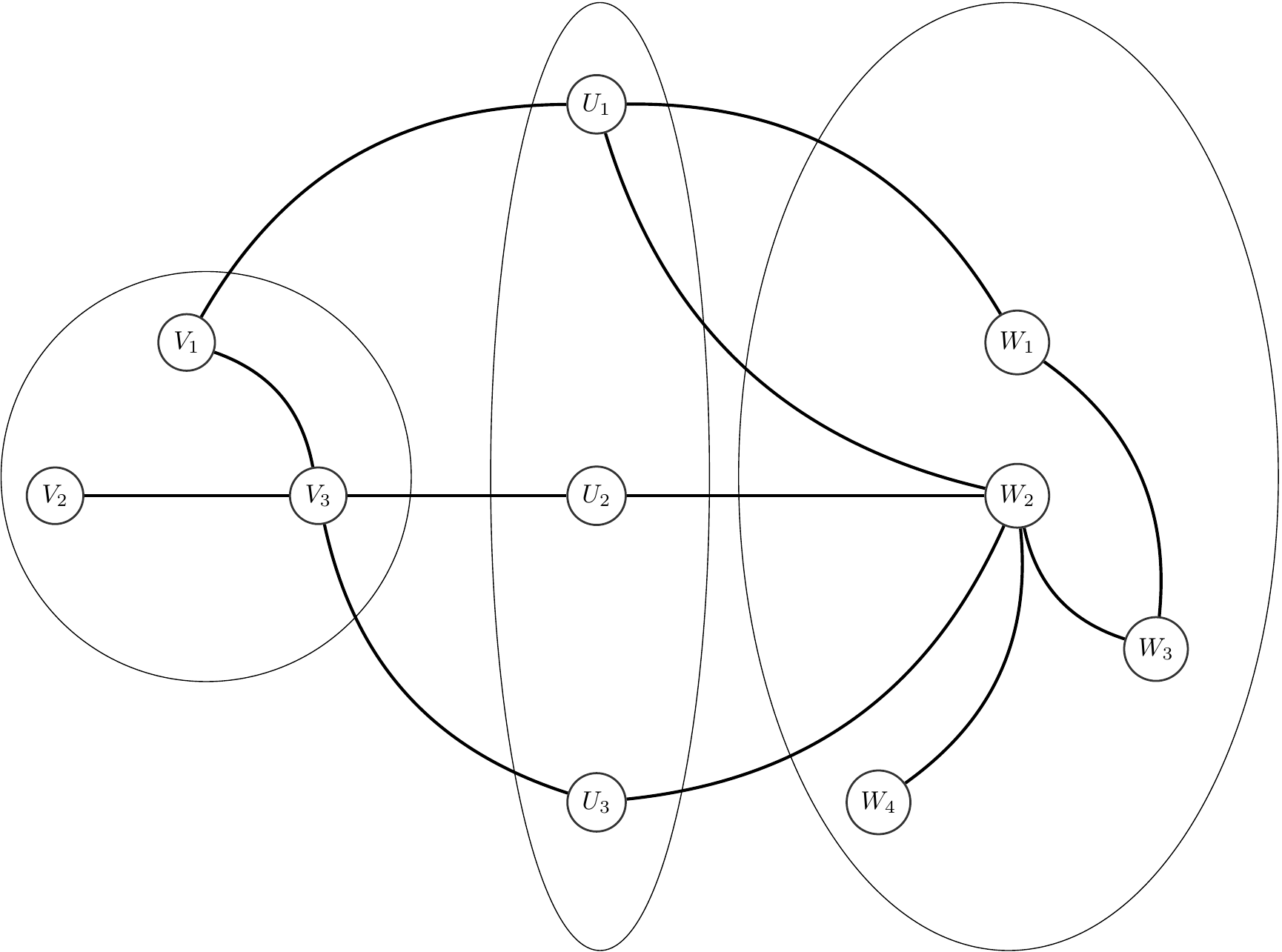}
	\end{center}
	\caption{Graphical representation of the conditional independencies implied by~\eqref{eq:independencies}. The set of nodes $\setOfNodes[u]$ separates $\setOfNodes[v]$ from $\setOfNodes[w]$, i.e., the after removing the subgraph induced by $\setOfNodes[u]$ from $\graph$ two separate subgraphs remain.}
	\label{fig:independence}
\end{figure}

The global Markov property entails strong restrictions on the factorization properties of the joint distribution.
Specifically, according to the \emph{Hammersley-Clifford-Theorem}~\cite{hammersley1971}, the joint distribution factorizes into a set of \emph{potentials} $\setOfPotentials = \{\potential{\clique{1}},\ldots, \potential{\clique{K}}\}$ specified over the set of cliques $\setOfCliques \definition \{\clique{1},\ldots,\clique{K}\}$ of the graph. That is
\begin{align}
	\pmf{\setOfNodes}(\RVvalSet{}) = \frac{1}{\partitionFunction} \prod_{\clique{i}\in\setOfCliques}^{} \potential{\clique{i}}(\RVvalSet[x]{\clique{i}}), 
	\label{eq:joint_over_cliques}
\end{align}
where the potentials $ \potential{\clique{i}}(\RVvalSet[x]{\clique{i}})$ only depend on the values of random variables  belonging to the given clique, i.e., 
$\RVvalSet[x]{\clique{i}} = \{\RVval{i}:\RV{i} \in \clique{i}\}$. 
Although the potentials need to be non-negative and may remind us of conditional probabilities, 
it is important to stress that this is not the case and that potentials do not necessarily have a specific probabilistic interpretation (cf.~\cite[Example 4.2]{koller2009}).
The flexibility one has in assigning values to the potential function comes at a cost, however:
the product over all potential functions may require a normalization function (partition function) $\partitionFunction$ to ensure that the product in~\eqref{eq:joint_over_cliques} constitutes a valid distribution.\footnote{
	The partition function is evaluated by computing the unnormalized sum over all states and is usually denoted by the letter $\partitionFunction$ to denote its origin in the German word \emph{Zustandssumme}.}

The product in~\eqref{eq:joint_over_cliques} can, in principle, consist of potentials specified only over the max-cliques of the graphs.
This may, however, hide the factorization properties of the underlying problem~\cite[p.108]{koller2009} and it is thus often preferable to consider only smaller (sub)-cliques as e.g., of size two for \emph{pairwise models}.
Pairwise models are particularly well-suited for a theoretical analysis because of their simplicity and will therefore be the main focus of this thesis.
Because of their central role, we devote the following section to the introduction of binary pairwise models.

\newsection{Binary Pairwise Models}{background:models}
Throughout the main part of this thesis, we will focus on one particular type of undirected models:
these are \emph{binary pairwise graphical models}.
Binary pairwise models admit a relatively simple treatment while being rich enough to represent a wide class of problems.
Indeed, most practical problems permit a representation as a binary pairwise graphical models, which highlights that the class is much less restrictive as it may seem at first.

We will first provide a formal introduction in Section~\ref{sec:background:models:description}
and then discuss both the limitations and the relevance of binary pairwise graphical models in Section~\ref{sec:background:models:relevance}.
Finally, we introduce the exponential representation in Section~\ref{sec:background:models:terminology} and clarify the relation to the Ising model in Section~\ref{sec:background:models:ising}.

\newsubsection{Model Description}{background:models:description}
\emph{Binary} models are graphical models where every random variable $\RV{i} \in \setOfNodes$ takes values from the binary alphabet, e.g., according to $\RVval{i} \in \sampleSpace{X} = \{-1,+1\}$.
\emph{Pairwise} models have potential functions that consist of two random variables at most; i.e., 
potentials are only associated with cliques $\clique{i}\in\setOfCliques$ of size $|\setOfNodes_{\clique{i}}|\leq2$. 
We resort to an even finer-grained representation 
that separately specifies the potentials over the nodes and edges.
Complying to this representation, we then define the joint distribution according to
\begin{align}
	\joint  &= \frac{1}{\partitionFunction} \prod_{\edge{i}{j} \in \setOfEdges} \pairwise{x}{i}{j} \prod_{\RV{i} \in \setOfNodes} \local{x}{i}, \label{eq:joint_pairwise}
\end{align}
where we introduced two different potential-types.
These are: 
the pairwise potentials $\pairwise{x}{i}{j}$ that act on the edges $\edge{i}{j} \in \setOfEdges$ and the local potentials $\local{x}{i}$ that act on the nodes $\RV{i} \in \setOfNodes$. 
Note that each pairwise potential is only considered once as $\edge{i}{j}=\edge{j}{i}$.
We will prefer to denote the local and pairwise potentials by their shorthand notation $\localShort{i}$ and $\pairwiseShort{i}{j}$, unless the actual values of the associated random variables are of immediate relevance.

Note that the factorization in~\eqref{eq:joint_over_cliques} considers the set of all cliques (with varying sizes), whereas the factorization in~\eqref{eq:joint_pairwise} considers cliques of size two at most.
Therefore, note that every connected pair of variables is also a clique (albeit possibly only a subset of a larger clique).
This means that both forms become identical  -- even in the presence of larger cliques -- whenever all higher-order potentials are  trivial.\footnote{A trivial potential 
	does not influence the values of the joint distribution. One way of making a potential trivial is by assigning identical values, as for example $\potential{\clique{}}(\RVvalSet[x]{\clique{}}) = 1$, for all possible configurations $\RVvalSet[x]{\clique{}}$.}

The representation in~\eqref{eq:joint_pairwise} has yet another equivalent factorization  
\begin{align} 
	\joint = \frac{1}{\partitionFunction} \prod_{\clique{(i,j)}\in\setOfCliques} \potential{\clique{(i,j)}}(\RVval{i},\RVval{j}) \nonumber
\end{align}
that incorporates all local potentials into the pairwise ones.
Such a compact representation clearly reduces the overall number of potentials,  but may obscure the underlying structure.
This often makes the model much less intuitive to interpret.
We will now discuss and compare the different ways of factorizing the joint distribution by means of the following example.

\begin{example}[Factorization into Potentials of Varying Size]$ $\newline 
	Consider the undirected graphical model $\ugm = (\graph,\setOfPotentials)$ depicted in Figure~\ref{fig:cliques}.
	\begin{center}
		\includegraphics[width=0.25\linewidth]{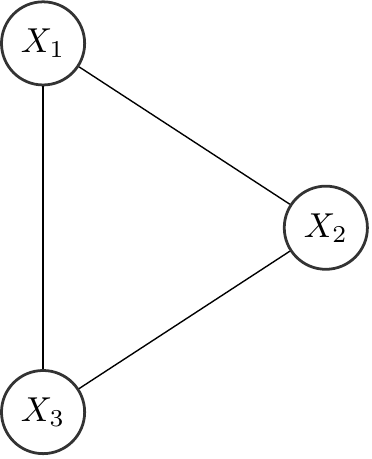}
		\captionof{figure}{Complete graph with three random variables with its potentials specified over the set of edges and the set of nodes according to~\eqref{eq:joint_pairwise}.}
		\label{fig:cliques}
	\end{center}
	We consider a joint distribution that is specified according to~\eqref{eq:joint_pairwise} in terms of local potentials $\localShort{i}$ and pairwise potentials $\pairwiseShort{i}{j}$ so that
	\begin{align}
		\joint \!= \!\frac{1}{\partitionFunction}\pairwiseShort{1}{2}\pairwiseShort{2}{3}\pairwiseShort{3}{1}\localShort{1}\localShort{2}\localShort{3}.
		\label{eq:example:factorization:definition}
	\end{align}
	
	As discussed above, one possible factorization stems from incorporating the local potentials into the pairwise ones.
	The specific model in Figure~\ref{fig:cliques} provides one obvious way of doing so; 
	therefore, let us define the clique potentials according to
	\begin{align}
		\potential{\clique{(i,j)}}(\RVval{i},\RVval{j}) = \pairwiseShort{i}{j} \localShort{j}
		\label{eq:example:factorization:pairwise}
	\end{align}
	so that
	\begin{align}
		\joint = \frac{1}{\partitionFunction}\potential{\clique{(1,2)}}(\RVval{1},\RVval{2}) \potential{\clique{(2,3)}}(\RVval{2},\RVval{3}) \potential{\clique{(3,1)}}(\RVval{3},\RVval{1}).
		\label{eq:example:factorization:pairwise:joint}
	\end{align}
	Note that~\eqref{eq:example:factorization:pairwise} is just one possible way of constructing clique potentials from the local and the pairwise ones.
	One only has to make sure that every  potential is considered exactly once, as the joint distribution in~\eqref{eq:example:factorization:pairwise:joint} would not conform with its original definition in~\eqref{eq:example:factorization:definition} otherwise.
	
	Finally, it is always possible to factorize the joint distribution over the maximum cliques.
	Note that, in this example, $\graph$ is already a maximum clique, which makes it possible to express the joint distribution by a single clique potential.
	This potential of the maximum clique incorporates all pairwise- and singleton- potentials according to
	\begin{align}
		\potential{\clique{(1,2,3)}}(\RVval{1},\RVval{2},\RVval{3}) = \pairwiseShort{1}{2}\pairwiseShort{2}{3}\pairwiseShort{3}{1}\localShort{1}\localShort{2}\localShort{3}
	\end{align}
	so that
	\begin{align}
		\joint = \frac{1}{\partitionFunction} \potential{\clique{(1,2,3)}}(\RVval{1},\RVval{2},\RVval{3}).
	\end{align}
	\label{ex:factorization}
\end{example}

The above example highlights the influence of the particular factorization on the overall number of potentials.
Higher-order cliques reduce the overall number of potentials but, simultaneously, tend to conceal the underlying  structure of the joint distribution~\cite[Chapter~4.2]{koller2009}).
Similarly, pairwise graphical models enforce one specific factorization and may thus hide the ``true'' factorization of the underlying distribution as well.
%
Factor graphs, on the other hand, provide a flexible representation that makes this underlying factorization explicit.
We will, however, except for the application to error-correcting codes, restrict ourselves to pairwise graphical models for the remainder of this thesis.

\newsubsection{Generality of Binary Pairwise Models}{background:models:relevance}
The simplicity of binary pairwise models is compelling, although the exclusion of other models -- not representable by binary pairwise models -- may seem rather restrictive.
Despite this impression, binary pairwise models provide a general framework.
That is, most graphical models can be directly converted into a binary pairwise model~\cite{yedidia2001, weiss2000correctness}.

The conversion from general undirected models to binary pairwise models becomes immediately apparent for models with strictly positive potentials, in which case, a simple reduction to the binary pairwise case exists~\cite{eaton2013model}.
Note, however, that such a conversion potentially scales up the problem immensely and is therefore not always ideal from a practical point of view~\cite{mackay2001conversation}.

From a  theoretical point of view, however, it is  beneficial to study binary pairwise models;
these models are rich enough to exhibit complex behavior -- thus demonstrating many interesting aspects, though they still admit a simplified treatment -- thus avoiding many technical subtleties.
Moreover, the study of binary pairwise models carries great relevance; not only because they are still far from being fully understood but also because theoretical insights carry over to more general models (by conversion of one model-class to the other).

Besides their generality, binary pairwise models are important in their own right as they arise in various applications.

\newsubsection{Exponential Representation and Model Parametrization}{background:models:terminology}
So far we have not considered the actual specification of the potentials.
We will now introduce one specific parameterization of binary pairwise models, show why it belongs to an exponential family, and define different model-classes with fundamentally different behavior.
Besides, we will fix our naming convention and briefly point to alternative ones.
Note that,  because of its immediate connection, the terminology is often rooted in the physical interpretations of the quantities.

Many relevant problems have a non-zero probability for all configurations $\RVvalSet{}$, such that the joint distribution factorizes into strictly positive potentials.
This allows us to assign some \emph{energy} $E(\RVvalSet{})$ to every configuration, so that the joint distribution from~\eqref{eq:joint_pairwise} can be expressed in its exponential form according to
\begin{align}
	\joint = \frac{1}{\partitionFunction} \cdot e^{-E(\RVvalSet{})}. \label{eq:exponential}
\end{align}

For the remainder of this thesis we shall use a minimal representation of binary pairwise models (cf.~\cite[Section 3.3]{wainwright2008graphical}) that defines the energy in terms  of couplings $\coupling{i}{j}\in \REAL$, that act on the edges  $\edge{i}{j} \in \setOfEdges$ and local fields $\field{i}\in \REAL$, that act on the nodes $\RV{i} \in \setOfNodes[x]{}$. 
Note that we drop the subscripts and write $\coupling{i}{j} = \coupling{}{}$ or $\field{i} = \field{}$ whenever the parameters are identical for all edges or for all nodes respectively.\footnote{If all nodes have the same value $\field{i} = \field{}$ we will sometimes refer to $\field{}$ as the external field.} 

Let the local and pairwise potentials of state $\RVval{i} \in \{-1,+1\}$ be 
$\localShort{i} = \exp (\field{i}\RVval{i})$ and $\pairwiseShort{i}{j}= \exp(\coupling{i}{j}\RVval{i}\RVval{j})$.
Then, if we plug these potentials into~\eqref{eq:joint_pairwise} we end up with a joint distribution that has its energy (cf.~\eqref{eq:exponential}) given by
\begin{align}
	E(\RVvalSet{}) = -\sum_{\edge{i}{j} \in \setOfEdges} \coupling{i}{j} \RVval{i}\RVval{j} - \sum_{\RV{i} \in \setOfNodes} \field{i} \RVval{i}.
	\label{eq:binary_pw_energy}
\end{align}
The energy (and consequently the joint distribution) depends not only on $\RVvalSet{}$ but on $\coupling{i}{j}$ and $\field{i}$ as well; 
we will, however, only make this dependence explicit by $E(\RVvalSet{}, \coupling{}{}, \field{})$ if of immediate relevance.

In the literature, one distinguishes two different types of interactions between variables: 
if a coupling is positive ($\coupling{i}{j}> 0$) then the associated edge $\edge{i}{j}$ is \emph{attractive}; 
if a coupling is negative ($\coupling{i}{j} < 0$) then the associated edge $\edge{i}{j}$ is \emph{repulsive}.
In accordance with this naming-convention, we call a model $\ugm$ attractive if it contains only attractive edges (these models are also known as ferromagnetic models~\cite{mezard2009} or log-supermodular models~\cite{ruozzi2012bethe}); 
we call it repulsive (or antiferromagnetic) if it contains only repulsive edges; 
and we call a model \emph{general} if it contains both types of edges.

The particular representation of~\eqref{eq:binary_pw_energy} is also known as Hopfield network (cf.~\cite{hopfield1982neural} and \cite[Chapter 42]{mackay2003}) or Boltzmann machine (cf.~\cite{hinton1986learning,welling2003approximate} and~\cite[Chapter 43]{mackay2003}) in the machine learning community and as Ising model (cf.~\cite{history_ising} and~\cite[Chapter 31]{mackay2003}) in the physics literature.
The Ising model has been studied for a long time in statistical physics. 
For its particular relevance, we will  devote the subsequent section to the Ising model, discuss its most important properties, and provide a brief historical outline of its development.

\newsubsection{The Ising Model}{background:models:ising}
The study of pairwise models in the form of~\eqref{eq:exponential} actually dates back for more than a century.\footnote{
	Distributions of this form were first introduced by Ludwig Boltzmann and by Josiah Willard Gibbs and provide the foundation of statistical physics. In the physics literature, one often refers to distributions of the form~\eqref{eq:exponential} as Boltzmann-distributions or Gibbs-measures.}
In the field of statistical physics, one studies the macroscopic behavior of systems with a large number of interacting components (e.g., atoms or molecules) in the thermodynamic limit of infinitely many components. 

One concept of central relevance are phase transitions; these are points in the parameter space where the partition function becomes non-analytic.
Phase transitions trigger fundamental changes in the system behavior, as for example the change from a fluid to gas.
In computer science, we often encounter algorithms that have many ``components'' interacting with each other (e.g., as in message passing algorithms);
such algorithms often exhibit rapid performance-drops for certain points in the parameter space, which clearly remind us of phase transitions.\\

One extensively studied model in statistical physics is the Ising model: 
it consists of $N$ atoms in a configuration $\bm{\sigma} = (\sigma_1,\ldots, \sigma_N)$ with spins $\sigma_i \in \{-1,+1\}$ that lie on a $d$-dimensional lattice and that are magnetically coupled.\footnote{ 
	Here we use the standard notation for the Ising model, although the correspondence to the notation in this thesis becomes obvious by comparing~\eqref{eq:binary_pw_energy} to~\eqref{eq:e_ising}.}
As a model for magnetic bodies, the Ising model's phase transitions describe the change from the paramagnetic to the ferromagnetic region among others.
The Ising model is one of the simplest models that is sufficiently rich to exhibit phase transitions; therein lies its appeal.

The interaction between two neighboring spins is specified by the coupling strength $J$ that is either ferromagnetic (i.e., positive) or anti-ferromagnetic (i.e, negative). 
Two neighbors then energetically favor the same state in case of ferromagnetic interactions and the opposite state in case of anti-ferromagnetic interactions.
%
The energy of a configuration $\sigma$ is
\begin{align}
	E(\sigma) = -\beta J \sum_{\edge{i}{j} \in \setOfEdges} \sigma_i \sigma_j + H \sum_{i=1}^N \sigma_i,
	\label{eq:e_ising}
\end{align}
where $H$ is the external magnetic field and $\beta = \frac{1}{k_B T}$ is the inverse temperature with $k_B$ being the Boltzmann constant. \\

The appreciation of the Ising model had its ups and downs. 
After being dumped as a too simplified model with no physical usefulness, it took some years before the Ising model received its well-deserved attention.
We outline some of the cornerstones in this development below and refer the interested reader to the surprisingly exciting read~\cite{history_ising} for a review of the major events in the history of the Ising model.

Wilhelm Lenz proposed the Ising model, as a simplified model for the interactions inside magnetic bodies, with the aim of developing a better understanding of the underlying properties.
One of his students, Ernst Ising, solved the one-dimensional case (by applying the transfer-matrix method) in his thesis.
The result, rather surprisingly, revealed the inexistence of phase-transitions in the one-dimensional case.
Being unable to solve it in higher dimensions, he concluded that the Ising model is generally incapable of experiencing phase transitions.
This conjecture, however, was in stark contrast to the ferromagnetic theory developed by Pierre Currie, which led to the disregard of the Ising model.

%
Finally, this contradiction was resolved by Lars Onsanger who solved the Ising model on the two-dimensional grid in a mathematical ``tour-de-force'' (as the Onsanger solution is nowadays often referred to) 
and revealed the existence of phase transitions in two dimensions.
Note that the exact solution of the Ising model is only known in these two cases and it remains an open problem to compute the exact solution for higher-dimensional cases.

The classical Ising model, with its energy defined according to~\eqref{eq:e_ising}, provides a powerful generalization:
the \emph{spin glass} model has the (local) fields $\field{i}$ and the couplings $\coupling{i}{j}$ take potentially different values for all nodes and edges and has its energy defined according to
\begin{align}
	E(\sigma) = -\beta  \sum_{\edge{i}{j} \in \setOfEdges} \coupling{i}{j}\sigma_i \sigma_j +  \sum_{i=1}^N \field{i}\sigma_i.
	\label{eq:e_spin_glass}
\end{align}
What makes these spin glass models so interesting is that we have a much poorer understanding of them as opposed to the classical Ising model.
The lack of understanding is mainly because of \emph{frustrations}~\cite{toulouse}.
These are configurations that have  some pairs of random variables $\RV{i},\RV{j}$ energetically favor states that contradict the coupling of the associated edge $\edge{i}{j}$ (cf. Example~\ref{ex:frustrations}).
To make this more precise we introduce the product of couplings along some cycle $\path$, i.e.,
\begin{align}
	J_{\path} \definition \prod\limits_{\path} \coupling{i}{j}.
	\label{eq:frustrations}
\end{align}
Then, a graph is frustrated whenever it contains a cycle for which $J_{\path}$ equates to a negative number.
For the classical Ising model where all couplings take the same value, the existence of frustrations depends only on the graph structure (i.e., if cycles of odd-length exist).
For spin glasses, however, the existence of frustrations depends on both the graph structure and the parameters.
This is one of the main reasons for studying spin glasses as frustrations can lead to a complex solution space with potentially many different solutions.\\

So far, we have seen that binary pairwise models constitute a very general class of graphical models and fit nicely into the framework of statistical physics.
Although this emphasizes the relevance of binary pairwise models, we barely mentioned the purpose of introducing graphical models.
We will now shift our focus to the problems we intend to solve and thus reveal the elegance and the advantages of graphical models.

\newsection{Inference}{background:inference}
The task of inference plays an important role in many scientific fields and it is a prerequisite for probabilistic reasoning.
In a nutshell, inference deals with drawing statistical conclusions about a certain subset of random variables, given some (possibly noisy) observations~\cite{pernkopf2014pgm}.

Applications include, but are not limited to, computer vision and speech processing where the observations are corrupted versions of the image or the speech-signal and one is interested in finding the most probable explanation.
Ideally, one would hope that the best explanation for the given observations matches the original image or signal.
Another important application is found in the context of error-correcting codes, where the observation is the received codeword and one is interested in obtaining the sent codeword.\\

To make the task of inference more precise we consider a joint distribution $\joint$ over a set of nodes $\setOfNodes[x]$ that consists of two disjoint subsets, i.e, $\setOfNodes[X] = \setOfNodes[Y] \cup \setOfNodes[O]$ and $\setOfNodes[y] \cap \setOfNodes[o] = \emptyset$.
The observed variables are denoted by $\setOfNodes[O]$ and the unobserved ones are denoted by $\setOfNodes[Y]$.
%
Without loss of generality, we will restrict ourselves to problems without observed variables $\setOfNodes[O] = \emptyset$ throughout this thesis.
Now we present the following three central problems of probabilistic inference.
\begin{itemize}
	\item \emph{Maximum a posterior (MAP) inference} estimates the mode of the distribution with the aim of identifying the joint assignment $\RVvalSet[y]{}^*$ that maximizes the probability according to 
	$\RVvalSet[y]{}^* = \argmax_{\RVvalSet[y]{} \in \productSpace{y}{|\mathbf{Y}|}} \pmf{\setOfNodes[y]|\setOfNodes[o]}(\RVvalSet[y]{}|\RVvalSet[o]{})$.
	Note that, strictly speaking, this is the most probable explanation (MPE) estimate, whereas the MAP estimate deals with the more general problem of maximizing the probability for any subset of $\setOfNodes[y]$; it is quite common, however, to neglect this distinction despite MPE inference being the easiest instance~\cite[Chapter~2.3]{koller2007introduction}.
	We abide to this convention and, provided $\setOfNodes[o] = \emptyset$, aim to obtain 
	\begin{align}
		\RVvalSet[]{}^* = \argmax_{\RVvalSet[]{} \in \productSpace{x}{N}} \joint.
	\end{align}
	
	\item \emph{Marginal inference} is the task of computing the marginal distribution for a subset of random variables
	$\RVSet{y}{} \subset \setOfNodes$, i.e., to compute 
	$\pmf{\RVSet{y}{}}(\mathbf{y})$ according to~\eqref{eq:marginals}. 
	Note that we will sometimes use the following shorthand notation for this double-summation where
	\begin{align}
		\pmf{\RVSet{y}{}}(\mathbf{y}) = \sum_{\mathbf{X}\backslash\RVSet{y}{}} \pmf{\setOfNodes}(\mathbf{x})  = \sum_{\mathbf{O}} \pmf{\setOfNodes}(\mathbf{x})    
		=\sum_{\RV{i} \in \mathbf{O}} \sum_{\RVval{i} \in \sampleSpace{X}} \pmf{\setOfNodes}(\mathbf{x}).
	\end{align}
	We will be particularly interested in the singleton marginals $\pmf{\RV{i}}(\RVval{i})$ for single random variables and the pairwise marginals $\pmf{\RV{i},\RV{j}}(\RVval{i},\RVval{j})$ for pairs of random variables.
	
	For binary models it is often more convenient to work with the expectations, which are the mean $\mean{i}$ (or magnetization) and the correlation $\correlation{i}{j}$, instead of considering the singleton marginals $\singleExact{i}(\RVval{i})$ and the pairwise marginals $\pairwiseExact{i}{j}(\RVval{i},\RVval{j})$ explicitly, where
	\begin{align}
		\mean{i} = \E(\RV{i}) = \singleExactVal{i}{1} - \singleExactVal{i}{-1},
		\label{eq:mean}
	\end{align}
	\begin{align}
		\correlation{i}{j} = \E (\RV{i}\RV{j}).
		\label{eq:correlation}
	\end{align}

	\item Another important problem is to evaluate the \emph{partition function} 
	\begin{align}
		\partitionFunction = \sum_{\allConfigurations} \prod_{\clique{i}\in\setOfCliques}^{} \potential{\clique{i}}(\RVvalSet[x]{\clique{i}}),
	\end{align}
	which is the normalization coefficient of the joint distribution in~\eqref{eq:joint_over_cliques}.
\end{itemize}
We will primarily focus on the problems of computing marginal distributions and evaluating the partition function in this thesis. 
Note that these two problems are in fact closely related  as the marginal distribution  $\pmf{\RV{i}}(\RVval{i})$
equals the ratio between a partial partition function $\partitionFunction(\RV{i})$ and the partition function:
\begin{align}
	\pmf{\RV{i}}(\RVval{i}) &= \sum_{\mathbf{X}\backslash\RV{i}} \pmf{\setOfNodes}(\mathbf{x}) = \sum_{\mathbf{X}\backslash\RV{i}} 
	\frac{1}{\partitionFunction} \prod_{\clique{i}\in\setOfCliques}^{} \potential{\clique{i}}(\RVvalSet[x]{\clique{i}}), \nonumber \\
	& = \frac{1}{\partitionFunction} \sum_{\mathbf{X}\backslash\RV{i}}   
	\prod_{\clique{i}\in\setOfCliques}^{} \potential{\clique{i}}(\RVvalSet[x]{\clique{i}}), \nonumber \\
	& = \frac{\sum\limits_{\mathbf{X}\backslash\RV{i}}    \prod\limits_{\clique{i}\in\setOfCliques}^{} \potential{\clique{i}}(\RVvalSet[x]{\clique{i}})}
	{\sum\limits_{\allConfigurations}    \prod\limits_{\clique{i}\in\setOfCliques}^{} \potential{\clique{i}}(\RVvalSet[x]{\clique{i}})}
	\definition \frac{\partitionFunction(\RV{i})}{\partitionFunction}.\label{eq:marginals:practice}
\end{align}
The computation of the marginals seems relatively straightforward according to~\eqref{eq:marginals:practice}.
Yet, there is one fundamental problem that prohibits the application of~\eqref{eq:marginals:practice} to practical problems, which essentially boils down to the overall number of variables involved.
Even if the joint distribution $ \pmf{\setOfNodes}(\mathbf{x})$ is known -- 
neglecting the fact that the memory complexity of storing $\pmf{\setOfNodes}(\mathbf{x})$ is 
exponential in the number of variables -- computing $ \pmf{\RV{i}}(\RVval{i})$ is problematic.
More specifically, consider a joint distribution  $ \pmf{\setOfNodes}(\mathbf{x})$ specified over $N$ random variables with $k=|\sampleSpace{x}|$ states. 
Then, the sum in the numerator of~\eqref{eq:marginals:practice} 
is evaluated $N-1$ times and goes over $k$ terms each time; 
i.e., in total computing a marginal distribution would require the summation over $k^{N-1}$ terms in total.

This drastically limits the problem size for which the marginals can be evaluated in practice.
On the positive side, however, we have already encountered the compact representation of probabilistic graphical models.
We will subsequently show how to utilize this representation and how this opens the door for efficient inference methods.
Before considering arbitrary models, we focus on models with certain graph structures that lend themselves to particularly elegant ways of performing inference. 
Ideally one should aim to exploit this representation beyond that and extend it to the central problems of inference.

\newsubsection{Exact Inference: Efficient Methods}{background:pgm:exact}
\newsubsubsection{Tree-Structured Graphs}{background:pgm:exact:tree}
Chains and trees posses Markov properties that, if exploited properly, give rise to efficient inference methods.
The statistical dependencies imposed by tree-structured models admit inference methods that,
instead of manipulating the joint distribution directly, recursively perform local computations;
this reduces the computational complexity immensely.
Various inference methods were independently introduced in different fields (cf. Section~\ref{sec:bp:intro} for a brief overview) that all rely on the very same principle to perform efficient inference.
The basic principle is to perform a set of local computations, often interpreted as messages between random variables;
this also explains the term \emph{message passing algorithms} that is often used to unite all those algorithms.
The following example demonstrates the underlying principles and highlights the efficiency of message passing algorithms.

\begin{example}[Exact Inference on a Tree]$ $\newline
	Consider the undirected graphical model $\ugm = (\graph,\setOfPotentials)$ depicted in Figure~\ref{fig:message_passing}.
	First, we express the joint distribution as the product over all pairwise clique potentials according to~\eqref{eq:joint_over_cliques} so that
	\begin{align}
		\pmf{\setOfNodes}(\RVvalSet{}) &=  \frac{1}{\partitionFunction} \prod_{\clique{i}\in\setOfCliques}^{} \potential{\clique{i}}(\RVvalSet[x]{\clique{i}}) \nonumber\\
		& = \frac{1}{\partitionFunction} \potentialCliquePairwise{1}{4} \potentialCliquePairwise{2}{4} \potentialCliquePairwise{3}{4} \potentialCliquePairwise{4}{5} \cdot\nonumber \\
		& \hspace{1cm}\potentialCliquePairwise{5}{6} \potentialCliquePairwise{6}{7}.
	\end{align}
	Note that the factorization in terms of pairwise cliques equals the factorization in terms of maximum cliques for tree-structured models for the lack of loops.
	
	Second, we compute the singleton marginals for one specific random variable, e.g., $\RV{5}$, by summing over all other variables such that   
	\begin{align}
		\pmf{\RV{5}}({\RVval{5}}) &= \frac{1}{\partitionFunction}  \sum_{x_1\in\sampleSpace{X}} \cdots \sum_{x_4\in\sampleSpace{X}} \sum_{x_6\in\sampleSpace{X}} \sum_{x_7\in\sampleSpace{X}}.
		\prod_{\clique{i}\in\setOfCliques}^{} \potential{\clique{i}}(\RVvalSet[x]{\clique{i}})
		\label{eq:ex:total_summation}
	\end{align}
	A closer look at the clique potentials reveals the benefits of reordering of the summations.
	The total amount of summations becomes much more manageable if we make use of the commutative and the distributive law and rewrite~\eqref{eq:ex:total_summation} according to
	
	\begin{align}
		\pmf{\RV{5}}({\RVval{5}}) =
		&\bigg(\! \sum_{x_4\in\sampleSpace{X}} \potentialCliquePairwise{4}{5} \nonumber \cdot\\
		&\underbrace{\hspace*{5pt} \underbrace{\Big(\! \sum_{x_1\in\sampleSpace{X}} \potentialCliquePairwise{1}{4} \!\Big)}_{\msg{1}{4}{}}
			\underbrace{\Big(\! \sum_{x_2\in\sampleSpace{X}} \potentialCliquePairwise{2}{4} \!\Big)}_{\msg{2}{4}{}}
			\underbrace{\Big(\! \sum_{x_3\in\sampleSpace{X}} \potentialCliquePairwise{3}{4} \! \Big)}_{\msg{3}{4}{}}\!\! \bigg)}_{\msg{4}{5}{}} \nonumber \cdot\\
		& \underbrace{\bigg(\! \sum_{x_6\in\sampleSpace{X}} \potentialCliquePairwise{5}{6}
			\underbrace{\Big(\! \sum_{x_7\in\sampleSpace{X}} \potentialCliquePairwise{6}{7}  \!\Big)}_{\msg{7}{6}{}} \!\!\bigg)}_{\msg{6}{5}{}}.
		\label{eq:ex:message_passing}
	\end{align}
	Note that we introduced the powerful notion of messages in~\eqref{eq:ex:message_passing}; 
	messages $\msg{i}{j}{}$ provide a compact notational convention and express the consequence of summing over the respective variable.
	We say that $\msg{i}{j}{}$ is passed along the edge from $\RV{i}$ to $\RV{j}$ (see Figure~\ref{fig:message_passing}).
	
	The desired marginal distribution $\pmf{\RV{5}}({\RVval{5}})$ is then given by the normalized product of all incoming messages. 
	We consequently realize from~\eqref{eq:ex:message_passing} that
	\begin{align}
		\pmf{\RV{5}}({\RVval{5}}) = \frac{1}{\partitionFunction}\msg{4}{5}{} \msg{6}{5}{}.
		\label{eq:ex:marginal}
	\end{align}

	\begin{center}
		\includegraphics[width=0.5\linewidth]{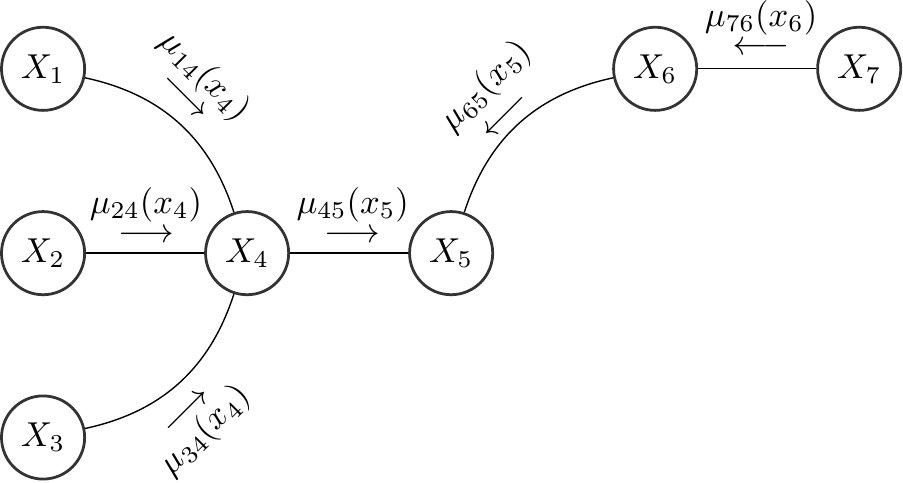}
		\captionof{figure}{Tree-structured model that illustrates the concept of message passing with the purpose of computing the marginal distribution $\pmf{\RV{5}}(\RVval{5})$. All messages of~\eqref{eq:ex:message_passing} are illustrated next to the associated edges.}
		\label{fig:message_passing}
	\end{center}
	Note how the messages $\msg{4}{5}{}$ and $\msg{6}{5}{}$ incorporate all messages of the sub-trees rooted in $\RV{4}$ and $\RV{6}$.
	Consequently, knowledge of only the two messages $\msg{4}{5}{}$ and $\msg{6}{5}{}$ is sufficient to compute the marginal $\pmf{\RV{5}}(\RVval{5})$ according to~\eqref{eq:ex:marginal}.
	
	The reordering of the summations reduces the overall amount of required summations notably.
	Comparison of~\eqref{eq:ex:total_summation} and~\eqref{eq:ex:message_passing} 
	reveals that the computational complexity reduces from $\bigO(|\sampleSpace{X}|^N)$ to $\bigO(|\sampleSpace{X}|^2)$.
	
	\label{ex:message_passing}
\end{example}

Remember how $\msg{4}{5}{}$ accounted for all messages that come from the subgraph induced by $\setOfNodes'=\{\RV{1},\RV{2},\RV{3}, \RV{4}\}$. 
This suggests a recursive rule for computing the messages without the need for explicitly rearranging the summations.
In particular, we can compute the message $\msg{i}{j}{}$ by taking the product of all incoming messages, except the one from $\RV{j}$, times the pairwise potential $\potentialCliquePairwise{i}{j}$ so that
\begin{align}
	\msg{i}{j}{} \definition \sum \limits_{\RVval{i} \in \sampleSpace{X}} \potentialCliquePairwise{i}{j} \prod \limits_{\RV[x]{k} \in \{\neighbors{i} \backslash \RV[x]{j}\}}  \msg{k}{i}{}.
	\label{eq:message_passing_tree}
\end{align}
The message $\msg{i}{j}{}$ is a vector over all states $\RVval{j} \in \sampleSpace{X}$. 
One can interpret this message as the belief of node $\RV{i}$ about the relative probabilities that $\RV{j}$ is in state $\RVval{j}$ given all the information\footnote{
	We are using the term information here only figuratively in the sense that all necessary summations for evaluating $\pmf{\RV{i}}(\RVval{i})$ are subsumed in $\msg{j}{i}{}$. This does not adhere to the formal definition of entropy as an information-theoretic measure.} 
that is available to $\RV{i}$, except from $\RV{j}$.

The notion of messages has a further advantage:
if we wish to compute the marginals $\pmf{\RV{i}}({\RVval{i}})$ for multiple nodes in the graphical model we can, instead of performing the procedure multiple times, reuse the already computed messages.
It suffices to pass the messages back and forth throughout the whole graph only once. 
For a chain this means propagating the messages along both directions.\footnote{In the particular case of a chain the messages actually equal the Chapman Kolmogorov equations.}
If we wish to apply this procedure to a tree we must first fix some ordering, i.e., pick one node as root, before then passing the messages upwards from the leaves and then pass the messages down again.

The representation of the summations in~\eqref{eq:ex:message_passing} in terms of messages captures an important property of tree-structured graphical models.
For every $\RV{i}\in\setOfNodes$ it holds that any pair of its neighbors $\{\RV{k},\RV{l}\}  \in \neighbors{i}$ is conditionally independent given $\RV{i}$, i.e., 
\begin{align}
	\pmf{\RV{k},\RV{l}|\RV{i}}(\RVval{k},\RVval{l}|\RVval{i}) = \pmf{\RV{k}|\RV{i}}(\RVval{k}|\RVval{i}) \pmf{\RV{l}|\RV{i}}(\RVval{l}|\RVval{i}).
	\label{eq:independence_tree_structured}
\end{align}
These statistical independence statements further impose one important property on the graph.
Namely, that removing the edge $\edge{i}{k}$ creates two disconnected subgraphs $\graph^{(i)} = \big(\setOfNodes^{(i)},\setOfEdges^{(i)}\big):\RV{i} \in\setOfNodes^{(i)}$ and $\graph^{(k)} = \big(\setOfNodes^{(k)},\setOfEdges^{(k)}\big):\RV{k} \in\setOfNodes^{(k)}$.
Note that the existence of any alternative path between $\graph^{(i)}$ and $\graph^{(k)}$, i.e., the existence of loops, would violate~\eqref{eq:independence_tree_structured}.

\subsubsection{Graphs with Loops}
So far we have witnessed the capabilities of message passing algorithms for chains and tree-structured models; one might ask how this concept generalizes and if it is able to cope with loops.
This is a critical question for two reasons:
efficient inference methods are essential when confronted with large and loopy models;
and many problems of practical relevance correspond to loopy models.
Every general-purpose inference method must, therefore, be able to cope with loops.
%

Unfortunately, this is not the case for message passing algorithms.
Loopy graphs violate the independence statements in~\eqref{eq:independence_tree_structured} and thus prevent a straightforward generalization of message passing to loopy graphs.


As discussed, excluding models with loops is not a viable option either.
Nonetheless, exploiting the Markov properties seemed promising.
Fortunately, we can have both.
That is, an algorithm exists that is both capable of accounting for loops and capable of exploiting the Markov properties.
These seemingly conflicting intentions are satisfied by modifying the graph and carefully constructing certain subgraphs until we finally end up with a tree again.

The failure of message passing is demonstrated on a loopy graph in Example~\ref{ex:junction_tree}.
Subsequently, we show how the graph needs to be modified to allow for efficient inference.

\newpage
\begin{example}[Exact Inference on Loopy Graphs]$ $\newline
	We create a loopy graph by adding the edge $\edge{5}{7}$ to the model of Example~\ref{ex:message_passing} (cf. Figure~\ref{fig:tree_with_loop}~(a)).
	Let us focus on the subgraph induced by $\setOfNodes'=\{\RV{5}$, $\RV{6}$, $\RV{7}\}$.
	We first observe that the conditional independence statements of~\eqref{eq:independence_tree_structured} are violated as 
	\begin{align}
		\pmf{\RV{5},\RV{7}|\RV{6}}(\RVval{5},\RVval{7}|\RVval{6}) \neq \pmf{\RV{5}|\RV{7}}(\RVval{5}|\RVval{7}) \pmf{\RV{6}|\RV{7}}(\RVval{6}|\RVval{7});
	\end{align}
	and second that information flows from  $\RV{7}$  towards $\RV{5}$ (and the remainder of the graph) via two different paths:
	once via  $\path_1 = (\{\RV{5}, \RV{7} \}, \{(7,5)\})$ and once via 
	$\path_2 = (\{\RV{5},\RV{6}, \RV{7} \}, \{\edge{7}{6}, \edge{6}{5}\} )$.
	As a result, the message $\msg{5}{7}{}$, if computed according to~\eqref{eq:message_passing_tree}, incorporates the belief stemming from $\RV{7}$ via one path and erroneously propagates it back to $\RV{7}$ via the other path.
	
	\begin{center}
		\includegraphics[width=0.75\linewidth]{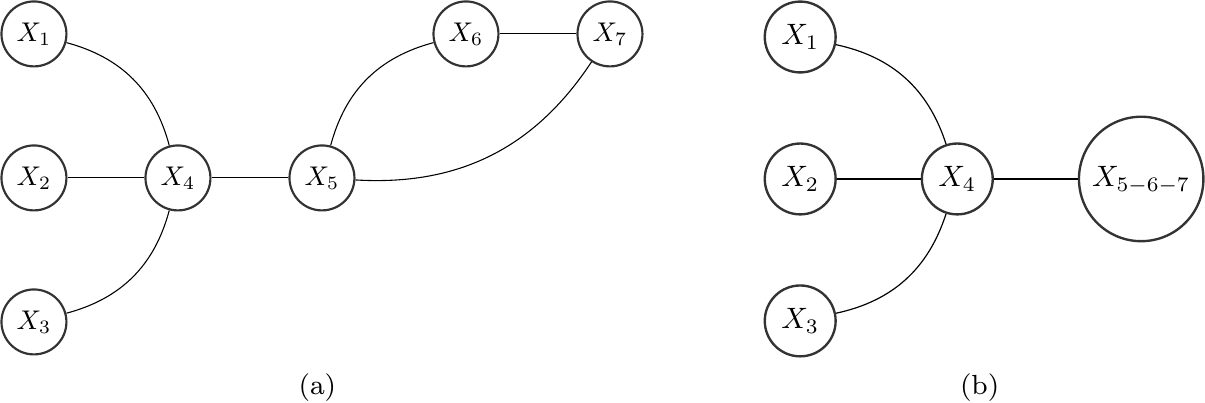}
		\captionof{figure}{  (a) Model from Example~\ref{ex:message_passing} with the additional edge $\edge{5}{7}$ that creates a loop.
			(b) Junction tree for the same model; note that variable grouping renders the graph tree-structured again.}
		\label{fig:tree_with_loop}
	\end{center}
	The graphical model needs to be tree-structured in order to satisfy the formal requirements for message passing to work;
	this, however, is not the case as seen in Figure~\ref{fig:tree_with_loop}(a).
	We can, however, form super-nodes, consisting of multiple variables, until no more loops are present in the graph to leverage the power of message passing.
	For our current example one can simply group the variables $\RV{5}$, $\RV{6}$, and $\RV{7}$ together into $\RV{5-6-7}$ and define the pairwise potential between $\RV{4}$ and this super-node according to
	\begin{align}
		\Phi_{C_{(4,5-6-7)}}(\RVval{4},\RVval{5},\RVval{6},\RVval{7}) = \potentialCliquePairwise{4}{5} \potentialCliquePairwise{5}{6} \potentialCliquePairwise{6}{7}\potentialCliquePairwise{5}{7}.
	\end{align}
	The resulting graph is depicted in Figure~\ref{fig:tree_with_loop} (b).
	Now that we have created a tree-structured model again (cf. Figure~\ref{fig:tree_with_loop}(b)), we can simply perform message passing on this modified graph and compute the marginals accordingly.
	Note how the new node $\RV{5-6-7}$ has an increased number of states according to $|\sampleSpace{x}|^{|\clique{5-6-7}|} = |\sampleSpace{x}|^{3}$ now.
	Despite the exponential growth of the state space, grouping variables together has its merits.
	In particular, it suggests an optimal way of rearranging the variables in the computation of the marginals that maintains the Markov properties despite the existence of loops.

	\label{ex:junction_tree}
\end{example}
We have seen in Example~\ref{ex:junction_tree} that performing inference on loopy graphs increases the computational complexity.
This is an immediate consequence of considering higher-order cliques in the construction of the \emph{junction tree} (i.e., the modified graph).
In general -- and for models with multiple intertwined loops in particular -- however, there is not just one but many possibilities of grouping variables together.
Consequently, one should construct the junction tree such that the largest clique is kept as small as possible.
If one wants to keep the complexity in check, it is thus of utmost importance to carefully construct the junction.\\

%

One might ask whether a modified graph (i.e., a clique- or junction-tree) without loops always exists -- and how it is constructed.
Indeed, for every graphical model such a junction tree exists and the construction of an optimal junction tree -- with as small cliques as possible -- involves three major steps.
These steps define the \emph{junction tree algorithm}~\cite{lauritzen-junction-tree} that we will review below.
We do not give an exhaustive introduction with all the details but rather aim to provide an overview that highlights the overall idea of the required operations.

First, note that every graph $\graph$ has an associated junction tree if and only if $\graph$ is triangulated~\cite{lauritzen-junction-tree}. 
A graph is triangulated (or chordal) if every cycle of length four or greater has a chord (i.e., an edge that joins two nodes of the cycle). 
Usually, a graph possesses multiple valid triangulated graphs, the choice of which has a major influence on the overall computational complexity~\cite[Section 10.4]{koller2009}.
Finding an optimal, that is a minimal, triangulation is NP-hard on its own but various heuristics exist that find reasonably good triangulation.
Most of these heuristics rely on a particular form of variable elimination, whereas it depends on the given graph which method performs best~\cite[Section 9.4.3.2]{koller2009}.

Second, the triangulated graph provides the basis for the construction of the junction tree.
The nodes in the junction tree correspond to maximal cliques in the triangulated graph;
if a variable is present in two cliques, the cliques are joined by an edge in the junction tree.
A triangulated graph usually admits multiple junction trees of varying sizes. 
The construction of the junction tree thus has a major influence on the overall efficiency.
Typically, one assigns a weight to each edge that corresponds to the number of variables included in both cliques and subsequently constructs a maximum spanning tree~\cite[Section 10.4.2]{koller2009}.

Finally, once the junction tree is constructed an efficient inference method, for example message passing, can be applied to the junction tree to yield the marginal distributions over all cliques.
The singleton marginals are then obtained by direct summation over a clique containing the desired variable.\\

The junction tree algorithm is straightforward in principle and provides a graph-based approach that exploits the factorization properties for loopy graphs.
Although the junction tree significantly reduces the complexity of exact inference, it is still of limited practical use.
This stems from working with the maximal cliques of the triangulated graph
and the exponential growth of the state space with the clique-size.
The applicability of the junction tree algorithm is consequently limited by the size of its largest clique~\cite{bodlaender}.

Nonetheless, there are two reasons that justify the introduction of the junction tree:
First, despite its limitation to problems with relatively small tree-width, the junction tree is efficient in some sense; 
in particular, no general exact inference method exists that is computationally more efficient than the junction tree~\cite[Section 8.4.6]{bishop}. The junction tree thus serves as the method of choice for estimating the ground truth when assessing approximate inference methods in the subsequent chapters.
Second, the junction tree algorithm reveals how the inherent properties of loopy graphs increase the complexity of exact inference; 
this further emphasizes the need for efficient approximation methods.

In the subsequent chapter, we will explicitly focus on loopy graphs and show how to tackle the related complexity issues.
We take different points of view on performing approximate inference, extend the underlying concept of message passing to more general graphs, and discuss how these methods cope with the existence of loops.
Ultimately, this search for efficient methods culminates in a wide range of available approximate inference methods.

  \emptydoublepage
\newchapter{Approximate Inference: Belief Propagation }{bp}
\renewcommand{\pwd}{approximate_inference}
\openingquote{The stars bend like slaves to laws not \\decreed for them by human intelligence,\\ but gleaned from them.}{Ludwig E. Boltzmann} 
Owing to belief propagation's specific relevance for this thesis, we present belief propagation (BP) in detail.
This includes the classical definition as a message passing algorithm in Section~\ref{sec:bp:preliminaries}, as well as the variational interpretation in Section~\ref{sec:bp:variational} that highlights the connection to methods  from statistical physics.
Moreover, we cast BP as a dynamical system and elaborate on the resulting implications in Section~\ref{sec:bp:map}.
The perspective of dynamical systems gives rise to a whole group of message passing algorithms for approximate inference, 
where every particular instance comes with its own pitfalls and limitations.
We briefly discuss some popular variants of BP and describe how each one enhances the performance over the standard implementation in Section~\ref{sec:intro:bp:improving:init}-~\ref{sec:intro:bp:improving:evaluation}.

Much of this chapter summarizes established textbook knowledge (cf.~\cite{koller2009,murphy1999loopy,mezard2009})
although we review some of the most recent developments as well.
The representation of BP as a dynamical system is rather obvious and has thus been considered multiple times (e.g., in~\cite{mooij2005properties,ruffer2010belief,tan2006belief});
here, we emphasize the added value of doing so and show how casting BP as a dynamical system provides a unifying framework. 
The presentation of BP variants in Section~\ref{sec:intro:bp:improving:update} contains results developed in collaboration with Michael Rath, Sebastian Tschiatschek, and Franz Pernkopf~\cite{knoll_scheduling}.

\newsection{Motivation}{bp:intro}
In Chapter~\ref{chp:background} we have seen how exact inference methods suffer from the existence of loops.
In fact, exact inference is NP-hard~\cite{cooper1990}, unless the probabilistic graphical model is appropriately restricted, which, however, would rule out many models of practical relevance.
This highlights the need for efficient approximate inference methods.
Even approximate inference, however, is NP-hard if a certain accuracy is required~\cite{roth1996,dagum1993}.
Note, however, that this pessimistic statement does not render approximate inference completely useless, but only indicates that specific models do exist for which achieving the desired accuracy is intractable.

If one wants to perform approximate inference, one can choose from a wide range of different methods.
Before one can make a well-informed choice and select an appropriate variant it is necessary to have a good understanding of a given method's capabilities and limitations.
Hence, developing this understanding is of central importance.
Knowledge of the limitations and failure modes further has the advantage of suggesting ways to improve upon.

We will focus on one specific class of approximate inference methods.
That is the class of message passing algorithms.
As discussed in Section~\ref{sec:background:pgm:exact} message passing algorithms efficiently perform exact inference on tree-structured models;
the existence of loops, however, has a severe effect on message passing and impedes the straightforward generalization (cf. Example~\ref{ex:junction_tree}).
Nonetheless, the elegance and simplicity of performing only local operations remain indisputable and it is tempting, therefore, to ignore the existence of loops, apply the same principles, and hope for a reasonable outcome.

Indeed, various approximate inference methods work according to this very principle.
In fact, because of their appealing simplicity, similar concepts were independently introduced multiple times in different fields:
Judea Pearl  introduced message passing algorithms in the machine learning- and statistics-community~\cite{pearl1988} for tree-structured graphs, terming it \emph{belief propagation}.
He already advocated the extension to loopy graphs as an approximate method~\cite[Section 4.4]{pearl1988} -- nowadays often referred to as \emph{loopy belief propagation} to emphasize the approximate nature of the method.\footnote{ 
	We will overload the terminology and, for the sake of brevity, always refer to it as belief propagation;
	irrespective whether the graph contains loops or not.}
In the information theory community, message passing algorithms were first introduced in the PhD Thesis of Robert Gallager~\cite{gallager} for decoding low-density-parity-check codes.
The capabilities of message passing -- known as the sum-product-algorithm -- were, however, largely overlooked until the success of Turbo codes~\cite{berrou1993turbo}.
Relying on the very same principles, this renewed interest finally put the original work into perspective and ultimately led to its well-deserved recognition.
Very similar concepts are also applied with success in the signal-processing community as in the Kalman filter~\cite{kalman} for state estimation, or in the Viterbi algorithm~\cite{viterbi1967error} for hidden Markov models.
Probably the first explicit application of local operations for approximating the global behavior has to be attributed to the physics community:
Hans Bethe and Rudolf Peierls introduced this concept to the field of statistical physics with the aim of understanding the behavior of large, otherwise incomprehensible, interacting systems~\cite{bethe,peierls}.
This approximation is known as the Bethe-Peierls approximation or the Cavity method.\footnote{ 
	Nowadays, the connections between the coding- and the artificial intelligence community~\cite{kschischang2001factor} as well as the connections to the physics community~\cite{yedidia2005} are well established and may seem rather obvious in hindsight. Looking at the original literature, however, it becomes clear that, because of the different formalism, these findings were indeed rather surprising at first.}

\newsection{Preliminaries}{bp:preliminaries}
Here in this section, we will finally define belief propagation.
Let us consider a binary pairwise graphical model $\ugm = (\graph,\setOfPotentials)$.
First, we define messages that are passed along the edges, where
the message from  $\RV{i}$ to $\RV{j}$ is denoted by $\msg[n]{i}{j}{}$ with $n\in\INTEGER$ numbering the current iteration.\footnote{ 
	Two neighboring nodes are joined by a single edge by definition. Note, that information has to flow into both directions and that two distinct messages are passed along opposing directions over every edge.}
The messages are updated according to the recursive rule:
\begin{align}
	\msg[n+1]{i}{j}{} \propto \sum \limits_{\RVval{i} \in \sampleSpace{X}} \pairwiseShort{i}{j} \localShort{i} \prod \limits_{\RV[x]{k} \in \{\neighbors{i} \backslash \RV[x]{j}\}}  \msg[n]{k}{i}{}.
	\label{eq:update}
\end{align}
To compute the messages, BP collects all messages sent to $\RV{i}$, except from $\RV{j}$ and multiplies this product with the local potential $ \localShort{i}$ and the pairwise potential $\pairwiseShort{i}{j}$. Finally, the sum over all states $\RVval{i}\in\sampleSpace{X}$ is sent.
In practice, the messages require some form of normalization~\cite{ihler2005loopy}; we will normalize the messages by $\msgNorm[n]{i}{j} \in \REALPos$ so that $\sum_{\RVval{j} \in \sampleSpace{X}} \msg[n]{i}{j}{} = 1$, which gives the messages a probabilistic interpretation. 
Note that all messages are consequently restricted to $\msg{i}{j}{} \in [0,1]$ and the application of the update rule ~\eqref{eq:update} does not change this.
\begin{lm}\label{lm:msg}
	Normalized messages, sent from node $X_i$ to $X_j$ over $\edge{i}{j}\in\setOfEdges$, represent probabilities and remain so under successive application of the BP update equation -- provided all messages are initialized to be positive.
\end{lm}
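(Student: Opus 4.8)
The plan is to proceed by induction on the iteration index $n$, exploiting two facts: the potentials appearing in the update rule are \emph{strictly} positive in the exponential parametrization of Section~\ref{sec:background:models:terminology}, and the alphabet $\sampleSpace{X}$ is finite.

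First I would dispatch the base case. By hypothesis every initial message $\msg[0]{i}{j}{}$ is positive on all of $\sampleSpace{X}$; since $\sampleSpace{X}$ is finite, the sum $\sum_{\RVval{j}\in\sampleSpace{X}}\msg[0]{i}{j}{}$ is a finite positive real, so the normalization constant $\msgNorm[0]{i}{j}\in\REALPos$ is well defined and the normalized message is positive and sums to one -- hence a probability distribution on $\sampleSpace{X}$, and in particular takes values in $[0,1]$.

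For the inductive step, assume that after iteration $n$ every message $\msg[n]{k}{i}{}$ is positive on $\sampleSpace{X}$. Inspecting the right-hand side of~\eqref{eq:update}: the local potential $\localShort{i}=\exp(\field{i}\RVval{i})$ and the pairwise potential $\pairwiseShort{i}{j}=\exp(\coupling{i}{j}\RVval{i}\RVval{j})$ are strictly positive for every assignment of states, so for each fixed $\RVval{j}$ the summand is a product of finitely many strictly positive numbers -- the product over $\RV{k}\in\{\neighbors{i}\setminus\RV{j}\}$ being the empty product $1$ in the corner case that $\RV{j}$ is the only neighbor of $\RV{i}$. Summing the finitely many terms over $\RVval{i}\in\sampleSpace{X}$ therefore yields a strictly positive, finite value for every $\RVval{j}$. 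Consequently the unnormalized message is a positive finite vector, $\msgNorm[n+1]{i}{j}$ is a well-defined element of $\REALPos$, and the normalized message $\msg[n+1]{i}{j}{}$ is positive and sums to one, i.e.\ again a probability distribution on $\sampleSpace{X}$. This closes the induction.

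I do not expect a genuine obstacle here; the only points that require (minor) care are verifying that the normalization constant is neither zero nor infinite -- which is precisely what strict positivity of the potentials together with the finiteness of $\sampleSpace{X}$ guarantees -- and handling the degree-one corner case via the empty-product convention. Both reduce to invoking the exponential form of $\localShort{i}$ and $\pairwiseShort{i}{j}$.
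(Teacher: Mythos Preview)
Your proof is correct and follows essentially the same approach as the paper's: positivity of the potentials preserves positivity of the messages under~\eqref{eq:update}, whence a normalization term $\msgNorm{i}{j}\in\REALPos$ always exists. The paper's own proof is a two-line sketch of exactly this argument, whereas you have spelled out the induction and the corner cases explicitly.
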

\begin{proof}
	Positive potentials in \eqref{eq:update} guarantee that all messages remain positive at every iteration.
	Consequently, a normalization term $\msgNorm[n]{i}{j}$ exists so that $\sum\limits_{\RVval{j} \in \sampleSpace{X}} \msg[n+1]{i}{j}{} = 1$.
\end{proof}

The update equation in~\eqref{eq:update} closely resembles the recursive definition of the messages in Section~\ref{sec:background:pgm:exact}, except for one fundamental difference.
For tree-structured graphs the recursive definition of the messages results from simply reorganizing the summations.
Consequently, the messages converge once all messages were passed up and down the tree.
For loopy graphs, however, such a reorganization is not possible -- since the recursive definition of a message, say $\msg[]{i}{j}{}$, contains the message $\msg[]{j}{i}{}$ itself.
For the very same reason, the notion of passing the messages up and down the tree is simply not possible anymore. 
Instead, one can neglect the existence of loops, update the message according to the recursive definition, accept the approximate nature of this approach, and hope for the messages to converge to the   \emph{fixed point messages} $\fpMsg{i}{j}{}$.


After convergence of BP, one can approximate the marginals similar as for the tree-structured models.
Note that the marginals were defined over any possible subset of random variables in~\eqref{eq:marginals}, though we will only consider the singleton marginals $\marginals{\RV{i}}(\RVval{i})$ and the pairwise marginals $\marginals{\RV{i},\RV{j}}(\RVval{i}, \RVval{j})$ in this thesis. 
In general, the marginals over any subset of random variables are computed by the product of incoming messages times the associated potentials.
Accordingly, we approximate the singleton and pairwise marginals by
%
\begin{align}
	\pmfApprox{\RV{i}}(\RVval{i}) = \frac{1}{Z_i} \localShort{i} \prod_{\RV{k} \in \neighbors{i}} \fpMsg{k}{i}{},
	\label{eq:marginals:single}
\end{align}
\begin{align}
	\pmfApprox{\RV{i},\RV{j}}(\RVval{i}, \RVval{j}) = \frac{1}{Z_{ij}} \localShort{i} \localShort{j} \pairwiseShort{i}{j}  
	\prod_{\RV{k} \in \{\neighbors{i} \backslash \RV{j}\}}  \fpMsg{k}{i}{}   
	\prod_{\RV{l} \in \{\neighbors{j} \backslash \RV{i}\}}  \fpMsg{l}{j}{},
	\label{eq:marginals:pw}
\end{align}
where  $Z_i,Z_{ij} \in \REALPos$ guarantee that all probabilities sum to one. 
We denote the set of all approximated singleton- and pairwise marginals by
\begin{align}
	\pseudomarginals \definition 
	\{\pmfApprox{\RV{i}}(\RVval{i}),\pmfApprox{\RV{i},\RV{j}} (\RVval{i},\RVval{j}): 
	\RV{i} \in \setOfNodes, \edge{i}{j} \in \setOfEdges\},
	\label{eq:pseudomarginals}
\end{align}
and refer to $\pseudomarginals$ as the pseudomarginals.\footnote{
	Note that the pseudomarginals are also often called \emph{beliefs}, hence the name belief propagation.}   
This naming-convention should highlight the fact that, in general, BP only approximates the marginals.
In fact, one may end up with pseudomarginals that do not correspond to any valid distribution at all.

\begin{example}[Unrealizable Pseudomarginals]$ $\newline
	Let us consider the pairwise graphical model in Figure~\ref{fig:ex:frustration} where $\setOfNodes{}$ is a set of three binary random variables with $\RVval{i} \in \sampleSpace{x}=\{-1,1\}$.
	
	The model is frustrated and contains two attractive edges with $\coupling{i}{j}=1$ (depicted by solid lines) and one repulsive edge with $\coupling{i}{j}=-1$ (depicted by the dashed line).

	Let us specify the pairwise potentials so that 
	\begin{align}
		\pairwise{x}{1}{2} &= \begin{bmatrix} 
			\exp(J) & \exp(-J)  \nonumber\\
			\exp(-J) & \exp(J)
		\end{bmatrix},\\
		\pairwise{x}{2}{3} &= \begin{bmatrix} 
			\exp(J) & \exp(-J) \nonumber\\
			\exp(-J) & \exp(J)
		\end{bmatrix},\\
		\pairwise{x}{3}{1} &= \begin{bmatrix} 
			\exp(-J) & \exp(J) \nonumber\\
			\exp(J) & \exp(-J)
		\end{bmatrix}.
	\end{align}
	
	\begin{center}
		\includegraphics[width=0.2\textwidth]{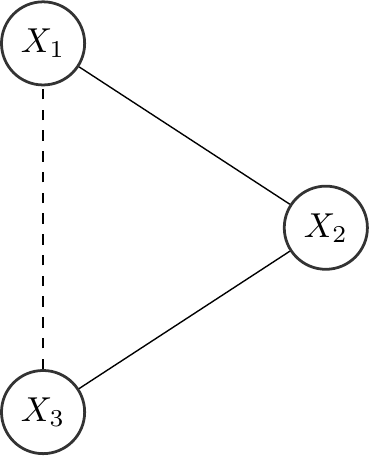}
		\captionof{figure}{Example of a frustrated model. The solid edges are attractive and the dashed edge is repulsive; note that the product of couplings along all edges (cf.~\eqref{eq:frustrations}) equates to a negative number, which characterizes the existence of frustrations.
			While $\RV{2}$ tries to pull $\RV{1}$ and $\RV{3}$ towards the same state, both variables are pushed apart by each other because of the repulsive edge.}
		\label{fig:ex:frustration}
	\end{center}
	
	We further set all local potentials to identical values and choose $\field{}=0$  so that
	\begin{align}
		\local{x}{i} = \begin{bmatrix} 
			\exp(\theta) \\
			\exp(-\theta)
		\end{bmatrix}
		= \begin{bmatrix} 
			1 \\
			1
		\end{bmatrix}.
	\end{align}
	Belief propagation converges to a unique fixed point defined by $\fpMsg{i}{j} = 0.5 $ for all $x_j\in \sampleSpace{X}$. 
	It is well-established that this particular fixed point exists and does not depend on the pairwise potentials if $\field{}=0$ (cf. Section~\ref{sec:stabilityBP:empirical_analysis:vanishing}).
	The fact that these messages $\fpMsg{i}{j} = 0.5 $ constitute a fixed point becomes immediately obvious if we plug in all potentials into the update equation~\eqref{eq:update}.
	
	The pairwise marginals are subsequently approximated by $\pairwiseApprox{i}{j}(x_i,x_j)$; in particular we have
	\begin{align}
		\pairwiseApprox{1}{2}(\RVval{1},\RVval{2}) &= \begin{bmatrix} 
			0.44 & 0.06 \\
			0.06 & 0.44
		\end{bmatrix}\label{eq:unrealizable:pairwise:1},\\
		\pairwiseApprox{2}{3}(\RVval{2},\RVval{3}) &= \begin{bmatrix} 
			0.44 & 0.06 \\
			0.06 & 0.44
		\end{bmatrix}\label{eq:unrealizable:pairwise:2},\\
		\pairwiseApprox{3}{1}(\RVval{3},\RVval{1}) &= \begin{bmatrix} 
			0.06 & 0.44 \\
			0.44 & 0.06
		\end{bmatrix}.\label{eq:unrealizable:pairwise:3}
	\end{align}
	Now we will proof by contradiction that the pairwise marginals from above are unrealizable;
	i.e., one cannot specify a joint distribution over three random variables that has its pairwise marginals correspond to~\eqref{eq:unrealizable:pairwise:1}~-~\eqref{eq:unrealizable:pairwise:3}.
	To make this more precise, note that by~\eqref{eq:unrealizable:pairwise:1} and by the sum-rule we have
	$\pmfApprox{\setOfNodes}(-1,+1,-1) + \pmfApprox{\setOfNodes}(-1,+1,+1) = 0.06$, so that the joint probability for both configurations must satisfy $\pmfApprox{\setOfNodes}(-1,+1,-1) \stackrel{!}{\leq}{0.06}$ and $\pmfApprox{\setOfNodes}(-1,+1,+1) \stackrel{!}{\leq} 0.06$.
	
	Accordingly,~\eqref{eq:unrealizable:pairwise:1}~-~\eqref{eq:unrealizable:pairwise:3} reveal the joint probabilities for all configurations $\RVvalSet{} \in |\sampleSpace{X}|^N$.
	In this example all joint probabilities must satisfy
	\begin{align}
		\pmfApprox{\setOfNodes}(\RVvalSet{}) \stackrel{!}{\leq} 0.06.
	\end{align}
	The total probability is thus bounded from above according to
	\begin{align}
		\sum_{\RVvalSet{}}\pmfApprox{\setOfNodes}(\RVvalSet{}) \leq 2^N\cdot 0.06 = 0.48,
	\end{align}
	which violates the fundamental rules of probability.
	Consequently, the pseudomarginals obtained by BP cannot belong to a valid joint distribution.
	\label{ex:frustrations}
\end{example}

The above example demonstrated why the pseudomarginals may fail to represent a valid distribution.
Still, unrealizable pseudomarginals need not pose a serious problem.
In fact, we usually resort to BP whenever 
evaluating the exact marginals is not possible, so that approximating the marginals reasonably well is often sufficient  (even if they are unrealizable).
The main motivation for considering BP was the hope for its efficient nature to carry over to loopy graphs.
This is unfortunately not always the case and, in the presence of loops, BP -- besides only approximating the marginals --  may  fail to converge altogether.

Nonetheless, despite all these shortcomings, BP often works surprisingly well, even for models that contain many loops.
So far we have depicted BP as a pure heuristic without any theoretical motivation.
This is unsatisfactory, 
and the situation was exactly like this for many years until a close relation to variational methods was revealed.
The variational perspective of BP is appealing for several reasons:
it relates to a similar problem, well studied in statistical physics;
it provides a more lucid justification of BP;
and it explains the approximate nature of BP.
It is for all those reasons that the variational perspective spurred much of the research on BP in the last decade and significantly improved the theoretical understanding of BP. 

\newsection{Variational Interpretation of BP -- A Physics Perspective}{bp:variational}
Ultimately, the aim of approximate inference is to approximate some unknown complex distribution in a way that simplifies answering probabilistic queries, as for example estimating the marginals.
One particularly powerful way of approximating distributions is available in the form of variational methods.
The variational free energy approach~\cite{jordan1999introduction} was first formalized in the context of statistical mechanics by Richard Feynman~\cite[Section 3.4]{feynman}.
Although variational methods originate from the physics literature, they are not limited to problems arising in this context and have become a reliable tool in the machine learning community.
For our purpose, we focus on the Bethe free energy $\FB$ that bridges the gap from BP to variational methods.

We show how the Bethe free energy emerges quite naturally if we approximate $\joint$ according to the variational free energy principle and follow the excellent deductive presentation of~\cite{yedidia2005} or~\cite[Chapter~33]{mackay2003}.
An exhaustive treatment of variational representations in the context of probabilistic graphical models that elucidates how a variety of approximate inference methods can be understood in terms of their variational representations is presented in the excellent review article~\cite{wainwright2008graphical}.
Despite its origin, we refrain, however, from discussing the insights of specific relevance for problems in physics and refer the interested reader to some of the many well-written books~\cite{georgii, huang} for a more in-depth treatment of the Bethe approximation from the physics perspective.\\

When approximating a joint distribution $\joint$, we first introduce a \emph{trial distribution} $\jointApprox$.
The aim is to manipulate the trial distribution so that the mismatch between both distributions is reduced.
Therefore, we quantify the mismatch by the Kullback-Leibler divergence, defined according to 
\begin{align}
	D(\pmfApprox{\setOfNodes{}}||\pmf{\setOfNodes{}}) \definition \sum_{\allConfigurations} \pmfApprox{\setOfNodes{}}(\RVvalSet{}) \log \frac{\pmfApprox{\setOfNodes{}}(\RVvalSet{})}{\pmf{\setOfNodes{}}(\RVvalSet{})}, \label{eq:KullbackLeibler}
\end{align}
where, with slight abuse of notation, $\log(\cdot)$ corresponds to  the natural logarithm. 
Note that the Kullback-Leibler is a non-symmetric function and not a proper distance measure. 
It does, however, satisfy the Gibbs inequality, i.e., it satisfies $D(\pmfApprox{\setOfNodes{}}||\pmf{\setOfNodes{}}) \geq 0$ with equality if and only if  $\pmfApprox{\setOfNodes{}}(\RVvalSet{}) = \pmf{\setOfNodes{}}(\RVvalSet{})$.\footnote{ 
	Different proofs can be found in the literature for this property of the Kullback-Leibler divergence that, e.g., rely on the log-sum or the Jensen's inequality~\cite[p.7]{mezard2009}.} 

Let $\joint$ belong to an exponential family, defined by~\eqref{eq:exponential}, and let $\pmfApprox{\setOfNodes{}}(\RVvalSet{})$ be an arbitrary trial distribution.
Then we can rewrite the Kullback-Leibler divergence according to 
%
\begin{align}
	D(\pmfApprox{\setOfNodes{}}||\pmf{\setOfNodes{}}) &= \sum_{\allConfigurations} \pmfApprox{\setOfNodes{}}(\RVvalSet{})  \log \pmfApprox{\setOfNodes{}}(\RVvalSet{}) + 
	\sum_{\allConfigurations} \pmfApprox{\setOfNodes{}}(\RVvalSet{})  \energy{\RVvalSet{}}+ 
	\sum_{\allConfigurations} \pmfApprox{\setOfNodes{}}(\RVvalSet{})  \log \partitionFunction.
	\label{eq:kullback_leibler_2}
\end{align}
Overloading our notation, we further define the average energy $\energy{\pmfApprox{\setOfNodes{}}}\definition \E_{\setOfNodes}(E(\RVvalSet{}))$ and the entropy $\entropy{\pmfApprox{\setOfNodes{}}}\definition \E_{\setOfNodes{}} (-\log(\pmfApprox{\setOfNodes}(\RVvalSet{})) )$ so that 
\begin{align}
	\energy{\pmfApprox{\setOfNodes{}}} & = \sum_{\allConfigurations} \pmfApprox{\setOfNodes{}}(\RVvalSet{})E(\RVvalSet{}) \label{eq:gibbs_energy}\\
	\entropy{\pmfApprox{\setOfNodes{}}}&= - \sum_{\allConfigurations} \pmfApprox{\setOfNodes{}}(\RVvalSet{}) \log \pmfApprox{\setOfNodes{}} (\RVvalSet{}). \label{eq:gibbs_entropy}
\end{align}
We can now plug the average energy and the entropy into the definition of the Kullback-Leibler divergence and express~\eqref{eq:kullback_leibler_2} in a simplified way according to
\begin{align}
	D(\pmfApprox{\setOfNodes{}}||\pmf{\setOfNodes{}}) & = \log \partitionFunction + \energy{\pmfApprox{\setOfNodes{}}} - \entropy{\pmfApprox{\setOfNodes{}}}.
	\label{eq:kullback_leibler_3}
\end{align}
Let us define two important quantities, namely the Helmholtz free energy (or the negative log-partition function)
\begin{align}
	\mathcal{F_H} \definition -\log \partitionFunction,
	\label{eq:helmholtz}
\end{align}
and the Gibbs free energy, which is given by
\begin{align}
	\FGibbs{\pmfApprox{\setOfNodes{}}} &\definition \energy{\pmfApprox{\setOfNodes{}}} - \entropy{\pmfApprox{\setOfNodes{}}}.
	\label{eq:f_gibbs}
\end{align}
Then, we can finally rearrange terms, plug the Gibbs free energy into~\eqref{eq:kullback_leibler_3} and express it in a sensible way that suggests how to estimate $\mathcal{F_H}$ (and thus the partition function $\partitionFunction$. 
\begin{align}
	\FGibbs{\pmfApprox{\setOfNodes{}}}  &= \mathcal{F_H} + D(\pmfApprox{\setOfNodes{}}||\pmf{\setOfNodes{}}).
\end{align}

The properties of the Kullback-Leibler divergence imply that $\FGibbs{} \geq \mathcal{F_H}$ with equality if and only if $\pmfApprox{\setOfNodes{}}(\RVvalSet{}) = \pmf{\setOfNodes{}}(\RVvalSet{})$. 
This is a pleasant formalism that opens the door for variational approaches; 
in particular minimizing $\FGibbs{\pmfApprox{\setOfNodes{}}}$ provides $\mathcal{F_H}$ and additionally, at its minimum, where the Kullback-Leibler divergence vanishes, the trial distribution converges to the exact one $\pmf{\setOfNodes{}}(\RVvalSet{})$.
More formally, for the set of all valid trial distribution $\pmfApprox{\setOfNodes}(\RVvalSet{})$ over $\sampleSpace{X}^N$ we have
\begin{align}
	\pmf{\setOfNodes}(\RVvalSet{}) = \argmin_{\pmfApprox{\setOfNodes{}}(\RVvalSet{})} \FGibbs{\pmfApprox{\setOfNodes{}}}.
	\label{eq:minimization_fg}
\end{align}
Note that this constitutes the central problems of inference, i.e., evaluating the partition function and computing marginal distributions (cf. Section~\ref{sec:background:inference}).
The formulation as an optimization problem provides an elegant way of solving those inference problems but -- in its current form -- is of limited relevance. 
Evaluation of the Gibbs free energy alone becomes infeasible with increasing model size as  both terms in the Gibbs free energy require a summation over exponentially many terms (cf.\eqref{eq:gibbs_energy} and~\eqref{eq:gibbs_entropy}).
The importance of~\eqref{eq:minimization_fg}, however, lies in the fact that 
a whole family of tractable approximation methods results from restricting the possible choices of trial distributions.\\

One popular approximation method that follows this idea  is the \emph{mean field} method.
In its simplest form, one assumes that the trial distribution is defined over independent random variables.
Consequently, the joint distribution factorizes into the product of the singleton marginals according to
\begin{align}
	\pmfApprox{\setOfNodes{}}(\RVvalSet{}) = \prod_{\RV{i} \in \setOfNodes} \pmfApprox{\RV{i}}(\RVval{i}).
	\label{eq:mean_field}
\end{align}
Restricting ourselves to trial distributions of the form~\eqref{eq:mean_field}, the minimization of the Gibbs free energy suddenly becomes tractable.
The minimizer of~\eqref{eq:minimization_fg} then approximates the marginals.
Some problems exist for which the mean field approximation even becomes asymptotically exact, as for example for infinite size Ising grid graphs with $\averageDegree[\graph]=\infty$~\cite[p.80]{mezard2009}.
The approximation tends to require a high average degree to work well.
In general, however, this is not the case and the mean field method often provides poor approximations.
This is mainly because of statistical dependencies that are present in $\joint$, which the mean field method fails to account for.

It is an obvious next step to enhance the approximation quality by accounting for the correlations between pairs of random variables.
One prevalent way comes in the form of the Bethe approximation that is of particular relevance for this work.
Not only does the Bethe free energy $\FB$ restrict the class of trial distributions but it inherently approximates the Gibbs free energy as well.
The latter is a result of the average energy $\EB(\pseudomarginals)$ and the Bethe entropy $\SB(\pseudomarginals)$ that are evaluated over the pseudomarginals $\pseudomarginals$ instead of the joint distribution according to
\begin{align}
	\EB(\pseudomarginals) \definition& -  \sum_{\setOfNodes_m \in \{\RV{i}\in \setOfNodes\} \cup\{\RV{i},\RV{j} :\edge{i}{j}\in\setOfEdges\}
	} \pmfApprox{\setOfNodes_m}(\RVvalSet{m})\cdot \ln \potential{\setOfNodes_m}(\RVvalSet{m}) \nonumber\\
	=& - \sum_{\RV{i}\in\setOfNodes}\sum_{x_i\in\sampleSpace{x}}\pmfApprox{\RV{i}}(x_i)\ln\localShort{i}  -\sum_{\edge{i}{j} \in \setOfEdges} \sum_{x_i,x_j\in\sampleSpace{X}^2}\ \pmfApprox{\RV{i},\RV{j}}(x_i,x_j) \ln {\pairwiseShort{i}{j}}  \label{eq:energy} \\
	\SB(\pseudomarginals) \definition& - \sum_{\edge{i}{j} \in \setOfEdges} \sum_{x_i,x_j\in\sampleSpace{x}^2}\pmfApprox{\RV{i},\RV{j}}(x_i,x_j) \ln \pairwiseApprox{i}{j}(x_i,x_j)\nonumber\\ 
	&+ \sum_{\RV{i}\in\setOfNodes}\big(\nodeDegree{i}-1\big)  \sum_{x_i\in\sampleSpace{X}^2}\! \pmfApprox{\RV{i}}(x_i)\ln\pmfApprox{\RV{i}}(x_i).  \label{eq:entropy}
\end{align}
This subsequently defines the Bethe free energy in accordance with~\eqref{eq:f_gibbs} so that
\begin{align}
	\FB(\pseudomarginals) =& \EB(\pseudomarginals) - \SB(\pseudomarginals) \\
	=& \sum_{\edge{i}{j} \in \setOfEdges} \sum_{x_i,x_j}\pmfApprox{\RV{i},\RV{j}}(x_i,x_j) \ln \frac{\pairwiseApprox{i}{j}(x_i,x_j)}{\pairwiseShort{i}{j}}
	- \sum_{\RV{i}}\sum_{x_i}\pmfApprox{\RV{i}}(x_i)\ln\localShort{i}\nonumber \\
	&-\sum_{\RV{i}}\big(\nodeDegree{i}-1\big)  \sum_{x_i}\! \pmfApprox{\RV{i}}(x_i)\ln\pmfApprox{\RV{i}}(x_i). 
	\label{eq:f_bethe}
\end{align}

In comparison to the Gibbs free energy, the Bethe free energy reduces the computational burden drastically;
solely because $\FB$ is evaluated over the pseudomarginals, i.e., the singleton and pairwise marginals, instead of the full joint distribution.
Just as the minimum of $\FGibbs{}$ provides the partition function and the marginals, one would hope that -- as $\FB$ approximate $\FGibbs{}$ -- minimizing $\FB$ will provide approximations to the partition function and the marginals.

In order to minimize $\FGibbs{}$ efficiently, we had to restrict the choice of trial distributions.
A similar restriction is required for minimizing $\FB$;
the most obvious class of trial distributions are realizable pseudomarginals that adhere to the sum-rule of probability.
This constraints the pseudomarginals so that they correspond to some valid distribution $\joint$; 
we refer to this set as the marginal polytope.
\newpage
\begin{defn}[Marginal Polytope]
	The marginal polytope is the set of all pseudomarginals that are jointly realizable by a valid joint distribution $\joint$; i.e., 
	\begin{align}
		\MPolytope(\graph) \definition  \bigg\{ \pmfApprox{\RV{i}} (x_i), \pmfApprox{\RV{i},\RV{j}}(x_i,x_j)\hspace{5pt} : \hspace{5pt}  
		&\pmfApprox{\RV{i}}(x_i)  = \sum_{\setOfNodes \backslash \RV{i}}  \joint : \RV{i} \in \setOfNodes, \nonumber\\
		&\pmfApprox{\RV{i},\RV{j}}(x_i,x_j) = \sum_{\setOfNodes \backslash \{\RV{i},\RV{j}\}}\joint :\edge{i}{j} \in \setOfEdges \bigg\}.
	\end{align}
	Note that the number of constraints in the marginal polytope depends on the structure of the graph as 
	the singleton marginals and the pairwise marginals are only defined over the set of all nodes and the set of all edges respectively.
	We will refrain from making the dependence on the graph explicit, however, and only refer to the marginal polytope as $\MPolytope$.
\end{defn}

The constrained minimization of the Bethe free energy, i.e., $\min_{\MPolytope} \FB(\pseudomarginals)$, seems like a standard optimization problem.
It is, but the huge amount of constraints in $\MPolytope$ impedes the optimization in practice~\cite[Section 22.3]{murphy}.
This issue is usually dealt with by relaxing the set of constraints.
Accordingly we may relax the requirement from \emph{globally realizable marginals} to \emph{locally consistent marginals}.
This makes the dependence on $\joint$ superfluous and only requires two properties to be satisfied: 
all singleton and pairwise marginals have to be properly normalized, and the singleton marginals must be consistent with the pairwise marginals.
These constraints then define the local polytope $\LPolytope$.
\begin{defn} [Local Polytope]
	The local polytope is the set of all pseudomarginals that are locally consistent, i.e., 
	\begin{align}
		\LPolytope(\graph) = \bigg\{ \pmfApprox{\RV{i}} (x_i), \pmfApprox{\RV{i},\RV{j}}(x_i,x_j)  :  
		& \sum_{x_i\in\sampleSpace{x}} \pmfApprox{\RV{i}} (x_i)  = 1, \nonumber\\
		& \pmfApprox{\RV{i}}(x_i) =\sum_{x_j\in\sampleSpace{x}} \pmfApprox{\RV{i},\RV{j}} (x_i,x_j) \bigg\}
		\label{eq:local}
	\end{align}
	As for the marginal polytope, we will also refrain from making the dependence on the graph structure explicit and only refer to the local polytope by $\LPolytope$.
	Further note that $\LPolytope$ contains fewer constraints than $\MPolytope$ and thus provides an outer bound on the marginal polytope (with equality for tree-structured models (cf.~\cite[Prop. 4.1]{wainwright2008graphical}).
\end{defn}

It is worth pointing out a few differences that set the Bethe approximation apart from other variational methods.
On the one hand, we would like to stress that the Bethe approximation approximates the marginals well in many cases.
On the other hand, there are some conceptional shortcomings:
First, the minimization must be performed over the local polytope;
the obtained marginals are consequently only locally consistent and, in general, do not belong to a valid joint distribution.
Second, except for certain sub-classes of graphical models, the Bethe free energy does not upper bound (nor lower bound) the original objective.
This seems to fundamentally jeopardize the main assumption that the minimum of $\FB(\pseudomarginals)$ should be somewhat close to the minimum of $\FGibbs{\pmfApprox{\setOfNodes{}}}$.

To summarize, the Bethe approximation has the potential to work well and may thus serve as an efficient approximation method
but, at the same time, its properties indicate that performance guarantees will often be hard to come by.
This highlights the need for developing a good understanding of the Bethe approximation and its shortcomings.
The Bethe approximation is well-studied in statistical physics.
Note, however, that the models studied in physics differ from the ones studied in computer science in one important aspect.
Whereas physicists usually study the behavior in the thermodynamic limit, i.e., for graphs of infinite size,\footnote{ 
	This allows one to take limits and admits an analytical treatment in some cases.}
in computer science, we are confronted with models of finite size.
Although we expect a similar behavior to a certain degree, we will see that finite-size effects will often have a notable effect on the properties of $\FB$.

\newsubsection{Energy Landscape of the Bethe Free Energy}{bp:variational:energy_landscape}
Let us now focus on the practical aspects of minimizing the Bethe free energy directly.
In particular, we will take a closer look at how to approximate the marginals and the partition function in this setting.
Similar as the partition function corresponds to  the Helmholtz free energy (cf.~\eqref{eq:helmholtz}), we 
introduce the \emph{Bethe partition function} $\partitionBethe$ that corresponds to the Bethe free energy according to
\begin{align}
	\partitionBethe = \exp(-\FB).
	\label{eq:bethe_partition_fb}
\end{align}
Just as $\FB$ approximates $\mathcal{F_H}$, $\partitionBethe$ approximates $\partitionFunction$.
Note that there is literally no difference between both quantities and instead of minimizing $\FB$ one could equivalently maximize $\partitionBethe$.
We stick to the notion of minimizing the Bethe free energy, primarily for historical reasons.

Let us recap the two key-ingredients of the Bethe approximation again:
we must replace the Gibbs free energy by the Bethe free energy and 
replace the constraints of the marginal polytope by the ones of the local polytope.
This gives us the global minimum of the constrained Bethe free energy according to
\begin{align}
	\FBGlobalMin = \min_{\LPolytope} \FB(\pseudomarginals),
	\label{eq:minimization_fb}
\end{align}
where the pseudomarginals are specified by the associated minimizer
\begin{align}
	\pseudomarginals^* = \argmin_{\LPolytope} \FB(\pseudomarginals).
	\label{eq:pseudomarginals_global}
\end{align}
Although the relaxation to $\LPolytope$ reduces the complexity, it concurrently alters the energy landscape with drastic consequences.
As opposed to the convex Gibbs free energy, the Bethe free energy is generally a non-convex function (cf. Example~\ref{ex:energy_landscape}).

We will denote all stationary points of the constrained Bethe free energy by
\begin{align}
	\FBStationary \in \big\{\FB: \gradient \FB(\pseudomarginals) =0    \big\}.
\end{align}
Finally, local minima of the constrained Bethe free energy are particularly relevant.
Let us express the Hessian  of the Bethe free energy by $\Hessian{\FB(\pseudomarginals)}$. Then we denote local minima explicitly by
\begin{align}
	\FBLocalMin{m} \in \big\{\FB: \gradient \FB(\pseudomarginals) =0, \Hessian{\FB(\pseudomarginals)} \quad \text{is positive definite}\big\} .
\end{align}
Note, that the different types of stationary points consequently satisfy $\FBGlobalMin  \subseteq \{\FBLocalMin{m} \} \subseteq \{\FBStationary \}$.

\newsubsection{The Bethe Approximation and Belief Propagation}{bp:variational:correspondence}
One might ask where the added value of the variational approach lies exactly?
The main reason is the immediate connection between the stationary points of the Bethe free energy and the fixed points of BP.
The rich history of studying the Bethe free energy thus benefits our understanding of BP directly, as the theoretical properties carry over through this relationship.

\newsubsubsection{Explicit Connection}{bp:variational:correspondence:explicit}
There is a fundamental connection between the stationary points of the Bethe free energy and the BP fixed points $\fpSetOfMessages$ (and the associated pseudomarginals $\pseudomarginals^{\circ}$).
In a nutshell, the stationary points are in a one-to-one correspondence with the fixed points.
We take advantage of the fact that $\FB$ is defined over the pseudomarginals (cf.~\eqref{eq:minimization_fb}) and express the relation between the stationary points $\FBStationary$ and the  pseudomarginals according to
\begin{align}
	\FBStationary = \FB(\pseudomarginals^{\circ}).
\end{align}

This correspondence becomes apparent when specifying the minimization of $\FB$ explicitly and including the constraints of the local polytope as a Lagrangian.
Taking the derivatives and setting them to zero then yields the BP update equations~\cite{yedidia2001}.
Consequently, at stationary points of $\FB$ (where all partial derivatives are zero), all BP messages remain unaffected by the update rule (which constitutes a BP fixed point).
In fact, every fixed point of BP is an interior stationary point of the constrained Bethe free energy~\cite{yedidia2005}.
Note that the nature of this correspondence further reveals a subtle, yet important, detail of BP: 
updating a message forces one gradient at a time to zero while keeping all other variables fixed, which albeit often going downwards is not a gradient descent step per se.
The variational principle, however, suggests that we should specifically consider minima of $\FB$;
luckily, the BP updates still tend to proceed in a sensible way towards minima of $\FB$~\cite{heskes2003stable, aji2000generalized}.

\newsubsubsection{Implications}{bp:variational:correspondence:implications}
Stable fixed points of BP (see. Section.~\ref{sec:solving:stability} for a thorough discussion on stability) are of particular relevance in practice.
Let us index all stable fixed points $\pseudomarginals^{(s)}$ by $s=1,\ldots,S$.
Every stable fixed point $(s)$ then has an associated local minimum $\FBLocalMin{(s)}$, an associated partition function $\partitionBetheLocalMin{(s)}$, and associated pseudomarginals $\pseudomarginals^{(s)}$; we denote the set of all $S$ stable fixed points by
\begin{align}
	\setOfStableSol = \Big\{ \fixedPointTuple{(1)},\ldots, \fixedPointTuple{(S)} \big\}.
\end{align}
Likewise, we consider the set of all fixed points that constitute minima of the Bethe free energy
\begin{align}
	\setOfMinimaSol = \Big\{ \fixedPointTuple{1},\ldots, \fixedPointTuple{M} \big\},
\end{align}
and the total set of fixed points
\begin{align}
	\setOfAllSol = \Big\{ \fixedPointTuple{1},\ldots, \fixedPointTuple{T} \big\}.
\end{align}
Further note that all stable fixed points of BP must be minima of $\FB$.
In the presence of frustrated cycles, however, minima may be unstable as well~\cite{heskes2003stable,murphy1999loopy}.
The different types of fixed points thus relate to each other according to
$\setOfStableSol \subseteq \setOfMinimaSol \subseteq \setOfAllSol$.

One important question is to understand under which conditions a fixed point is unique, i.e., when $|\setOfAllSol| =1$.
It seems appealing to utilize the connection between BP and $\FB$ for that purpose.
Besides for very simple models, however, conditions for convexity of $\FB$ are hard to come by, and established conditions are often far from necessary~\cite{pakzad2002belief,heskes2004uniqueness}.
A good overview of different conditions for convexity of $\FB$ is presented in~\cite{mooij2007sufficient}.

Besides these results for uniqueness, not many results aim to characterize the expected number of fixed points.
One work that pursues this direction is~\cite{watanabe}.
Note that -- at least as long as some proper form of message normalization is used~\cite{martin2011} -- the number of fixed points is always finite (cf. Theorem 7 and Lemma 2 in~\cite{watanabe}).   
Another important insight is that the number of fixed points is always odd.

We will later characterize the number of fixed points and asses their stability in Chapter~\ref{chp:solutionsBP} and in Chapter~\ref{chp:accuracyBP} for a range of models.
As of now, we will show the correspondence between stationary points of $\FB$ and fixed points of BP for one exemplary model.

\begin{example}[Energy Landscape of $\FB$ for an Attractive Ising Model]$ $\newline
	Let us take a closer look at the relationship between the Bethe free energy and BP by means of an attractive Ising model, specified on an infinite-size two-dimensional grid graph.
	Let all variables have identical local potentials, specified by $\field{i} = \field{} > 0$, and all edges have identical pairwise potentials, specified by $\coupling{i}{j} = \coupling{}{} > 0$.
	We illustrate a slice of the Bethe- and the Gibbs- free energy along the marginals of one variable $\pmf{\RV{i}}(\RVval{i})$ for a model with weak couplings (Figure~\ref{fig:f_bethe_illustrative_1}) and for a model with strong couplings (Figure~\ref{fig:f_bethe_illustrative_2})
	Note that the Bethe free energy upper bounds the Gibbs free energy for attractive models (cf. Section~\ref{sec:bp:evaluation}).
	
	\begin{center}
		\includegraphics[width=0.5\textwidth]{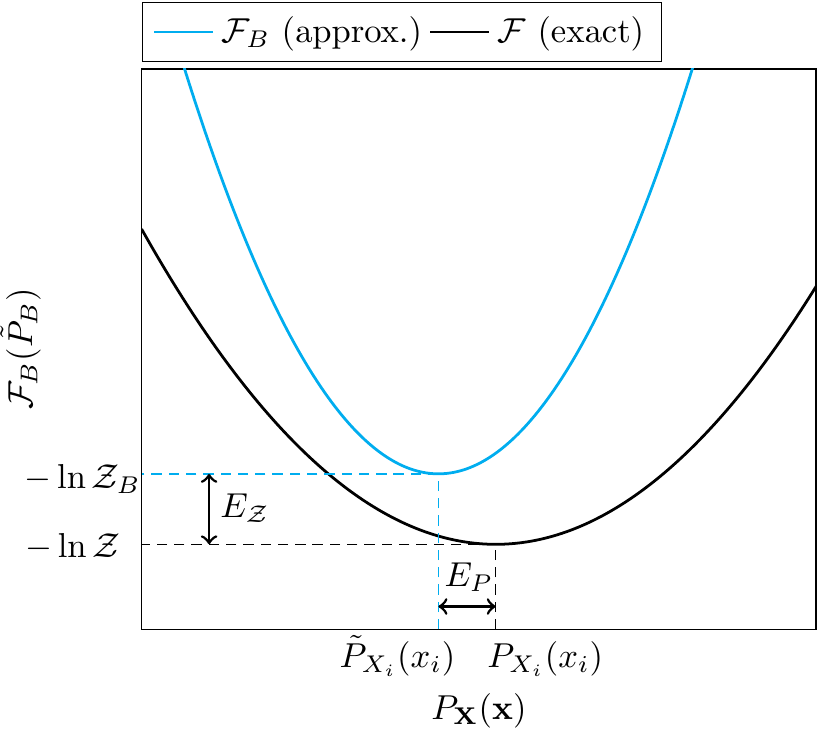}
		\captionof{figure}{
			Schematic illustration of the energy landscape that shows a slice along $\pmf{\RV{i}}(\RVval{i})$ for an attractive model with weak couplings. The exact and the approximate results are depicted for both the partition function and the marginals.}
		\label{fig:f_bethe_illustrative_1}
	\end{center}
	If the couplings are sufficiently small, $\FB$ is convex and BP has a unique and stable fixed point that has the whole message-space as a region of attraction (cf. Section~\ref{sec:solutionsBP:accuracy_selected_models}).
	BP will thus always converge independently of its initialization.
	
	\begin{center}
		\includegraphics[width=0.5\textwidth]{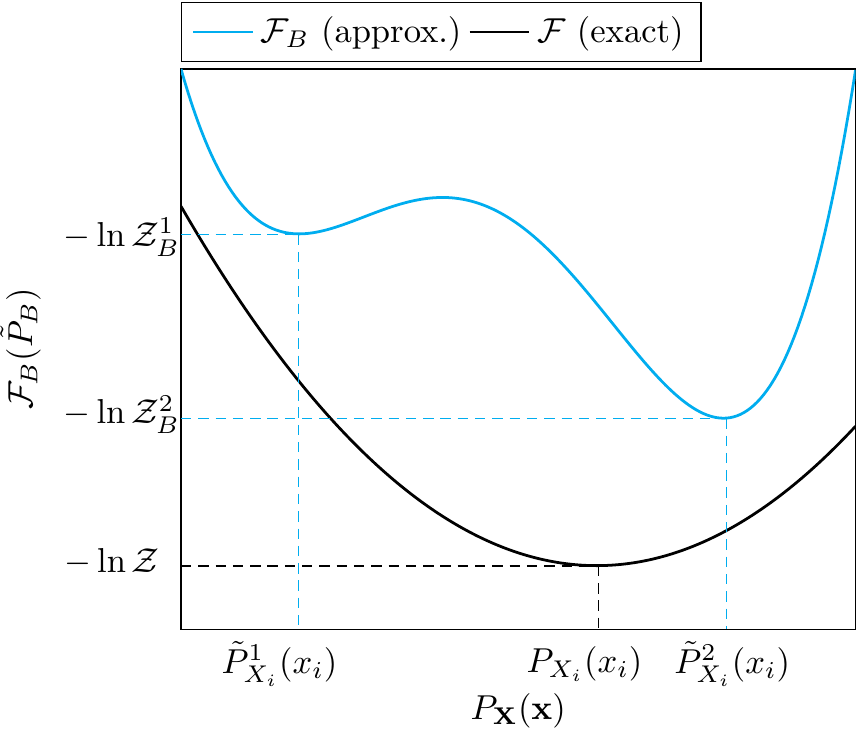}
		\captionof{figure}{ Schematic  illustration of the energy landscape for an attractive model with strong couplings. The Bethe free energy is non-convex and BP may converge to one of the local minima.
			Note how $\pmfApprox{\RV{i}}^2(\RVval{i})$ is more accurate than $\pmfApprox{\RV{i}}^1(\RVval{i})$.}
		\label{fig:f_bethe_illustrative_2}
	\end{center}
	\label{ex:energy_landscape}

	Let us now consider a model with strong couplings.
	Note how $\FB$ becomes non-convex and exhibits multiple stationary points that correspond to BP fixed points.
	We will later show in Chapter~\ref{chp:solutionsBP} that both local minima are stable fixed points and that the message initialization will determine to which fixed point BP converges.
	This has a major influence on the overall quality of the approximated quantities as the difference to the exact marginals $\pmf{\RV{i}}(\RVval{i})$ (i.e., the values minimizing the Gibbs free energy) and thus the accuracy varies considerably between different fixed points.

\end{example}

\newsubsection{Other Variational Approaches}{bp:variational:variants}
The one-to-one correspondence between stationary points of $\FB$ and fixed points of BP led to an improved understanding of BP and paved the way for methods that minimize the Bethe free energy directly~\cite{welling2001belief,welling2003approximate,cccp2003yuille}.
The minimization of $\FB$, however, remains non-trivial and usually requires additional considerations to render it viable. 

The consideration of the Bethe free energy opens the door for provable convergent algorithms.
Belief optimization~\cite{welling2001belief} minimizes $\FB$ by following the negative gradient and is guaranteed to converge.
The minimization takes place along the edges of the local polytope, which guarantees that the marginalization constraints in~\eqref{eq:local} are satisfied throughout.
Alternatively, one can decompose the non-convex $\FB$ into a convex and a concave problem (this decomposition is in general not unique)
and optimize  both objectives in an alternating fashion.
This procedure, known as the constrained convex-concave procedure~\cite{cccp2003yuille},, is guaranteed to obtain a stationary point of $\FB$.
Although both methods converge to  stationary points of $\FB$, two major limitations remain:
first, the approximation quality may vary considerably between different stationary points and only obtaining some (local) extremum may yield sub-optimal solutions;
and second, despite the appeal of convergence guarantees, run-time guarantees are often just as important in practice.

To counteract some of these problems, one can also relax the Bethe free energy and come up with well-behaved surrogates that can be minimized efficiently.
Convex surrogates seem to be specifically well-suited for this task and have thus received considerable attention.
One can, for example, upper bound the objective by a convex function as in tree-reweighted belief propagation (TRW)~\cite{wainwright2003tree-convex, wainwright2005new, kolmogorov2006convergent} that enforces a concave entropy-term as a combination of tree-entropies.
Note, however, that there are multiple ways of coming up with convex surrogates~\cite{meltzer2009convergent,globerson2007convergent, hazan2008convergent}.
One comprehensive overview that unifies many different approaches in terms of the chosen counting number is presented in~\cite{meshi2009convexifying}.
Overall these convex versions are well-behaved and can be optimized efficiently.
A trade-off between convergence-properties and accuracy, however, persists and -- if it can be minimized -- the Bethe approximation often outperforms its convex surrogates in terms of accuracy~\cite{meshi2009convexifying, weller2013approximating}.

This observation led to a renewed focus on trying to minimize the Bethe free energy efficiently (i.e., in polynomial run-time).
The efficient minimization of $\FB$ becomes possible if we impose certain properties on the graphical model (in terms of its structure or parameters) and consider $\epsilon$-approximating the stationary points.
In particular, this includes sparse models~\cite{shin2012complexity}, where a projection scheme in the minimization task allows for a fully polynomial-time approximation on graphs with $\max(\nodeDegree{i}) = \mathcal{O}(\log N)$.
For attractive models (not necessarily sparse), this algorithm is further improved in~\cite{weller2013approximating} so that it obtains the global minimum of the Bethe free energy.
If both properties are fulfilled, i.e., for locally tree-like attractive models the Bethe approximation is exact and can be optimized efficiently~\cite{dembo2010ising}.\\

From a completely different point of view, the approximation quality can also be enhanced by better approximations of $\FGibbs{}$.
The concept of the Bethe approximation, that only accounts for the singleton- and pairwise marginals, generalizes to the Kikuchi method that accounts for the marginals of larger cliques as well.
The same principle -- i.e.,  considering larger cliques --  generalizes BP as well; this method is accordingly termed as generalized belief propagation~\cite{yedidia2001}.
Both the accuracy and the convergence properties improve by adopting the computations to larger cliques.
Although larger cliques inevitably increase the computational complexity,  the principles of  generalized belief propagation are flexible enough to allow moving freely along this trade-off.

Gauge transformations~\cite{chertkov2006loop,chertkov2008belief} are somewhat similar in that they work with the exact partition function $\partitionFunction$ directly while simplifying its estimation. 
This is achieved by expressing the exact partition function via a loop-series expansion of $\partitionBethe$. 
Interestingly, all terms in the loop-series correspond to fixed points of BP, which suggests one way of computing them.
The number of terms in the expansion, however, may be large for models with many loops so that computing all terms is often not an option.
Nonetheless, gauge transformations provide a principled approach to improve upon BP by at least accounting for some terms in the loop-series.\\

To conclude this section, a variety of variational approaches are available that aim to find the sweet spot between accuracy and complexity when performing approximate inference.
Most of the above methods build upon the fundamental principles of the Bethe approximation, that -- although providing a considerable simplification over the Gibbs free energy -- remains problematic to minimize in practice.

\newsection{Approximation Quality}{bp:evaluation}
So far we have already discussed various approximate inference methods.
Different methods will perform differently and, depending on the given model, achieve varying accuracy.
If we want to evaluate and compare approximate inference methods, it is important to measure the accuracy of the approximation.
Having said that, approximate inference methods are usually applied to problems that forbid the computation of the exact solution.
This prohibits measuring the accuracy by comparison to the exact solution and highlights the need for some qualitative measures of the expected performance for a given problem. 
Essentially, one would like to provide model-specific performance guarantees and bounds on the approximation error.
Few bounds on the approximation quality are, however, established and some bounds require considerable computational resources on their own,
not to mention that most bounds are often relatively loose.

In this section, we outline how the error in the approximation of the partition function and the marginals will be measured throughout the thesis.
Moreover, we will discuss some of the available error-bounds for both quantities.


\newsubsubsection{Partition Function}{bp:evaluation:partition}
The error of the partition function is usually evaluated in terms of the relative error between the log-partition functions  (i.e., the negative value of the Bethe free energy) according to
\begin{align}
	\EPartition{m} = \frac{|\log \partitionBethe^m-\log\partitionFunction|}{\log \partitionFunction} = \frac{|\FGibbs{}-\FB^m|}{-\FGibbs{}}, \label{eq:error_partition}
\end{align}
where $\partitionBethe^m = \partitionBethe(\pseudomarginals^m)$ is the Bethe partition function of the $m^{th}$ fixed point~\cite{gomez2007truncating}.
Note that \eqref{eq:error_partition} quantifies the error in the partition function and in the free energy.


Existing bounds on the partition function usually combine an upper bound~\cite{wainwright2005new, jaakkola1997recursive}
with some lower bound as e.g., the naive mean field~\cite{wainwright2008graphical}. Other bounds are based on the loop-series expansions~\cite{willsky2008loop} or the non-backtracking operator~\cite{saade2014spectral}.

One intriguing detail of the Bethe approximation is that it only approximates the partition function but neither provides an upper nor a lower bound of it.
For the important class of attractive models, however, the Bethe partition function does indeed lower bound the partition function, i.e., $\partitionBethe < \partitionFunction$~\cite{ruozzi2012bethe}.
This has the important consequence that minimizing $\FB$ is always optimal with respect to the approximation quality of the partition function for attractive models, since $\argmin_{\partitionBethe^m}(\EPartition{m}) = \exp(-\min_{\LPolytope} \FB(\pseudomarginals^m))$.

An alternative, arguably more intuitive, proof that reveals how the Bethe partition function lower bounds the true partition function relies on the concept of clamping~\cite{weller2014clamping}.
The main idea is to condition (i.e., to clamp) on a variable taking one specific value at a time and to evaluate the partition function as a sum over all sub-partition functions.
This brings the advantage of working directly on the Bethe free energy and obviates the need for relying on additional concepts such as graph covers~\cite{vontobel2013counting, ruozzi2013bethebound} or loop-series expansions~\cite{willsky2008loop}.
Moreover, for attractive models clamping always improves the approximation quality of the partition function over BP.
This is particularly true if selecting the clamped variables wisely, as for example according to some heuristics~\cite{weller2016uprooting}.
Clamping also improves the upper and lower bounds of $\partitionFunction$ as it can be utilized to improve the partition function estimates of TRW and the mean field method~\cite{weller2016clamping}.\\

Closely related to BP, but non-iterative, is the mini-bucket elimination scheme~\cite{dechter2003mini}.
Mini-bucket elimination also provides upper and lower bounds on the partition function and offers a trade-off between the accuracy of the bound and computational efficiency based on the clique-size considered.
One can further generalize this concept based on H\"olders inequality~\cite{liu2011bounding} and efficiently compute bounds with good quality, where the improvement on the estimated bounds is because of incorporating the concept of TRW. 
The method of gauge transformations further generalized this concept and allows one to obtain tighter bounds on the partition function with similar computational effort~\cite{ahn2018gauged}.\\

\newsubsubsection{Marginals}{bp:evaluation:marginals}
We measure the error of the singleton marginals by the mean squared error (MSE), where the error of the approximated marginals at the $m^{th}$ fixed point is given by 
\begin{align}
	\EMarginal{m} &= \frac{1}{N} \sum_{\RV{i}\in\setOfNodes} ||\pmf{\RV{i}}(\RVval{i}) -\pmfApprox{\RV{i}}^m(\RVval{i})||_2^2 \label{eq:error_mse}\\
	&= \frac{2}{N}\sum_{\RV{i}\in\setOfNodes} \big(\pmf{\RV{i}}(\RVval{i}=1) -\pmfApprox{\RV{i}}^m(\RVval{i}=1)\big)^2. 
	\label{eq:error_marginal}
\end{align}
The simplification in~\eqref{eq:error_marginal} is because of symmetry properties for binary random variables.
Note that one can  replace the squared $l_2$-norm in~\eqref{eq:error_mse} by any other norm if desired; 
in particular the $l_{\infty}$-norm is considered by some authors to measure the worst-case error.

The expected mean describes the response of the system to the field $\field{}$~\cite{mezard2009} according to
\begin{align}
	\meanAvg = \E(m) = \frac{1}{N} \sum_{\RV{i}\in\setOfNodes} \mean{i} = \frac{1}{N} \sum_{\RV{i}\in\setOfNodes} \E(\RV{i}),
	\label{eq:mean_expected}
\end{align}
where we parameterize binary random variables by their mean (cf.~\eqref{eq:mean}).
Note that the difference between the expected mean of the exact marginals $\meanAvg$ and of the approximated marginals $\meanAvgApprox$ is identical to the sum over all marginal errors
%
\begin{align}
	\meanAvg - \meanAvgApprox = \frac{2}{N} \sum_{\RV{i}\in\setOfNodes} \singleExact{i}(\RVval{i}=1)- \singleApprox{i}(\RVval{i}=1).
	\label{eq:mean_difference}
\end{align}
Note that we consider binary random variables with $\sampleSpace{x}=\{-1,1\}$; 
if we would consider  $\sampleSpace{x'}=\{0,1\}$ instead, the expected mean would change according to $\langle m' \rangle = \frac{1}{N} \sum_{i=1}^N \singleExact{i}(\RVval{i}=1) = \frac{1}{2}(\meanAvg+1)$.
\\

Some methods quantify and bound the approximation  error of the marginals instead of $\EPartition{m}$ by computing a confidence interval (i.e., an upper- and lower bound) on the exact marginals.\footnote{ 
	Note that upper and lower bounds of the partition function can also be related to upper and lower bounds of the marginals~\cite{weller2013approximating}.}
Bound propagation~\cite{leisink2003bound} computes bounds for small clusters of nodes; 
similar to BP, these bounds are then propagated throughout the graph and provide bounds on the singleton marginals after convergence.
The computational complexity is determined by the tree-width (similar as for the Junction tree algorithm), thus limiting the applicability to models with few loops.

Similar in principle, but more efficient in terms of computational complexity, is the recursive propagation of the bounds over sub-trees of the model~\cite{mooij2009bounds}.
The computational complexity is determined by the support of the random variables.
This renders the method applicable to models with high connectivity as well.

From a completely different perspective, TRP provides bounds on the approximation error as well~\cite{wainwright2003tree-scheduling}.
TRP considers spanning trees to approximate the marginals, which suggests computing bounds on the marginals over spanning trees as well.
Regarding the computational complexity, there is one particularly intricate ingredient involved in bounding the error over the spanning trees, with the problem being the requirement  for the log-partition function that cannot be computed exactly for complex models.

Some of these issues are resolved by computing the bounds over self-avoiding walk trees instead~\cite{ihler07}.
If we compare the quality of the bounds we observe that the proposed methods in~\cite{mooij2009bounds} and~\cite{wainwright2003tree-scheduling} are of similar quality.
The computation over self-avoiding walk trees impacts the quality of the obtained bounds:
for models with weak interactions (where BP converges fast) the tightest bounds are provided by~\cite{ihler07}; for models with strong interactions (where BP performs worse) the confidence intervals become wider than the ones provided by~\cite{wainwright2003tree-scheduling}.

All those methods aim to estimate the expected performance of BP in terms of the marginal error and while this often estimates the performance well, all presented methods have one particular problem in common.
Increasing the coupling strength degrades the quality of the estimated bounds.

\newsection{BP as a Dynamical System}{bp:map}
It is often convenient to consider BP as a dynamical system, i.e., as a discrete time map (see Definition~\ref{defn:map} for a formal definition). Let $  \setOfMessages[n] \definition \{ \msg[n]{i}{j}{}, \msg[n]{j}{i}{} : \edge{i}{j} \in \setOfEdges \}$ be the set of all messages at iteration $n$, then we denote the mapping induced by the update equations of BP in \eqref{eq:update} according to
\begin{align}
	\setOfMessages[n+1] = \BP(\setOfMessages[n]).
	\label{eq:BP_dynamical_sys}
\end{align}
In accordance with the notion of fixed point messages $\fpMsg{i}{j}{}$ that remain unaffected under the application of BP,  we refer to the set of all fixed point messages by $\fpSetOfMessages$.
We will often refer to $\fpSetOfMessages$ simply as fixed point and write
\begin{align}
	\fpSetOfMessages= \BP^{\stationaryPoint}(\setOfMessages),
\end{align}
where $\BP^{\stationaryPoint}$ updates the messages until convergence.

Remember that BP is neither guaranteed to provide accurate marginals nor guaranteed to converge.
Failure of BP (to provide accurate results) can be attributed to the existence of multiple (stable) fixed points or the existence of unstable fixed points (for which the messages oscillate far away from any fixed point~\cite{weiss2000correctness, mooij2007sufficient, ihler2005loopy}).
Thus, if we want to make BP more robust, we should modify BP such that it converges towards an accurate stable fixed point.
Potential modifications include variants that minimize $\FB$ directly or minimize alternative objectives as e.g., convex surrogates (cf. Section~\ref{sec:bp:variational:variants}).
From a completely different perspective, one can modify the update procedure directly -- ignoring the effect on the variational function -- so that the overall performance improves.
In particular, the consideration of BP as a dynamical system comes with a flexible formalism that suggests multiple ways of modifying and enhancing BP.


\newsubsection{Enhancing Belief Propagation's Properties}{bp:map:improving}
BP, as presented in Section~\ref{sec:bp:intro}, is only one specific instance in a broad class of algorithms that fit under the umbrella of more general message passing methods. 
We will see that, although similar in nature, the choice of the respective message passing algorithm is not a mere formality but influences the behavior significantly -- and is therefore of great practical relevance.
The family of message passing algorithms is defined by a set of key-ingredients, where
all message passing algorithms proceed along with the following three steps:

\begin{itemize}
	\item \textbf{Initialization:} Each message $\msg{i}{j}{}\in\setOfMessages$ is assigned an initial value $\msg[1]{i}{j}{}$.
	\item \textbf{Update:} An update function is defined that acts on the set of messages $\map: \setOfMessages[n] \rightarrow \setOfMessages[n+1]$.
	This update function usually consists of one function per message $f_{ij}: \setOfMessages[n] \rightarrow \msg[n+1]{i}{j}{}$.
	\item \textbf{Evaluation:} An evaluation or decision function maps the set of messages to the values of interest. 
	In particular, we are interested in approximating the marginals (by the pseudomarginals) and in approximating the partition function (by the Bethe partition function);
	thus we consider evaluation functions for the pseudomarginals $\decisionP: \setOfMessages \rightarrow \pseudomarginals$ and for the Bethe partition function $\decisionZ: \setOfMessages \rightarrow \partitionBethe$.
\end{itemize}
This rather general formulation of message passing algorithms grants a certain degree of freedom in specifying an algorithm.
Together, those aspects define the overall behavior of BP and every aspect may be tuned individually to enhance its performance.
Not a single best algorithm exists though and,
depending on the application, one or the other variant may perform better.

We will now discuss some established ways of enhancing BP's performance.
We particularly focus on how all those variants result from tuning some of the key-step mentioned above.
There is an undeniable elegance in providing such a unifying view; 
even more important though is the fact that this unifying view also suggests novel ways of enhancing the performance of BP. 
We will also highlight how many contributions of the later chapters conform to this view as well.

\newsubsection{Initialization}{intro:bp:improving:init}
The choice of the initialization is no formality but, in case of multiple fixed points, determines which fixed point a given algorithm will converge to  and as such it plays an important role in analyzing BPs behavior.
Although only practical for problems of modest size, one can for example explore the solution space exhaustively by using a sufficiently wide range of initial values.

Note that BP often performs significantly better then worst-case analyses suggest.
It seems, that for some reason, a wide range of initial message values converges to a good fixed point.
This further raises the question of how one should initialize message passing algorithms.
One common choice, that often works well, is to initialize all messages to the same value $\msg[1]{i}{j}{} = \frac{1}{|\sampleSpace{X}|}$;
it remains unclear, however,  where the advantages of uniform initialization stem from.
In general, it would be desirable to identify initial message values that are optimal for certain problem-classes.
We will later pursue this idea and provide a heuristic that obviates the need for choosing initial values and guides BP towards accurate fixed points (cf. self-guided belief propagation in Chapter~\ref{chp:selfguided}).

\newsubsection{Update}{intro:bp:improving:update}
The choice of the update function is one of the key-aspects of message passing algorithms and has an enormous influence on the overall behavior of a particular algorithm.
In particular, it is altering the update function that often leads to algorithms with favorable properties.
The search for enhanced message passing algorithms thus spurred the development of alternative update functions, which ultimately led to the proposal of various algorithms.
We will discuss some successful variants and group these modifications into two different categories.

First, instead of applying the update function $\map$ to all messages at once, one can update only a single message at each iteration.
Effectively, this procedure determines a schedule according to which the messages are updated; we consequently term this concept \emph{scheduling}.
Note that scheduling preserves all update functions and only changes the order in which they are applied.

On the other hand, one can, of course, try to improve the convergence properties by altering the update functions themselves.
Although one can think of numerous ways to alter the update functions we intend to consider variants that 
perform at least as good as plain BP.
In particular, this includes two variants that will be considered in this thesis; these are:
\emph{damping} and \emph{self-guided belief propagation}.

\newsubsubsection{Scheduling}{intro:bp:improving:scheduling}
It is widely accepted that scheduling the messages and updating them asynchronously enhances BP.
In particular, this helps to achieve convergence more often and in fewer iterations.
Up until now, we considered BP without scheduling where all messages are updated according to
\begin{align}
	\setOfMessages[n+1] &= \BP(\setOfMessages[n]) \nonumber \\
	& = \big( \eq[1](\setOfMessages[n]),\ldots,\eq[m](\setOfMessages[n]),\ldots,\eq[2|\setOfEdges|](\setOfMessages[n]) \big).
\end{align}
Note that we imposed some ordering $o:\edge{i}{j}\rightarrow m$ on the messages and refer to the update functions accordingly, i.e., $\eq[m]$ computes the m\textsuperscript{th} message. 
There are $2|\setOfEdges|$ messages in total as they are passed along both directions of the edges.

For an asynchronous version of BP only a single message $\msg{i}{j}{}=\msg{m}{}{}$ is updated per iteration according to
\begin{align}
	\fpSetOfMessages[n+1] = \big( \msg[n]{1}{}{},\ldots,\eq[m](\setOfMessages[n]),\ldots,\msg[n]{2|\setOfEdges|}{}{} \big).
\end{align}

The update-order, i.e., which message to compute, has a major influence on the overall performance.
One common choice is a \emph{fixed order}, either according to a round-robin scheme or a random ordering.
In both cases, the sorting of the messages follows a particular ordering and the messages are selected for an update according to
\begin{align}
	m=n \pmod{2|\setOfEdges|}. 
\end{align}
Such a fixed update-order already improves the performance of BP significantly, yet it leaves considerable room for improvement.

One popular way of scheduling the messages relies on the fact that, for a tree, BP is guaranteed to converge and to provide the exact marginals.
\emph{Tree based reparameterization} (TRP)~\cite{wainwright2002tree} computes a set of spanning trees so that the union of all spanning trees includes all edges of the original graph.
Then, one applies BP to one tree at a time.
One says that this calibrates the marginal along the spanning tree, i.e., the marginals would be exact if the graph would contain no more edges.
Of course, performing BP on another spanning tree will alter some of the marginals again, but at least TRP enforces consistency through parts of the graph.
The choice and the order of the spanning trees impact the overall performance of TRP and can be tailored in a model-specific way.
Intuitively, one can argue that the focus on spanning trees enhances the overall convergence properties by enforcing a brisk exchange of information in the graph.\\

All scheduling methods discussed so far rely on a fixed update-order that may or may not depend on the  graph.
Instead of carefully selecting an update-order, it is, however, much more  flexible to resort to an \emph{adaptive} update-order, where the current message values determine which message to update next.
This obviates the need for carefully tuning the update-order to the current graph and further improves the convergence properties because of the inherent flexibility in adaptive methods.

The most basic adaptive scheduling method,
\emph{residual belief propagation} (RBP)~\cite{elidan2012residual}, already improves the convergence properties a lot.
RBP updates the messages along the least calibrated edge, i.e., for which the message varies the most.
The underlying assumption is that strongly varying messages contain more information and are thus of predominant importance for the convergence of BP. 
To make this more precise we introduce the residual $r_{ij}$ to measure the message-distances between two successive iterations~\footnote{Ultimately one is interested in the distance to the fixed point, if it exists, $\lim\limits_{n \rightarrow \infty}{\mu_{ij}^n(x_j)}$. However, since $\lim\limits_{n \rightarrow \infty}{\mu_{ij}^n(x_j)}$ is not known, the time variation of the messages offers a valid surrogate (cf.~\cite{elidan2012residual}).} 
\begin{align}
	r_{ij}^{n} &= ||\msg[n+1]{i}{j}{} - \msg[n]{i}{j}{} ||_{\infty} \nonumber \\
	&= \max_{\RVval{j}} |\msg[n+1]{i}{j}{} - \msg[n]{i}{j}{} |.
\end{align}
Let $\vm{r}^n =\{r_{ij}^n : \edge{i}{j} \in \setOfEdges\} $ be the set of all residuals at iteration $n$, then the index that maximizes the residual 
\begin{align}
	m = \argmax_m \vm{r}^{n}
	\label{eq:rbp}
\end{align}
identifies the message that is selected for an update.
Overall this significantly reduces the number of message updates until convergence and the overall convergence time.
Computing all residuals, however, poses some computational overhead.
We can reduce this overhead and further reduce the convergence time without degrading the accuracy much by computing an upper bound on the residual~\cite{sutton2012improved}. \\

Although adaptive scheduling increases the number of models for which BP converges, some models remain problematic.
Closer inspection reveals that only small parts of the graph fail to converge, whereas large parts (almost) converge in a couple of iterations.
RBP thus updates the same subset of messages ad infinitum.
This observation inspired the design of scheduling methods that detect such problematic behavior, specifically counteract the oscillations, and enforce convergence~\cite{knoll_scheduling}.

\emph{Noise injection belief propagation} (NIBP) applies RBP and actively checks for oscillations:
if RBP converges no modifications are undertaken;
if, however, oscillations occur, NIBP \emph{injects} Gaussian noise to the selected message.
The underlying intuition is that the injected noise will propagate from the most influential part of the graph (i.e., the message selected by RBP) throughout the whole graph and, by introducing a relevant change to the overall model, lead to convergence.
Let $\msg{i}{j}{}$ be the message selected according to~\eqref{eq:rbp}; if this message oscillates, i.e., if $\msg[n]{i}{j}{} = \msg[n-l]{i}{j}{}$ for any $l\in\REAL$, Gaussian noise is added so that
\begin{align}
	\msg[n+1]{i}{j}{} = \eq[ij](\setOfMessages[n]) + \mathcal{N}(0,\sigma^2).
	\label{eq:NIBP}
\end{align}

Alternatively, instead of actively detecting oscillations, one can question the underlying assumption of RBP.
Messages, despite varying significantly, often take identical values repeatedly;
arguably, these messages carry not much information.
\emph{Weight decay belief propagation} (WBP) penalizes this behavior and dampens the residual of messages that were already updated.
Consequently, WBP increases the relevance of the remaining messages and thus further refines the parameterization of the overall graph.
More precisely, WBP divides the residual of a messages by the number of times this message has already been scheduled and reformulates~\eqref{eq:rbp} according to
\begin{align}
	m = \argmax_m \frac{r_{ij}^{n}}{\sum_{n'=1}^n \mathbf{1}_{ij}^{n'}},
	\label{eq:wbp}
\end{align}
where the indicator function $\mathbf{1}_{ij}^{n'} = 1$ if and only if $r_{ij}^{n'} = \argmax_m \vm{r}^{n'}$.
Detailed pseudocode for NIBP and WBP is presented in Appendix~\ref{sec:appendix:pseudocode}.

\begin{figure}
	\centering
	\includegraphics[width=0.5\textwidth]{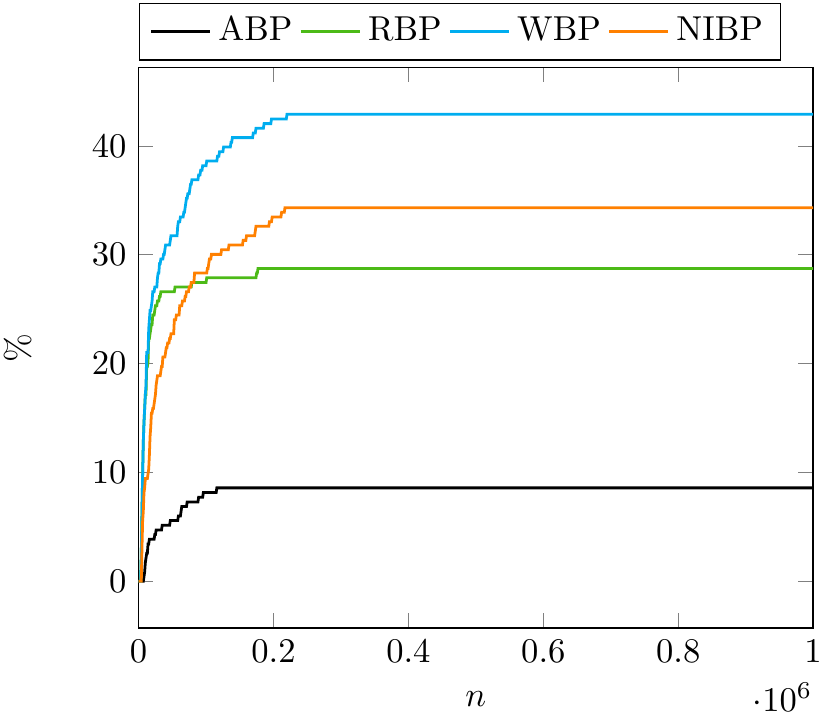}
	\caption{Percentage of converged runs $\%$ over the number of iterations $n$ for: ABP (black), RBP (green), NIBP (blue), and WBP (orange). The results are averaged over 233 grid graphs of size $13\times13$ with random potentials specified by $(\coupling{i}{j},\field{i}) \sim \mathcal{U}(-6.5,6.5)$ in accordance with~\cite{elidan2012residual}.}
	\label{fig:convergence}
\end{figure}

When comparing the convergence properties of the discussed scheduling methods on grid graphs with random Ising potentials, a clear picture emerges. 
We present one exemplary experiment in Figure~\ref{fig:convergence} and refer to the paper~\cite{knoll_scheduling} for more exhaustive experiments.

Although scheduling with a round-robin schedule, ABP (black), improves the convergence properties over BP without scheduling, ABP does not perform as well as more sophisticated methods.
In particular, adaptive scheduling enhances the convergence properties notably, demonstrating its conceptual advantage.
Evaluating the overall convergence behavior,  NIBP (orange) converges for more models than any other method.
WBP (blue) still converges more often than RBP (green), although it requires more iterations to converge.
Damping of the residual slows the convergence as messages that require multiple updates to converge will now be selected less frequently.

Although the positive influence of adaptive scheduling methods on the convergence properties is indisputable, the influence on the marginal accuracy is less obvious.
In practice, however, we are not only interested in good convergence properties but require accurate marginals as well. A comprehensive comparison of different scheduling methods in terms of marginal accuracy is presented in~\cite{knoll_scheduling}. 
We summarize the major insights here:
considering only graphs 
for which a simple round-robin schedule (i.e., ABP) achieves convergence, ABP approximates the marginals well for all models, whereas RBP and NIBP fail to do so.
It may seem that advanced scheduling methods pay the price for improving the convergence properties and obtain less accurate marginals.
It is thus particularly remarkable that WBP maintains the marginal accuracy of ABP although it converges for a wide range of models.
Of all considered methods, WBP is thus superior.

\newsubsubsection{Alternative Update Functions}{intro:bp:improving:update:alternatives}
When the BP messages oscillate (often just between two values) applying a damping term has proven to be a successful strategy to improve the convergence properties~\cite{murphy1999loopy}. 
That is, one replaces the messages with a weighted average of the last messages so that
\begin{align}
	\BPVariant{D} (\setOfMessages) = (1-\epsilon) \BP(\setOfMessages) +\epsilon \setOfMessages,
	\label{eq:bp_damping}
\end{align}
where $\epsilon \in [0,1)$.
Remember that we intend to change the update function without changing the fixed points.
Complying with our requirement, fixed points of $\BPVariant{D} (\setOfMessages)$ must, therefore, be fixed points of BP without damping as well. 
This is the case as fixed points of BP must satisfy $\fpSetOfMessages = \BP(\fpSetOfMessages)$, which holds per definition so that 
\begin{align}
	\BPVariant{D} (\fpSetOfMessages) = (1-\epsilon) \BP(\fpSetOfMessages) +\epsilon \fpSetOfMessages = \BP(\fpSetOfMessages). 
	\label{eq:equal_stability}
\end{align}
Note that, 
although damping does not affect the fixed points, the local stability of the individual fixed points may change (cf. Section~\ref{sec:stabilityBP:linearization}).\\

In Chapter~\ref{chp:selfguided} we will propose self-guided belief propagation that relies on an alternative update function.
While damping modifies the update function in a static manner, SBP modifies it in a more flexible, time-dependent, way.
The underlying idea is to consider the potentials as a function of the iteration $\setOfPotentials(n)$, which enhances the performance across all areas (i.e., convergence properties and accuracy).

\newsubsection{Evaluation}{intro:bp:improving:evaluation}
BP often performs better than suggested by worst-case analyses~\cite[Section 22.3]{mezard2009}.
This observation comes not too surprising, especially when considering the existence of multiple fixed points.
We would indeed be rather surprised if all fixed points have similar accuracy.
Rather than a disadvantage, the difference in the accuracy can be seen as a chance;
modified evaluation functions may exploit this difference and select or combine fixed points to enhance the approximation quality.
Certainly, modified evaluation functions are not a remedy to all problems.
But modified evaluation functions are an important, flexible, concept to account for the existence of multiple fixed points and to mitigate some of the problems arising in loopy models.\\

Here we discuss two popular modifications that take advantage of the variability in accuracy across multiple fixed points.
Taking the set of all fixed points $\setOfAllSol$ (or a subset thereof) into account, we modify the evaluation function $\decisionP:\{\setOfMessages[\stationaryPoint]\}\rightarrow \pseudomarginals$ in two ways.
Either to select a single fixed point or to compute a weighted combination of multiple fixed points.
Note that we will consider and evaluate both modifications in Chapter~\ref{chp:solutionsBP} and theoretically analyze them in Chapter~\ref{chp:accuracyBP}.
The modifications of $\decisionP$ become relevant in the presence of multiple fixed points.
BP may converge to any fixed point as it does, in general, not favor a particular one over the other.
Even if all fixed points are available, however, it remains unclear how one should select the fixed point that approximates the marginals best.

Variational methods  minimize their objective and aim to find the global minimum, although -- for non-convex $\FB$ -- this cannot be done efficiently.
Given a set of fixed points, we should thus select the fixed point that minimizes $\FB$ (and maximizes $\partitionBethe$), which is the global minimum of $\FB$.
We denote the associated pseudomarginals by
\begin{align}
	\pseudomarginals^{MAX}= \argmax_{\pseudomarginals\in \setOfAllSol} \partitionBetheWithP =  \argmin_{\LPolytope} \FB(\pseudomarginals).
\end{align}
Note that the complete set of fixed points $\setOfAllSol$ is generally not available and the maximization can thus only be performed over the set of available fixed points.

Recently an interest emerged in methods that obtain and combine multiple fixed points~\cite{batra2012diverse, braunstein2005survey, srinivasa2016survey}.
The combination of all fixed points $\setOfMinimaSol$ lies at the heart of the replica symmetry breaking (RSB) assumption.
This assumption emerged in statistical physics from the study of spin glass models and describes how the exact marginals decompose into a weighted sum of different contributions, which turn out to correspond to minima of $\FB$.\footnote{
	In physics one deals with the decomposition of the Gibbs measure (i.e., the joint distribution) into a weighted combination of Bethe measures (that correspond to BP fixed points).} 
If $\FB$ is non-convex, the RSB theory describes the decomposition of the exact solution into a convex combination of marginals that are weighted by their associated partition function so that
\begin{align}
	\pmf{\RV{i}}(\RVval{i}) = \frac{1}{\sum_m \partitionBethe^m }\sum_{m=1}^M \partitionBethe^m \singleApprox{i}^m(\RVval{i}).
	\label{eq:rsb}
\end{align}
This representation can be attributed to~\cite{mezard1987spin}
and, rather than a theorem, it is a set of postulates.
One underlying assumption is that the system does exhibit multiple fixed points (unique fixed points would falsely imply exact marginals otherwise);
an accessible introduction to the RSB theory and all underlying assumptions can be found in~\cite[Chapter~19]{mezard2009}.
Despite its non-rigorous flavor,~\eqref{eq:rsb} has been verified for a wide range of problems (e.g., random SAT problems and spin glasses).
In particular, many state-of-the-art solvers for combinatorial problems rely on the RSB theory~\cite{ravanbakhsh2015perturbed}.
Further note that, even if the local polytope is a strict outer bound on the marginal polytope, a convex combination of $\singleApprox{i}^m(x_i)$ (that correspond to edge points of $\LPolytope$~\cite{yedidia2005}) may consequently end up on an edge point of $\MPolytope$ (which it must if it is exact).

The RSB theory states how to form the exact solution from pseudomarginals that were obtained by efficient  approximate inference methods.
This seems to contradict all results on the computational complexity of exact inference.
Closer inspection resolves this apparent contradiction:
the number of fixed points may grow exponentially with the model-size so that obtaining all fixed points is just as problematic as computing the marginals exactly.
Indeed, this is one of the reasons that limit the practical applicability of the RSB assumption.

For the special case of constrained satisfaction problems, survey propagation~\cite{braunstein2005survey} is one efficient way to evaluate~\eqref{eq:rsb}.
The extension to more general models, however, remains somewhat elusive~\cite{srinivasa2016survey, ravanbakhsh2014revisiting}.

Nonetheless, besides being relevant from a theoretical point of view, we will also identify certain models for which only relatively few fixed points exist, which substantiates the practical relevance of the RSB assumption as well.

\newsection{Conclusion}{intro:bp:questions}
This chapter introduced BP as a message passing algorithm for approximate inference.
We further presented two alternative perspectives that hone our understanding of BP.
Namely, we discussed
(i) the variational interpretation that relates all fixed points of BP to stationary points of the Bethe free energy and 
(ii) the consideration as dynamical systems that suggest various ways to enhance BP.
Both concepts arguably provided insights into the solution space of BP, explained the behavior of BP, and thus benefited our understanding of BP.
Besides discussing the properties of BP from different perspectives, one major purpose of the current chapter was to provide an introduction for the second part of this thesis (i.e., Chapter \ref{chp:solutionsBP} - \ref{chp:accuracyBP}).
Our discussion is far from being exhaustive and left many questions open.
Below we list a couple of open questions worth pursuing that will be addressed in the remainder of the thesis and that will  provide interesting insights into the nature of BP.\\

One of the most pressing needs when considering the variational interpretation is a thorough analysis of the energy landscape of the Bethe free energy (specifically for problems where $\FB$ is non-convex).
Since every local minimum serves as a potential fixed point it would be of great relevance to characterize the solution space and to obtain all possible fixed points.

While the knowledge of all fixed points is valuable in its own, this does not reveal all properties of relevance for BP.
In particular, the energy landscape does not explain the convergence properties and it remains an open, albeit important, question under which conditions one can expect BP to converge.\\

Furthermore, we defined the set of all stable fixed points $\setOfStableSol$ in Section~\ref{sec:bp:variational:correspondence}.
Whether this set is available is questionable as the full set of fixed points is generally not available.
It is consequently an important subject to come up with message passing algorithms that tend to converge towards the most accurate fixed point.\\

Following the discussion in Section~\ref{sec:bp:evaluation}, it is crucial to assign some qualitative measurements to the fixed points after convergence of BP.
This is particularly important as the exact solution is not available in practice.
It remains an ongoing research topic, however, to determine useful accuracy-bounds that can be evaluated efficiently.
Moreover, it is not obvious at all how different measurements, e.g, bounds on the marginal accuracy or the partition function, relate to each other.

  \emptydoublepage
\newchapter{Discrete-Time Dynamical Systems: Solution~Space~Analysis}{solving}
\openingquote{Inside a broken clock\\
Splashing the wine\\ With all the rain dogs\\
Taxi, we'd rather walk...} {Tom Waits}

\renewcommand{\pwd}{dynamical_sys}

This chapter provides a brief, high-level, introduction  to (nonlinear) dynamical systems and  further introduces tools from computational mathematics, required in the subsequent chapters.
Already in Section~\ref{sec:bp:map}, we have witnessed some of the benefits that result from considering BP as a nonlinear dynamical system.
Framing BP as discrete time map allows us to draw from the rich history of dynamical systems (cf.~\cite{Teschl2003,Scheinerman1996}) and will be particularly beneficial in studying BP's convergence properties.
Before we discuss how to analyze the convergence properties of a discrete-time map, a proper definition of the terminology is required.
Subsequently, we express the essential steps in analyzing discrete-time maps, present multiple ways of doing so, and compare the individual advantages.  

%
We begin this chapter with the definition of discrete time maps in Section~\ref{sec:solving:map}.
We then illustrate how the problem of computing the fixed points reduces to solving a polynomial system in Section~\ref{sec:solving:fixed_points}, before we discuss and compare different approaches for solving polynomial systems in Section~\ref{sec:solving:comparison}.
Finally, we introduce the notion of stability and show how to assess the stability of a fixed point  in Section~\ref{sec:solving:stability}.

\newsection{Discrete Time Map}{solving:map}
Let us first introduce the notation of a discrete time map now:
\begin{defn}[Discrete Time Map]
	A discrete time map is defined by an $n$-dimensional state vector $\stateVec{x} \in \REAL^n$ and a function $\map:\REAL^n \rightarrow \REAL^n$ that acts on the state vector according to 
	$\stateVec{x}(n+1) = \map \big( \stateVec{x}(n) \big)$, where $n\in \INTEGERPos$ is the time index. 
	Note that the function $\map$ is a composition of $n$ functions $f_i: \REAL^n \rightarrow \REAL$.
	We will denote the value of the state vector at time $n$ by $\stateVec{x}^{n} \definition \stateVec{x}(n)$ so that
	\begin{align}
		\stateVec{x}^{n+1} = \map\left( \stateVec{x}^{n} \right). \label{eq:defn:map}
	\end{align}
	A  discrete-time map is further said to be \emph{nonlinear} if some (or all) functions $f_i(\stateVec{x})$ are nonlinear.
	\label{defn:map}
\end{defn}

One of the overarching goals in the field of dynamical systems is to understand how the system responds and evolves for given initial conditions in order to gain knowledge about the global behavior.
In particular, this knowledge can be gained by first obtaining all fixed points of a map, for which the state vector $\stateVec{x}$ remains unaffected by the application of~\eqref{eq:defn:map}.
More formally, a \emph{fixed point} of the map $\map$ is a state vector $\fp{\stateVec{x}} \in \REAL^n$ that maps to itself so that
\begin{align}
	\fp{\stateVec{x}} = \map(\fp{\stateVec{x}}).
	\label{eq:defn:fp}
\end{align}
The problem of obtaining all fixed points is formally introduced in Section~\ref{sec:solving:fixed_points} before we compare different approaches in Section~\ref{sec:solving:comparison}.
The knowledge of all fixed points alone, however, does not provide sufficient knowledge about the global behavior of a system and is only part of the whole picture.
According to the definition in~\eqref{eq:defn:fp}, fixed points are not affected by~\eqref{eq:defn:map}, though it is not revealed if and under which conditions $\stateVec{x}$ will converge to a specific fixed point.
We are, however, interested in determining which fixed points can be obtained through the repetitive application of~\eqref{eq:defn:map}, i.e., whether a fixed point is stable (cf. Section~\ref{sec:solving:stability}).

\newsection{Finding Fixed Points of a Discrete Time Map}{solving:fixed_points}
Fixed points of a discrete-time map are state vectors that remain unaffected by the map; 
the fixed points consequently correspond to the solutions of the \emph{fixed point equations}  $\fp{\stateVec{x}} - \map(\fp{\stateVec{x}}) = 0$ per definition.
Thus, computing the fixed points boils down to solving a system of polynomial equations, i.e., of nonlinear algebraic equations.
However, albeit straightforward in principle, solving the fixed point equations is by no means straightforward to do in practice;
in particular if confronted with a nonlinear system. 

Systems of polynomial equations arise quite naturally in many engineering applications and it is a classical and important problem in computational mathematics to solve them.
But, despite their relevance and the long research-history, it is still an active field of research. 
Today, solving systems of polynomial equations brings together many different branches of mathematics 
(such as topology, numerical mathematics, geometry, and algebra)
that are joined under the term \emph{algebraic geometry}.
However, the body of the corresponding literature is vast and -- as different fields play together -- requires deep knowledge in many subjects. 
Moreover, the topic is unfortunately mainly studied from a theoretical point of view, which often prevents the application to practical problems and poses a notable hurdle for non-experts.

Although many approaches for solving polynomial systems seem promising at first, it is a daunting task to understand their properties and capabilities enough as to select an appropriate method, well-suited for the task at hand.
Indeed, the ability to deal with large systems of polynomial equations was a major contribution and a prerequisite for many results that are obtained in Chapter~\ref{chp:solutionsBP}.
Although it is possible to read those sections and to understand the ultimate insights, this thesis should be as self-contained as possible. 
Therefore we give a brief (and far from complete) summary of the most prominent ways for solving polynomial systems subsequently.
In particular, we will discuss numerical methods (Section~\ref{sec:solving:numerical}), symbolical methods (Section~\ref{sec:solving:symbolic}), and numerical polynomial homotopy continuation methods (Section~\ref{sec:solving:nphc}).

\newsubsection{Systems of Polynomial Equations}{solving:problem_formulation}
Before we delve into the subtleties that come with the problem of solving systems of polynomial equations and before we compare some fundamentally different approaches, we have to provide some definitions first.
Note that our notation is adapted from~\cite{cox1992ideals}, 
where a comprehensive overview is provided and the underlying concepts for solving polynomial systems are described in great detail.

\newpage
\begin{defn}[Polynomial]\label{def:map}
	A polynomial $\eq$   in the variables $\stateVec{x} = \{x_1,\ldots,x_n\}$, and with coefficients $\coeff{1},\ldots,\coeff{K}$  in $\COMPLEX^n$, is a function of the form
	\begin{align}
		f(x_1,\ldots, x_n) \definition \sum_{k=0}^{K} \coeff{k}x_1^{k_1}\cdots x_n^{k_n}.
		\label{eq:polynomial}
	\end{align}
	We say that the \emph{monomial} $\stateVec{x}^k = \prod_{i=1}^{n}x_i^{k_i}$ is of degree $|k| = \sum_{i=1}^{n}{k_i}$ and that $\eq$ is of degree $\max\limits_{0\leq k\leq K} |k|$. 
\end{defn}

One fundamental concept in commutative algebra is the notion of a polynomial ring $\mathbb{K}[x_1,\ldots,x_n]$.
It consists of all possible polynomials (constructed as in~\eqref{eq:polynomial}) in the variables $\{x_1,\ldots,x_n\}$ with coefficients $\coeff{i} \in \mathbb{K}$ (see~\cite[Appendix A]{cox1992ideals} for a more formal definition).

We will only work with polynomials 
that have their variables defined over the complex numbers, i.e., $\stateVec{x} \in \COMPLEX^n$ in this thesis.
Note that we are actually only interested in polynomials with real variables, i.e., with $\stateVec{x} \in \REAL^n$.
Some methods do, however, 
require the definition over complex variables.
Moreover,  
numerical inaccuracies may introduce a non-zero, albeit negligible, imaginary part to an originally real solution.
Considering only $\stateVec{x} \in \REAL^n$ leads to the failure of obtaining these solutions.
Also, considering the complex domain does no harm as we can later easily choose the real solutions.

Now let us consider a \emph{system of polynomial equations} that we intend to solve:
\begin{defn}[Polynomial System]
	A polynomial system, or set of polynomial equations, $\setOfEq$ consists of $s$ polynomials $\eq[1],\ldots, \eq[s]$ in the variables $\{x_1,\ldots, x_n\}$ and is a function of the form 
	\begin{align}
		\setOfEq(x_1,\ldots,x_n) \definition 
		\begin{cases}
			f_1(x_1,\ldots,x_n)\\
			\hspace{1.25cm}\vdots \\
			f_s(x_1,\ldots,x_n).
		\end{cases}
	\end{align}
\end{defn}

We are interested in obtaining the set of solutions for which all equations equate to zero.
This defines a \emph{variety}, a classical object in algebraic geometry.

\begin{defn}[Set of Solutions (Variety)]
	A variety, or the set of solutions, $\variety(\setOfEq)\subset \COMPLEX^n$, is the set of variables $\stateVec{x}$ for which all equations equate to zero, i.e., 
	\begin{align}
		\variety(\setOfEq) \definition \{\stateVec{x} \in \COMPLEX^n : \eq[i](\stateVec{x}) = 0 \quad \text{for all} \quad \eq[i](\stateVec{x}) \in \setOfEq(\stateVec{x})\}.
	\end{align}
\end{defn}
Note that the variety is defined over the complex numbers as well;
we further define the set of solutions over the real and the strictly positive real numbers $\variety_{\REALPos}(\setOfEq)\subset \variety_{\REAL}(\setOfEq) \subset \variety(\setOfEq)$.

\newsection{Solving Polynomial Systems: A Comparison of Methods}{solving:comparison}
We are now well prepared to finally present how systems of polynomial equations are actually solved.
A great variety of approaches have been developed over the years; 
we present some of the most well-known approaches and introduce the underlying concepts to highlight some of their advantages and disadvantages.
It shall be noted that this is by no means an exhaustive overview. 
Instead of discussing all methods in full detail, which would be beyond the scope of this thesis, we rather aim to outline the underlying concepts for each method.

Every method comes with its specific properties, that have to be considered and should ideally be aligned with the requirements of a given problem;
in particular, one should answer the following questions before selecting one particular method to solve the polynomial system (cf.~\cite[p.68]{sommese2005numerical}):
\begin{itemize}
	\item Do we require the set of all solutions $\variety(\setOfEq)$, or just a single possible solution?
	\item What are the specifications of the problem under consideration (sparsity, problem size, etc.) and which methods are well-suited to exploit those properties?
	\item Is the method already available in some software-package? 
	\item Are there any hyper-parameters that influence the efficiency of the method and how much experience and effort is required in adapting the parameters to the problem under consideration?
\end{itemize}

The subsequently presented methods belong to -- and are the more prominent examples of -- three fundamentally different approaches:
numerical solvers, symbolic methods, and numerical polynomial homotopy continuation (NPHC) methods.
We will evaluate and compare the different approaches with respect to the above-mentioned properties.
Note that although the most relevant properties for the subsequent chapters will be pointed out, it would be presumptuous to speak of an all-encompassing list of advantages and disadvantages. 

\newsubsection{Numerical Methods}{solving:numerical}
One basic and well-established method for solving systems of nonlinear equations is the Newton-Raphson method (often just called Newton's method) which is an iterative solver that progressively refines an initial guess to reach a solution (see~\cite[pp.30]{cox2006using} for a brief introduction, or some classical books on numerical analysis, e.g.~\cite[Chapter~2.3]{burden_numerical}).

The underlying idea is to approximate the considered system by a first-order Taylor approximation in an initial guess, and to successively refine this guess by proceeding along a series of first-order Taylor approximations.

Newton's method is a powerful tool that often converges quickly to the solution, particularly if one has a good initial guess.
Theoretical results on the convergence rate of such iterative solver are relatively well established (cf.~\cite[Chapter~2.4]{burden_numerical}) and have two particularly relevant implications for our work:
First, an initial guess, sufficiently close to a solution, is required or, otherwise, the iterative solver may diverge of even exhibit chaotic behavior.
Second, many different initializations are required to obtain multiple solutions, and it consequently remains problematic to obtain the full set of solutions with these methods.

\newsubsection{Symbolic Methods}{solving:symbolic}
From a completely different point of view, symbolic methods \cite{cox1992ideals,cox2006using}
(e.g. Gr\"obner basis method, Border basis method, Wu's method, and method of sparse resultant) rely on symbolic manipulation of the polynomial system and successive elimination of variables to obtain a simpler but equivalent form.
In a sense, these methods generalize the Gaussian elimination method from linear systems into the nonlinear settings.

Before we discuss the details of symbolic methods, we need to introduce some fundamental notions from  commutative algebra.
One particularly relevant concept is the notion of an ideal.
\begin{defn}[Ideal]
	Consider a polynomial system $\setOfEq(x_1,\ldots,x_n)$ in a polynomial ring $\POLY[x_1,\ldots,x_n]$ and some other polynomial system $\equationSystem{H}(x_1,\ldots,x_n)$ in the same polynomial ring.
	The ideal $\ideal{f}$ is closed under addition and multiplication and defines the set of all polynomial systems that satisfy
	\begin{align}
		\ideal{f} \definition \bigg\{\sum_{i=1}^s h_i f_i :\equationSystem{H} \in \POLY[x_1,\ldots,x_n]\bigg\}.
		\label{eq:ideal}
	\end{align}
	We say that the ideal $\ideal{f}$ is generated by the basis $\setOfEq({\stateVec{x}})$.
\end{defn}
The above definition of the ideal implies that any element of the ideal equates to zero if 
$\setOfEq(x_1,\ldots,x_n) = 0$ (cf.~\cite[pp.30]{cox1992ideals}).
This is an immediate consequence of~\eqref{eq:ideal} and it has important implications for obtaining the variety $\variety(\setOfEq)$.
Not that, in fact, the Hilbert's basis theorem states that every ideal $\ideal{f}$ is generated by a finite set of equations.
This implies that the variety of the ideal 
\begin{align}
	\variety(\ideal{f}) = \big\{\stateVec{x}\in\POLY[\stateVec{x}] :\eq(\stateVec{x}) = 0  \quad \text{for all} \quad \eq(\stateVec{x}) \in \ideal{f}\big\} \nonumber
\end{align}
must equal the variety of its generator, i.e., $\variety(\setOfEq) = \variety(\ideal{f})$ (cf.~\cite[Proposition 9]{cox1992ideals}).
This has one particularly important consequence, namely that two polynomial systems $\equationSystem{F}(\stateVec{x})$ and $\equationSystem{G}(\stateVec{x})$ that generate the same ideal, i.e., for which $\ideal{f} = \ideal{g}$, 
will also have the same variety, i.e., $\variety(\setOfEq) = \variety(\equationSystem{G})$.

Note that every ideal has not just one but many different generators; 
this is a crucial prerequisite for the concept of symbolic methods.
Moreover, this allows us to draw an analogy to linear algebra: in this analogy, the ideal corresponds to some subspace (that is closed under addition and multiplication) that is spanned by some vectors (that correspond to the polynomials in the generator).

Symbolic methods take a fundamentally different approach to solving systems of polynomial equations  $\setOfEq(\stateVec{x})$ than numerical methods.
Whereas numerical methods aim to estimate the solution directly, symbolic methods focus on computing an alternative generator $\equationSystem{G}(\stateVec{x})$ with ``good'' properties.
The overarching aim is to construct $\equationSystem{G}(\stateVec{x})$ so that both generators have the same ideal $\ideal{g}=\ideal{f}$ -- and thus identical solutions, and to concurrently impose certain properties on $\equationSystem{G}(\stateVec{x})$ that facilitate computing the set of solutions.
In particular, we hope that $\equationSystem{G}(\stateVec{x})$ consists of equations that have a low degree and are in an upper triangle form (to take up the analogy with linear algebra, the aim is to find a more intuitive description of the subspace, e.g., by orthonormal vectors).

\newsubsubsection{Gr\"obner Basis}{solving:symbolic:groebner}
One particularly successful way to 
construct a good basis is to compute the (reduced) Gr\"obner basis. 
Together with the Buchberger algorithm that computes such a basis, Gr\"obner bases were introduced in the Thesis of Bruno Buchberger~\cite{buchberger_thesis, buchberger_en}.

Gr\"obner bases tend to consist of equations of low degree that are -- if possible - in upper triangle form. A formal and more comprehensive treatment of the Gr\"obner basis method can be found in many textbooks (e.g., in~\cite{cox1992ideals}); here we shall be content with illustrating the underlying concepts exemplary.

\begin{example}[Gr\"obner Basis for Solving System of Equations]$ $ \newline
	Let us solve a simple polynomial system $\setOfEq(\stateVec{x})$ with three variables in three equations by utilizing the Gr\"obner basis method.
	Therefore, consider
	\begin{align}
		f_1(\stateVec{x}) &= x_1^2+x_2^2-1\\
		f_2(\stateVec{x}) &= (x_1-1)^2+x_2^2-1\\
		f_3(\stateVec{x}) &= x_1^2+x_2^2+x_3^2-2.
	\end{align}
	We then compute the Gr\"obner basis using Wolfram Mathematica and obtain an alternative basis according to
	\begin{align}
		g_1(\stateVec{x}) &= x_3^2-1\\
		g_2(\stateVec{x}) &= 4x_2^2-3\\
		g_3(\stateVec{x}) &= 2x_1-1.
	\end{align}
	Note that all equations $g_i(\stateVec{x}) \in \equationSystem{G}(\stateVec{x})$ 
	depend only on a single variable and have low degree, thus satisfying all desired requirements.
	One can consequently obtain all four solutions from $\equationSystem{G}(\stateVec{x})$ in a straight-forward manner; these are $(x_1,x_2,x_3)=(\frac{1}{2}, \pm \frac{\sqrt{3}}{2}, \pm 1)$.
\end{example}

The Gr\"obner basis method obviously obtains the full set of solutions (as compared to only a single one by numerical methods).
Furthermore, methods that obtain the Gr\"obner basis are particularly attractive for solving systems of polynomial equations as many algorithms are available and included in many software packages.

The application of the Gr\"obner basis method to large-scale problems with many variables demands a problem-dependent adaption of the algorithm, however, that requires good knowledge of algorithmic details. 
Even more problematic is the fact that Gr\"obner bases are unstable under small changes in the coefficients, which limits its application to problems with rational coefficients.
We exemplify this numerical instability in Example~\ref{ex:instability_groebner}.

\begin{example}[Instability of Gr\"obner Basis]$ $\newline
	The Gr\"obner Basis is numerically unstable with respect to small changes in the parameters.
	Let us exemplify this by the following system of equation $\setOfEq(\stateVec{x})$ (inspired by~\cite[Example 6.4.1]{kreuzer2005computational}):
	we consider two ellipses defined by the following two polynomials in two variables
	\begin{align}
		f_1(\stateVec{x}) &= \frac{1}{4}x_1^2+x_2^2-1\\
		f_2(\stateVec{x}) &= x_1^2+\frac{1}{4}x_2^2-1.  
	\end{align}
	Again, the Gr\"obner basis provides a well-behaved  system of polynomial equations
	\begin{align}
		g_1(\stateVec{x}) &= 5x_1^2-4\\
		g_2(\stateVec{x}) &= 5x_2^2-4,
	\end{align}
	that yields all four solutions $(x_1,x_2) = (\pm \frac{2}{\sqrt{5}}, \pm \frac{2}{\sqrt{5}})$ immediately.
	
	Now, assume we rotate the ellipses slightly by changing the underlying system of equations to $\equationSystem{\tilde{F}}(\stateVec{x})$ with
	\begin{align}
		\tilde{f}_1(\stateVec{x}) &= \frac{1}{4}x_1^2+x_2^2-1+0.001 x_1 x_2\\
		\tilde{f}_2(\stateVec{x}) &= x_1^2+\frac{1}{4}x_2^2-1+0.001 x_1 x_2.
	\end{align}
	Note that this minor rotation shifts the four solutions change only slightly (cf. Figure~\ref{fig:instability_groebner}); 
	in fact, the solutions to $\equationSystem{\tilde{F}}(\stateVec{x})$ reside within a radius of $5\cdot10^{-4}$ of the solutions to $\equationSystem{F}(\stateVec{x})$.
	On the other hand, however, a dramatic change occurs in the corresponding Gr\"obner basis, both in the coefficients and in the monomials: 
	\begin{align}
		\tilde{g}_1(\stateVec{x}) &= 1250x_2^3-1000x_2+x_1\\
		\tilde{g}_2(\stateVec{x}) &= x_2^4- 1.6x_2^2+0.64.
	\end{align}
	Not only do the Gr\"obner bases $\equationSystem{G}(\stateVec{x})$ and $\equationSystem{\tilde{G}}(\stateVec{x})$ differ in their structure, but -- even more problematic -- computation of $\variety(\equationSystem{\tilde{G}})$ now suffers from numerical issues.
	The large coefficients in $\tilde{g}_1(\stateVec{x})$ introduce an extreme sensitivity with respect to numerical accuracy  and thus render the computation of the solutions by backward substitution highly problematic.
	%
	\begin{center}
		\includegraphics[width=0.75\linewidth]{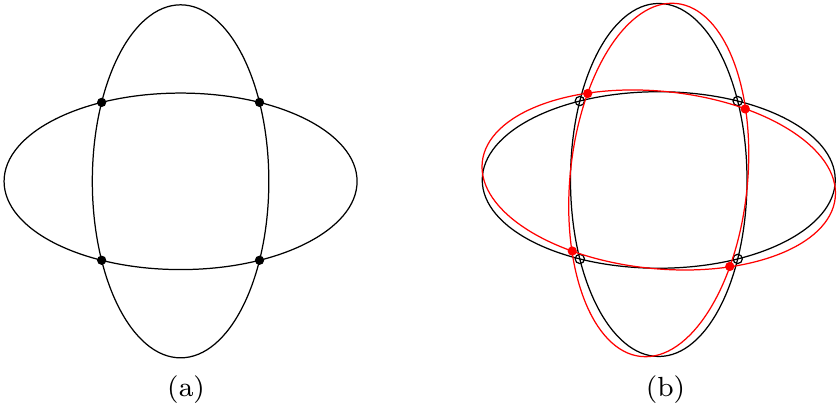}
		\captionof{figure}{(a) Two ellipses defined by $\equationSystem{F}(\stateVec{x})$ (illustrated in black), where the solutions are depicted by solid black dots;
			(b) two rotated ellipses defined by $\equationSystem{\tilde{F}}(\stateVec{x})$ (illustrated in red), where the solutions are depicted by solid red dots (with the solutions of $\equationSystem{F}(\stateVec{x})$ depicted by black circles).
			Note that the solutions of both equation systems are similar.}
		\label{fig:instability_groebner}
	\end{center}
	\label{ex:instability_groebner}
\end{example}


According to the implicit function theorem, the solutions of $\equationSystem{\tilde{F}}(\stateVec{x})$ will change only slightly if the coefficients are modified  by an infinitesimally small value;
the Gr\"obner basis, however, may experience a dramatic shift.
In fact, the result will  also differ depending on the representation (decimal, or fraction of two integers) of the coefficients in $\equationSystem{\tilde{F}}(\stateVec{x})$.
This poses a drastic problem for any polynomial system with non-rational coefficients as the representation of the coefficients will have a major influence on the final result.

\newsubsubsection{Other Symbolic Methods}{solving:symbolic:other}
The limitation of the Gr\"obner basis approach to problems with rational coefficients and the numerical instabilities pose a considerable problem for many applications. 
Consequently, to extend the applicability of symbolic methods, much attention was focused on circumventing these issues and extending the range of solvable polynomial systems.

All symbolic methods adhere to one common characteristic; that is the search for an alternative well-behaved generator.
There are various ways of obtaining this generator.
One promising approach relates the computation of the generator (non-linear in the polynomial ring) to a similar problem in the quotient ring.
The  consideration in the quotient ring renders the problem a linear one that can be solved efficiently with well-established methods from linear algebra, e.g., by an eigendecomposition.
This fruitful connection was first recognized in the work of Winfried Auzinger and Hans J. Stetter.\footnote{ 
	Stetter who was already about to retire found interest in the topic and prolonged his active research time by another 10 years; the results of which are summarized in his book~\cite{stetter2004numerical}.}
Their work already revealed many expedient properties that come with working on the quotient ring~\cite{auzinger1988elimination, auzinger1989study}.
This was only a first step, however, one that avoided the actual construction of the quotient ring~\cite{stetter2004numerical}.
Nonetheless, the underlying ideas proofed to be worth further pursuing and inspired a range of results.
The inherent connection between the representation in the quotient ring and the problem of solving algebraic equations was first made explicit in~\cite{moller1993systems}.
This connection finally led to the first working algorithm, the M\"oller-Stetter method~\cite{moller1995stetter}. An excellent overview of the underlying concepts and the initial developments can be found in~\cite{stetter2004numerical}.

%

After laying out the foundation, many promising advances were proposed, often encompassed under the name of Border basis.
This development is still ongoing and the advances are scattered over many publications.
We refer to some of the excellent comprehensive papers available that provide a good overview of the recent advances and may serve as a good starting point for the interested reader.
The beneficial properties of Border basis are outlined and discussed in the tutorial paper~\cite{mourrain2007pythagore};
likewise~\cite{kehrein2005algebraist} summarizes the major properties and provides some accessible examples that show how to compute the Border basis.
In a nutshell, the Border basis method combines the advantages of the Gr\"obner basis method,
while avoiding some of the most relevant disadvantages (e.g., numerical instabilities).
Unfortunately, however, much of the results remain of academic nature and have not yet found their way into existing software packages, thus severely limiting the potential application to practical problems.

We would like to stress that the toolbox of symbolic methods is of course not limited to Gr\"obner- and Border basis but contains many more methods, all with their own set of advantages and drawbacks. 
Nonetheless, we hope that this section provides a gentle introduction to some of the more established methods;
a more detailed treatment of symbolic methods is available in a number of books (and the references therein)~\cite{cox1992ideals, cox2006using, kreuzer2000computational, kreuzer2005computational, dickenstein2005}.

As a final statement, we conclude by stressing the overall drawback of symbolic methods:
it is often expensive to apply these methods to problems of high degree.
This is particularly critical as the sparsity in polynomial systems cannot be utilized in a straightforward manner so that even small -- but high dimensional -- systems suffer from this drawback.

\newsubsection{Numerical Polynomial Homotopy Continuation (NPHC) Method}{solving:nphc}

Another important approach for solving a system of polynomial equations is the \emph{numerical polynomial homotopy continuation} (NPHC) method~\cite{li2003solving,sommese2005numerical}.
The NPHC method performs multiple stages in order to compute the solutions of the \emph{target system} $\setOfEq(\stateVec{x})$.

First, the target system is inspected and a root count is computed that provides an upper bound on the number of solutions.
Second, a closely related \emph{start system} $\startSys(\stateVec{x})$ is created that is trivial to solve and has precisely as many solutions as suggested by the root count.
Third, the start system is continuously deformed into the target system.
Finally, with appropriate construction, the trivial solutions of the start system also vary continuously under this deformation forming \emph{solution paths} that connect to the desired solutions of the target system.

For instance, one basic form of a homotopy is given by
\begin{align}
	\homotopy(\stateVec{x},t) = (1-t)\startSys(\stateVec{x}) + \gamma t \setOfEq(\stateVec{x}) = 0,
	\label{eq:homotopy}
\end{align}
for $t \in (0,1]$ and with $\gamma \in \COMPLEX$.
Clearly, at $t = 0$ the homotopy reduces to the start system $\startSys(\stateVec{x})$ and at $t=1$ it reduces to the target system $\setOfEq(\stateVec{x})$.
As $t$ varies continuously from 0 to 1, the homotopy represents a deformation from the start system to the target system and the NPHC method tracks the solutions.

\begin{example}[NPHC method for Solving an Equation]$ $\newline
	Although the NPHC method is particularly suited for large systems of equations with many variables, we consider one minimalistic example for illustrative purposes.
	Let us consider the following target system consisting of a single equation $f(x)$ in a single complex variable $x$  
	\begin{align}
		\setOfEq(\stateVec{x}) = x^2 +ix -2 = 0.
		\label{eq:eq:nphc1}
	\end{align}
	The root count for~\eqref{eq:eq:nphc1} is straightforward to estimate and just looking at the system reveals the existence of two solutions.
	This consequently requires a start systems with two solutions as well; one of the simplest system of equations with two solutions is for example given by
	\begin{align}
		\startSys(\stateVec{x}) = (x-1)(x+1) = 0.
		\label{eq:eq:nphc2}
	\end{align}
	We can immediately recognize both initial solutions of the start system  $x_s = (-1,+1)$.
	
	The homotopy $\homotopy(x,t)$ is finally given by
	\begin{align}
		\homotopy(x,t) = (1-t)(x-1)(x+1) + \gamma t (x^2 +ix -2) = 0,
	\end{align}
	where $\gamma = \exp(i\theta)$ with a random angle $\theta\in[0,2\pi)$.
	A couple of solution paths, that emerge for different values of $\theta$, are illustrated in Figure~\ref{fig:example_nphc}.
	One can see that the angle $\theta$ influences the shape of the solution path.
	Every chosen value $\theta$ has a well-behaved solution path that connects the initial solutions of the start system $(x_{s,0}, x_{s,1})$ to the desired solutions of the target system $(x_{t,0}, x_{t,1})$.
	Using a predictor-corrector method to proceed along the solution path, the NPHC method then obtains both solutions of the target system 
	$x_t = (\frac{\sqrt{7}-i}{2}, - \frac{\sqrt{7}-i}{2})$.
	\begin{center}
		\includegraphics[width=0.5\linewidth]{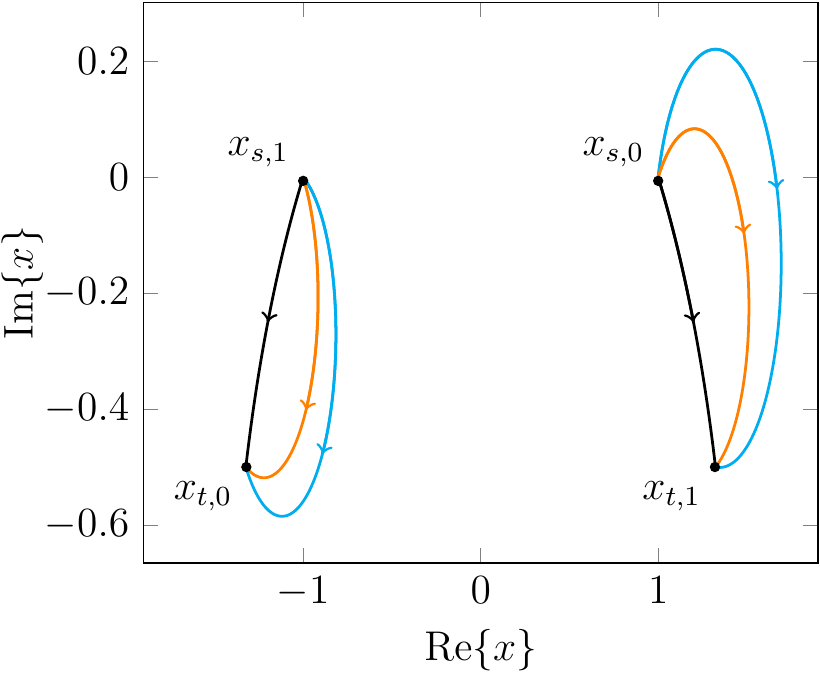}
		\captionof{figure}{Solution paths for $\theta \in \{0, 1, \pi/2\}$.
			All solution paths lead from the solutions of the start system $x_{s,i}$ to the solutions of the target system $x_{t,i}$.}
		\label{fig:example_nphc}
	\end{center}
\end{example}

Although the NPHC method seemingly provides a simple method to obtain the desired solutions we have spared some of the subtleties that may arise; there are two ingredients in particular that require careful attention and were not mentioned in detail so far. 
These are the \emph{path-tracking algorithm} that keeps track of the solution as $t$ increases, the computation of the \emph{root count}, and the creation of the \emph{start system}.

\newsubsubsection{Tracking the Solution}{tracking}
Because of its importance, the process of tracking the solution paths with increasing $t$ is central to the NPHC method.
It is thus important to consider robust and proficient path-tracking algorithms.
We will present the key-steps of such path-tracking algorithms in the context of the NPHC method, closely following~\cite[Section 2.3]{sommese2005numerical}, and refer to~\cite{allgower2003numerical} for a more  exhaustive description of path-tracking in general.

A generic approach  must proceed along the solution paths, defined by the continuous homotopy $\homotopy(\stateVec{x},t)$ with known initial values $\homotopy(\stateVec{x},0) = 0$, using a predictor-correction method.
Such an iterative algorithm proceeds according to the following three steps.
In the prediction step, one fits a function to the current values and predicts the values of $\stateVec{x}$ at the next step.
This can be done either by extrapolating a linear (or higher-order) function based on the last couple of points, or by linearizing in the last point and proceeding along the tangent direction. 
Note that the latter approach is well known as Euler's method.

In the correction step, one corrects the predicted value, using a numerical method as e.g., Newton's method.
Despite its shortcomings discussed before, Newton's method is well suited for the correction step as the prediction step will estimate a value close enough for Newton's method to converge.
In general, the accuracy of the prediction will increase with a reduced step-size; 
if necessary, it is thus always possible to enforce convergence by adaptive reduction of the step-size.

Before repeating the above procedure and predicting the next value again,
the step-size should ideally adapt according to the current  correction step. 
If the preceding prediction was very accurate it is often safe to assume that the step-size can be enlarged, whereas it is preferable to reduce the step-size otherwise.
Overall, the step-size significantly impacts the overall performance; 
a step-size too small introduces unnecessary many iterations and a step-size too large may lead to failure in the correction step.

These steps, prediction, correction, and step-size adaption are then repeated until the solutions of the target system are obtained for $t=1$. \\

An important prerequisite for path-tracking algorithms is the existence of a well-behaved path. 
Fortunately, the introduction of some generic hyper-parameter, as for example the random complex number $\gamma$ in~\eqref{eq:homotopy}, produces such well-behaved paths with probability one~\cite[Lemma 7.1.3]{sommese2005numerical}.\footnote{ The main argument of the proof is that for all possible values of $\gamma \in \COMPLEX$, except for a finite amount of distinct values, the solution paths are well-behaved. Picking the value $\gamma$ at random will therefore result in a well-behaved path with probability one.}

\newsubsubsection{Start System}{startsys}
Another crucial detail in the NPHC method is the computation of the root-count and the creation of the start system.
There are various ways of computing this bound that differ in their complexity.

Loose bounds are often straightforward to compute but may lead to -- potentially many -- solutions in the start system that do not correspond to any solution in the target system.
Still, one has to track all emerging solution paths, before finally recognizing the superfluous ones that diverge as the homotopy resembles the target system.
Besides unnecessarily wasting computational resources this may have even more serious implications; 
a root count  too large may render solving a problem infeasible because of the sheer amount of paths that need to be tracked.

Various methods have been proposed to bound the  number of solutions (cf.~\cite[Section 8.1]{sommese2005numerical}) with more sophisticated methods generally reducing the number of paths to be tracked.
Nonetheless, it is safe to assume that the computational effort required for estimating the bound increases with aiming for a tighter bound.
Moreover, closely related to the estimation of the root count, the creation of the start system becomes more intricate as well.

For some problems, the consideration of more sophisticated methods scales nicely with the problem and the reduction in the number of paths outweighs the additional burden of creating an appropriate start-system.
For other problems, however,  estimating a more accurate root count requires significantly more computational resources than the second task, of tracking all paths to their solutions, altogether.

The optimal choice of method -- that introduces as few paths as possible and spends as few resources for the creation of the start system as needed -- is thus vital for the overall performance of the NPHC method.
Unfortunately, general guidelines do not exist and choosing a reasonable method for a given problem is usually based on experience. \\

\newsubsubsection{Properties of the NPHC method}{solving:nphc:proeprties}
The NPHC method takes the structure and the sparsity of the target system $\setOfEq(\stateVec{x})$ into account.
If the number of solutions is small, a well-chosen start system will also have only a few solutions thus implicitly exploiting the sparsity of the equation system.

The fundamental concept is that every solution of the target system has one corresponding solution residing in the start system. 
Looking at~\eqref{eq:homotopy} it becomes apparent that, as $t$ goes from 0 to 1, every solution path is completely independent of all others.
This property admits parallel implementations that track all solution paths independently after the creation of the start system.
The NPHC method is thus, in principle, suited for solving much larger systems than symbolic methods that cannot utilize parallelism in such a way.

Moreover, the NPHC method is implemented and available in a couple of software packages that often have interfaces with established software packages as, e.g., Maple, Matlab, or Python.
Existing software packages, only naming a few, include \textsf{PHCpack}~\cite{phc_pack}, \textsf{bertini}~\cite{bertini}, and \textsf{Hom4PS-3}~\cite{chen2014hom4ps}.
The existence of such established software packages makes the NPHC method attractive from a practical perspective and lowers the entrance hurdle.

One fundamental disadvantage of the NPHC method is the need for working in the complex domain, even though only the real solutions are of interest.
Although the real solutions are contained in the set of complex solutions, working in the complex domain may lead to some problems:
On the one hand, the NPHC method may fail if some of the solutions are not zero-dimensional, i.e., the solutions are continua instead of points.
Restricting ourselves to the real line, these solutions could be zero-dimensional; yet the NPHC method will inevitably fail for its need to work in the complex domain.
On the other hand, from a more practical perspective, the consideration of the complex field may increase the computational requirements significantly. 
This is most obvious if considering a problem with only a few real solutions but a huge amount of complex solutions that induce an equally huge amount of solution paths to be tracked.
As it is not possible to determine the complex solutions upfront, all solution paths have to be tracked before neglecting most of them only after the termination of the NPHC method.

\newsection{Stability Analysis of Fixed Points}{solving:stability}
As discussed at the beginning of this chapter, the knowledge of the fixed points alone is not sufficient in getting an adequate understanding of the considered map.
All fixed points remain constant under repeated application of the considered map per definition, but may be different in their nature.
To account for that difference we classify the fixed points as stable or unstable, corresponding to the long-term behavior of the map.

We start by getting  some intuition behind the meaning of stable and unstable fixed points.
Therefore, consider a dynamical system description of how a ball moves through a surface with valleys and peaks. 
By neglecting all kinds of forces that occur in real-world, except for gravity, we get a particularly simple model: 
along the slope, the ball will always accelerate downwards and decelerate upwards.
Fixed points are consequently flat points on the surface with zero slope.
Assume that ball sits on the top of a peak; 
then, moving the ball slightly will throw it out of equilibrium 
as it starts rolling towards another fixed point: we  refer to the peak as an unstable fixed point.
If, on the other hand, the ball resides at the bottom of a valley, the ball will return back to the fixed point after moving it slightly: we refer to the valley as a stable fixed point.

In analogy, the stability of a fixed point can be defined by the trajectory that the state vector describes after setting the state vector to some value close to a fixed point and whether repeated iteration of the map brings it back to the considered fixed point or not.

Let us make this notion of  stability more precise now: 
A fixed point $\fp{\stateVec{x}}$ is \emph{(asymptotic) stable} if a neighborhood $U(\fp{\stateVec{x}})$ exists such that any $\stateVec{x}^{1} \in U(\fp{\stateVec{x}})$ converges to $\fp{\stateVec{x}}$; i.e., for every (small) value of $\epsilon > 0$ there is a maximum number of iterations $N$ so that $\stateVec{x}^{n}$ is $\epsilon$-close to $\fp{\stateVec{x}}$ for $n>N$.
A fixed point $\fp{\stateVec{x}}$ is \emph{unstable} if $\stateVec{x}$ diverges from the fixed point for some $\stateVec{x}^{1} \in U(\fp{\stateVec{x}})$.\footnote{Note that we do not discuss marginally stable fixed points, i.e., fixed points around which the state vector does neither converge nor diverge. Nonlinear systems require a more involved analysis as considered here if marginally stable fixed points are of relevance.}

\newsubsection{Stability Analysis for a Nonlinear Discrete Time Map}{solving:stability:linearization}
We are now going to discuss a simple, yet powerful, method to assess the stability of a fixed point $\fp{\stateVec{x}}$ under a specific discrete-time map $\map(\cdot)$.
Therefore, we only require some tools from linear algebra.
Note that we can express every linear map as a linear system of equations, and write it in matrix form so that  $\map(\stateVec{x}) =  \vm{B}\stateVec{x}$ where the $i\textsuperscript{th}$ equation is given by
\begin{align}
	\eq[i](\stateVec{x}) = \sum_{k=0}^{n} a_{i,k}x_k.
	\label{eq:linear_sys_matrix}
\end{align}
This representation already suggests how to assess the stability of a fixed point.
First of all, according to~\eqref{eq:defn:fp} a fixed point remains unaffected under multiplication by $\vm{B}$.
Consequently, a fixed point $\fp{\stateVec{x}}$ is stable if repeated application of $\vm{B}$ brings the state vector $\stateVec{x} \neq \fp{\stateVec{x}}$ to the fixed point;
accordingly, we call a fixed stable whenever
\begin{align}
	\fp{\stateVec{x}} = \vm{B}^n \stateVec{x}
	\label{eq:stability_linear}
\end{align}
for all values $n>N$ with $n,N \in \INTEGER$.

Taking a closer look at~\eqref{eq:stability_linear} reveals how to assess the stability:
the effect of repeated multiplication by $\vm{B}$  directly relates to the eigenvalues~\cite[Chapter~6]{strang}.
It is a fundamental property of linear algebra that a state vector converges to $\fp{\stateVec{x}}$ if all eigenvalues $\ev{i}$ have a magnitude strictly smaller than one.
Accordingly, we conclude that the linear system $\map(\stateVec{x})$ exhibits a stable fixed point if its associated system-matrix $\vm{B}$ has all eigenvalues $ |\ev{i}| < 1$.\\


The above analysis crucially relies on the matrix representation of the discrete-time map.
Since non-linear maps do not permit this matrix representation it is not directly possible to generalize the stability analysis.
Despite this limitation, we may just approximate the nonlinear map $\map(\cdot)$ by a linear one in every fixed point and analyze the linearized system instead.
Fortunately, the Hartman-Grobman theorem~\cite{hartman, grobman} (see~\cite[pp.264]{Teschl2003} for a proof) allows us to do just that.   
More precisely, the theorem permits the simplification of treating the nonlinear system as a linear one in its fixed points (at least in most cases).

A one-dimensional system is linearized by taking the derivative; 
a multivariate system is linearized by computing all first-order partial derivatives and collecting them in the Jacobian according to
\begin{align}
	\JacobianGeneral{\stateVec{x}} \definition  \begin{bmatrix}
		\frac{\partial f_1}{\partial x_1} &  \cdots & \frac{\partial f_1}{\partial x_n}\\
		\vdots &  \ddots & \vdots\\\
		\frac{\partial f_s}{\partial x_1}    &  \cdots     &\frac{\partial f_s}{\partial x_n} 
	\end{bmatrix}
\end{align}

Then, in accordance with linear system theory, we inspect the eigenvalues of the Jacobian.
Therefore, let us denote the spectrum of the Jacobian, i.e., the set of all eigenvalues by
\begin{align}
	\Spectrum{\JacobianGeneral{\stateVec{x}}} \definition \{ \ev{1},\ldots,\ev{n}\},
\end{align}
and the spectral radius, i.e., the maximum magnitude of all eigenvalue by 
\begin{align}
	\Radius {\JacobianGeneral{\stateVec{x}}} \definition \max_{1\leq i\leq n} |\ev{i}|.
\end{align}

This means that -- whenever $\Radius {\JacobianGeneral{\stateVec{x}}} \neq 1$ -- it is possible for nonlinear systems to infer the stability of a fixed point by looking at the spectral radius of the linearized system.
A fixed point $\fp{\stateVec{x}}$ is stable if all eigenvalues of the Jacobian have absolute value strictly smaller than one and lie inside the unit circle, i.e., if 
\begin{align}
	\Radius {\JacobianGeneral{\fp{\stateVec{x}}}} < 1.
\end{align}
The system $\map(\cdot)$ consequently converges to $\fp{\stateVec{x}}$ if initialized sufficiently close enough.
A fixed point is unstable with respect to $\map(\cdot)$ if at least one eigenvalue exists outside the unit circle so that $\Radius {\JacobianGeneral{\fp{\stateVec{x}}}} > 1$.

The concept of linearization now allows us to assess the stability of a given fixed point.
Note that this still excludes any discussion about the region of attraction;
just because a fixed point is stable under a given map does not imply that every possible initialization will converge to the fixed point.
Nonetheless, the notion of local stability has its merits and tells us whether a map 
can, in principle, converge to a fixed point (if it is stable) or if it can never converge to a fixed point (if it is unstable).

  \emptydoublepage
\newchapter{Solution Space of Belief Propagation: Number~of~Fixed~Points~and~Their~Stability}{solutionsBP}
\openingquote{
Travel makes one modest.\\ You see what a tiny place\\ you occupy in the world.}{Gustave Flaubert}

In this chapter, we represent BP as a dynamical system and analyze its solution space in order to gain a deeper understanding of BP's properties. 
Choosing from the variety of methods presented in Chapter~\ref{chp:solving} for solving the fixed point equations of BP, the NPHC method proves to be capable of providing the full set of fixed points.
This is the first time that this is achieved for finite-size models with non-vanishing local potentials, which reveals a fundamental connection between the Bethe free energy and the accuracy of the marginals. 
The notion of a local stability analysis relies on the tools presented in Section~\ref{sec:solving:fixed_points} and was first applied to BP -- although only for models without local potentials --  in~\cite{mooij2005properties}.
The NPHC method makes the full set of fixed points available and consequently admits an extension of the local stability analysis to more general models.
This extension has far-reaching implications.
For example, it was a common conjecture that strong local potentials positively influence the stability; we finally prove this assumption true.
The empirical results further inspire the derivation of theoretical results that explain the influence of the potentials and the graph size on the convergence properties.

We begin this chapter with a brief summary of existing results on the number of fixed points in Section~\ref{sec:solutionsBP:overview}.
In Section~\ref{sec:solutionsBP:solving} we show how to formulate the fixed point equations for BP and how to use the NPHC method for solving them.
We then reparameterize the messages and discuss the subtleties of applying a stability analysis to BP in Section~\ref{sec:stabilityBP:stability}.
Finally, we specify a range of models in Section~\ref{sec:solutionsBP:fps_selected_models}, before we compute all fixed points and evaluate their accuracy in Section~\ref{sec:solutionsBP:accuracy_selected_models} as well as their stability in Section~\ref{sec:stabilityBP:empirical_analysis}.
We then follow our empirical observations and analyze how the model-size and the potentials affect the stability in Section~\ref{sec:stabilityBP:theoretical_analysis}.
In Section~\ref{sec:solutionsBP:coding} we compute all fixed points and discuss the limitations of BP in the context of error-correcting codes.

Large portions of this chapter have been previously published but were considerably modified to fit into one coherent chapter:
The computation of the full set of fixed points and the evaluation of the accuracy have been published in~\cite{knoll_fixedpoints} and in~\cite{knoll_ws_fixedpoints}.
Shifting the focus away from symbolic methods was triggered by a fruitful discussion with Michael Kerber; 
the application of polyhedral homotopy methods finally proved to be successful with the aid of Dhagash Metha and Tianran Chan. 
In particular the interpretation of the results and the assessment of the accuracy of the BP fixed points are due to the present author.
The local stability analysis is a result of joint work with Franz Pernkopf and is published in~\cite{knoll_stability}.

\renewcommand{\pwd}{solutionsBP}

\newcommand{\nb}[1]{\partial(X_{#1})}

\newsection{Motivation}{solutionsBP:introduction}
The previous chapters introduced BP as an \emph{efficient} method for approximate inference.
For arbitrary models with many loops, however, neither guarantees for convergence nor bounds on the approximation error are established.
It is precisely the existence of many loops, however, that renders exact inference intractable and requires the utilization of approximate inference methods (as for example BP).
A better understanding of BP's capabilities and limitations would therefore be of great relevance for loopy models.
In this chapter, we aim to analyze BP on a range of models in great detail; 
we further aim to establish theoretical properties on the performance of BP and explain why BP often works surprisingly well on dense and loopy graphs, but other times fails to converge or give accurate results.

If multiple fixed points exist, the expected behavior of BP depends on the number of fixed points and the individual fixed point's behavior.
It is thus quite natural to strive for answers to the following questions:
\begin{itemize}
	\item Can we specify model classes -- by restricting, either individually or jointly, the structure and the parameters of probabilistic graphical models -- for which a unique fixed point exists?
	\item Which models admit at least one fixed point to which BP converges for any choice of initial message values (or just if initialized close enough)?
	Does uniqueness of a fixed point imply that BP converges? 
\end{itemize}

These questions have obviously been addressed before: 
sufficient conditions for uniqueness of fixed points were proposed by accounting for both the potentials as well as the graph structure~\cite{heskes2004uniqueness,mooij2007sufficient}.
A direct relationship exists between accuracy and convergence rate for graphs with a single loop~\cite{weiss2000correctness} and for small grid graphs~\cite{ihler07}.
In contrast, graphs do exist that feature surprisingly accurate fixed points although BP fails to converge~\cite{weller2013approximating}.
We shall thus gain a better understanding of the behavior of BP by developing a precise relation among the
\emph{number of fixed points} (and conditions for uniqueness), the \emph{approximation accuracy}, and the \emph{convergence properties}.\\

The consideration of BP as a discrete-time map suggests one way to compute the solution space of BP; the analysis of which reveals deep insights into the behavior of BP and (at least partially) answers the questions raised above.
Applying the tools prepared in Chapter~\ref{chp:solving}, one has to perform the following three steps for a given model:
(i) obtain the set of all possible fixed points;
(ii) assess and compare the accuracy for all fixed points; and
(iii) analyze the stability of all fixed points.

First, we must obtain all fixed points: 
BP is unsuited for this task of obtaining \emph{all} fixed points as it provides only a \emph{single} fixed point and, obviously, does not provide unstable fixed points.
If the unstable fixed points correspond to local minima of the Bethe free energy they can be obtained by methods that minimize $\FB$ directly (cf. Section~\ref{sec:bp:variational:variants}). 
Nonetheless, those methods fail to obtain fixed points that correspond to local maxima and are not even guaranteed to obtain all local minima.
In order to find all fixed points we reformulate the fixed point equations as a system of polynomial equations that we solve directly.

Second, after all fixed points are obtained we assess their accuracy in a straightforward manner; 
this is possible since we keep the models small enough to admit the application of exact inference methods for comparison.

Third, the concept of local stability is well established for discrete-time maps in general, as well as for BP in particular (see~\cite{mooij2005properties} for the special case of binary pairwise models without local potentials). The major obstacle that prohibits the stability analysis for more general models is the need for all fixed points, which are not known in general.
We have already obtained all fixed points in the first step, so that we can analyze the stability of all fixed points by linearization of BP now.

Finally, we consolidate the insights of all three steps and shed some light onto the relation between the number of fixed points, the approximation accuracy, and the convergence properties.
In particular, we will provide novel insights and find answers to the following questions:
\begin{itemize}
	\item Convergence properties and accuracy relate to each other for small models. Does this generalize to models of arbitrary size as well?
	\item Does BP favor a particular fixed point if multiple fixed points are present? Is the most accurate fixed point always stable?
	\item What can be said about the accuracy if multiple fixed points exist? Do accurate fixed points still exist or does the existence of multiple fixed points imply failure of BP to provide accurate results? Can we enhance the approximation quality by a (suitable) combination of multiple fixed points?
	\item Under which circumstances does damping help to enforce convergence of BP?
	\item What is the influence of the model parameters: How does the graph structure (number of variables and connectivity) influence the number of fixed points and their stability for finite-size graphs? How do the local potentials influence the accuracy and the stability of fixed points? -- It is a common conjecture (cf.~\cite{mooij2005properties}) that models with weak local potentials perform worst; why does the presence of strong local potentials enhance the performance of BP?
\end{itemize}

\newsection{Fixed Points of Belief Propagation}{solutionsBP:overview}
Let us recall the iterative message update equations as discussed in Section~\ref{sec:bp:intro}:
\begin{align}
	\msg[n+1]{i}{j}{}  = \msgNorm[n]{i}{j} \sum \limits_{\RVval{i} \in \sampleSpace{X}} \pairwise{x}{i}{j} \local{}{i} \prod \limits_{\RV[x]{k} \in \{\neighbors{i} \backslash \RV[x]{j}\}}  \msg[n]{k}{i}{}.
	\label{eq:solutions:update}
\end{align}
The collection of all update equations $\setOfMessages[n+1] = \BP(\setOfMessages[n])$ defines a discrete-time map.
We can thus rely on the  standard recipe for analyzing discrete-time maps and analyze the solution space induced by~\eqref{eq:solutions:update} as discussed in Chapter~\ref{chp:solving}.
We begin with computing and counting the fixed points $\fpSetOfMessages$ for some reasonable form of normalization.

One alternative way to compute and count the fixed points is to  utilize the correspondence to stationary points of the Bethe free energy and to study this energy landscape instead. 
Several methods (cf. Section~\ref{sec:bp:variational:variants}) are available that minimize $\fb$ and,  in the presence of multiple BP fixed points (i.e., for non-convex $\fb$), either converge to a local or the global minimum.
None of these methods, however, is guaranteed to capture all fixed points.
Moreover, every method that relies on minimizing $\fb$ inevitably fails to account for (unstable) fixed points that correspond to local maxima of $\fb$.\footnote{
	The trivial (paramagnetic) solution  $\fpMsg{i}{j} = \frac{1}{|\sampleSpace{X}|} = \frac{1}{2}$ for all $\msg{i}{j}{} \in \setOfMessages$ is known in the special case of models with $\field{}=0$ and, depending on the coupling strength, it is either a local minimum or a local maximum of $\fb$~\cite{mooij2005properties}. The extension to models with $\field{i}\neq0$, however, is not straightforward and the complete set of all stationary points is not known in general.}

\newsubsection{Number of Fixed Points on Regular Ising Graphs}{solutionsBP:background}
Remember that the Ising model exhibits critical regions in the parameter space (so-called phase transitions) where the behavior of the model changes abruptly, unless it is specified on a path graph~\cite[Chapter~12]{georgii}.
In the physics literature, one computes these phase transitions for infinite-size, or at least very large, models.
Smaller graphs render the computation of the phase transitions much more intricate.\footnote{
	Note that true phase transitions, for which the partial derivatives of $\fb$ vanish can only exist for infinite-size graphs~\cite{binder1987}; with slight abuse of notation we refer to the finite-size manifestations of phase transitions as phase transitions as well.}
If, however, all nodes have equal degree (and unitary potentials $\coupling{i}{j} = \coupling{}{}$ and $\field{i} = \field{}$), the phase transitions can be computed for small graphs as well.
Therefore, one replaces the finite-size graph by an infinite-size graph with identical properties, a so-called Cayley tree.\footnote{
	A Cayley tree is an infinite tree without loops that captures the interactions of a cyclic finite-size graph.}
Then, the Cayley tree provides a way to compute the phase transitions of the underlying graph in an analytical fashion~\cite{taga2004convergence}.


Let us partition the parameter space $\parameter$ into three distinct regions $\FRegion$, $\PRegion$, and $\AFRegion$. 
This terminology complies with the naming convention in statistical physics where the regions are referred to as ferromagnetic $\FRegion$, paramagnetic $\PRegion$, and antiferromagnetic $\AFRegion$.
We can then, in accordance with~\cite[Section 12.2]{georgii}, define the phase transitions partitioning the parameter space.
Therefore, let us first introduce the shorthand notation $w = \tanh{|J|}$; we further introduce the following function
\begin{align}
	p(J,d) = 
	\begin{cases}
		d \arctanh{\sqrt{\frac{d\cdot w -1}{d/w -1}}} - \arctanh{\sqrt{\frac{d - 1/w}{d-w}}} \quad &\text{if} \; J  > \arcoth(d)\\
		d \arctanh{\sqrt{\frac{d\cdot w -1}{d/w -1}}} + \arctanh{\sqrt{\frac{d - 1/w}{d-w}}} \quad &\text{if} \; J  < \arcoth(d)\\
		0 \quad &\text{else}.
	\end{cases}
	\label{eq:phaseTransitions}
\end{align}
Finally, the regions are specified according to
\begin{align}
	(J,\theta) &\in \FRegion \, \hspace*{1cm} \text{if} \ J  > 0, \ J > \hspace*{0.35cm}\arcoth(d)   \quad \text{and} \,|\theta| \leq p(J,d), \label{eq:phase1} \\
	(J,\theta) &\in \AFRegion\, \hspace*{0.7cm} \text{if}  \ J < 0, \ J < -\arcoth(d)  \quad \text{and} \,|\theta| < p(J,d) , \label{eq:phase2}\\
	(J,\theta) &\in \PRegion\,  \hspace*{1cm} \text{if} \ (J,\theta) \notin \FRegion \quad\text{and} \quad  (J,\theta) \notin \AFRegion. \label{eq:phase3}
\end{align}

For attractive models with positive couplings ($J>0$) BP converges to a unique fixed point inside $\PRegion$. 
This fixed point becomes unstable and two additional fixed points emerge inside $\FRegion$~\cite{yedidia2005, mezard2009}.
For repulsive models with negative couplings ($J<0$) BP only converges inside $\PRegion$ and not inside $\AFRegion$~\cite{mooij2005properties}.

The phase transitions of the complete graph will be computed according to~\eqref{eq:phase1}-\eqref{eq:phase3}.
The phase transitions of the grid graph can only be numerically estimated and are defined by sudden changes in the number of fixed points.
We will later discuss the specific influence of the graph-size on the phase transitions in Section~\ref{sec:stabilityBP:theoretical_analysis}.\\


%

\newsection{Fixed Point Equations of Belief Propagation}{solutionsBP:solving}
Now, let us consider the update equations~\eqref{eq:solutions:update} as a nonlinear discrete-time map (cf. Section~\ref{sec:bp:map}).
We will adhere to the notation of Chapter~\ref{chp:solving} and Definition~\ref{defn:map} in particular and consequently formulate the fixed point equations for the update rule of BP.
Then, we aim to solve the fixed point equations directly; 
this will yield the set of all BP fixed points.

We did not consider a particular form of message normalization so far, i.e., how to choose $\msgNorm[]{i}{j}$.
The normalization, however, affects the update rule and must therefore be defined explicitly before attempting to solve the fixed point equations;
we define the normalization terms $\msgNorm{i}{j}$ so that the messages along every edge sum up to one, i.e., 
$\msgNorm[n]{i}{j} = \sum_{\RVval{j}\in \sampleSpace{X}}  \msg[n]{i}{j}{} - 1$.

Let us recall the definition of fixed point messages from~\eqref{eq:defn:fp}.
Then, the update equations define the following set of polynomial equations $\setOfBPEq = \fpSetOfMessages-\BP(\fpSetOfMessages) = 0$ so that
\begin{align}
	\setOfBPEq = 
	\begin{cases}
		\msg[n]{i}{j}{+1} - \msgNorm[n]{i}{j}  \sum\limits_{\RVval{i} \in \sampleSpace{X}} \potential{\RV{i},\RV{j}}(\RVval{i},+1) \local{}{i}  \prod\limits_{\RV{k} \in \neighborsWO{i}{j}} \msg[n]{k}{i}{}\\ 
		\msg[n]{i}{j}{-1} - \msgNorm[n]{i}{j}  \sum\limits_{\RVval{i} \in \sampleSpace{X}} \potential{\RV{i},\RV{j}}(\RVval{i},-1) \local{}{i}  \prod\limits_{\RV{k} \in \neighborsWO{i}{j}} \msg[n]{k}{i}{}\\
		\msg[n]{i}{j}{+1} + \msg[n]{i}{j}{-1} -1.
	\end{cases}
	\label{eq:SetOfEq}
\end{align}
This system of polynomial equations consist of $S$ equations $(\BPEq[1], \ldots , \BPEq[s])$, where
\begin{align}
	S = 2|\setOfEdges| \cdot (|\sampleSpace{X}| + 1).
	\label{eq:NrEq}
\end{align}

It is advantageous to consider this polynomial system defined over the complex numbers instead of the real numbers.
Else, we would restrict ourselves and rule out several methods for solving systems of polynomial equations that rely on the definition over the complex numbers (cf. Section~\ref{sec:solving:nphc}).
Let us now define the set of solutions over the complex numbers, without accounting for multiplicity, according to
\begin{align}
	\variety(\setOfEq) \definition \{ (\setOfMessages, \setOfNorm) \in \COMPLEX : f_i(\setOfMessages,\setOfNorm) = 0 \text{ for all }f_i \in \setOfEq\}.
\end{align}
We are particularly interested in the set of solutions over strictly positive real numbers $\variety_{\REALPos}(\setOfEq) \subseteq \variety(\setOfEq)$.
Note that $\variety_{\REALPos}$ is of specific relevance as it directly relates to the fixed points of BP.

\begin{thm}[Fixed Points of BP]\label{prop:EqSys}
	Let $(\setOfMessages, \setOfNorm)$ be some set of messages and normalization terms. 
	Then, $(\setOfMessages, \setOfNorm)$ is a fixed point of BP, if and only if $(\setOfMessages, \setOfNorm) \in
	\variety_{\REALPos}(\setOfEq)$.
\end{thm}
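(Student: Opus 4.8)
The statement is essentially a bookkeeping/unpacking result: it asserts that the real positive solutions of the polynomial system $\setOfBPEq$ constructed in \eqref{eq:SetOfEq} are exactly the fixed points of belief propagation. The plan is to prove this by a straightforward two-directional argument, carefully matching the definition of a BP fixed point against membership in $\variety_{\REALPos}(\setOfEq)$.

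First I would recall what the two sides mean. By Definition~\ref{defn:map} and \eqref{eq:defn:fp}, $(\setOfMessages,\setOfNorm)$ is a fixed point of the discrete-time map $\BP$ precisely when applying the update rule \eqref{eq:solutions:update} (with the normalization convention $\msgNorm[n]{i}{j} = \sum_{\RVval{j}\in\sampleSpace{X}} \msg[n]{i}{j}{} - 1$ built in, which makes $\BP$ a well-defined map on normalized message vectors) leaves every message unchanged. On the other side, by construction $\variety(\setOfEq)$ collects exactly those $(\setOfMessages,\setOfNorm)$ for which every polynomial $f_i$ in \eqref{eq:SetOfEq} vanishes, and $\variety_{\REALPos}(\setOfEq)$ restricts to strictly positive real coordinates. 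The equations in \eqref{eq:SetOfEq} were obtained by literally writing $\fpSetOfMessages - \BP(\fpSetOfMessages) = 0$ together with the normalization equations $\msg{i}{j}{+1} + \msg{i}{j}{-1} - 1 = 0$, so the two conditions ought to be tautologically equivalent once the domain restriction is handled.

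For the forward direction, suppose $(\setOfMessages,\setOfNorm)$ is a fixed point of BP. Since BP acts on normalized positive messages (Lemma~\ref{lm:msg}), all message coordinates are strictly positive reals; the normalization terms $\msgNorm{i}{j}$ are in $\REALPos$ by definition. The fixed point property says that the first two families of equations in \eqref{eq:SetOfEq} hold (message unchanged under the update), and the normalization convention gives the third family. Hence $(\setOfMessages,\setOfNorm) \in \variety_{\REALPos}(\setOfEq)$. Conversely, if $(\setOfMessages,\setOfNorm) \in \variety_{\REALPos}(\setOfEq)$, then the normalization equations force $\sum_{\RVval{j}} \msg{i}{j}{} = 1$ on every edge, so the first two equation families of \eqref{eq:SetOfEq} say exactly that $\BP(\setOfMessages) = \setOfMessages$ — with positivity guaranteeing that the normalization term appearing in $\BP$ is the one recorded in $\setOfNorm$. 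Thus $(\setOfMessages,\setOfNorm)$ is a BP fixed point.

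The main subtlety — and the only thing that needs genuine care rather than unpacking — is the consistency between the normalization term $\msgNorm{i}{j}$ as a free variable in the polynomial system and the normalization term that $\BP$ implicitly computes. One must check that on $\variety_{\REALPos}(\setOfEq)$ the value of $\msgNorm{i}{j}$ is forced to equal the normalizing constant of the unnormalized update, so that the polynomial fixed point and the dynamical fixed point genuinely coincide; positivity of all messages (hence nonvanishing of the relevant sums) is what makes this identification unambiguous, which is also why the restriction to $\variety_{\REALPos}$ rather than $\variety_{\REAL}$ or $\variety$ is essential. I would close by remarking that this reformulation is precisely what licenses applying the polynomial-system solvers of Section~\ref{sec:solving:comparison} to enumerate \emph{all} BP fixed points.
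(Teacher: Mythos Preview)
Your proposal is correct and follows essentially the same two-directional unpacking argument as the paper's proof, which is equally brief: the paper simply notes that membership in $\variety_{\REALPos}(\setOfEq)$ gives positive, normalized messages satisfying $\setOfMessages - \BP(\setOfMessages) = 0$, and conversely that any fixed point $(\fpSetOfMessages,\fpSetOfNorm)$ makes $\setOfBPEq = 0$ by definition. Your treatment is in fact slightly more careful than the paper's, since you explicitly address the consistency of the free variable $\msgNorm{i}{j}$ with the normalization that $\BP$ would compute --- a point the paper leaves implicit.
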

\begin{proof}
	First, we show that every $(\setOfMessages, \setOfNorm) \in \realSol$ characterizes a fixed point of BP.
	All messages are positive by definition and are normalized according to~\eqref{eq:SetOfEq} so that they represent probabilities (cf. Lemma~\ref{lm:msg}). 
	Furthermore, it follows from \eqref{eq:SetOfEq} that $\setOfMessages-\BP(\setOfMessages) =  0$, which constitutes a fixed point.
	
	Conversely, consider some fixed point messages with its corresponding normalization coefficients $(\fpSetOfMessages, \fpSetOfNorm)$, it then follows by definition that 
	$\BP(\fpSetOfMessages) = \fpSetOfMessages$ and consequently $\setOfBPEq= 0$.
\end{proof}
The set of solutions over the strictly positive real number thus directly corresponds to the set of all fixed points according to $\setOfAllSol = \{\pseudomarginals=\decisionP(\fpSetOfMessages), \partitionBethe=\decisionZ(\fpSetOfMessages): (\fpSetOfMessages, \fpSetOfNorm) \in \realSol\}$.

\begin{cor}\label{prop:nonEmpty}
	Consider a graph with strictly positive potentials $\pairwiseShort{i}{j}$ and $\localShort{i}$ as e.g., with Ising potentials. 
	Then, the solution set $\variety_{\REALPos}(\setOfEq)$ is nonempty. 
\end{cor}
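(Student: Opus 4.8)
The plan is to establish the corollary as an immediate consequence of Theorem~\ref{prop:EqSys} together with the well-known fact that belief propagation, when run on a graph with strictly positive potentials and normalized messages, always has at least one fixed point. The theorem tells us that the fixed points of BP are in bijection with $\variety_{\REALPos}(\setOfEq)$, so it suffices to exhibit a single fixed point of BP, or equivalently to invoke an existence result for BP fixed points. Hence the proof reduces to arguing that $\BP$ admits a fixed point.

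First I would set up a fixed-point argument via Brouwer's theorem. The normalized messages live in a compact convex set: for each directed edge $\edge{i}{j}$ the message $\msg{i}{j}{} = (\msg{i}{j}{+1},\msg{i}{j}{-1})$ lies in the probability simplex $\{(p,1-p): p\in[0,1]\}$, so the full message state $\setOfMessages$ lies in a product of such simplices, which is compact and convex. By Lemma~\ref{lm:msg}, starting from positive normalized messages, the update map $\BP$ maps this set into itself (positivity of the potentials guarantees the normalization constant $\msgNorm[n]{i}{j}$ is well-defined and strictly positive). Moreover $\BP$ is continuous on this domain: each updated message component in~\eqref{eq:update} is a ratio of finite sums of products of strictly positive potentials and nonnegative message components, and the denominator (the normalization) is bounded away from zero because the numerators are bounded away from zero — this is exactly where strict positivity of the potentials is used. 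Brouwer's fixed-point theorem then yields a point $\fpSetOfMessages$ with $\BP(\fpSetOfMessages)=\fpSetOfMessages$, and by Lemma~\ref{lm:msg} all its components are strictly positive and properly normalized. Collecting the associated normalization terms $\fpSetOfNorm$, Theorem~\ref{prop:EqSys} gives $(\fpSetOfMessages,\fpSetOfNorm)\in\variety_{\REALPos}(\setOfEq)$, so the set is nonempty.

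An alternative, if one prefers to avoid invoking Brouwer directly, is to cite the variational correspondence from Section~\ref{sec:bp:variational:correspondence}: every stationary point of the constrained Bethe free energy $\FB$ is a BP fixed point, and $\FB$, being continuous on the compact local polytope $\LPolytope$, attains its global minimum $\FBGlobalMin$, whose minimizer is an interior stationary point for strictly positive potentials (the boundary of $\LPolytope$ is excluded since the entropy terms drive $\FB$ to $+\infty$ there, or the derivative arguments push the minimizer inward). That minimizer corresponds to a fixed point with strictly positive messages, again placing a point in $\variety_{\REALPos}(\setOfEq)$.

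I expect the main obstacle — such as it is — to be the routine but slightly delicate verification that the denominator in the BP update is bounded away from zero on the relevant domain, which is what makes $\BP$ a genuine continuous self-map of the compact simplex product (rather than merely a map into its closure with possible blow-up). This is entirely a consequence of the potentials being strictly positive and the message simplex being compact, so it is more bookkeeping than difficulty; once it is in place, Brouwer and Theorem~\ref{prop:EqSys} do the rest.
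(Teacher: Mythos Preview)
Your alternative route via the variational correspondence is exactly the paper's proof: it cites the fact that the constrained Bethe free energy attains a minimum (which the paper attributes to \cite[Theorem~4]{yedidia2005}), identifies that minimum with a BP fixed point, and then invokes Theorem~\ref{prop:EqSys}.

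Your primary route via Brouwer is a legitimate and more self-contained alternative, but the step you flag as ``bookkeeping'' is slightly more delicate than you suggest. On the \emph{closed} product of simplices the normalization can genuinely vanish: if $\RV{i}$ has at least two neighbours $\RV{k_1},\RV{k_2}\in\neighbors{i}\setminus\{\RV{j}\}$ and the incoming messages sit at opposite corners, say $\msg{k_1}{i}{+1}=0$ and $\msg{k_2}{i}{-1}=0$, then both products $\prod_k\msg{k}{i}{x_i}$ vanish and the unnormalized update is identically zero. Strict positivity of the potentials does not rescue this, so $\BP$ is not a continuous self-map of the full closed simplex product. The standard fix is to work on a shrunken product $\prod_{(i,j)}[\epsilon,1-\epsilon]$ and show, using the uniform positive lower bound on the potentials, that for sufficiently small $\epsilon$ this set is mapped into itself; Brouwer then applies there. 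This is routine but not quite as immediate as your write-up implies. Either way, once a positive fixed point is in hand, Theorem~\ref{prop:EqSys} finishes the argument as you say.
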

\begin{proof}
	For non-negative potentials the average energy is bounded from below and $\fb$ has at least one minimum \cite[Theorem 4]{yedidia2005}.
	Minima of the constrained $\fb$ correspond to BP fixed point solutions, the existence of which implies non-emptiness of 
	$\realSol$ by Theorem \ref{prop:EqSys}.
\end{proof}

\newsubsection{Solving the Fixed Point Equations}{solving:methods}
Solving systems of nonlinear polynomial equations is a classical problem in computational mathematics and a great variety of methods have been developed such as iterative solvers (cf. Section~\ref{sec:solving:numerical}), symbolic methods (cf. Section~\ref{sec:solving:symbolic}), and homotopy methods (cf. Section~\ref{sec:solving:nphc}).
We now briefly review the applicability of these methods for the purpose of computing the set of all solutions.

The main disadvantage of iterative solvers is the need for an already known initial guess in the vicinity of a solution.
Moreover, it is difficult, to obtain the full set of solutions with these methods; this is of particular relevance as we are specifically interested in the entire set of the (positive) real solutions $\variety_{\REALPos}(\setOfEq)$.

Symbolic methods, on the other hand, are capable of obtaining the entire solution set. 
In particular, the Gr\"obner basis method is available in a number of software-packages and has seen substantial development in the past several decades.
As discussed in Section~\ref{sec:solving:symbolic:groebner}, the main drawbacks are:
the limitation to rational coefficients (cf. Example~\ref{ex:instability_groebner}), a worst-case complexity that is double exponential in the number of variables~\cite{mayr1982complexity, moral1984upper}, and the limited scalability in parallel computations. 
Altogether, this limits the application of symbolic methods to smaller systems; in fact the Gr\"obner basis method did not converge for any of our considered models.

The NPHC method obtains the entire solution set by deforming a \emph{start} system that is trivial to solve to the \emph{target} system that we intend to solve.
Although maybe not as established as the Gr\"obner basis method, multiple software packages are available for the NPHC method as well. 
One crucial ingredient of the NPHC method is that a number of independent solution paths is tracked under the deformation of the homotopy. 
This suggests to track all solution paths independently, which makes the approach \emph{pleasantly parallelizeable}; this is essential in dealing with large polynomial systems.
Even though only positive real solutions are of interest in this work, the NPHC method requires us to extend the domain to the field of the complex numbers in order to guarantee the emergence of smooth solution paths. 
Obviously a notable computational overhead is introduced in doing so; especially if only few of the solutions fall onto the real line.
Various different forms of homotopies are available that differ in the complexity of bounding the number of solutions.
As discussed in Section~\ref{sec:solving:nphc} the optimal choice depends strongly on the particular target system and is rarely known up-front.

At large, the NPHC method is the most promising approach for a couple of reasons:
it obtains \emph{all} isolated nonzero complex solutions\footnote{
	Here, ``nonzero complex solutions'' refer to complex solutions of a system of polynomial equations where each variable is nonzero. A solution is considered to be isolated if it has no degree of freedom, i.e., there is an open set containing it but no other solutions.} 
that must include \emph{all} BP fixed points $\realSol$; it is readily available in software packages; and it has a level of parallel scalability that can deal with models of relevant size 

\newsubsection{Polyhedral Homotopy Method}{solving:polyhedral}
Let us briefly recall one of the most basic forms of homotopies from Section~\ref{sec:solving:nphc}:
\begin{align}
	\homotopy(\stateVec{x},t) = (1-t)\startSys(\stateVec{x}) + \gamma t \setOfEq(\stateVec{x}) = 0.
\end{align}
Alternatively, one can construct more advanced "nonlinear" homotopies where the parameter $t$ appears in nonlinear form, in order to reduce the computational costs. 
Among a great variety of polynomial homotopy constructions, the \emph{polyhedral homotopy method}, developed by B.~Huber and B.~Sturmfels~\cite{huber1995polyhedral}, 
is particularly suited for the systems of polynomial equations considered.

In applying the NPHC method to solve \eqref{eq:SetOfEq},
the choice of $\setOfStartEq$ (the trivial system of equations that 
the target system is deformed into) plays an important role in the overall
efficiency of the approach since different choices of $\setOfStartEq$ may induce a vastly different number of solution paths one has to track. 
The crucial part is to come up with a good upper bound on the number of solutions and to create an appropriate start system. 
Once this is solved, the desired solutions are simply obtained by tracking all independent solution paths.

Note that in each equation of~\eqref{eq:SetOfEq} only few of the monomials are present, i.e., the update equations of BP imply a sparse system of equations~\cite{huber1995polyhedral}. 
In our experiments, we observed that despite the rather high \emph{total degree}\footnote{%
	The total degree of a system of polynomial equations is the product of the degrees of each equation.
	It is a basic fact in algebraic geometry that the total number of isolated complex solutions a polynomial system has is bounded by its total degree (i.e., Bezout bound).
	Therefore the total degree serves as a crude measure of the complexity of the polynomial system.} 
$d_t$~\cite[pp.118] {sommese2005numerical}, each equation in~\eqref{eq:SetOfEq} contains only relatively few of the monomials.
Such sparse systems usually benefit from resorting  to more involved method homotopy methods that take the structure of the system into account and consequently provide tighter bounds on the number of solutions.
The number of solution paths one has to track when using the polyhedral homotopy method for solving a system of polynomial equations is given by the so-called \emph{Bernstein-Kushnirenko-Khovanskii (BKK) bound}: 
fixing the list of monomials that appear in the polynomial system, it is an important yet surprising fact in algebraic geometry 
that for almost all choices of the coefficients (in the probabilistic sense), the number of isolated
nonzero complex solutions is a fixed number which only depends on the list of monomials. 
This number is known as the BKK bound \cite{Bernstein75,Kushnirenko76,Khovanski78}. Intermediate steps in the determination of the BKK bound  are reused to create an appropriate start system. Using the fully parallel implementation \textsf{Hom4PS-3}~\cite{chen2014hom4ps} of the polyhedral homotopy method, we compute the BKK bound, that is tight in all our experiments, and obtain \emph{all} isolated positive solutions.

The polyhedral homotopy method exploits the structure of~\eqref{eq:SetOfEq}, but also requires some subtle steps. Rather than presenting all technical details we present an illustrative example to explain the underlying principles. For more details we refer the reader to the excellent overview papers \cite{li1997numerical,li2003solving, chen_homotopy_2015} or to \cite[Section 8.5.4]{sommese2005numerical} and the references therein.

\begin{example}[Polyhedral Homotopy]$ $\newline
	The essential steps in solving polynomial system with the polyhedral homotopy method are: 
	first, to compute a root count based on mixed volume computations~\cite[Section 3]{li2003solving}; 
	second, to come up with an easy to solve start system~\cite[Section 4]{li2003solving}; 
	and finally, to solve the start system and track the solution paths to the target system~\cite[Section 1]{li2003solving}.

	\emph{(i) Root Count:}  Consider the example system (taken from ~\cite[p.142]{sommese2005numerical}) with two unknown variables $\vm{x} = \{x_1,x_2\}$ and with 4 solutions.
	\begin{align}
		\mathbf{F}(\vm{x}) = 
		&\begin{cases}
			1+ax_1+bx_1^{2}x_2^{2}\\
			1+cx_1+dx_2+ex_1x_2^{2}.
		\end{cases} \label{eq:ExampleSystem}
	\end{align}
	The total degree of this system of equations is $d_t = 4\cdot 3  = 12$, which serves as an upper bound on the actual number of solutions. 
	If the system of equations is sparse, the BKK bound serves
	as a much tighter bound.
	
	Every equation $f_i \in \mathbf{F}(\vm{x})$ has an associated polytope $\polytope{i}$ which is the convex hull of the exponent vectors for all monomials of $f_i$. For $f_1 $ the polytope is
	\begin{align}
		\polytope{1} = \{(0,0)(1,0)(2,2)\},
	\end{align}
	which has a graphical representation in Figure~\ref{fig:q1_q2} (a). Similar for $f_2$ the polytope, shown in Figure~\ref{fig:q1_q2} (b), is
	\begin{align}
		\polytope{2}=\{(0,0)(1,0)(0,1)(1,2)\}.
	\end{align}
	
	Some important operation on polytopes are the computation of the Minkowski sum $\polytope{1}+\polytope{2} = \{s_1+s_2 : s_1 \in \polytope{1}, s_2 \in \polytope{2}\}$ and the computation of volumes, denoted by $V(\polytope{i})$. 
	Note that computing the BKK bound is a viable thing to do in any dimension; therefore we refer to $V(\polytope{i})$ as volume although the polytopes of this example only lie in the two-dimensional space.
	The computation of the mixed volume $M(\polytope{1},\polytope{2})$ is a combinatorial problem that is especially comprehensible in the case of two equations where
	\begin{align}
		M(\polytope{1},\polytope{2}) = V(\polytope{1}+\polytope{2}) - V(\polytope{1})-V(\polytope{2}).
	\end{align}
	For a generalization to higher dimensions see~\cite[p.140]{sommese2005numerical}.
	The polytope of $\polytope{1}+\polytope{2}$ is illustrated in Figure~\ref{fig:q1_q2} (c); the mixed volume is accordingly obtained by subtracting $V(\polytope{1})$ and $V(\polytope{2})$ from $V(\polytope{1}+\polytope{2})$,
	which equals the sum of all gray areas (known as mixed cells).  
	
	It is straightforward to see that each parallelogram has volume equal to 2;
	the BKK bound therefore equals $M(\polytope{1},\polytope{2}) = 4$ and thus provides a tight bound on the number of solutions.
	\begin{center}
		\includegraphics[width=0.75\linewidth]{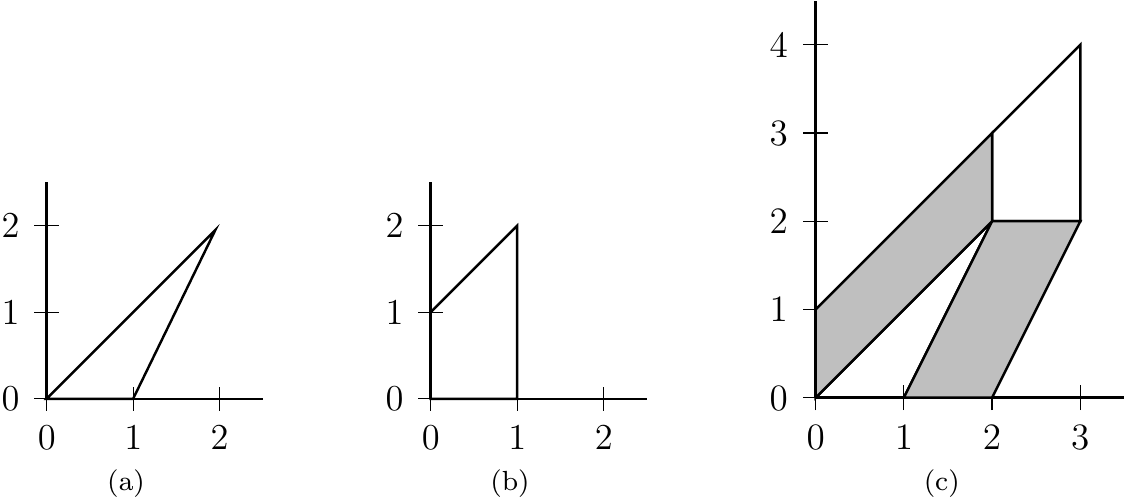}
		\captionof{figure}{(a) polytope $\polytope{1}$, (b) polytope $\polytope{2}$, and (c) Minkowski sum $\polytope{1}+\polytope{2}$.}
		\label{fig:q1_q2}
	\end{center}

	\emph{(ii) Start System:} The BKK bound does provide a tight bound on the number of solutions but does not immediately reveal the initial solutions of an appropriate start system $\mathbf{Q}(\stateVec{x})$ with $q_i(\stateVec{x}) = \sum_{a\in \polytope{i}} c_{i,a}\stateVec{x}^a$, where  $\stateVec{x}^a = x_1^{a_1}\cdot x_2^{a_2}$ and $c_{i,a}$ are random coefficients.
	
	However, the mixed volume computation can also be accomplished by introducing a lifting 
	\begin{align}
		\omega_i = \{\omega_i(a):a \in \polytope{i}\} \nonumber
	\end{align}
	for each $f_i$.  
	Thereby, we increase the dimension of the polytope $\polytope{i}$ to $\polytopeLifted{i}$ by adding one component to each exponent-vector $a$. This component is obtained by the lifting function $\omega_i(a)$. In our example we choose the lifting values $\omega_1 = \{0,0,0\}$ and $\omega_2 = \{0,1,1,3\}$; these values are obtained by the inner products $\omega_1(a) = (0,0)\circ(a_1,a_2)$ and $\omega_2(a) = (1,1)\circ(a_1,a_2)$.
	The polytopes are lifted accordingly so that 
	\begin{align} 
		\polytopeLifted{1} &= \{(0,0,0)(0,1,0)(2,2,0)\},\nonumber\\
		\polytopeLifted{2} &=\{(0,0,0)(1,0,1)(0,1,1)(1,2,3)\},\nonumber\\
		\polytopeLifted{1}+\polytopeLifted{2}&=\{(0,0,0)(0,1,0)(2,2,0)(0,2,1)(0,1,1)(3,2,1)(3,4,3)\}.\nonumber
	\end{align}
	Then the faces in the lower hull of $\polytopeLifted{1}+\polytopeLifted{2}$ correspond to cells shown in Figure~\ref{fig:q1_q2}, which is known as a fine mixed subdivision.

	These liftings, together with the random  coefficients $c_{i,a}$, now form the homotopy $\hat{\mathbf{Q}}(\stateVec{x},t)$ with $\hat{q}_i =\sum_{a\in \polytope{i}} c_{i,a}\stateVec{x}^a t^{\omega_i(a)}$ such that 
	\begin{align}
		\hat{\mathbf{Q}}(\stateVec{x},t) = 
		&\begin{cases}
			1+c_{1,1}x_1+c_{1,2}2x_1^{2}x_2^{2}\\
			1+c_{2,1}x_1t+c_{2,2}x_2t+c_{2,3}x_1x_2^{2}t^3.
		\end{cases}
	\end{align}
	By closer inspection, however, it is still not possible to identify the starting points because $\hat{q}_2(\stateVec{x},t=0) = 1$. This problem can be resolved according to~\cite[Lemma 3.1]{huber1995polyhedral}: i.e., initial values are obtained by solving a binomial system for every cell that contributes to the mixed volume computation (i.e., for every gray cell in Figure~\ref{fig:q1_q2}).
	One can then increase $t$ and obtain the solutions of the start system by tracking the solution paths to $\hat{\mathbf{Q}}(\stateVec{x},t=1) = \mathbf{Q}(\stateVec{x})$.
	
	\emph{(iii) Target System:} Finally we have all 4 solutions to $\mathbf{Q}(\stateVec{x})$. Now what remains is to construct a \emph{linear} homotopy according to~\eqref{eq:homotopy} and to increase $t$, starting at $t=0$. At $t=1$ the homotopy reduces to $\mathbf{F}(\stateVec{x})$ and provides the desired solutions of the target system.
\end{example}

\newsection{Stability of Fixed Points}{stabilityBP:stability}
We proceed according to the usual procedure in dynamical systems (cf. Chapter~\ref{chp:solving}) and, after computing the set of all fixed points, asses their (local) stability.
%
A fixed point is locally stable if a neighborhood exists such that messages inside this neighborhood (i.e., messages sufficiently close to the fixed point), converge to the fixed point under the considered map~\cite[pp.170]{Teschl2003}.

Note that the  Bethe free energy does not fully characterize the stability of a given fixed point (cf. Section~\ref{sec:bp:variational:correspondence}): 
although stable fixed points are local minima of $\FB$, local minima must not be stable~\cite{heskes2003stable}.
A general way of investigating the stability of fixed points is available by the method of Lyapunov~\cite[pp.93]{Scheinerman1996}, though it is sufficient for all graphs considered in this work to restrict our analysis to linearization (the indirect method of Lyapunov) as introduced in Section~\ref{sec:solving:stability}.

This approach is well-established in the dynamical systems literature and it may consequently seem rather surprising that such a stability analysis has not been considered so far for BP.
The main difficulty, however, is not the stability analysis as such, but the prerequisite of estimating all fixed points.
Indeed, a local stability analysis has been performed for Ising models with vanishing local potentials~\cite{mooij2005properties}, for which all fixed points are known.
We can, however, rely on the NPHC method, as discussed in the preceding section, to solve the fixed point equations and to obtain the set of \emph{all} fixed points.
Subsequently, it is evident how to analyze the stability by computing the Jacobian and thus linearizing BP in every fixed point.

\newsubsection{Reformulation of Belief Propagation for Binary Variables}{stabilityBP:reformulation}
Before we investigate the stability of all BP fixed points, we introduce an alternative parameterization of the update equations for the particular case of binary variables.
This parameterization reduces the number of variables and eases some calculations without changing the properties of BP.
Therefore, we express both messages along the same edge as a single message (cf.~\cite{mooij2005properties,knoll_stability}) defined by
\begin{align}
	\msgReparam[n]{i}{j} = \arctanh\big(\msg[n]{i}{j}{1} - \msg[n]{i}{j}{-1} \big).
\end{align}
Then, the update rule in~\eqref{eq:update} can be rewritten according to
\begin{align}
	\tanh(\msgReparam[n+1]{i}{j}) = \tanh(J_{ij})\tanh(\cavityField[n]{i}{j}), \label{eq:reparam}
\end{align}
where the cavity field $\cavityField[n]{i}{j}$ acts on $\RV{i}$, while neglecting the incoming message from $\RV{j}$, according to
\begin{align}
	\cavityField[n]{i}{j} \definition \theta_i + \sum \limits_{\RV{k}\in \neighborsWO{i}{j}} \msgReparam[n]{k}{i}. \label{eq:cavity}
\end{align}
Considering \eqref{eq:reparam} and~\eqref{eq:cavity} (cf.~\cite{opper2001tractable}) it becomes evident that, for binary pairwise models, BP corresponds to the so-called cavity method~\cite{mezard1987spin}.
Its name stems from the fact that we essentially dig a cavity into the model by removing $\RV{j}$. We then express the remaining field that acts $\RV{i}$ by the cavity field $\cavityField{i}{j}$.

The marginals, or the mean, can then -- similar as for BP -- be computed by all incoming messages according to 
\begin{align}
	\mean{i} &= \tanh \big( \cavityField{i}{j} + \msgReparam{j}{i} \big) \nonumber \\
	&= \field{i} + \sum \limits_{\RV{k}\in \neighbors{i}} \msgReparam{k}{i}.
\end{align}

In accordance with the consideration of BP as a dynamical system, we will denote the set of all re-parameterized messages by $\setOfMsgReparam$ and denote the mapping induced by BP as $\setOfMsgReparam[n+1] = \BP(\setOfMsgReparam[n])$.
Keeping the notation consistent, we say that BP converged to a fixed point
\begin{align}
	\fpSetOfMessagesReparam = \BP(\fpSetOfMessagesReparam),
	\label{eq:fp_reparam}
\end{align}
if successive messages remain unchanged under BP.
If, however, BP fails to converge, one can  try to achieve convergence by one of the many modifications discussed in Section~\ref{sec:bp:map:improving}.
One of these modifications that lends itself handsomely for our stability analysis is BP with damping.

Let us briefly state the update equations of BP with damping in terms of the reformulated messages again: $\BPVariant{D}(\setOfMsgReparam[n]) = (1-\epsilon)\BP(\setOfMsgReparam[n])+ \epsilon \setOfMsgReparam[n]$, where $\epsilon \in [0,1)$ is the damping factor.
Remember that, although the fixed points of BP with damping are fixed points of BP without damping as well (cf.~\eqref{eq:equal_stability}), the stability of the fixed points may change nonetheless.

\newsubsection{Linearization}{stabilityBP:linearization}
In order to assess the stability of the fixed points $\fpSetOfMessagesReparam$ we approximate $\mapBP{\cdot}$ by a linear function in its fixed point(s) and analyze the behavior of the linearized system. 
This is done by taking the 
partial derivatives of all messages, i.e., by analyzing the Jacobian matrix $\Jacobian$ with its elements defined as
\begin{align}
	\Jacobian_{mn} = \frac{\partial \msgReparam[\stationaryPoint]{i}{j}}{\partial \msgReparam[\stationaryPoint]{k}{l}}, \label{eq:jacobian}
\end{align}
where -- given some ordering -- $\edge{i}{j}$ and $\edge{k}{l}$ are the $m\textsuperscript{th}$ and the $n\textsuperscript{th}$ edge.

For binary pairwise models parameterized as in Section~\ref{sec:stabilityBP:reformulation}, the Jacobian is given as follows:
without loss of
generality\footnote{If all combinations of $m$ and $n$ are considered we would effectively consider all $\frac{n(n-1)}{2}$ possible edges. 
	All rows and columns that correspond to a message $\msgReparam{v}{w}$, where $\edge{v}{w} \notin \setOfEdges$ include only zero-values, however, and can therefore be neglected without changing  the eigenvalues.} 
we consider only messages where $\edge{i}{j} \in \setOfEdges$ and $\edge{k}{l} \in \setOfEdges$ such that
\begin{align}
	\Jacobian_{mn} = 
	\begin{cases}
		\frac{ \tanh(J_{ij}) \left(1-\tanh^2(\cavityField{i}{j})\right) }{1-\tanh^2(J_{ij}) \tanh^2(\cavityField{i}{j})} & \!\!\!   \text{if} \, i\!=\!l \, \text{and} \; k \! \in\! \neighborsWO{i}{j} \\
		0 \; & \!\!\! \text{else.}
	\end{cases} \label{eq:jacobian-long}
\end{align}

Let us briefly recap the properties of the eigenvalues with respect to the stability of the BP fixed points from Section~\ref{sec:solving:stability}.
A fixed point $\fpSetOfMessagesReparam$ is locally stable if all eigenvalues have absolute value strictly smaller than one, i.e., if $\Radius{\Jacobian} < 1$ 
and BP converges if initialized sufficiently close enough.
A fixed point is unstable with respect to BP if at least one eigenvalue exists outside the unit circle such that $\Radius{\Jacobian} > 1$.
For $\Radius{\Jacobian} = 1$ stability of the nonlinear system cannot be inferred by just looking at the linear system.\\

In addition to damping, there are many other variants of BP available that have the same set of solutions $V_{\mathbb{R}_+}^{*}$~\cite{wainwright2003tree-scheduling};
in particular, this includes different scheduling methods. 
Analyzing the stability of the fixed points under these variants, however, becomes problematic -- 
mainly because the update function changes with time.
Moreover, the fact that many messages are not updated, introduces eigenvalues with $\lambda_i = 1$, which renders the stability analysis by linearization impossible.


Therefore, of all the modified versions of BP, we will restrict our attention to damping.
The application of damping modifies the eigenvalue spectrum according to
\begin{align}
	\Spectrum{\Jacobian[D]} = \Spectrum{\Jacobian} \cdot (1-\epsilon) + \epsilon.
\end{align}
Note that all eigenvalues are reduced by a factor $(1-\epsilon)$ and experience a shift by $\epsilon$ along the real axis, i.e., $\Re{\Spectrum{\Jacobian[D]}} =  \epsilon + \Re{(1-\epsilon)\Spectrum{\Jacobian}}$. 
A fixed point $\fpSetOfMessagesReparam$ is thus locally stable under $\mapBP[D]{\cdot}$ if 
\begin{align}
	\Re{\Spectrum{\Jacobian}} < 1.
\end{align}

The correspondence between stability and the eigenvalue spectrum is summarized and visualized in Figure~\ref{fig:overview_stability_eigenvalues}.
A fixed point is stable under BP if all eigenvalues lie inside the unit circle (depicted by the blue area), under BP with damping if all eigenvalues have a real part strictly smaller than one (depicted by the gray area), and are unstable else (depicted by the green area).

\begin{figure}[t]
	\centering
	\includegraphics[width =0.3\columnwidth]{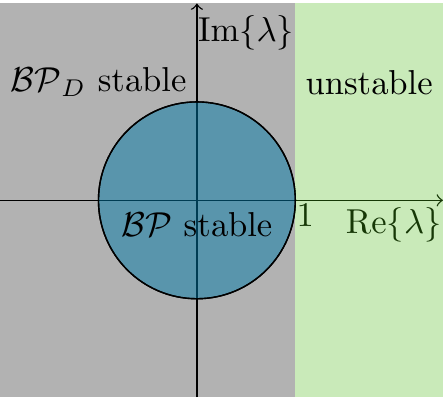}
	\caption{Eigenvalue spectrum of the Jacobian $\Spectrum{\Jacobian}$. BP is stable if all eigenvalues lie inside the blue region; BP with damping is stable if all eigenvalues lie inside the gray region; BP with and without damping is unstable if eigenvalues lie inside the green region.}
	\label{fig:overview_stability_eigenvalues}
\end{figure}

Note that the properties of the Bethe Hessian (and thus the stability of the fixed points) further relate to some concepts from graph-theory; in particular to the Ihara zeta function~\cite{watanabe} and  -- for models with vanishing local fields, i.e., where $\field{}=0$ -- to the non-backtracking matrix (known as the Hashimoto-matrix)~\cite{saade2014spectral,saade2017spectral}. 
Note that the latter connection is insightful by any means but is only valid in a well-behaved region where all eigenvalues of the non-backtracking matrix are inside the unit circle (cf.~\cite[Theorem 2.1]{saade2017spectral}).  

\newsection{Selected Models with Ising Potentials}{solutionsBP:fps_selected_models}

Finally, we will now consider a range of probabilistic graphical models, apply the NPHC method to the fixed point equations~\eqref{eq:SetOfEq}, obtain all fixed points by first finding all isolated non-zero complex solutions, and analyze the stability of all fixed points.

We first evaluate and compare the accuracy of all fixed points obtained by NPHC;
details for our evaluation criteria are presented in Section~\ref{sec:solutionsBP:experiments:evaluation}.
Note that we will already anticipate some results from the later stability analysis and separately evaluate the accuracy of stable and unstable fixed points.
However, here we rather ask \emph{if} a given fixed point is stable, instead of \emph{why} it is stable.
Furthermore, we present the evolution of the fixed points over the parameter space in Section~\ref{sec:solutionsBP:experiments:fp_evolution} and present some implications on the accuracy to better understand for which parameters  BP can be expected to provide good results.

The capability of NPHC to obtain all fixed points, subsequently allows for a thorough stability analysis in Section~\ref{sec:stabilityBP:empirical_analysis}. 
Our empirical observations further inspire some theoretical investigations on Ising models with unitary parameters in Section~\ref{sec:stabilityBP:theoretical_analysis}. 
There we show why convergence properties degrade with growing graph size and why strong local potentials help to achieve convergence. 
Finally, the performance of BP-decoding for error-correcting codes is analyzed in terms of the solution space in  Section~\ref{sec:solutionsBP:coding}.

\newsubsubsection{Considered Graphs}{stabilityBP:empirical_analysis:graphs}
We consider different realizations of the Ising model with attractive, repulsive, and mixed interactions on a range of graphs.
These graphs include complete graphs, grid-graphs, and grid-graphs with periodic boundary conditions (see Figure~\ref{fig:graphs}).

For the complete graph each pair of nodes is connected by an edge; it follows by definition that this is a regular graph, i.e., all variables $\RV{i}\in\setOfNodes$ have equal degree $\nodeDegree{i} = N-1$. 
Because of this, we can construct a Cayley tree to determine the phase-transitions.

The grid-graph has all edges aligned along the two-dimensional square lattice and, for finite-size graphs, contains variables of varying degrees. 
We additionally consider grid-graphs with periodic boundary conditions, where nodes on the boundary are joined by edges so that all variables have equal degree again.
\begin{figure}[t]
	\centering
	\includegraphics[width =0.7\columnwidth]{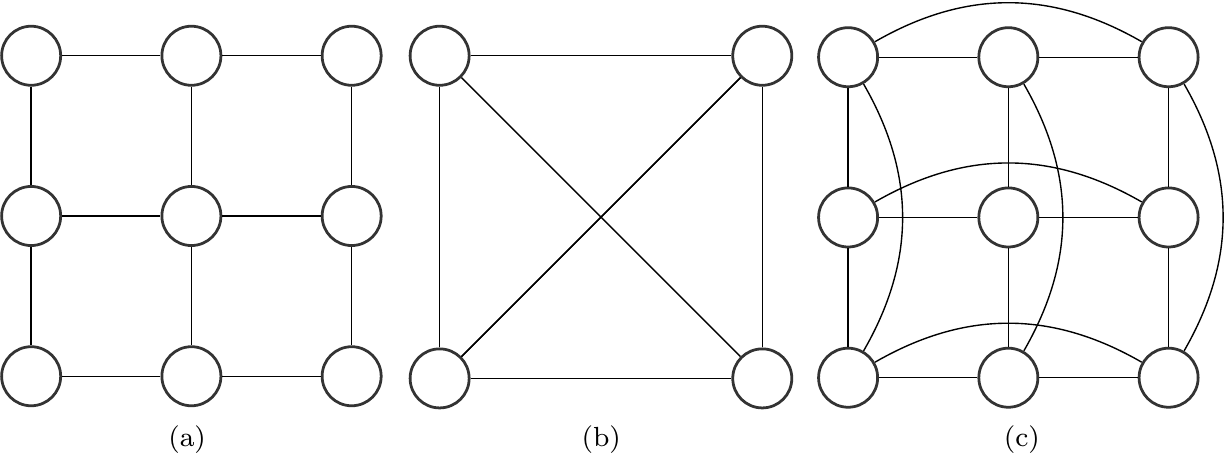}
	\caption{Considered Ising graphs: (a) grid-graph;  (b) complete graph; (c) grid-graph with periodic boundary conditions.}\label{fig:graphs}
\end{figure}

\newsection{Number of Fixed Points and Marginal Accuracy}{solutionsBP:accuracy_selected_models}
This section provides an exhaustive solution space analysis of BP in terms of analyzing the properties for all fixed points.
In order to obtain the set of all fixed points we have to solve the system of fixed point equations~\eqref{eq:SetOfEq} first.
The systems considered in this work are simply too large to be solved with symbolic methods or even with the NPHC method based on a linear homotopy. The BKK bound, however, takes into account the sparsity of the system $\eqSys$, induced by the graph structure, and reduces the number of solution paths to be tracked so that the problem can be solved in practice. We present a detailed runtime analysis in Section~\ref{sec:solutionsBP:experiments:runtime}. 
Note that both the structure of~\eqref{eq:SetOfEq} and the number of complex solutions in $V(\mathbf{F})$ 
remain the same if the graph structure is kept constant~\cite{chen2015network,chen2016network}; depending on the potentials, however, the number of solutions in $V_{\mathbb{R}_+}^{*}(\mathbf{F})$ may change. 

\newsubsection{Evalutation Criteria}{solutionsBP:experiments:evaluation}
We evaluate all fixed points in terms of their marginal accuracy and compare them with marginals obtained by an implementation of BP without damping.
Evaluation of the marginal accuracy requires the availability of the exact marginals; these are obtained by the junction tree algorithm that we can still resort to because of the limited size of the considered models.

In particular, we evaluate the correctness of the approximated marginals in terms of their accuracy (cf.~\eqref{eq:error_marginal}).
The expected mean (cf.~\eqref{eq:mean_expected}) provides an alternative way of evaluating the marginals and is particularly well suited for illustrative purposes.
We will therefore not only compare different fixed points in terms of the MSE but also visualize the averaged mean of the exact solution $\meanAvg$ as well as of all BP fixed points $\meanAvgApprox$.
Remember that $\meanAvg - \meanAvgApprox$ equals the average of all marginal errors (cf.~\eqref{eq:mean_expected}) and is thus well-suited for visually comparing the marginal accuracy of different fixed points.\\

We will not only compare different fixed points but also evaluate the effect of altering the evaluation function.
A combination of properly weighted marginals may improve the marginal accuracy considerable but is often problematic in practice for the lack of methods that obtain more than a single fixed point.
The NPHC method, however, computes the set of all solutions $\variety_{\REALPos}(\setOfEq)$ and -- by Theorem~\ref{prop:EqSys} -- yields the set of all fixed points $\setOfAllSol$.
This is the foundation for evaluating the marginal accuracy of different combinations.

To combine the marginals, we first need to compute the pseudomarginals $\pseudomarginals$ and the Bethe partition function $\partitionBethe$ for all fixed points.
Then, we utilize the obtained values of the Bethe partition function $\partitionBethe$ together with the pseudomarginals $\pseudomarginals$ and modify the evaluation function $\decisionP$ in three different ways.

First, all fixed points are weighted by their partition function and are combined so that
\begin{align}
	\pseudomarginals^{T} =  \frac{1}{\sum\limits_{\partitionBethe \in \setOfAllSol} \partitionBethe}
	\sum_{(\pseudomarginals,\partitionBethe)\in\setOfAllSol}\pseudomarginals\cdot\partitionBethe.
\end{align}

Second, we combine all fixed points that correspond to local minima of the Bethe partition function in the same way so that
\begin{align}
	\pseudomarginals^{M} =  \frac{1}{\sum\limits_{\partitionBethe \in \setOfMinimaSol} \partitionBethe}
	\sum_{(\pseudomarginals,\partitionBethe)\in\setOfMinimaSol}\pseudomarginals\cdot\partitionBethe.
\end{align}
Note that these combinations are not just some heuristics.
In particular the latter one reminds us of how the exact solution is expected to decompose according to the RSB assumption.

Finally,
we aim to select one specific fixed point and hope that accurate marginals are to be found at the maximum of the Bethe partition function (i.e., the global minimum of the Bethe free energy).
The fixed point maximizing $\partitionBethe$ is consequently selected and evaluated as well; more formally it is given by
\begin{align} 
	\pseudomarginals^{MAX}= \argmax_{\pseudomarginals\in \setOfAllSol} \partitionBetheWithP.
\end{align}

\newsubsection{Grid Graphs with Random Factors (Spin Glasses)}{solutionsBP:experiments:gridGraphRandom}
Consider a grid graph of size $N = 3 \times 3$ with randomly distributed parameters. All pairwise and local potentials are sampled uniformly; i.e., $(\coupling{i}{j},\field{i}) \sim \mathcal{U}(-K,K)$. The larger the support of the uniform distribution is, the more difficult the task of inference becomes; we choose $K=3$, which is large enough to make BP fail to converge sometimes (cf.~\cite{sutton2012improved}). 

According to~\eqref{eq:NrEq} the system of equations consists of $M = 72$ equations in $72$ unknowns. More specifically, \eqref{eq:SetOfEq} consists of 24 linear (i.e., normalization constraints), 40 quadratic, and 8 cubic equations; the total degree bounds the number of solutions by $d_t = 1^{24} \cdot 2^{40} \cdot 3^8 = 7.2 \cdot10^{15}$. 
Tracking such an amount of solution paths is not feasible in practice, even with a parallel implementation of the NPHC method. 

The system of equations in~\eqref{eq:SetOfEq}, however, is sparse. We can exploit this sparsity that is induced by the graph structure if we consider the BKK bound and reduce the computational complexity. 
The number of complex solutions for this graph is bounded by $\text{BKK}=608$. After creating a suitable start system the problem is straightforward to solve with the NPHC method. It actually turns out that the BKK bound is tight for all graphs considered.

In particular we evaluate 100 grid graphs with random factors: on 99 graphs BP converged after at most $10^4$ iterations. 
Although the grid graph has multiple loops and the constrained $\fb$ is not necessarily convex~\cite[Corr.2]{heskes2004uniqueness}, we observe that for all $100$ graphs NPHC obtains a unique positive real solution that corresponds to a unique BP fixed point.

\begin{figure}[t]
	\centering
	\includegraphics[width = 0.66\linewidth]{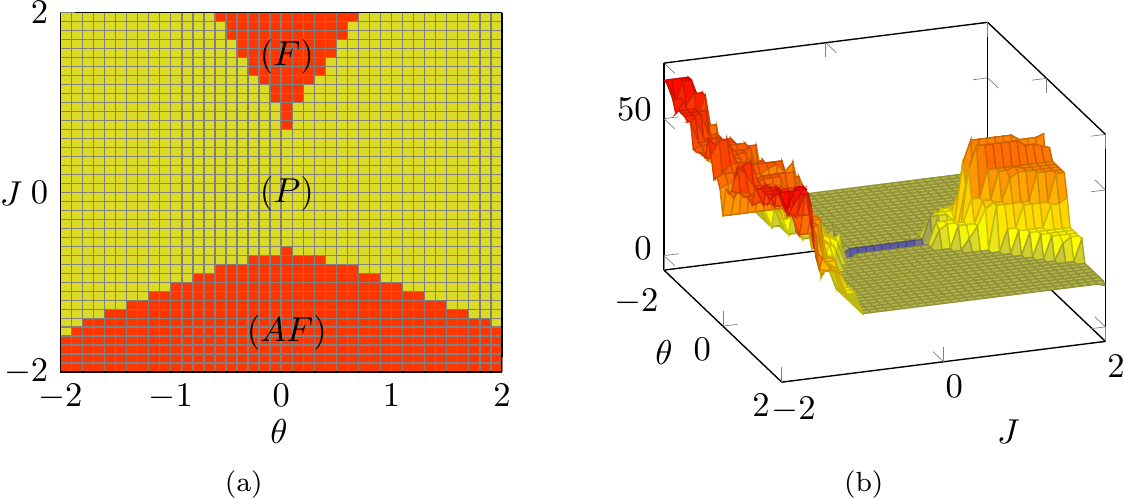}
	\caption[]{ Number of fixed points on the grid graph of size $N=3 \times 3$: (a)
		number of positive real fixed points (yellow: unique fixed point, red: three fixed points); (b) number of real fixed points. The increase in the number of both real solutions and positive real solutions, indicates a phase transition.}
	\label{fig:3x3-solutions}
\end{figure}
\newsubsection{Grid Graphs with Uniform Factors}{solutionsBP:experiments:gridGraphUniform}

We further analyze BP on grid graphs of size $N = 3 \times 3$  with constant potentials among all nodes and edges; i.e.,
we specify the couplings for all edges by $\coupling{i}{j} = J$ and specify the fields for all nodes by $\theta_i = \theta$. 
We apply BP and NPHC for 1681 graphs in the parameter region $(J,\theta) \in \{-2,-1.9,\ldots,1.9,2\}$ and illustrate the size of the solution set in Figure~\ref{fig:3x3-solutions}.

\begin{table*}[t]
	
	\renewcommand{\arraystretch}{1.3}
	\caption{\textsc{Mean Squared Error, i.e., $\EMarginal{m}$ of Marginals and Combined Marginals Obtained by BP and NPHC for the Grid Graph with Unitary Factors.}}
	\label{tab:3x3}
	\centering
	\begin{tabular}{l l l l l l l }
		\toprule
		\multicolumn{2}{c}{Parameters}&  \multicolumn{2}{c}{Fixed Points} & \multicolumn{3}{c}{Combined} \\ 
		\cmidrule(lr){1-2} \cmidrule(lr){3-4}  \cmidrule(lr){5-7}
		Couplings & Local Field & BP & NPHC & MAX & ALL & STABLE \\ \midrule
		$ J \in [-2,2]$ & $\theta \in [-2,2]$ & 0.197 & 0.016 & 0.037 & 0.005 & 0.004 \\  
		$ J \in \FRegion$ & $\theta \in \FRegion$ & 0.010 & 0.010 & 0.076 & $9.2\cdot10^{-4}$ & $1.0\cdot10^{-9}$\\ 
		$ J \in \AFRegion$ & $\theta \in \AFRegion$  & 0.836 & 0.050 & 0.126& 0.003 & $1.5\cdot10^{-6}$\\
		$ J \in \PRegion$ & $\theta \in \PRegion$ & 0.006 & 0.006 &  0.006 & 0.006 & 0.006\\ 
		\bottomrule
	\end{tabular}
\end{table*}

%

The number of solutions in $V_{\mathbb{R}_+}^{*}(\mathbf{F})$  is presented in Figure~\ref{fig:3x3-solutions} (a). In the well-behaved region $\PRegion$ of the parameter space a unique fixed point exists, whereas three fixed points exist in $\FRegion$ and $\AFRegion$ -- this is in accordance with statistical mechanics~\cite[p.43]{mezard2009}.\footnote{ 
	Note that the graph under consideration is of finite size and thus $\nodeDegree{i}$ varies among the nodes. As a consequence the partitioning according to \eqref{eq:phase1} - \eqref{eq:phase3}, is only an approximation.}
Interestingly, we observe a close relation between the onset of phase transitions and the increase in the number of real solutions in Figure~\ref{fig:3x3-solutions} (b).
Most of these solutions, however, correspond to negative message values 
that violate Lemma~\ref{lm:msg} and are not feasible.

BP converges to some fixed point on all $1681$ graphs within at most $10^4$ iterations. 
This raises a couple of questions: 
What is the approximation error of BP if it converges to the most accurate fixed point? 
Or, speaking in terms of free energies, how large is the gap between the global minimum of the constrained $\fb$ and the minimum of the Gibbs free energy?
To answer this question we evaluate the correctness of the approximated marginals by computing the MSE between the exact and the approximated marginals according to~\eqref{eq:error_marginal}. The results are presented in Table~\ref{tab:3x3}. 
Averaged over all graphs we can see that BP does not necessarily converge to the most accurate fixed point. 
For the NPHC method we present the MSE for the fixed point with the lowest MSE; this highlights the existence of fixed points, which give \emph{more} accurate approximations than BP. 
Looking at all parameter regions separately we can see that BP does converge to the global optimum in $\PRegion$, as well as in $\FRegion$. In the antiferromagnetic region $\AFRegion$, BP converges to a fixed point that does not necessarily give the best possible approximation.

If we consider regions with multiple fixed point solutions (i.e., $\FRegion$ and $\AFRegion$) it becomes obvious that the fixed point maximizing the partition function (i.e., $\pseudomarginals^{MAX}$) is not necessarily the best one; 
this is especially surprising as BP obtains the best possible fixed point solution inside $\FRegion$. 

For $\field{}=0$ it turns out that initializing all messages to the same value $\msg[1]{i}{j}{} = \mu^1=\frac{1}{|\sampleSpace{x}|}$ will result in a fixed point
where $\singleApprox{i}(\RVval{i})=0.5$ (and  $\mean{i} = 0$) for all $\RV{i} \in \setOfNodes{}$.
Although this fixed point is identical to the exact one, it may be unstable (cf. Figure~\ref{fig:3x3-slice}).

Inspired by these observations, one should not only consider the fixed point maximizing $\zbApproximate$ (i.e., $\pseudomarginals^{MAX}$), but rather obtain multiple fixed points by NPHC and combine them. Indeed, especially inside region $\FRegion$ a combination of \emph{all} fixed point solutions according to~\eqref{eq:rsb}, i.e., $\pseudomarginals^{T}$ increases the accuracy of the approximation. 
If we combine the fixed points at local minima only (which are all stable in this specific case), i.e., $\pseudomarginals^{M}$, the accuracy increases even more and gives the \emph{most} accurate approximation over the entire parameter space.

\newsubsection{Complete Graphs with Uniform Factors}{solutionsBP:experiments:2x2}
We consider a complete graph with $N = 4$ binary random variables and illustrate the number of solutions in Figure~\ref{fig:2x2-solutions}. 
The system of equations~\eqref{eq:SetOfEq} consists of $36$ equations in $36$ unknowns and has its number of solutions bounded by the total degree $d_t=1^{12}\cdot2^{24}=16.8\cdot10^6$. Similar as for the grid graph, a much tighter bound of $\text{BKK}=120$ is provided by the BKK bound. 
Among all four nodes, we apply unitary factors, specified by $\coupling{i}{j} = J$  and $\field{i} = \field{}$. 
This type of graph is particularly interesting because one can derive  exact conditions where phase transitions occur (cf. Section~\ref{sec:solutionsBP:background}).
\begin{figure}[t]
	\centering
	\includegraphics[width = \linewidth]{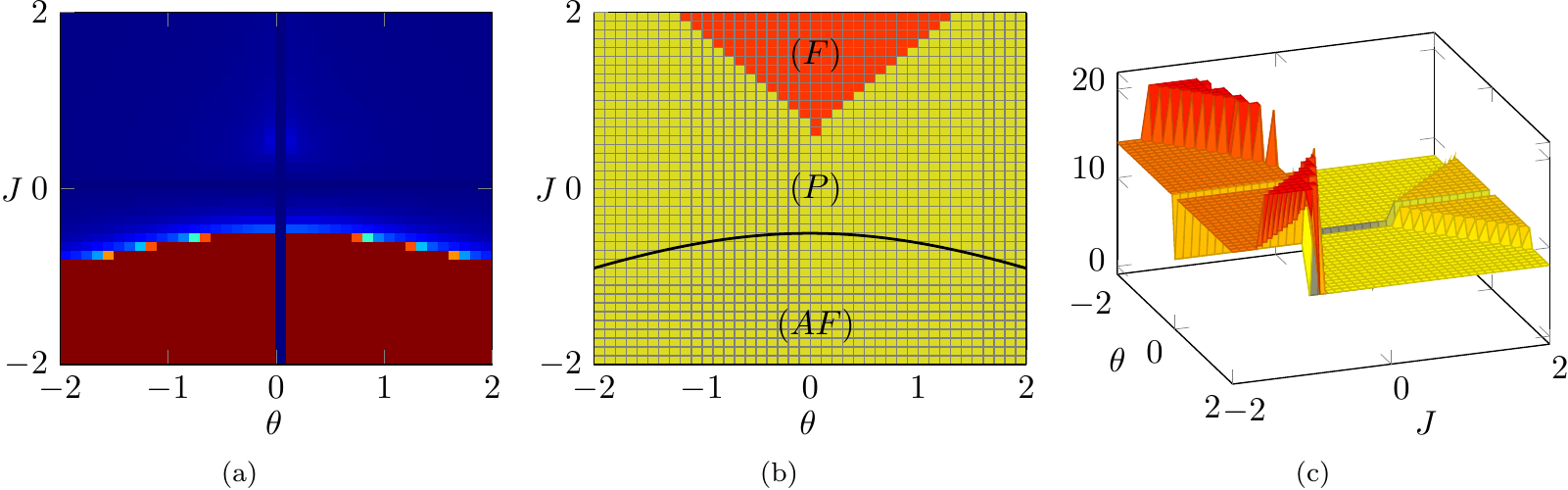}
	\caption[]{Fully connected graph with $N=2\times2$: (a) convergence of BP: for the blue region BP did convergence -- for the red region it did not converge after $4\cdot 10^5$ iterations; (b) number of fixed points (yellow: unique fixed point, red: three fixed points; (c) number of real solutions -- note the sudden increase at the onset of phase transitions.}
	\label{fig:2x2-solutions}
\end{figure}

\begin{table*}[!t]
	\renewcommand{\arraystretch}{1.3}
	\caption{\textsc{Mean Squared Error, i.e., $\EMarginal{m}$ of Marginals and Combined Marginals Obtained by BP and NPHC for the Fully Connected Graph With Unitary Factors.}}
	\label{tab:2x2}
	\centering
	\begin{tabular}{l l l l l l l}
		\toprule
		\multicolumn{2}{c}{Parameters}&  \multicolumn{2}{c}{Fixed Points} & \multicolumn{3}{c}{Combined} \\ 
		\cmidrule(lr){1-2} \cmidrule(lr){3-4}  \cmidrule(lr){5-7}
		Couplings & Local Field & BP & NPHC & MAX & ALL & STABLE \\ \midrule
		$ J \in [-2,2]$ & $\theta \in [-2,2]$ & 0.069 & 0.007 & 0.011 &0.003 & 0.0027\\ 
		$ J \in \FRegion$ & $\theta \in \FRegion$       & 0.034 & 0.033 & 0.070 &0.002 & $2.0\cdot10^{-8}$ \\ 
		$ J \in \AFRegion$ & $\theta \in \AFRegion$     & 0.304 & 0.004 & 0.004 &0.004 &0.004\\ 
		$ J \in \PRegion$ & $\theta \in \PRegion$  & 0.003 & 0.003 & 0.003 &0.003 &0.003 \\ \bottomrule
	\end{tabular}
\end{table*}

For $(J,\theta) \in \PRegion$, BP has a unique fixed point, which is a stable attractor in the whole message space \cite{mooij2005properties}. In $\FRegion$ three fixed points satisfy \eqref{eq:SetOfEq}, one of which is unstable and a local minimum of $\zbApproximate$. Both other fixed points are local maxima of $\zbApproximate$ and BP converges to one of them.

For repulsive models, BP only converges inside $\PRegion$ and not inside $\AFRegion$ as shown in Figure~\ref{fig:2x2-solutions} (a). We can see two interesting effects:
first, in Figure~\ref{fig:2x2-solutions} (c) the number of real solutions increases at the onset of phase transitions; second, even though the convergence of BP breaks down at the phase transition, a unique fixed  point exists inside $\AFRegion$ (Figure~\ref{fig:2x2-solutions} (b)) that gives an accurate approximation (cf. Table~\ref{tab:2x2}).

Similar as in Section~\ref{sec:solutionsBP:experiments:gridGraphUniform}, we asses the MSE of the marginals obtained by NPHC and BP; 
the results -- averaged over all graphs, and for each distinct region -- are presented in Table~\ref{tab:2x2}. 
Indeed, inside region $\AFRegion$ the marginals obtained by NPHC give a much better approximation than BP does.

Furthermore, note that the fixed point maximizing $\zbApproximate$ does not always give the best approximation. 
For $\field{}=0$, if all messages are initialized to the same value, BP obtains the exact marginals (cf. Section~\ref{sec:solutionsBP:experiments:gridGraphUniform}). 
If multiple fixed points exist, a weighted combination of all marginals according to~\eqref{eq:rsb} increases the accuracy -- only considering stable solutions, i.e., $\pseudomarginals^{M}$, gives the \emph{most} accurate approximations.


\newsubsection{Fixed Point Evolution}{solutionsBP:experiments:fp_evolution}
To better understand the influence of the parameters $(J,\field{})$ we specifically investigate how they affect the accuracy of the fixed points.
Therefore, we fix the values of $\field{} \in \{0,0.1,0.5\}$, vary $J\in[-2,2]$, and compare the fixed point solutions obtained by NPHC to the exact solution.
For illustrative purposes we consider the mean, averaged over all variables, of the exact solution $\meanAvg$ and of the approximate solution $\meanAvgApprox$ for all fixed points obtained by NPHC; 
the results are illustrated for both the grid graph and the complete graph in Figure~\ref{fig:3x3-slice} and Figure~\ref{fig:2x2-slice}.

The exact solution (red) is obtained by the junction tree algorithm~\cite{lauritzen-junction-tree}; solutions to~\eqref{eq:SetOfEq} are obtained by NPHC and are depicted by blue dots (stable) and by green or black dots (unstable). We further emphasize the fixed point that maximizes $\zbApproximate$, i.e., $\pseudomarginals^{MAX}$, in orange.

We will already discuss some implications of the local stability analysis here, but focus mainly on the accuracy of the fixed points and defer a thorough discussion of the stability analysis to Section~\ref{sec:stabilityBP:empirical_analysis}.\\

\begin{figure*}[!t] 
	\centering
	\includegraphics[width = \linewidth]{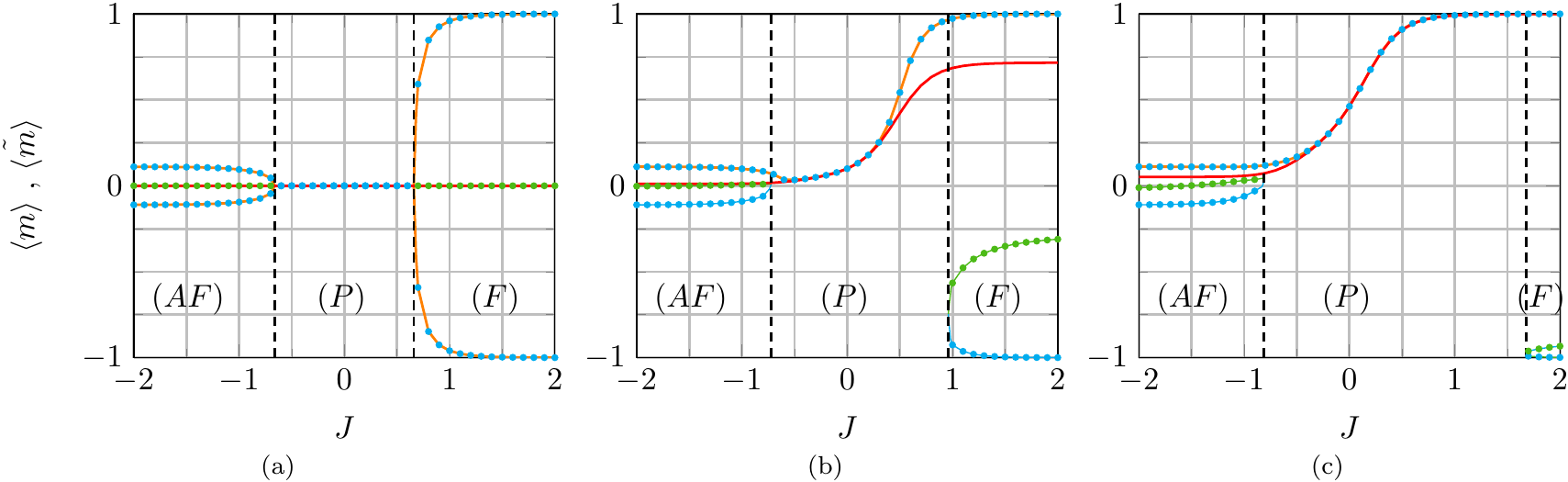}
	\caption[]{Results for the grid graph of size $N=3\times3$; average mean $\meanAvg$ and $\tilde{\meanAvg}$ for $J \in [-2,2]$  and for: (a) $\field{}=0$,  (b) $\field{}=0.1$, and (c) $\field{}=0.5$. The exact solution is illustrated in red. All fixed points obtained by NPHC are depicted by blue dots (stable) and by green dots (unstable). The fixed point maximizing the partition function is illustrated in orange.}
	\label{fig:3x3-slice}
\end{figure*}
\begin{figure*}
	\includegraphics[width = \linewidth]{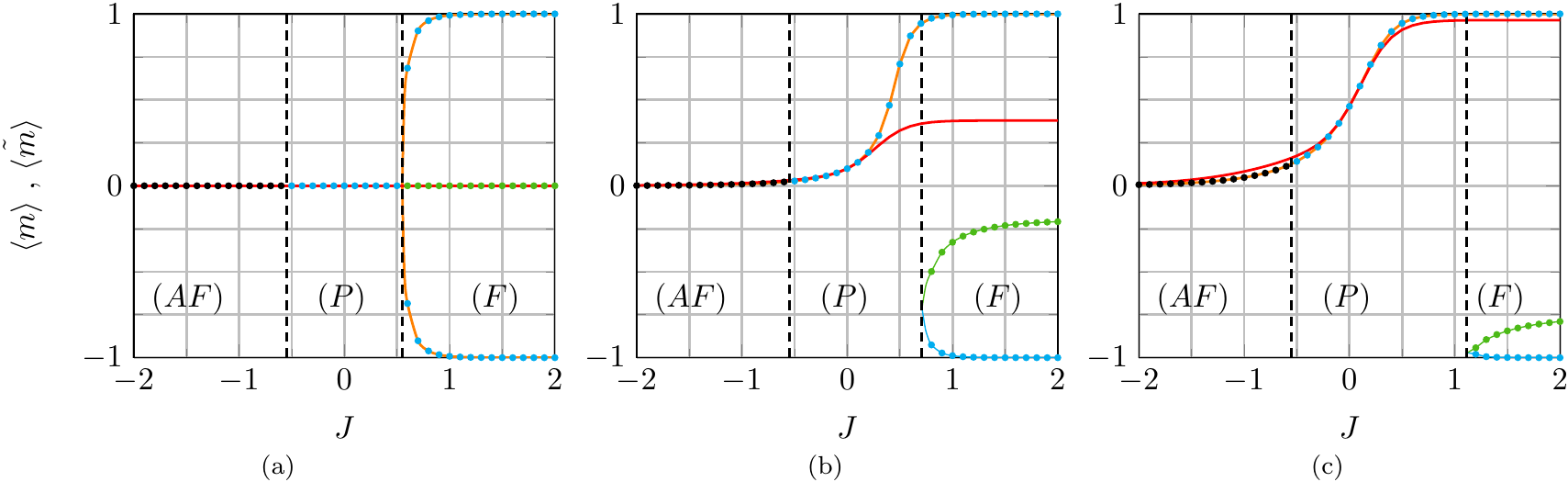}
	\caption[]{
		Results for the complete graph of size $N=2\times2$; average mean $\meanAvg$ and $\tilde{\meanAvg}$ for $J \in [-2,2]$ and for:  (a) $\field{}=0$,  (b) $\field{}=0.1$, and  (c) $\field{}=0.5$. The exact solution is illustrated in red. All fixed points obtained by NPHC are depicted by blue dots (stable) and by green/black dots (unstable). The fixed point maximizing the partition function is illustrated in orange.}
	\label{fig:2x2-slice}
\end{figure*}

All fixed points for a grid graph with $N=3\times3$ binary random variables are shown in Figure~\ref{fig:3x3-slice}.
The worst performance in terms of accuracy is observed for $\field{} = 0$.
Despite the existence of one fixed point that corresponds to the exact solution with $\meanAvg=\meanAvgApprox=0$, BP fails to provide accurate marginals as this particular fixed point is only stable inside $\PRegion$.
As the coupling strength $|\coupling{}{}|$ increases to the onset of phase transitions, two additional fixed points emerge.
These additional solutions are symmetric, stable, and guarantee the convergence of BP on this graph (see Figure~\ref{fig:3x3-slice}~(a)).

For $\field{} \neq 0$ a unique stable fixed point exists inside $\PRegion$. 
If we gradually increase $J$ until $(J,\field{}) \in \FRegion$ two additional fixed points emerge, one of which is unstable (see Figure~\ref{fig:3x3-slice}~(b)). Note that the fixed point maximizing $\zbApproximate$ (orange) remains stable for all values of $J \in [-2,2]$.
An increase in $\field{}$ (see Figure~\ref{fig:3x3-slice}~(c)) does enlarge the region where a unique fixed point exists and further increases the accuracy of the fixed point maximizing $\zbApproximate$. 
For $(J,\field{})\in\AFRegion$ a similar behavior is observed; i.e., for small values of $\field{}$ the unstable fixed point has the highest accuracy, but as $\field{}$ increases, the accuracy of the fixed point maximizing $\zbApproximate$ increases as well.\\

All fixed points for the complete graph with $N = 4$ binary random variables are shown in Figure~\ref{fig:2x2-slice}. 
For $\field{}=0$ and large values of $J$ the fixed point with $\meanAvg = \meanAvgApprox=0$ is unstable and is accompanied by two symmetric, stable fixed points (Figure~\ref{fig:2x2-slice}~(a)). 
In contrast to the grid graph a unique and accurate fixed point exists for $(J,\field{}) \in \AFRegion$; 
yet, this of no avail as the fixed point is unstable (see Figure~\ref{fig:2x2-solutions}~(a)).
This highlights that the existence of a unique, accurate, fixed point does not necessarily imply convergence of BP.

For $\field{}\neq 0$ the non-convergent region $\AFRegion$ is slightly reduced, but the problem of a unique unstable fixed point persists. 
In contrast to the grid graph, the complete graph contains odd-length cycles, thus allowing for frustrations if all edges are repulsive.  
This points at a close connection between the existence of frustrations and the existence of a unique unstable solution. 
The complete graph behaves similar to the grid graph for $(J,\field{}) \in \FRegion$, i.e, the accuracy of the fixed point maximizing $\zbApproximate$ increases as $\field{}$ increases (see Figure~\ref{fig:2x2-slice}~(b) and Figure~\ref{fig:2x2-slice}~(c)).



Our main findings are:
First, increasing the field $\theta$ increases the accuracy of the fixed point maximizing $\zbApproximate$.
Second, for $\field{} \neq 0$ the fixed point maximizing $\zbApproximate$ is unique and varies continuously under a change of $J$. 
Finally, the two stable fixed points are close to being symmetric, i.e., 
$\singleApprox{i}^{(1)}(\RVval{i}) \cong 1- \singleApprox{i}^{(2)}(\RVval{i})$. 
Consequently combining both stable fixed points will not lead to good approximations unless a proper weighting by $\zbApproximate$ is applied. 
Applying a proper weighting, however, leads to accurate approximations (cf. Table~\ref{tab:3x3}-~\ref{tab:2x2}).

\newsubsection{Runtime Analysis}{solutionsBP:experiments:runtime}
The time required for solving~\eqref{eq:SetOfEq} is presented in Table~\ref{tab:timing} for grid graphs with random factors (Section~\ref{sec:solutionsBP:experiments:gridGraphRandom}), grid graphs with unitary factors (Section~\ref{sec:solutionsBP:experiments:gridGraphUniform}), and fully connected graph with unitary factors (Section~\ref{sec:solutionsBP:experiments:2x2}). 
Comparing the overall computation time of the NPHC method to BP it becomes obvious  that NPHC is no alternative in terms of computational efficiency. 
It is, however, the only method that is guaranteed to obtain \emph{all} fixed points -- we were not able to apply the Gr\"obner basis method beyond a single-cycle graph with $N=4$. 
For our computations we utilized a cluster-system with 160 CPUs.

If we compare the overall computation time to the actual computation time utilizing the parallel implementation it becomes obvious that NPHC benefits tremendously from the high degree of parallelization. 
The runtime of BP depends mainly on the number of iterations and less on the size of the graph. Consequently, the stability of fixed points directly affects the performance of BP (cf. non-convergent region in Figure~\ref{fig:2x2-solutions} (a)).  The NPHC method is much less sensitive to the stability of fixed point solutions; the mixed volume computation, which has the largest influence on the overall runtime, rather depends on the number of variables in~\eqref{eq:SetOfEq}. Note the mixed volume computation does not depend on the parameters. If one is interested in the fixed points for different parameter-sets on the same graph it would suffice to compute the mixed volume and the start system only once; we did not do this to allow for a fair comparison.

\begin{table}[t!]
	\renewcommand{\arraystretch}{1}
	\setlength{\tabcolsep}{4pt}
	\caption{\textsc{Runtime Comparison between BP and NPHC (in Seconds).}}
	\label{tab:timing}
	\centering
	\begin{tabular}{@{}l l l l l l l@{}}
		\toprule
		& \multicolumn{2}{c}{Grid Graph:}&  \multicolumn{2}{c}{Grid Graph:} &\multicolumn{2}{c}{Fully Conn. Graph:}\\
		& \multicolumn{2}{c}{Random $(J,\field{})$}&  \multicolumn{2}{c}{Unitary $(J,\field{})$} &\multicolumn{2}{c}{Unitary $(J,\field{})$} \\
		\cmidrule(lr){2-3}                  \cmidrule(lr){4-5}                \cmidrule(lr){6-7}
		& total         & parallel   & total        & parallel     & total & parallel                \\ \midrule
		Mixed Vol.    & 1364.5        & 11.2 	&      1311.1  & 12.0         & 0.27  & -- \\ 
		Path Track.   & 69.0          & 0.97 	&    70.5      & 1.0          & 3.77  & --\\ 
		Post Proc. & 29.9  	   & 1.40 	&    43.6      & 4.1          & 2.3   & -- \\ \midrule
		NPHC            & \multicolumn{2}{l}{13.57}  	   &   \multicolumn{2}{l}{17.1}  & \multicolumn{2}{l}{6.34}\\
		BP              & \multicolumn{2}{l}{$3.6\cdot10^{-3}$} &  \multicolumn{2}{l}{$0.7\cdot10^{-3}$}  & \multicolumn{2}{l}{0.03}\\
		\bottomrule
	\end{tabular}
\end{table}
\setlength{\tabcolsep}{6pt}

\newsection{Empirical Stability Analysis of Belief Propagation on Ising Models}{stabilityBP:empirical_analysis}

So far we have been rather sloppy with the notion of stability and assumed knowledge about the stability of the fixed points without discussing how to gain this knowledge.
The fact that we have obtained all fixed points, irrespective of their properties under a specific map, admits a thorough stability-analysis for all of them.

In this section, we analyze the local stability of all fixed points for the models discussed so far.
Then, we compare the results to known results of both infinite and finite-size graphs with vanishing local fields $\field{}=0$. 
We restrict our analysis to attractive and repulsive models (cf. Section~\ref{sec:background:models:terminology}), because this allows us to change the behavior of BP with just a single parameter. 

We briefly present the results of~\cite{mooij2005properties} for vanishing fields in Section~\ref{sec:stabilityBP:empirical_analysis:vanishing}. 
Then we extend the analysis to graphs with non-vanishing fields, 
discuss some empirical observations, and interpret the implications. 
Note that the results of our stability analysis are also illustrated in terms of the average mean $\meanAvgApprox$ over the couplings-strength in Figure~\ref{fig:3x3-slice} (for the $3 \times 3$ grid graph) and~\ref{fig:2x2-slice} (for the complete graph of size $N=4$).
All fixed points of BP are colored according to the discussion of the eigenvalue spectrum (cf. Figure~\ref{fig:overview_stability_eigenvalues}):
fixed points are depicted in blue if stable under $\mapBP{\cdot}$, in black if stable under $\mapBP[D]{\cdot}$, and in green if unstable.
For reference we also illustrate the exact solution in red.
A more formal analysis is presented in Section~\ref{sec:stabilityBP:theoretical_analysis}.

\newsubsection{Vanishing Local Field}{stabilityBP:empirical_analysis:vanishing}
It is generally assumed that the case of vanishing local fields is the worst-case scenario~\cite{mooij2005properties}; we will confirm this assumption empirically and analytically.
For  $\theta = 0$ a trivial exact solution exists -- namely marginals that are uniform over all  states, i.e., for all $\RV{i} \in \setOfNodes$ the singleton marginals are $\pmfApprox{\RV{i}}(\RVval{i})=0.5$ and the mean is $m_i = 0$. 

For attractive models with $J>0$ sufficiently small, BP converges to this trivial (paramagnetic) fixed point, which is unique and stable. 
As the coupling-strength increases, the eigenvalue  with the largest magnitude $\lambda_{max}$ increases as well (see Theorem~\ref{thm:ev-scaling}) and as $\lambda_{max}$ crosses the unit circle the paramagnetic fixed point
remains unchanged but becomes unstable.
At the same time, two additional fixed points appear --
these fixed points are symmetric and stable (cf. Figure~\ref{fig:3x3-slice}~(a) and Figure~\ref{fig:2x2-slice}~(a)).

For repulsive models with $J<0$ all entries of the Jacobian swap in sign, i.e.,  $\Jacobian[-] = -\Jacobian[+]$, where $-$ and $+$ indicate the 
Jacobian for repulsive and attractive models, respectively. 
It follows that $\Spectrum{\Jacobian[-]} = -\Spectrum{\Jacobian[+]}$. 
Consequently, the local stability of the fixed point is invariant under a sign-change of $J$ and instability occurs precisely for the same coupling strength as before (cf. Figure~\ref{fig:3x3-slice} (a) and Figure~\ref{fig:2x2-slice} (a)). There is one important difference though -- the dominant eigenvalue is negative now, such that damping helps to achieve convergence.

\newsubsection{Non-Vanishing Local Field}{stabilityBP:empirical_analysis:non_vanishing}
For reasons of simplicity we restrict our analysis to 
models with a single value $\field{i} = \field{} \neq 0$ for all variables. 
Because the Ising model is symmetric with respect to the local fields $\field{i}$, it is sufficient to consider only non-negative local fields. 
Moreover, the same qualitative results hold if we allow for $\theta_i \geq 0$ in general.
\begin{figure}
	\centering
	\includegraphics[width = \linewidth]{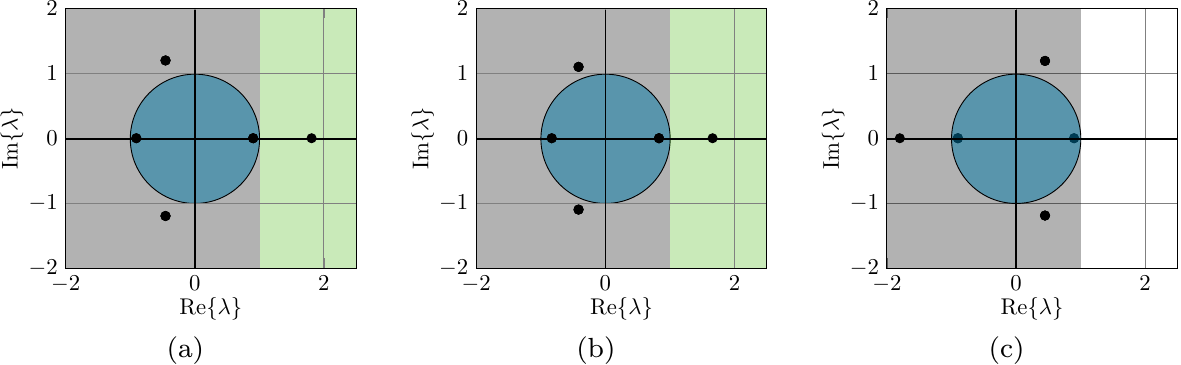}
	\caption{Eigenvalue spectra of the unstable fixed point of the complete graph with $N=4$. If all eigenvalues are inside the unit-circle BP converges without damping. If no eigenvalues lie on the right-hand side of the vertical line BP converges with damping. (a) $J=1.5$ and $\theta=0$:  notice that $\Re{\lambda_{max}} > 1$, i.e., damping does not help; (b) $J=1.5$ and $\theta=0.5$: notice how the external field reduces the magnitude of the eigenvalues (cf.  Theorem~\ref{thm:external-field}); (c) $J=-1.5$ and $\theta=0.5$: the fixed point is unstable but can be stabilized with damping because $\Re{\lambda_i} < 1$.} \label{fig:spectrum:2x2}.
\end{figure}

\begin{figure}
	\centering
	\includegraphics[width=\linewidth]{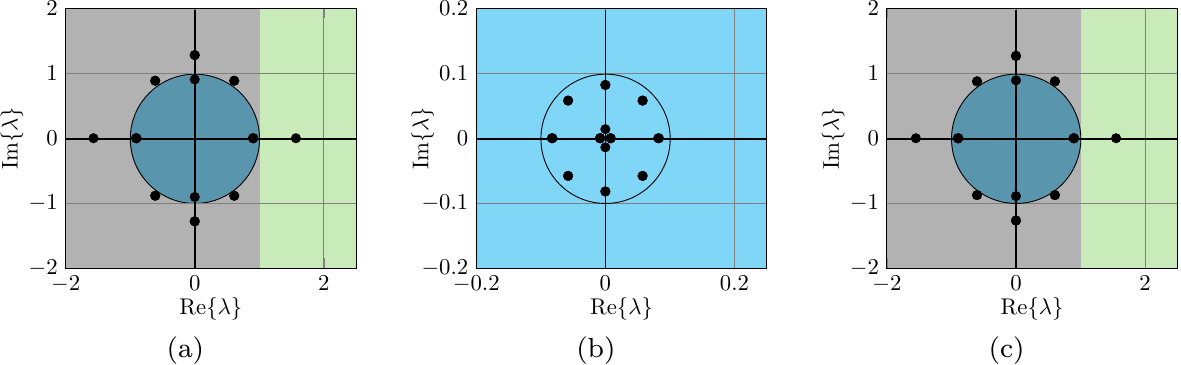}
	\caption{Eigenvalue spectra of the grid-graph with $N=9$; all eigenvalues are symmetric because the graph is bipartite (cf. Theorem~\ref{thm:bipartite}). (a) $J=1.5$, and $\theta=0$: the fixed point is unstable and damping does not help;  (b) $J=1.5$ and $\theta=0.5$: only a unique stable fixed point exists (cf. Figure~\ref{fig:3x3-slice}~(c)); note the qualitative difference in the spectrum;  (c) $J=-1.5$ and $\theta=0.5$: because of the symmetric spectrum, damping does not help in for repulsive models.}\label{fig:spectrum:3x3}
\end{figure}

\begin{figure}
	\centering
	\includegraphics[width=0.66\linewidth]{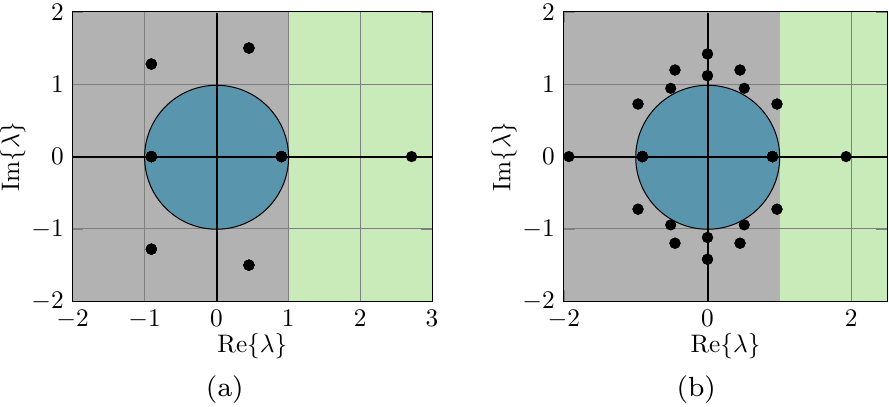}
	\caption{Eigenvalue spectra for $J=1.5$ and $\theta=0.5$ of the (a) grid-graph with $N=9$ and periodic boundary conditions; (b) grid-graph with $N = 16$. } \label{fig:spectrum:toric}
\end{figure}

\newsubsubsection{Attractive Models}{stabilityBP:empirical_analysis:ferromagnetic}

For small values of $J>0$ a unique fixed point exists to which BP converges and for which all eigenvalues lie inside the unit circle. 
If we gradually increase the coupling-strength to the point where instability occurred for $\field{}=0$, we observe that all eigenvalues are still inside the unit circle, i.e., the spectral radius $\Radius{\Jacobian} < 1$;
the fixed point is consequently stable (cf. Theorem~\ref{thm:external-field} for more details).  
If we further increase $\theta$, the spectral radius decreases and
a unique stable fixed point persists for even larger values of $J$ (compare Figure~\ref{fig:3x3-slice}~(b) and Figure~\ref{fig:2x2-slice}~(b) with Figure~\ref{fig:3x3-slice}~(c) and Figure~\ref{fig:2x2-slice}~(c)). 


Now, if we increase $J$ 
-- although the fixed point remains stable -- two additional fixed points emerge 
beyond some critical point $J_{C}(\graph,\theta)$ (see Figure~\ref{fig:3x3-slice} (b)~and~\ref{fig:2x2-slice} (b)).
As opposed to vanishing local fields, the unique fixed point is continuously deformed and remains stable though.
The second stable fixed point corresponds to some self-preserving state of magnetization\footnote{ 
	Self-preserving states are stable fixed points where $\tilde{\meanAvg}$ points into the opposite direction as $\field{}$.} and is accompanied by another unstable fixed point (Figure~\ref{fig:spectrum:2x2} (b)). 

The graph structure is supposed to influence the spectral radius as well. This becomes obvious if we enlarge the size of a graph while keeping its local structure unchanged, e.g., by increasing the grid-graph from $N_1=9$ to $N_2=16$. A comparison of Figure~\ref{fig:spectrum:3x3} with Figure~\ref{fig:spectrum:toric} reveals an increased spectral radius, i.e., $\Radius{\Jacobian[N_1]} < \Radius{\Jacobian[N_2]}$ (cf. Theorem~\ref{thm:ev-scaling}).
It is interesting that the largest eigenvalue -- and its symmetric counterpart for bipartite graphs -- are the only eigenvalues that experience a relevant increase in their real part.  The real part of all other eigenvalues 
${\lambda}_i \in \Spectrum{\Jacobian} \backslash \big\{ \lambda_{\max} : |\Re{ \lambda_{\max}}| = \Radius{\Jacobian}\big\} $
is bounded by $|\Re{{\lambda}_i}| < 1$.

In terms of accuracy, BP performs better for models with $\field{}\neq0$; 
compared to models with $\field{}=0$, a stable fixed point exists that lies closer to the exact solution.
Loosely speaking, increasing the local fields effectively reduces the influence of the couplings. This does not only lead to better convergence properties of BP (cf. Corollary~\ref{cor:finite-graphs}), but reduces the approximation-error as well.

\newsubsubsection{Repulsive Models}{stabilityBP:empirical_analysis:antiferromagnetic}
For small values of $J < 0$ a unique fixed point exists for which all eigenvalues lie inside the unit circle. If we further decrease $J$, a change of behavior can be observed beyond some critical value $J_{C}(\graph,-\theta)$. Interestingly, $|J_{C}(\graph,-\theta)| \leq |J_{C}(\graph,\theta)|$ with equality if and only if $\theta = 0$. In Figure~\ref{fig:spectrum:2x2} (b)~and~\ref{fig:spectrum:2x2} (c) we can also see that $\Radius{\Jacobian[-]} > \Radius{\Jacobian[+]}$, i.e., the invariance of local stability under sign-change of $J$ does not hold in general.

Let $J < J_{C}(\graph,-\theta)$, then it depends on the graph structure whether a unique fixed point exists and if damping is useful. It turns out that these two properties are closely connected:

First, consider the complete graph with $N = 4$ or the grid-graph with periodic boundary conditions; 
these models have a unique fixed point but frustrations exist because of cycles with odd-length. 
Consequently $\Spectrum{\Jacobian}$ is non-symmetric (cf. Theorem~\ref{thm:bipartite}). Besides the sign-change, the spectral radius increases as well so that $\Radius{\Jacobian[-]} > \Radius{\Jacobian[+]}$; the dominant eigenvalue, however, has a negative sign and $\REAL\left(\Spectrum{\Jacobian}\right) < 1$ so that an appropriate damping term exists that enforces BP to converge.

Second, consider both grid-graphs with $N_1=9$ and $N_2=16$; these graphs are bipartite and have a symmetric spectrum $\Spectrum{\Jacobian}$ (see Figure~\ref{fig:spectrum:3x3}, Figure~\ref{fig:spectrum:toric}~(b), and Theorem~\ref{thm:bipartite}). 
Because of the symmetric spectrum BP behaves similarly as in the attractive case: i.e., no damping term exists that would stabilize the unstable fixed point; 
two additional fixed points exist, however, that are stable.

Multiple fixed points for repulsive models exist if and only if the underlying graph is bipartite. Such graphs can be decomposed into two disjoint subsets $\setOfNodes = \mathbf{Y} \cup \mathbf{Z}$ such that the means follow a ''checkerboard-distribution'' where for all $\RV{i} \in \mathbf{Y}$ the mean is $m_i > 0$ and for all $\RV{j} \in \mathbf{Z}$ the mean is $m_j < 0$, or exactly the other way round. This explains the existence of two stable fixed points.
Note that the according average mean $\meanAvgApprox$ behaves as follows:
(i) if $N$ is even both stable fixed points are symmetric and have the same average mean;
(ii) if $N$ is odd both stable fixed points have different average means, because the subsets differ in size, i.e.,  $|\mathbf{Y}| \neq |\mathbf{Z}|$.
The difference in  $\meanAvgApprox$ reduces as the number of variables increases. In the limit of $N = \infty$, the average means of all fixed points collapse onto the same value.

To conclude our observations: either a unique fixed point exists, which may be unstable but can be stabilized by damping (black); or a fixed point exists which is unstable under any form of damping (green) but is accompanied by two stable fixed points (blue) (cf. Figure~\ref{fig:3x3-slice} and Figure~\ref{fig:2x2-slice}). 
Therefore, non-vanishing fields increase the accuracy of BP (Figure~\ref{fig:3x3-slice} and Figure~\ref{fig:2x2-slice}) and lead to better convergence properties (Theorem~\ref{thm:external-field}) in all experiments.

\newsection{Theoretical Stability Analysis}{stabilityBP:theoretical_analysis}

Here we present some more formal arguments and explain the observations made in Section~\ref{sec:stabilityBP:empirical_analysis}. We start with the Perron-Frobenius Theorem and specify some implications for the particular form of $\Jacobian$. Then we provide properties of the eigenvalue spectrum $\Spectrum{\Jacobian}$ and 
explain the influence of finite-size graphs in non-vanishing local fields.
We consider only connected graphs (cf. Section~\ref{sec:background:graphs}), where all $\RV{i}$ have minimum degree $\min \nodeDegree{i} \geq 2$. 
Note that it is straightforward to absorb any $\RV{i}$ with $d_i =1$ into its neighbor.
\begin{lm}\label{lm:irreducible}
	The Jacobian matrix $\Jacobian$  of a connected graph $G$ is irreducible if $\min \nodeDegree{i} \geq 2$.
\end{lm}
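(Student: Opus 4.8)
The plan is to show that the directed graph associated with the Jacobian matrix $\Jacobian$ is strongly connected, which is equivalent to irreducibility of the matrix. Recall from \eqref{eq:jacobian-long} that the entry $\Jacobian_{mn}$ (indexing rows and columns by directed edges $\edge{i}{j}$ and $\edge{k}{l}$ of $\graph$) is nonzero precisely when $i=l$ and $k\in\neighborsWO{i}{j}$; that is, the message $\msgReparam{i}{j}$ depends on the message $\msgReparam{k}{i}$ whenever $k$ is a neighbor of $i$ other than $j$. So the associated digraph $D$ has the $2|\setOfEdges|$ directed edges of $\graph$ as its vertices, and there is an arc from $\edge{k}{i}$ to $\edge{i}{j}$ whenever $\edge{k}{i},\edge{i}{j}\in\setOfEdges$ and $k\neq j$. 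This is precisely the \emph{non-backtracking} (directed line-graph, or oriented-edge) digraph of $\graph$.

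First I would note that the nonzero pattern of $\Jacobian$ at a fixed point is exactly this non-backtracking adjacency structure, since all the prefactors $\tanh(J_{ij})(1-\tanh^2(\cavityField{i}{j}))/(1-\tanh^2(J_{ij})\tanh^2(\cavityField{i}{j}))$ are strictly nonzero (the denominator is positive and bounded away from zero, and $J_{ij}\neq 0$ for an Ising model with genuine couplings; if some $J_{ij}=0$ the edge is trivial and can be removed). Hence irreducibility of $\Jacobian$ is equivalent to strong connectedness of the non-backtracking digraph $D$.

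Next I would prove strong connectedness of $D$ directly. Take any two directed edges $\edge{a}{b}$ and $\edge{c}{d}$ of $\graph$; I want a directed walk in $D$ from $\edge{a}{b}$ to $\edge{c}{d}$. Since $\graph$ is connected, there is a path in $\graph$ from $b$ to $c$; I would concatenate $\edge{a}{b}$, this path, and the edge $\edge{c}{d}$ to obtain a walk $W=(a,b,\ldots,c,d)$ in $\graph$, and then show $W$ can be chosen to be non-backtracking (no immediate reversal $u\to v\to u$), which makes it a directed walk in $D$ from $\edge{a}{b}$ to $\edge{c}{d}$. The condition $\min\nodeDegree{i}\geq 2$ is exactly what guarantees that whenever a walk is about to backtrack at a vertex $v$ (arriving from $u$ and the natural continuation returns to $u$), there is an alternative neighbor $w\neq u$ of $v$ to step to instead; after this detour one returns to $v$ from $w$ and proceeds, so backtracks can always be removed or avoided. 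One clean way to organize this: first handle the case $\edge{a}{b}=\edge{c}{d}$ by exhibiting a non-backtracking closed walk through any oriented edge (again using $\min\nodeDegree{i}\geq 2$ to close up a cycle or a lollipop), then reduce the general case to reaching some oriented edge at $c$ and rotating around $c$'s neighbors.

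The main obstacle I anticipate is the bookkeeping in the non-backtracking repair argument: showing rigorously that any walk in $\graph$ between prescribed oriented endpoints can be replaced by a non-backtracking one in $D$ with the same endpoints, using only $\min\nodeDegree{i}\geq 2$ and connectedness. This is intuitively clear — a minimum degree of two means one is never forced to turn around — but making the induction on backtrack-count (or on walk length) airtight, including the endpoint edges and the degenerate short cases (e.g.\ $\graph$ a single cycle, or $a=c$, or $b=c$), requires some care. Everything else (translating the nonzero pattern of $\Jacobian$ into the digraph $D$, and invoking the standard equivalence between irreducibility of a matrix and strong connectedness of its digraph) is routine.
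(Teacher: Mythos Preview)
Your approach differs from the paper's. The paper gives a short block-structure argument: partition $\Jacobian$ into $N\times N$ blocks indexed by nodes (block $(i,k)$ collecting the entries $\partial\msgReparam{i}{j}/\partial\msgReparam{k}{l}$ with $j\in\neighbors{i}$, $l\in\neighbors{k}$), observe that block $(i,k)$ contains a nonzero entry precisely when $a_{ik}=1$, and then infer irreducibility of $\Jacobian$ from irreducibility of the adjacency matrix $\vm{A}$. Your route via strong connectedness of the non-backtracking (Hashimoto) digraph is more explicit, makes the role of the hypothesis $\min\nodeDegree{i}\ge2$ transparent, and is closer to how this fact is usually argued in the literature on non-backtracking operators.

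There is, however, a genuine gap you brush past. You list ``$\graph$ a single cycle'' among the degenerate cases ``requiring some care,'' but it is in fact a counterexample to the lemma as literally stated: for $\graph=C_n$ the non-backtracking digraph splits into two disjoint directed $n$-cycles (the clockwise and the counterclockwise orientations), so $\Jacobian$ is reducible. Your repair argument cannot succeed there because at every degree-$2$ vertex the unique non-backtracking continuation preserves orientation, and one can never reverse direction. The paper's block argument has the same blind spot --- irreducibility of the block pattern does not in general imply irreducibility of the full matrix, and the cycle is exactly the witness. The honest statement needs the extra hypothesis that $\graph$ is not a simple cycle (equivalently, that some vertex has degree $\ge3$); this holds for all the graphs the paper actually studies (grid graphs, complete graphs, grids with periodic boundary), and once it is assumed your non-backtracking-walk construction goes through cleanly.
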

\begin{proof}
	The adjacency matrix of a connected undirected graph is irreducible. Let us construct the Jacobian $\Jacobian$ as in~\eqref{eq:jacobian}; note that we can partition $\Jacobian$ into $N \times N$ block matrices $[\Jacobian]_{ik}$ of size $d_i \times d_k$ each. These blocks contain non-zero values if and only if $\edge{i}{j}, \edge{k}{l} \in \setOfEdges$ for $X_j, X_k \in \setOfNodes$; i.e., if $a_{ik} = 1$. 
	It follows that $\Jacobian$ is irreducible as well.
\end{proof}

\begin{thm}[Perron Frobenius Theorem]
	A real non-negative square matrix $\boldsymbol{B}$ has its spectral radius bounded as follows:
	\begin{align}
		\min \limits_{i} \sum_j b_{ij} \leq \Radius{\boldsymbol{B}} \leq \max \limits_{i} \sum_j b_{ij}. \label{eq:perron}
	\end{align}
	If $\boldsymbol{B}$ is irreducible the largest eigenvalue is a positive real number $\lambda_{max} = \Radius{\boldsymbol{B}}$.
\end{thm}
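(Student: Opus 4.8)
The plan is to treat the two assertions separately: first the two-sided bound on $\Radius{\boldsymbol{B}}$, which is elementary, and then the Perron–Frobenius conclusion under irreducibility, which carries the real content.

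For the bounds I would use that $\Radius{\boldsymbol{B}}$ is dominated by any submultiplicative matrix norm. Taking the operator norm induced by the $\ell_\infty$ vector norm — which for a matrix is the maximal absolute row sum, and here equals $\max_i\sum_j b_{ij}$ since $b_{ij}\geq 0$ — gives $\Radius{\boldsymbol{B}}\leq\max_i\sum_j b_{ij}$. For the lower bound set $r=\min_i\sum_j b_{ij}$ and let $e=(1,\dots,1)\transp$. Then $\boldsymbol{B}e\geq re$ entrywise, hence by induction $\boldsymbol{B}^k e\geq r^k e$, so every row sum of $\boldsymbol{B}^k$ is at least $r^k$ and $\lVert\boldsymbol{B}^k\rVert_\infty\geq r^k$. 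Gelfand's formula $\Radius{\boldsymbol{B}}=\lim_k\lVert\boldsymbol{B}^k\rVert_\infty^{1/k}$ then gives $\Radius{\boldsymbol{B}}\geq r$. (If $r=0$ the inequality is vacuous, as $\Radius{\boldsymbol{B}}\geq 0$ always.)

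For the irreducible case I would first recall the standard fact that an $n\times n$ irreducible non-negative matrix satisfies $(\boldsymbol{I}+\boldsymbol{B})^{n-1}>0$ entrywise, which follows from reading powers of $\boldsymbol{I}+\boldsymbol{B}$ as counting walks of length $\leq n-1$ in the strongly connected digraph associated with $\boldsymbol{B}$. Define the Collatz–Wielandt value $\mu=\sup\{\alpha\geq 0:\exists\,x\geq 0,\ x\neq 0,\ \boldsymbol{B}x\geq\alpha x\}$; a compactness argument shows the supremum is attained by some $x^*\geq 0$, $x^*\neq 0$. I would then argue $\boldsymbol{B}x^*=\mu x^*$: if $\boldsymbol{B}x^*-\mu x^*$ were non-zero (it is $\geq 0$), multiplying by $(\boldsymbol{I}+\boldsymbol{B})^{n-1}$, which commutes with $\boldsymbol{B}$, and writing $y=(\boldsymbol{I}+\boldsymbol{B})^{n-1}x^*>0$ yields $\boldsymbol{B}y-\mu y>0$, hence $\boldsymbol{B}y\geq(\mu+\varepsilon)y$ for small $\varepsilon>0$, contradicting maximality of $\mu$. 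Moreover $y=(1+\mu)^{n-1}x^*$, so $x^*>0$, and $\mu>0$ since irreducibility (with $n\geq 2$) forces $\boldsymbol{B}e>0$, giving $\mu\geq\min_i(\boldsymbol{B}e)_i>0$. Thus $\boldsymbol{B}$ admits a strictly positive eigenvector with positive real eigenvalue $\mu$.

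It remains to identify $\mu$ with $\Radius{\boldsymbol{B}}$. Since $\mu$ is an eigenvalue, $\mu\leq\Radius{\boldsymbol{B}}$. Conversely, let $\lambda\in\COMPLEX$ be any eigenvalue with $|\lambda|=\Radius{\boldsymbol{B}}$ and eigenvector $z$; the triangle inequality together with $\boldsymbol{B}\geq 0$ gives $\boldsymbol{B}|z|\geq|\boldsymbol{B}z|=|\lambda|\,|z|=\Radius{\boldsymbol{B}}\,|z|$, with $|z|\geq 0$, $|z|\neq 0$, so $\Radius{\boldsymbol{B}}$ is an admissible $\alpha$ in the definition of $\mu$, whence $\Radius{\boldsymbol{B}}\leq\mu$. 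Therefore $\lambda_{max}=\Radius{\boldsymbol{B}}=\mu>0$ is a positive real eigenvalue, as claimed. The main obstacle is precisely the irreducible case — establishing that the Collatz–Wielandt supremum is attained and that its optimizer is a true eigenvector — and the delicate point there is controlling optimizers that lie on the boundary of the non-negative cone, which is exactly where the entrywise positivity of $(\boldsymbol{I}+\boldsymbol{B})^{n-1}$ is indispensable.
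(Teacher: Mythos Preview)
Your argument is correct and follows the standard Collatz--Wielandt route to Perron--Frobenius; the row-sum bounds via the $\ell_\infty$ operator norm and Gelfand's formula are clean, and the irreducible case is handled properly using entrywise positivity of $(\boldsymbol{I}+\boldsymbol{B})^{n-1}$ to rule out boundary optimizers.

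There is nothing to compare against, however: the paper does not prove this theorem. It is quoted as the classical Perron--Frobenius theorem and used as an off-the-shelf tool (immediately followed by Corollary~\ref{cor:constant-degree} and the subsequent theorems on BP stability). So your proof is not an alternative to the paper's---it simply supplies an argument where the paper invokes a standard reference result. One very minor remark: your positivity claim $\mu>0$ tacitly assumes $n\geq 2$ (or $b_{11}>0$ when $n=1$); this is harmless here since the Jacobians in the paper's application are never $1\times 1$.
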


\begin{cor}[Implications for regular graphs]\label{cor:constant-degree}
	A regular graph has $\nodeDegree{i} = d$ for all $\RV{i} \in \setOfNodes$. If the couplings and fields are constant,  $\sum \limits_{n} \Jacobian_{mn} = c$ is constant for all rows. By the Perron Frobenius Theorem and by Lemma~\ref{lm:irreducible} it follows that $\lambda_{max} = c$.
\end{cor}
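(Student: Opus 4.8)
The plan is to read the Corollary as a statement about the Jacobian $\Jacobian$ evaluated at a \emph{symmetric} fixed point, and to argue in three moves: (i) exhibit such a fixed point and note that all cavity fields coincide there; (ii) substitute into \eqref{eq:jacobian-long} to obtain identical row sums; and (iii) apply the Perron--Frobenius Theorem together with Lemma~\ref{lm:irreducible} to identify $\ev{max}$ with that common row sum.

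First I would observe that, for a $d$-regular graph with constant couplings $\coupling{i}{j}=\coupling{}{}$ and constant fields $\field{i}=\field{}$, the BP map \eqref{eq:reparam}--\eqref{eq:cavity} leaves the subspace $\{\msgReparam{i}{j}=\nu : \edge{i}{j}\in\setOfEdges\}$ of ``all messages equal'' invariant: every directed edge receives the same scalar update $\tanh(\nu^{+}) = \tanh(\coupling{}{})\tanh\big(\field{}+(d-1)\nu\big)$. This scalar self-map of the bounded interval $\arctanh\big(\tanh(\coupling{}{})(-1,1)\big)$ is continuous, hence has a fixed point $\nu^{\circ}$. At the associated BP fixed point $\fpSetOfMessagesReparam$ every cavity field then takes the common value $\cavityField{i}{j}=h^{\circ}$, where $h^{\circ}\isdef\field{}+(d-1)\nu^{\circ}$.

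Next, with $\coupling{i}{j}=\coupling{}{}$ and $\cavityField{i}{j}=h^{\circ}$ for every edge, \eqref{eq:jacobian-long} shows that each nonzero entry of $\Jacobian$ equals the single constant
\begin{align}
  \kappa \isdef \frac{\tanh(\coupling{}{})\big(1-\tanh^{2}(h^{\circ})\big)}{1-\tanh^{2}(\coupling{}{})\tanh^{2}(h^{\circ})},
\end{align}
independently of which edges index the row and the column. The row $m$ that corresponds to an edge $\edge{i}{j}$ has a nonzero entry exactly for each column indexed by an edge $\edge{k}{i}$ with $\RV{k}\in\neighborsWO{i}{j}$, i.e.\ for $\nodeDegree{i}-1=d-1$ columns. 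Setting $c\isdef(d-1)\kappa$, we therefore get $\sum_{n}\Jacobian_{mn}=c$ for every row $m$, which is the asserted constancy of the row sums. Finally, for attractive models ($\coupling{}{}>0$) all entries of $\Jacobian$ are nonnegative and, by Lemma~\ref{lm:irreducible}, $\Jacobian$ is irreducible (we consider only connected graphs with $d\geq2$). Since $\min_{m}\sum_{n}\Jacobian_{mn}=\max_{m}\sum_{n}\Jacobian_{mn}=c$, the two bounds in \eqref{eq:perron} coincide, forcing $\Radius{\Jacobian}=c$; irreducibility then identifies the dominant eigenvalue with this positive real, so $\ev{max}=\Radius{\Jacobian}=c$. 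For repulsive models the same argument applied to $-\Jacobian$, using $\Jacobian[-]=-\Jacobian[+]$ from Section~\ref{sec:stabilityBP:empirical_analysis:vanishing}, transfers the conclusion with a sign change.

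The hard part is step (i): the Corollary evaluates $\Jacobian$ at the symmetric fixed point without saying so, so one must both justify that such a fixed point exists and make explicit that constant row sums require \emph{both} $d$-regularity \emph{and} constant $(\coupling{}{},\field{})$ (so that every cavity field is the same $h^{\circ}$) --- $d$-regularity alone would leave the per-row coefficient edge-dependent. The remaining steps are merely a substitution into \eqref{eq:jacobian-long}, a degree count, and a textbook use of Perron--Frobenius, and should be routine.
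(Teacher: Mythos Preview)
Your proof is correct and matches the paper's reasoning, which is embedded directly in the corollary statement without a separate proof environment. You correctly make explicit the tacit assumption that the Jacobian is evaluated at a \emph{symmetric} fixed point (where all messages, hence all cavity fields, coincide) --- a point the paper leaves implicit but relies on in its applications (e.g., the trivial fixed point for $\theta=0$ in the proof of Theorem~\ref{thm:ev-scaling}).
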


Consider an infinite-size grid graph with unitary couplings  $J > 0$. 
Under these assumptions it is fairly straightforward to provide theoretical insights that explain why the existence of non-vanishing fields enhances the convergence properties. We generalize the results subsequently, as the same argument can be applied to finite-size graphs with different $J_{ij}$.
\begin{thm}\label{thm:external-field}
	Let $G_{\infty}$ be an infinite-size graph with $\nodeDegree{i} = d$ and purely attractive interactions $J > 0$. Then, the existence of a non-vanishing external field $\theta \neq 0$ stabilizes BP.
\end{thm}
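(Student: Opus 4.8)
The plan is to exploit the homogeneity of the Cayley tree $G_\infty$ to collapse the message‑passing dynamics onto a single scalar recursion, read off $\Radius{\Jacobian}$ from that recursion, and then show that it cannot leave the unit circle once $\theta\neq0$. First I would use symmetry: since $G_\infty$ is $d$‑regular with constant parameters $J_{ij}=J>0$ and $\field{i}=\theta$, the fixed point reached by BP from the uniform initialization is the translation‑invariant one, in which every cavity message equals a common value $\nu^\circ$ and every cavity field equals $h^\circ=\theta+(d-1)\nu^\circ$ (cf. \eqref{eq:cavity}). Substituting into \eqref{eq:reparam} shows $\nu^\circ=g(\nu^\circ)$ for the scalar map
\begin{align*}
	g(\nu)\isdef\arctanh\!\bigl(\tanh(J)\tanh(\theta+(d-1)\nu)\bigr).
\end{align*}
By \eqref{eq:jacobian-long} every row of $\Jacobian$ has exactly $d-1$ nonzero entries, all equal to $g'(\nu^\circ)/(d-1)$, so Corollary~\ref{cor:constant-degree} (via Lemma~\ref{lm:irreducible} and Perron--Frobenius) gives $\Radius{\Jacobian}=g'(\nu^\circ)$; hence local stability of this fixed point is equivalent to $g'(\nu^\circ)<1$. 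By the global spin‑flip symmetry it suffices to treat $\theta>0$.

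Second, I would control $\nu^\circ$ through monotonicity of $g$. Since $\tanh$ is increasing and $J>0$, $g$ is strictly increasing, and $g(0)=\arctanh(\tanh(J)\tanh\theta)>0$; moreover $g(\nu)<\arctanh(\tanh J)=J$ for all $\nu$. Thus the orbit $\nu_0=0,\ \nu_{n+1}=g(\nu_n)$ is increasing and bounded above by $J$, so it converges monotonically to $\nu^\circ\in(0,J)$, whence $h^\circ>\theta>0$. Because $g$ is increasing, $\nu_n\uparrow\nu^\circ$ forces $g(\nu)\ge\nu$ for all $\nu\in[0,\nu^\circ)$, while $g(\nu^\circ)=\nu^\circ$; the map $\nu\mapsto g(\nu)-\nu$ is therefore nonnegative on $[0,\nu^\circ)$ and vanishes at $\nu^\circ$, so its left derivative there is nonpositive, i.e. $g'(\nu^\circ)\le1$. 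In contrast, $\theta=0$ forces the paramagnetic solution $\nu^\circ=0$, $h^\circ=0$, for which $g'(0)=(d-1)\tanh(J)$, and this exceeds $1$ once $J$ is large; so the field keeps the continuously deformed fixed point stable (up to the borderline case) for \emph{every} $J>0$. For the strict inequality I would rule out $g'(\nu^\circ)=1$: this can occur only at a saddle--node bifurcation of $g$, and the standard S‑shape analysis shows that for $\theta\neq0$ the largest fixed point — the one the orbit from $0$ reaches — is always a transversal crossing, the tangency occurring, if at all, at the lower pair of fixed points that constitutes the self‑preserving state.

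As a complementary remark that also quantifies the improvement, the same expression can be rewritten, with $t=\tanh(h^\circ)$, as
\begin{align*}
	g'(\nu^\circ) &= (d-1)\,\frac{\tanh(J)\bigl(1-\tanh^2(h^\circ)\bigr)}{1-\tanh^2(J)\tanh^2(h^\circ)}\\
	&= (d-1)\tanh(J)\;-\;(d-1)\tanh(J)\,\frac{\tanh^2(h^\circ)\bigl(1-\tanh^2(J)\bigr)}{1-\tanh^2(J)\tanh^2(h^\circ)},
\end{align*}
and since $|\tanh J|<1$, $|t|<1$ make the denominator positive while $h^\circ\neq0$ makes $t\neq0$, the subtracted term is strictly positive; hence $\Radius{\Jacobian}$ at field $\theta$ is strictly below its zero‑field value $(d-1)\tanh(J)$, so the onset of instability is pushed to a strictly larger coupling $J_C(\graph,\theta)>J_C(\graph,0)$.

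I expect the main obstacle to be not the inequality itself but the two structural facts feeding it: (i) that the fixed point selected by BP is the homogeneous one and is the branch continuously connected to the paramagnetic solution (needed so that ``stability of BP'' is well defined in the region where several fixed points coexist), and (ii) the exclusion of the tangency case $g'(\nu^\circ)=1$. On the Cayley tree, (i) is immediate from symmetry together with the monotone convergence above, and (ii) reduces to a one‑dimensional fixed‑point count for $g$. Finally, I would note that the identical block‑wise row‑sum argument carries the statement over to finite‑size graphs and non‑uniform couplings $J_{ij}$ (cf. Corollary~\ref{cor:finite-graphs} and Theorem~\ref{thm:ev-scaling}).
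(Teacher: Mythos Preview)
Your complementary remark at the end \emph{is} the paper's proof. The paper shows only the comparative statement: each nonzero Jacobian entry satisfies
\[
\frac{\tanh(J)\bigl(1-\tanh^2(h^\circ)\bigr)}{1-\tanh^2(J)\tanh^2(h^\circ)}<\tanh(J)
\]
whenever $h^\circ\neq 0$ (which holds at every fixed point once $\theta\neq 0$), so by the constant row sums and Perron--Frobenius (Corollary~\ref{cor:constant-degree}) the spectral radius drops strictly below its zero-field value $(d-1)\tanh(J)$. That is all the theorem claims; ``stabilizes'' is meant comparatively (the onset of instability shifts to larger $J$), not absolutely.

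Your main argument targets the stronger absolute statement $\Radius{\Jacobian}=g'(\nu^\circ)<1$ for the fixed point reached from uniform initialization. This is correct and a genuine addition, but it goes beyond what the paper asserts or proves. The monotone-iteration step giving $g'(\nu^\circ)\le 1$ is sound. Your exclusion of the tangency $g'(\nu^\circ)=1$ via ``S-shape analysis'' is the only soft spot; the clean route is to note that $g$ is strictly concave on $[0,\infty)$ (its sole inflection sits at $\nu=-\theta/(d-1)<0$), so $g(\nu)-\nu$ is strictly concave with positive value at $0$, forcing a transversal crossing at $\nu^\circ$.
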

\begin{proof}

	First assume a vanishing external field, i.e., $\theta =\theta_0= 0$, in which case the trivial fixed point has identical messages $\msg[\stationaryPoint]{i}{j}{} = 0.5$ so that
	$\msgReparam[\stationaryPoint]{i}{j} = 0$. Consequently, \eqref{eq:jacobian-long} reduces to
	\begin{align}
		\Jacobian[\theta_0]_{mn} \!=\!\begin{cases}\tanh(J) & \!\! \!\!   \text{if} \, i=l \; \text{and} \, k  \in \{\neighbors{i}\backslash \RV{j}\} \\
			0 \; & \!\!\!\! \text{else.}
		\end{cases}
	\end{align}
	Because all variables $X_i \in \setOfNodes(G_{\infty})$ have equal degree $\sum \limits_{n} \Jacobian_{mn} = c$ for all $m$. It follows from~\eqref{eq:jacobian-long} and the Perron-Frobenius Theorem that
	$\Radius{\Jacobian[\theta_0]} =  \tanh(J) \cdot (\nodeDegree{i}-1)$.
	Without loss of generality we exploit symmetry properties and assume an external field $\theta_{\delta} > 0$. By~\eqref{eq:update} and~\eqref{eq:reparam} it becomes obvious that every fixed point message $\msgReparam[\stationaryPoint]{i}{j} \neq 0$ and $\cavityField{i}{j} \neq 0$. 
	Note that qualitatively, it does not matter whether $\cavityField{i}{j}$ is positive or negative as we only consider $\tanh^2(\cavityField{i}{j})$.
	
	Because $0\leq \tanh(J) < 1$ 
	it follows that $\tanh(J) > \tanh^2(J)$ and consequently, as
	\begin{align}
		&\Jacobian[\theta_{\delta}]_{mn} = \nonumber \\
		&\begin{cases} \frac{ \tanh(J) \left(1-\tanh^2(\cavityField{i}{j})\right) }{1-\tanh^2(J) \tanh^2(\cavityField{i}{j})} & \!\!    \text{if} \, i=l \; \text{and} \, k \! \in\! \{\neighbors{i}\backslash \RV{j}\} \\
			0 \; & \!\! \text{else,}
		\end{cases}
		\label{eq:jacobian-bound}
	\end{align}
	all non-zero entries of $\Jacobian[\theta_0]_{mn}$ are element-wise larger than $\Jacobian[\theta_{\delta}]_{mn}$.
	Let us define a field-dependent scaling term $\kappa_{\theta} \in (0,1)$ of the Jacobian matrix , then $\Jacobian[\theta_{\delta}] = \kappa_{\theta} \Jacobian[\theta_0]$.
	
	Loosely speaking the existence of some non-vanishing field reduces all entries of the Jacobian matrix and consequently reduces the spectral radius.
\end{proof}

Now choose a critical value $J_C$ at the onset of instability\footnote{If $\lambda_{max} = 1$ we cannot infer from the linearized to the nonlinear map~\cite{Teschl2003}. We introduce an $\epsilon$-small term to avoid this subtlety.} so that $\Radius{\Jacobian[\theta_0]} = \lim \limits_{\epsilon \rightarrow 0} (1+\epsilon)$.
If the external field dampens the spectral radius, so that $\Radius{\Jacobian} < 1$ the unstable fixed point vanishes, and only a unique stable fixed point remains. Interestingly, this observation holds in all experiments, i.e.,  for attractive models an unstable fixed point exists if and only if multiple fixed points are present.

Suppose we either have purely attractive or purely repulsive interactions, then all entries of the Jacobian matrix have the same sign as the couplings and are bounded.
\begin{lm}\label{lm:jacobian-entries}
	All entries of the Jacobian are bounded by $\Jacobian_{mn} \in [0,1)$ if $J_{ij} > 0$ and by $\Jacobian_{mn} \in (-1,0]$ if $J_{ij} < 0$.
\end{lm}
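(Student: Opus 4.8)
The plan is to read off the bound directly from the explicit formula for the Jacobian entries in~\eqref{eq:jacobian-long}. Recall that the only non-zero entries have the form
\begin{align}
	\Jacobian_{mn} = \frac{\tanh(J_{ij})\left(1-\tanh^2(\cavityField{i}{j})\right)}{1-\tanh^2(J_{ij})\tanh^2(\cavityField{i}{j})}. \nonumber
\end{align}
So the statement reduces to a one-variable estimate: fixing $J_{ij}$ and letting $t := \tanh^2(\cavityField{i}{j}) \in [0,1)$ range over its possible values (note $|\tanh| < 1$ always, so $t<1$ strictly, and $t\ge 0$), I would show that the map $t \mapsto \frac{1-t}{1-\tanh^2(J_{ij})\,t}$ takes values in $(0,1]$, and then multiply by $\tanh(J_{ij})$, whose sign and magnitude ($|\tanh(J_{ij})|<1$) control the rest.

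First I would handle the attractive case $J_{ij}>0$, so $\tanh(J_{ij})\in(0,1)$ and $\tanh^2(J_{ij})\in(0,1)$. Write $a:=\tanh^2(J_{ij})\in(0,1)$. For $t\in[0,1)$ both numerator $1-t>0$ and denominator $1-at>0$ (since $at<t<1$), hence the ratio is positive; and since $a<1$ we have $1-at \ge 1-t$, so $\frac{1-t}{1-at}\le 1$. Therefore $0<\frac{1-t}{1-at}\le 1$, and multiplying by $\tanh(J_{ij})\in(0,1)$ gives $0<\Jacobian_{mn}\le\tanh(J_{ij})<1$. Combined with the zero entries, every entry lies in $[0,1)$. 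The repulsive case $J_{ij}<0$ follows by the symmetry already noted in the text: $\tanh(J_{ij})<0$ while $\tanh^2(J_{ij})=\tanh^2(|J_{ij}|)$ and $\tanh^2(\cavityField{i}{j})$ are unchanged, so the non-zero entries are exactly $\tanh(J_{ij})$ times the same positive quantity in $(0,1]$; hence they lie in $(-1,0)$, and together with the zeros in $(-1,0]$. (One could also just substitute $-|\tanh(J_{ij})|$ and rerun the inequality chain.)

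I do not expect a genuine obstacle here — the lemma is an elementary bookkeeping step extracted from the already-derived formula~\eqref{eq:jacobian-long}, and the whole argument is the monotonicity/positivity of a single rational function of $t$ with a parameter $a\in(0,1)$. The only things to be slightly careful about are: (i) noting that $\tanh^2(\cavityField{i}{j})$ is strictly less than $1$ (so denominators never vanish and the endpoint value $1$ of the auxiliary ratio is attained only in the degenerate limit $t\to 0$, i.e. $\cavityField{i}{j}=0$), which is exactly the phrasing "$\Jacobian_{mn}\in[0,1)$" with the half-open interval; and (ii) keeping track of which entries are the structural zeros from~\eqref{eq:jacobian-long} (the "else" branch), which trivially sit in $[0,1)$ respectively $(-1,0]$. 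This lemma is then exactly what feeds into the Perron–Frobenius bound~\eqref{eq:perron} used in Theorem~\ref{thm:external-field} and its corollaries.
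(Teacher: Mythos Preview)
Your proposal is correct and takes essentially the same approach as the paper: both read the bound directly off the explicit formula~\eqref{eq:jacobian-long}, using that $|\tanh(\cdot)|<1$ and that the rational factor $\frac{1-t}{1-at}$ with $a=\tanh^2(J_{ij})\in(0,1)$ and $t=\tanh^2(\cavityField{i}{j})\in[0,1)$ lies in $(0,1]$. The paper's own proof is much terser---it simply points back to the inequality established around~\eqref{eq:jacobian-bound} in the proof of Theorem~\ref{thm:external-field} and notes the sign flip for the repulsive case---so your write-up is, if anything, a more explicit version of the same argument.
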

\begin{proof}
	If all couplings have the same sign, the qualitative result of~\eqref{eq:jacobian-bound} holds as well. For purely repulsive interactions we just have to swap signs.
\end{proof}

\begin{thm}\label{thm:ev-scaling}
	The finite-size manifestations of phase transitions occur beyond the theoretical phase transitions.
	Let us define a parameter-set $(J_C , \theta_C)$ for which $\lambda_{max}$ crosses the unit-circle. 
	Then, for a graph $G_N$ with $N$ nodes, the values of the critical parameters $(J_C , \theta_C)$ decrease as $N$ increases.
	
	Consider two finite-size graphs with identical structure\footnote{For example two-dimensional grid-graphs with $N = 3\times3$ or $N=4\times4$ variables.}
	with different size $N_1< N_2$, then
	$\Radius{\Jacobian[N_1]} \leq \Radius{\Jacobian[N_2]} \leq \Radius{\Jacobian[\infty]}$.
\end{thm}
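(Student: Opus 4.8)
\emph{Strategy.} The plan is to reduce the statement to two facts about the sign-definite matrix $\Jacobian$: the Perron--Frobenius row-sum bound~\eqref{eq:perron}, and the monotonicity of the spectral radius under entrywise domination of nonnegative matrices. It suffices to treat purely attractive couplings $J>0$, where by Lemma~\ref{lm:jacobian-entries} all entries obey $0\le\Jacobian_{mn}<1$; the repulsive case is analogous after passing to the entrywise modulus of $\Jacobian$ (whose spectral radius dominates $\Radius{\Jacobian}$) and, at $\theta=0$, using $\Jacobian[-]=-\Jacobian[+]$ as in Section~\ref{sec:stabilityBP:empirical_analysis:vanishing}. The first ingredient I would record is the classical corollary of the Perron--Frobenius theorem that $0\le\boldsymbol{A}\le\boldsymbol{B}$ entrywise implies $\Radius{\boldsymbol{A}}\le\Radius{\boldsymbol{B}}$, together with the observation that padding a nonnegative matrix with zero rows and columns leaves its spectral radius unchanged; hence the spectral radius of a principal submatrix never exceeds that of the full matrix.

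\emph{Upper bound by the infinite graph.} For a $d$-regular graph $G_\infty$ the fixed point under consideration has spatially constant cavity fields, so all nonzero entries of $\Jacobian[\infty]$ are equal and every row has exactly $d-1$ of them; by Corollary~\ref{cor:constant-degree} this common row sum equals $\Radius{\Jacobian[\infty]}$, which for $\theta=0$ is $\tanh(J)(d-1)$ and for $\theta\ne0$ is $\kappa_\theta\tanh(J)(d-1)$ with $\kappa_\theta\in(0,1)$ as in the proof of Theorem~\ref{thm:external-field}. For a finite graph $G_N$ with the same local structure, each row of $\Jacobian[N]$ contains at most $d-1$ nonzero entries, since $|\neighborsWO{i}{j}|=\nodeDegree{i}-1\le d-1$, and each entry is bounded by $\tanh(J)$ because the factor $(1-\tanh^2\cavityField{i}{j})/(1-\tanh^2(J)\tanh^2\cavityField{i}{j})$ is at most one. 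Hence $\sum_n\Jacobian[N]_{mn}\le\tanh(J)(d-1)$ for every $m$, and~\eqref{eq:perron} gives $\Radius{\Jacobian[N]}\le\tanh(J)(d-1)$; combined with the value of $\Radius{\Jacobian[\infty]}$ this yields the right-hand inequality $\Radius{\Jacobian[N]}\le\Radius{\Jacobian[\infty]}$ (and its $\theta\ne0$ refinement once the cavity fields of $G_N$ are compared with those of $G_\infty$).

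\emph{Monotonicity in $N$.} Given two graphs of identical local structure with $N_1<N_2$ (e.g. the $3\times3$ grid sitting inside the $4\times4$ grid), I would fix an embedding of $G_{N_1}$ into $G_{N_2}$ and, after the induced permutation of edges, identify $\Jacobian[N_1]$ with a matrix supported on a principal block of $\Jacobian[N_2]$. At $\theta=0$ all nonzero entries of both Jacobians equal $\tanh(J)$, so the embedded $\Jacobian[N_1]$ is dominated entrywise by that block of $\Jacobian[N_2]$; the submatrix fact above then gives $\Radius{\Jacobian[N_1]}\le\Radius{\Jacobian[N_2]}$, and letting $N_2\to\infty$ completes the chain $\Radius{\Jacobian[N_1]}\le\Radius{\Jacobian[N_2]}\le\Radius{\Jacobian[\infty]}$. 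For $\theta\ne0$ I would propagate this from $\theta=0$ along the fixed-point branch, using the scaling $\Jacobian[\theta]=\kappa_\theta\Jacobian[0]$ of Theorem~\ref{thm:external-field} and a continuity argument in $\theta$. The claim about the critical parameters is then immediate: since $\Radius{\Jacobian[N]}$ is nondecreasing along such a family, a pair $(J_C,\theta_C)$ with $\Radius{\Jacobian[N_1]}=1$ has $\Radius{\Jacobian[N_2]}\ge1$, so the onset of instability for $G_{N_2}$ occurs at couplings (or fields) no larger than for $G_{N_1}$, and the finite-size critical values descend towards the theoretical phase transition as $N\to\infty$.

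\emph{Main obstacle.} The delicate point is the $\theta\ne0$ case of the monotonicity. The Jacobian entries are themselves not monotone in the graph size at the respective fixed points: enlarging the graph increases the cavity-field magnitudes, which \emph{shrinks} each entry $\tanh(J)(1-\tanh^2\cavityField{i}{j})/(1-\tanh^2(J)\tanh^2\cavityField{i}{j})$, even as it \emph{adds} nonzero entries to each row, so the clean principal-submatrix domination used at $\theta=0$ fails. Turning the continuation argument into a proof --- controlling how the fixed point, and therefore $\kappa_\theta$, depends jointly on $\theta$ and $N$, and ruling out that the relevant branch of fixed points terminates before $\Radius{\Jacobian}$ attains the value $1$ --- is where the real work lies; restricting to the regime in which the physical fixed point is unique and varies continuously (as observed empirically in Section~\ref{sec:stabilityBP:empirical_analysis:ferromagnetic}) is the natural setting in which to close the gap.
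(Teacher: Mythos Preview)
Your argument is correct at $\theta=0$ and takes a genuinely different, cleaner route for the monotonicity step than the paper. For the upper bound $\Radius{\Jacobian[N]}\le\Radius{\Jacobian[\infty]}$ both you and the paper use the same idea (Perron--Frobenius row sums, Corollary~\ref{cor:constant-degree}). For $\Radius{\Jacobian[N_1]}\le\Radius{\Jacobian[N_2]}$, however, the paper does \emph{not} use a subgraph embedding. Instead it writes the Jacobian as a sum of diagonally scaled copies, $\ScaleMatrix=\sum_{m}\boldsymbol{C}_m\Jacobian$, where the $\boldsymbol{C}_m$ pick out rows according to node degree, and then argues via~\eqref{eq:ev-sum} that the Perron eigenvalue of the sum decomposes accordingly; the comparison between $N_1$ and $N_2$ is then reduced to the observation that the larger grid has a higher \emph{average} degree $\averageDegree[\graph_{N_1}]<\averageDegree[\graph_{N_2}]$, so more rows contribute to the higher-degree summands. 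Your principal-submatrix argument is more direct and relies only on the standard Perron--Frobenius monotonicity $0\le A\le B\Rightarrow\Radius{A}\le\Radius{B}$; its one extra hypothesis is that $G_{N_1}$ embeds as an \emph{induced} subgraph of $G_{N_2}$, which holds for the grid families in the footnote but should be stated explicitly. A small sharpening of your write-up: at $\theta=0$ the principal block of $\Jacobian[N_2]$ on the directed edges of $G_{N_1}$ is not merely dominated by but \emph{equal} to $\Jacobian[N_1]$ (since the embedding is induced and all nonzero entries are $\tanh(J)$); the gain comes from the extra rows and columns of $\Jacobian[N_2]$ that lie outside the block, i.e.\ from the entrywise domination of the zero-padded matrix by the full $\Jacobian[N_2]$.

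On the $\theta\ne0$ case you are in the same position as the paper: it also restricts to $\theta=0$ and defers the field dependence to the scaling $\kappa_\theta$ of Theorem~\ref{thm:external-field}. Your ``main obstacle'' paragraph correctly identifies why the submatrix argument does not extend verbatim (larger cavity fields shrink the entries while the row support grows), and your proposed continuation in $\theta$ from the $\theta=0$ comparison is a reasonable way to close this --- and is at least as rigorous as what the paper offers there.
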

\begin{proof}
	Assume $J>0$ and w.l.g. $\theta = 0$ (for the influence of non-vanishing external field see Theorem~\ref{thm:external-field}). 
	Then on $G_{\infty}$ -- or for any other grid-graph with periodic boundary conditions -- the degree $\nodeDegree{i}=d$ is constant for all $X_i \in \setOfNodes$. It follows from Corollary~\ref{cor:constant-degree} that the largest eigenvalue is given by $\lambda_{max} = (\nodeDegree{i}-1) \cdot \tanh (J)$.\footnote{For vanishing fields, i.e., if $\field{i}=0$,  the Jacobian matrix is fully defined by $\coupling{i}{j}$.}
	Suppose we increase the couplings to $J^{new} > J$ and denote the change of its parameters as $\kappa = \frac{\tanh(J^{new})}{\tanh(J)}$. It is obvious  that $\lambda_{max}^{new} = \kappa \cdot \lambda_{max}$.
	
	Suppose we have some finite-size graph where $\nodeDegree{i}$ is not constant but depends on $\RV{i}$. If the couplings increase as before each row of $\Jacobian$ experiences a different amount of scaling. In all generality this is described by
	\begin{align}
		\begin{bmatrix}
			c_1 & & \vm{0}\\
			&\ddots & \\
			\vm{0}    &       &c_K
		\end{bmatrix} 
		\cdot \Jacobian \label{eq:scaling}
	\end{align}
	where $c_k$ depends on $\nodeDegree{i}$ and $K= \sum \limits_{i=1}^{N} \nodeDegree{i}$. Let us reformulate~\eqref{eq:scaling} in all detail to
	\begin{align}
		\ScaleMatrix = \sum \limits_{m=2}^{\max (\nodeDegree{i})} \boldsymbol{C}_m \Jacobian, \label{eq:jacobian-scale-matrix}
	\end{align}
	where 
	$\boldsymbol{C}_m$ is a diagonal matrix with values $\boldsymbol{C}_{m;kk} = \kappa_m$ if the $k\textsuperscript{th}$ line of $\Jacobian$ corresponds to a variable with $\nodeDegree{k} \geq m$ and $0$ otherwise. Since the largest eigenvalue is a positive real number it follows that the largest eigenvalue is the sum of the individual eigenvalues, so that
	\begin{align}\label{eq:ev-sum}
		\Radius{\boldsymbol{W}} = \sum \limits_{m=2}^{\max (\nodeDegree{i})} \Radius{\boldsymbol{C}_m \Jacobian}.
	\end{align}

	We still have to show that 
	\begin{align}
		\Radius{\Jacobian[N_1]}  \stackrel{(a)}{\leq} \Radius{\Jacobian[N_2]} \stackrel{(b)}{\leq} \Radius{\Jacobian[\infty]},
	\end{align}
	where $(b)$ follows from the existence of an associated eigenvector $x$ such that $\Radius{\ScaleMatrix} x \leq (\nodeDegree{i}-1)\cdot \tanh(J)$ with equality if and only if all $\boldsymbol{C}_m$ have only non-zero values on the main diagonal, i.e., all variables have equal degree. 
	As $G_{N_2}$ has a larger portion of nodes with $\nodeDegree{i} = \max \nodeDegree{i}$ than $G_{N_1}$, the average degree is also higher; 
	i.e., $\averageDegree[\graph_{N_1}] < \averageDegree[\graph_{N_2}]$. Then $(a)$ follows from~\eqref{eq:ev-sum}.
	
\end{proof}

By combination of Theorem~\ref{thm:external-field} and Theorem~\ref{thm:ev-scaling} we get:
\begin{cor}\label{cor:finite-graphs}
	The existence of a non-vanishing external field stabilizes BP on finite-size graphs.
\end{cor}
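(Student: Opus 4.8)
The plan is to chain together the two spectral estimates just proved. Fix a finite-size graph $G_N$ with purely attractive couplings $J_{ij}>0$ (the repulsive case is handled by the same entrywise estimates together with the damping argument given earlier for antiferromagnetic models). Local stability of a fixed point is governed by the spectral radius $\Radius{\Jacobian[N]}$ of the linearized map, and by Lemma~\ref{lm:irreducible} and the Perron--Frobenius Theorem this spectral radius is attained by a positive real eigenvalue; crucially, for a non-negative matrix this Perron root is monotone non-decreasing under an entrywise increase of the matrix. This monotonicity is the single structural fact the argument will rest on.

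First I would record the zero-field situation. At $\theta=0$ the trivial fixed point has $\msgReparam[\stationaryPoint]{i}{j}=0$, so by~\eqref{eq:jacobian-long} the Jacobian $\Jacobian[N,\theta_0]$ has entries $\tanh(J_{ij})$ wherever it is non-zero. Applying Theorem~\ref{thm:ev-scaling}, together with its degree-indexed decomposition $\ScaleMatrix=\sum_m \boldsymbol{C}_m\Jacobian$ and the additivity~\eqref{eq:ev-sum} of the Perron root, gives $\Radius{\Jacobian[N,\theta_0]}\le\Radius{\Jacobian[\infty,\theta_0]}=(d-1)\tanh(J)$ with $d=\max_i\nodeDegree{i}$; in particular there is a finite-size critical coupling $J_C^{(N)}$ at (or slightly above) which $\Radius{\Jacobian[N,\theta_0]}$ crosses $1$, where I would use the $\epsilon$-perturbation of Theorem~\ref{thm:ev-scaling} to stay away from the non-hyperbolic value $\lambda_{max}=1$.

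Next I would switch on a field $\theta\neq0$. By~\eqref{eq:update} and~\eqref{eq:reparam} every fixed-point message then satisfies $\msgReparam[\stationaryPoint]{i}{j}\neq0$ and every cavity field $\cavityField{i}{j}\neq0$, so by~\eqref{eq:jacobian-bound} and Lemma~\ref{lm:jacobian-entries} each non-zero entry of $\Jacobian[N,\theta]$ is strictly smaller than the corresponding entry of $\Jacobian[N,\theta_0]$. The subtlety, and the step I expect to be the main obstacle, is that on a finite-size graph the nodes have different degrees, so the reduction is not a single scalar $\kappa_\theta$ as in the constant-degree, infinite-size setting of Theorem~\ref{thm:external-field}: the factor varies from row to row. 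This is exactly where Theorem~\ref{thm:ev-scaling} is indispensable — its decomposition of $\Jacobian$ into the degree-level sum $\sum_m\boldsymbol{C}_m\Jacobian$ and the additivity~\eqref{eq:ev-sum} show that even a non-uniform entrywise reduction of $\Jacobian[N]$ strictly reduces $\Radius{\Jacobian[N]}$. Hence $\Radius{\Jacobian[N,\theta]}<\Radius{\Jacobian[N,\theta_0]}$, and since $\Radius{\Jacobian[N,\theta_0]}\to 1$ at the onset of instability, we obtain $\Radius{\Jacobian[N,\theta]}<1$: the fixed point that was marginally unstable at $\theta=0$ becomes locally stable, and by the observation recorded above that for attractive models an unstable fixed point occurs only when several fixed points coexist, BP converges. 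Combining the monotonicity in $\theta$ with the size bound $\Radius{\Jacobian[N]}\le\Radius{\Jacobian[\infty]}$ finally shows that any $\theta\neq0$ strictly enlarges the region of coupling strengths on which BP is stable, which is the assertion of the corollary.
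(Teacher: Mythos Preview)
Your proposal is correct and follows the same approach the paper intends: the paper's proof is literally the one-line remark ``By combination of Theorem~\ref{thm:external-field} and Theorem~\ref{thm:ev-scaling},'' and you have correctly unpacked what that combination means --- the entrywise reduction of the Jacobian from Theorem~\ref{thm:external-field} together with the varying-degree machinery from Theorem~\ref{thm:ev-scaling}. One small observation: the Perron--Frobenius monotonicity you state at the outset (entrywise domination of non-negative matrices implies domination of spectral radii) already suffices by itself to conclude $\Radius{\Jacobian[N,\theta]}<\Radius{\Jacobian[N,\theta_0]}$ from the strict entrywise inequality, so the degree-level decomposition~\eqref{eq:jacobian-scale-matrix} and the additivity~\eqref{eq:ev-sum} are not strictly needed for this step --- though invoking them is faithful to how the paper structures the argument.
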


Next, we extend the above observations to varying couplings and fields.
We assume that all couplings $J_{ij}$ have the same sign, and all fields $\theta_i$ have the same sign, not necessarily the same as the couplings. Then the scaling coefficients $c_k$ in~\eqref{eq:scaling}  depend not only on $d_i$, but on $J_{ij}$ and $\theta_i$ as well. Still, $\Jacobian$ can only contain either positive entries or negative entries; it follows that:
\begin{cor}\label{cor:local-fields}
	For infinite-size grid-graphs with either purely attractive interactions $J_{ij} > 0$ or purely repulsive interactions $J_{ij} <0$, the existence of some non-vanishing fields  $\theta_i \neq 0$ stabilizes BP.
\end{cor}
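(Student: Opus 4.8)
The plan is to mirror the proof of Theorem~\ref{thm:external-field}, but to let the couplings $\coupling{i}{j}$ and the fields $\field{i}$ vary from edge to edge and node to node. This is possible because that earlier argument only ever used the \emph{signs} of the Jacobian entries and the scalar bound of Lemma~\ref{lm:jacobian-entries}; non-uniformity then merely turns a matrix identity into an entry-wise inequality, which is still enough to control the spectral radius.

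First I would fix the couplings $\{\coupling{i}{j}\}$ at a configuration sitting exactly at the onset of instability when all fields vanish. At the trivial (paramagnetic) fixed point one has $\msgReparam[\stationaryPoint]{i}{j}=0$ and hence $\cavityField{i}{j}=0$, so \eqref{eq:jacobian-long} collapses to $\Jacobian[0]_{mn}=\tanh(\coupling{i}{j})$ whenever $i=l$ and $\RV{k}\in\neighborsWO{i}{j}$, and $0$ otherwise; I would calibrate so that $\Radius{\Jacobian[0]}=\lim_{\epsilon\to0}(1+\epsilon)$, introducing the $\epsilon$-term only to avoid the inconclusive borderline case $\Radius{\Jacobian[0]}=1$. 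By Lemma~\ref{lm:jacobian-entries}, $\Jacobian[0]$ is entry-wise non-negative if every $\coupling{i}{j}>0$ and entry-wise non-positive if every $\coupling{i}{j}<0$.

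Next I would switch on non-vanishing fields $\field{i}\neq0$ while holding the couplings fixed. As in Theorem~\ref{thm:external-field}, \eqref{eq:update} and \eqref{eq:reparam} force every fixed-point message $\msgReparam[\stationaryPoint]{i}{j}\neq0$, hence every cavity field $\cavityField{i}{j}\neq0$. Comparing \eqref{eq:jacobian-long} with $\Jacobian[0]$ entry by entry, each nonzero entry acquires the factor $\frac{1-\tanh^2(\cavityField{i}{j})}{1-\tanh^2(\coupling{i}{j})\tanh^2(\cavityField{i}{j})}$, which lies strictly in $(0,1)$ since $0\le\tanh^2(\coupling{i}{j})<1$ and $\tanh^2(\cavityField{i}{j})>0$. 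Collecting the supremum of these factors into a single scalar $\kappa_{\theta}\in(0,1)$ gives the entry-wise domination $|\Jacobian|\le\kappa_{\theta}\,|\Jacobian[0]|$. Since $|\Jacobian[0]|=\pm\Jacobian[0]$ is non-negative and the spectral radius is invariant under a global sign flip, the Perron--Frobenius theorem together with monotonicity of the spectral radius under entry-wise domination by a non-negative matrix yields $\Radius{\Jacobian}\le\Radius{|\Jacobian|}\le\kappa_{\theta}\,\Radius{\Jacobian[0]}<1$, so the fixed point is locally stable; by the remark after Theorem~\ref{thm:external-field} the companion unstable fixed point then disappears and a unique stable fixed point remains. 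Combined with Theorem~\ref{thm:ev-scaling}, the same conclusion carries over to finite-size grid approximations as well.

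The hard part will be making the scaling constant $\kappa_{\theta}$ genuinely uniform -- bounded away from $1$ -- over the infinitely many edges of the grid. This requires the cavity fields $\cavityField{i}{j}$ to be uniformly bounded away from $0$, which should follow from translation invariance together with the couplings being bounded away from $0$ and $\infty$ and the fields bounded away from $0$, but a fully rigorous treatment has to confront the subtleties of Perron--Frobenius theory and of the spectral radius for infinite matrices -- the same subtleties already tolerated in Theorem~\ref{thm:external-field} and Corollary~\ref{cor:constant-degree}. A secondary point needing care is that turning on the field may alter the fixed-point landscape (new fixed points can emerge), so "stabilizes BP" must be read here, as elsewhere in this chapter, as local stability of the relevant fixed point rather than global uniqueness.
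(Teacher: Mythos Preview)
Your proposal is correct and follows essentially the same line as the paper. The paper's own justification (the paragraph immediately preceding the corollary) is very terse: it simply observes that when the couplings and fields are allowed to vary, the row-scaling coefficients $c_k$ in the decomposition~\eqref{eq:scaling}/\eqref{eq:jacobian-scale-matrix} depend not only on $d_i$ but also on $\coupling{i}{j}$ and $\field{i}$, while the sign structure of $\Jacobian$ is preserved, and then appeals to the Perron--Frobenius machinery of Theorem~\ref{thm:ev-scaling}.

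Your packaging differs slightly: instead of the row-scaling decomposition you use entry-wise domination $|\Jacobian|\le\kappa_\theta|\Jacobian[0]|$ together with the standard monotonicity of the Perron root. This is arguably cleaner, and you are more candid than the paper about the genuine analytic gaps (uniformity of $\kappa_\theta$ over infinitely many edges; Perron--Frobenius on infinite matrices; the meaning of ``stabilizes''). The paper tolerates exactly the same gaps in Theorem~\ref{thm:external-field} and Corollary~\ref{cor:constant-degree}, so your level of rigor matches the surrounding text.
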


\begin{thm}\label{thm:bipartite}
	The eigenvalue-spectrum of the Jacobian $\Jacobian$ is symmetric if and only if the underlying graph is bipartite.
\end{thm}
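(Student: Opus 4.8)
The plan is to read $\Jacobian$ as a weighted, directed, non-backtracking operator and to exploit the sign pattern of its entries. Recall from~\eqref{eq:jacobian-long} that, with the $m^{\text{th}}$ directed edge written $\edge{i}{j}$, the entry $\Jacobian_{mn}$ is nonzero exactly when the $n^{\text{th}}$ directed edge has the form $\edge{k}{i}$ with $\RV{k}\in\neighbors{i}$ and $k\ne j$; moreover, under the standing assumption that all couplings share a single sign $\sigma=\sgn(\coupling{i}{j})$, every nonzero entry is \emph{strictly} of sign $\sigma$, since $\Jacobian_{mn}=\tfrac{\tanh(\coupling{i}{j})(1-\tanh^2(\cavityField{i}{j}))}{1-\tanh^2(\coupling{i}{j})\tanh^2(\cavityField{i}{j})}$ has the sign of $\tanh(\coupling{i}{j})$. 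I would then prove the two implications separately.

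For ``bipartite $\Rightarrow$ symmetric'' — which in fact needs no hypothesis on the signs or magnitudes of the potentials — let $\setOfNodes=\mathbf{Y}\cup\mathbf{Z}$ be the bipartition and introduce the diagonal signature matrix $\boldsymbol{D}$, indexed by the directed edges, with $D_{mm}=+1$ if the tail of the $m^{\text{th}}$ edge lies in $\mathbf{Y}$ and $D_{mm}=-1$ otherwise. In the only nonzero case $m=\edge{i}{j}$, $n=\edge{k}{i}$ the tails $\RV{i}$ and $\RV{k}$ are adjacent, hence in opposite classes, so $D_{mm}D_{nn}^{-1}=-1$ and therefore $\boldsymbol{D}\Jacobian\boldsymbol{D}^{-1}=-\Jacobian$. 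Similar matrices have identical spectra, so $\Spectrum{\Jacobian}=\Spectrum{-\Jacobian}$, i.e. the spectrum is invariant under negation.

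For ``symmetric $\Rightarrow$ bipartite'' I would argue by contraposition using traces. If $G$ is not bipartite then, being connected, it contains a simple odd cycle $C$ of length $\ell\ge 3$. Orienting the edges of $C$ consistently produces a closed sequence $e_1,\dots,e_\ell$ of directed edges that is non-backtracking and closes up without backtracking (this is where $\ell\ge 3$ is used), so $\Jacobian_{e_1e_2}\Jacobian_{e_2e_3}\cdots\Jacobian_{e_\ell e_1}\ne 0$, contributing to $\operatorname{tr}(\Jacobian^{\ell})$. Because every nonzero entry of $\Jacobian$ carries the fixed sign $\sigma$, each closed-walk term in $\operatorname{tr}(\Jacobian^{\ell})$ has sign $\sigma^{\ell}$, so no cancellation occurs and $\operatorname{tr}(\Jacobian^{\ell})\ne 0$. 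However, a spectrum invariant under negation forces $\operatorname{tr}(\Jacobian^{\ell})=\sum_i\ev{i}^{\ell}=\sum_i(-\ev{i})^{\ell}=-\operatorname{tr}(\Jacobian^{\ell})=0$ for odd $\ell$, a contradiction; hence $G$ is bipartite. (An alternative, leaning on the Perron–Frobenius theorem and the irreducibility of $\Jacobian$ from Lemma~\ref{lm:irreducible}: the index of imprimitivity of $\Jacobian$ equals the gcd of the lengths of its closed non-backtracking walks, the spectrum is invariant under negation if and only if this index is even, and — since a gcd that divides an odd number is odd — the index is even precisely when $G$ has no odd cycle, i.e. is bipartite.)

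The main obstacle is this second implication, namely excluding cancellations among the closed-walk contributions to the odd-power traces; this is exactly what the uniformly-signed-couplings assumption buys, by pinning the sign of every $\Jacobian_{mn}$ and turning $\operatorname{tr}(\Jacobian^{\ell})$ into $\sigma^{\ell}$ times a strictly positive sum. For models with mixed couplings the same characterization should persist, but establishing it would require a careful accounting of how frustrated and unfrustrated odd walks combine, which I would regard as the genuinely delicate point.
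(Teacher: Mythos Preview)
Your argument is correct. The forward direction is essentially the paper's: the paper observes that, via the block decomposition of Lemma~\ref{lm:irreducible}, $\Jacobian$ inherits the block-antidiagonal form of the bipartite adjacency matrix and hence a symmetric spectrum; your signature similarity $\boldsymbol{D}\Jacobian\boldsymbol{D}^{-1}=-\Jacobian$ encodes exactly this block structure in multiplicative form.

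For the converse the routes differ. The paper argues by structural analogy---the Jacobian's block pattern mirrors that of the adjacency matrix, for which the bipartite characterization is a classical cited result---whereas your trace argument (and the Perron--Frobenius alternative) work directly on $\Jacobian$ and are self-contained. They also make explicit where the uniform-sign hypothesis of Lemma~\ref{lm:jacobian-entries} enters: it precludes cancellation in $\operatorname{tr}(\Jacobian^{\ell})$, or equivalently makes $\sigma\Jacobian$ nonnegative and irreducible so that imprimitivity theory applies. What you gain is a transparent converse that does not depend on transferring an adjacency-matrix statement to the non-backtracking operator; what the paper's route gains is brevity. Your closing caveat about mixed couplings is apt---neither argument obviously survives that regime without further work.
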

\begin{proof}
	The adjacency matrix of any bipartite graph can be rearranged and written in block form 
	\begin{align}
		\boldsymbol{A}=\begin{bmatrix} 
			\vm{0} & \boldsymbol{M} \\
			\boldsymbol{M}^T & \vm{0}
		\end{bmatrix},
	\end{align}           
	so that the eigenvalue-spectrum is symmetric~\cite{brouwer2011spectra}. 
	By the same arguments as in Lemma~\ref{lm:irreducible} it follows that $\Jacobian$ has the same structure as  $\vm{A}$ and thus a symmetric spectrum as well. 
\end{proof}
\begin{cor} \label{cor:frustrations}
	Assume we only have repulsive interactions. Then a graph is bipartite if and only if no frustrations occur.
\end{cor}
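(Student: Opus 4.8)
The plan is to reduce the claim to the classical graph-theoretic fact that a graph is bipartite if and only if it contains no cycle of odd length. First I would observe that under the standing assumption of purely repulsive interactions every coupling satisfies $\coupling{i}{j} < 0$. Hence, for any cycle $\path$ consisting of $\ell$ edges, the quantity $J_{\path} = \prod_{\path} \coupling{i}{j}$ is a product of $\ell$ strictly negative numbers, so $\sgn(J_{\path}) = (-1)^{\ell}$. Consequently $J_{\path} < 0$ precisely when $\ell$ is odd and $J_{\path} > 0$ precisely when $\ell$ is even. In particular, the graph is frustrated (that is, it possesses some cycle $\path$ with $J_{\path}$ negative) if and only if it contains at least one cycle of odd length.

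Second I would combine this with the characterization of bipartiteness. If the graph is bipartite with $\setOfNodes = \setOfNodes[Y] \cup \setOfNodes[Z]$, then every edge joins $\setOfNodes[Y]$ to $\setOfNodes[Z]$, so any closed walk, and in particular any cycle, must alternate between the two parts and therefore has even length; thus no odd cycle exists and, by the first step, no frustrations occur. Conversely, if the connected graph contains no odd cycle, I would $2$-colour the nodes by the parity of their graph distance from a fixed root node; the absence of odd cycles guarantees that this colouring is well defined and proper, which exhibits a bipartition and shows the graph is bipartite. Chaining the two equivalences (bipartite $\iff$ no odd cycle $\iff$ no frustrations) yields the corollary.

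The argument is essentially immediate once the sign bookkeeping $\sgn(J_{\path}) = (-1)^{\ell}$ is in place, so I do not expect any real obstacle here; the only point that deserves a line of care is the ``no odd cycle $\Rightarrow$ bipartite'' direction, which in general is applied to each connected component separately (and in any case we only consider connected graphs throughout, cf.\ Section~\ref{sec:background:graphs}).
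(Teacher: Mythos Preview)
Your proposal is correct and follows essentially the same approach as the paper: both reduce the claim to the classical characterization of bipartite graphs via odd cycles, using the sign observation $\sgn(J_{\path})=(-1)^{\ell}$ for purely repulsive couplings. The paper's proof is more terse---it simply cites the frustration definition and the bipartite/odd-cycle equivalence---whereas you spell out both directions of the bipartite characterization explicitly; substantively, however, the arguments are the same.
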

\begin{proof}
	A cycle is frustrated if and only if the product of all $J_{ij}$ along the corresponding edges is negative~\cite[p.45]{mezard2009}. Frustrations can therefore only occur in graphs with cycles of odd length, which implies that the graph cannot be bipartite~\cite[Prop.~2.27]{korte}.
\end{proof} 

\newsection{Application: Error-Correcting Codes}{solutionsBP:coding}
One of the most prominent applications where BP is successfully applied to loopy graphs  is iterative decoding.
We keep this section as self-contained as possible. For a thorough introduction we refer the interested reader to the textbooks~\cite{mackay2003, wymeersch2007}; the connection between BP and decoding is further explained in great detail in~\cite{kschischang1998iterative,aji2000generalized,kschischang2001factor}.

We consider a binary symmetric channel (BSC) with a binary input $X_i \in \sampleSpace{x}=\{0,1\} = \{x_i,\bar{x}_i\}$ and a binary output $Y_i \in \sampleSpace{Y}=\{0,1\}=\{y_i,\bar{y}_i\}$. 
The channel is specified by the error-probability $\epsilon$, where transmitted bits are flipped with probability $\epsilon$. 
That is $P_{X_i|Y_i}(x_i|y_i) = P_{X_i|Y_i}(\bar{x_i}|\bar{y_i}) = 1 - \epsilon$  and  $P_{X_i|Y_i}({x_i}|\bar{y_i}) = P_{X_i|Y_i}({x_i}|y_i) =\epsilon$ (cf. Figure~\ref{fig:bsc-fg} (a)). 
Additional, redundant bits help to detect and correct transmission errors.
The aim of error-correcting codes is to reach the desired error-correcting performance while introducing as little redundancy as necessary, i.e., to operate as close as possible to the theoretical limit.
Suppose we transmit a codeword with block length $N=7$ consisting of 4 source bits $X_1,\ldots, X_4$ and three parity-check bits $X_5,X_6,X_7$ that satisfy 
\begin{align}
	X_1\oplus X_2\oplus X_3 \oplus X_5 = 0, \nonumber \\
	X_2\oplus X_3\oplus X_4 \oplus X_6 = 0, \nonumber \\
	X_1\oplus X_3\oplus X_4 \oplus X_7 = 0, \nonumber
\end{align}
where $\oplus$ is an XOR, i.e., the sum in modulo-2 arithmetic. This linear irregular code is the (7,4) Hamming code~\cite[Chapter~1]{mackay2003}.

In this example we assume that the sent message is $\mathbf{x} = (0,0,0,0,0,0,0)$\footnote{Note that the properties of the BSC are independent of the transmitted codeword $\mathbf{x}$.} and that exactly one bit suffers from a bit flip.
For irregular codes the degree of the variables $\nodeDegree{i}$  varies; therefore, we consider two scenarios: either $\mathbf{y} = (1,0,0,0,0,0,0)$ or $\mathbf{y} = (0,0,0,0,0,1,0)$\footnote{Normally the performance of a code is studied over an ensemble of sent codewords where each bit flips with probability $\epsilon$.}.

\begin{figure}[t]
	\centering
	\includegraphics[width=0.7\textwidth]{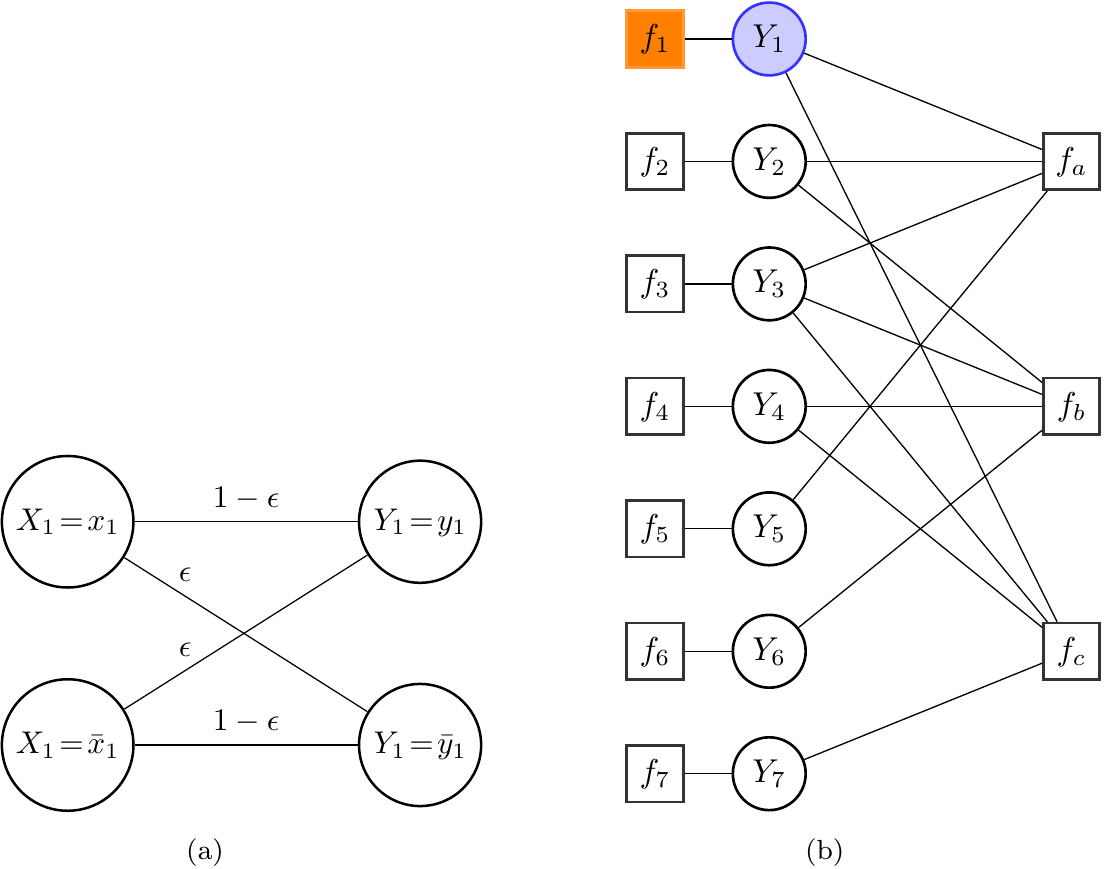}
	\caption{(a) Binary symmetric channel with error probability $\epsilon$; (b) factor graph for the (7,4) Hamming code that corresponds to~\eqref{eq:ldpc} where $Y_1$ is flipped.}
	\label{fig:bsc-fg}
\end{figure}

\begin{figure*}[t!]
	\centering
	\includegraphics[width=\textwidth]{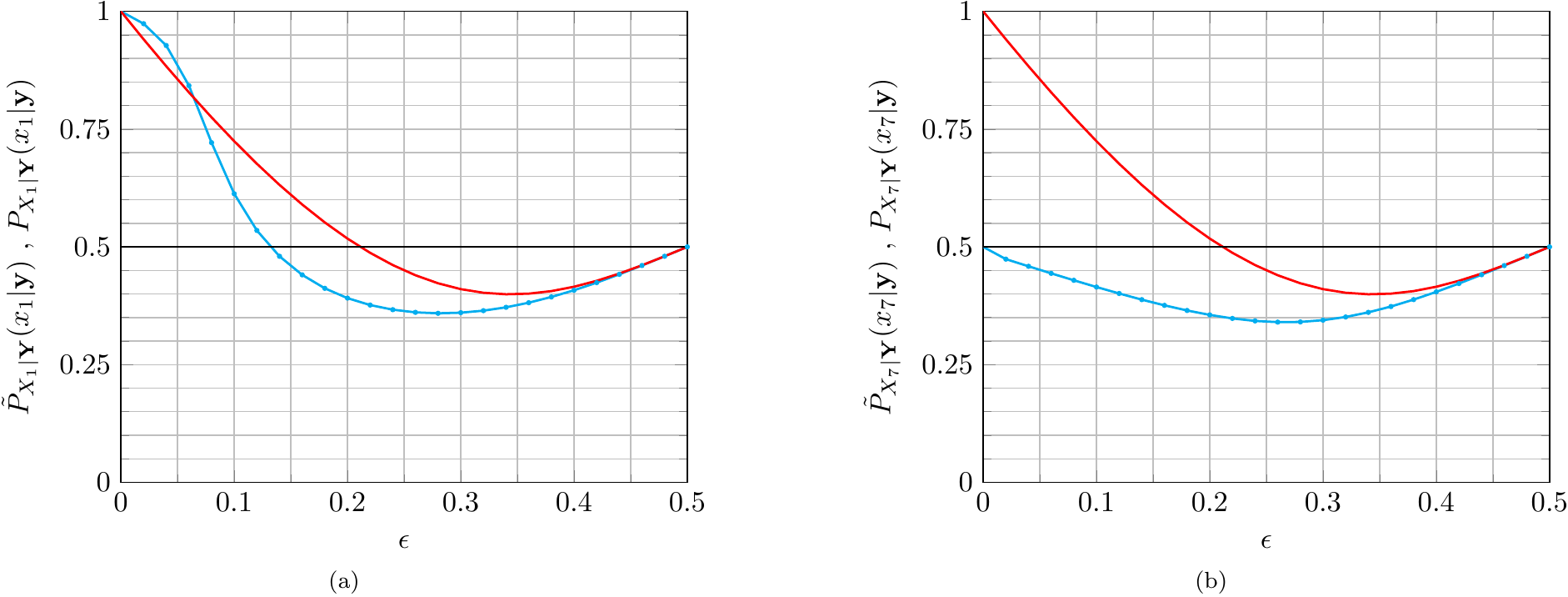}
	\caption[]{Results for the (7,4) Hamming code. We compare the exact solution $P_{X_i|\mathbf{Y}}(x_i|\mathbf{y})$ (red) to the approximate solution 
		$\tilde{P}_{X_i|\mathbf{Y}}(x_i|\mathbf{y})$ of NPHC (blue) as $\epsilon$ increases for: (a) $Y_1$ is flipped and (b) $Y_6$ is flipped.}
	\label{fig:ldpc}
\end{figure*}

It is often convenient to express a code in factorized  form and represent it explicitly with a factor graph. A factor graph consists of variable nodes $Y_i$ and factor nodes $f_A$ where each factor $f_A$ acts as a function on all variables connected $\mathbf{Y_i}=\{Y_i \in \neighbors{A} \}$.\footnote{Similar to Section~\ref{sec:background:graphs}, we use $\neighbors{\cdot}{}$ to specify the neighbors of nodes and variables.} 
On a factor graph, BP operates similar as introduced in Section~\ref{sec:bp:preliminaries}. 
Now, two different types of messages are sent along every edge: factor-to-variable messages $r_{Ai}$ and  variable-to-factor messages $q_{iA}$. All messages are iteratively updated according to
\begin{align}
	r_{Ai}^{n+1}(y_i) =  \sum_{\mathbf{y_k} : Y_k \in \{\neighbors{A} \backslash Y_i \}} f_A(\mathbf{Y_k} = \mathbf{y_k}, Y_i = y_i) \prod_{Y_k \in \{\neighbors{A} \backslash Y_i\}}  q_{kA}^{n}(y_k),
	\label{eq:factor-variable}
\end{align}
\begin{align}
	q_{iA}^{n+1}(y_i) = \alpha_{iA}^{n} \prod_{f_B \in \{\neighbors{i} \backslash f_A\}} r_{Bi}^{n}(y_i),
	\label{eq:variable-factor}
\end{align}
where $\alpha_{iA}$ is chosen such that $q_{iA}^{n+1}(y_i)+q_{iA}^{n+1}(\bar{y}_i) = 1$.
After all messages converged to a fixed point, the marginals of the variable nodes are approximated by the product of all incoming messages
\begin{align}
	\pmfApprox{Y_i}(y_i) = \frac{1}{Z} \prod_{f_B \in \neighbors{i}} r_{Bi}^{n}(y_i).
\end{align}
Details of the factor graph representation can be found in~\cite[Chapter~9]{mezard2009} and~\cite{loeliger2004introduction}.

Let us consider two types of factors: $f_i(Y_i) = P_{X_i|Y_i}(x_i|y_i)$ to model the BSC, and 
$f_A(\mathbf{Y_i})$ to verify if all parity-checks are satisfied. 
Then $f_A(\mathbf{Y_i}) = 1$ if the sum of all arguments $\sum_{\mathbf{Y_i}} y_i$ is even and $f_A(\mathbf{Y_i}) = 0 $ if the sum is odd.
The conditional probability for $\mathbf{X} = \mathbf{x}$ to be the codeword, given the received codeword $\mathbf{Y} = \mathbf{y}$ then is 
\begin{align}
	\begin{split}
		P_{\mathbf{X}|\mathbf{Y}}(\mathbf{x}|\mathbf{y}) =& \frac{1}{Z} \prod_{i=1}^{7}f_i(Y_i) \cdot f_a(Y_1,Y_2,Y_3,Y_5) \cdot 
		f_b(Y_2,Y_3,Y_4,Y_6) \cdot f_c(Y_1,Y_3,Y_4,Y_7).
	\end{split}
	\label{eq:ldpc}
\end{align}
The corresponding factor graph representation is shown in Fig~\ref{fig:bsc-fg} (b).

Now we create a system of equations similar as in Section~\ref{sec:solutionsBP:solving} and obtain all fixed points with NPHC. 
A unique fixed point exists for all settings -- and this fixed point is stable, which justifies the application of BP on error-correcting codes.
We further estimate the accuracy of the approximation; this relates to the following question.
If we communicate over a BSC, how vulnerable is BP decoding to an increased error probability?
To answer this question we obtain the fixed points for $\epsilon \in [0,0.5]$ and compare
the exact solution obtained by the junction tree algorithm 
$P_{X_i|\mathbf{Y}}(x_i|\mathbf{y})$ (red), to the approximate solution $\tilde{P}_{X_i|\mathbf{Y}}(x_i|\mathbf{y})$ obtained by NPHC (blue) in 
Figure~\ref{fig:ldpc}.\footnote{Note that for $\epsilon=0.5$ the transmission is random.}

An error can be corrected by BP decoding if $\tilde{P}_{Y_i|\mathbf{X}}(y_i|\mathbf{x}) > 0.5$. With exact decoding a single bit-flip can be corrected for $\epsilon < 0.21$. 
The fixed points obtained by NPHC, however, reveal that BP  does not utilize the full potential of the code (Figure~\ref{fig:ldpc}).
If $Y_1$ was corrupted the error can be corrected for $\epsilon < 0.13$ (Figure~\ref{fig:ldpc} (a)). BP fails to correct the error if $Y_6$ was flipped for all values of $\epsilon$ (Figure~\ref{fig:ldpc} (b)).
The reason therefore is a systematic error: the check-bit only has a single connection to the parity-check function $f_c$.
According to~\eqref{eq:variable-factor} $q_{6,c}^{n+1}(y_i) = \alpha_{7,7} \cdot r_{7,7}(y_i)$; therefore $Y_6$ does not incorporate any information from the remaining graph.
To conclude, the higher the connectivity of a node, the more information of other bits is taken into account and the better the error-correction capability of BP and NPHC. 

\newsection{Conclusion}{solutionsBP:conclusion}
This chapter was an attempt to get a deeper understanding of BP's behavior, with potential implications for 
deriving convergence guarantees and finding stronger conditions for uniqueness of BP fixed points. 

The specific focus was to apply the tools from Chapter~\ref{chp:solving}, namely the NPHC method and the concept of linearization to answer some of the open questions associated with BP (cf. Section~\ref{sec:intro:bp:questions}).
In particular we utilized the NPHC method to characterize the solution space and to obtain all BP fixed point solutions
before performing a local stability analysis to understand the convergence properties.\\

Although this approach provided many interesting insights and answered some long-standing questions, some additional questions emerged in the process.

The focus on relatively small, well-structured models provides an obvious starting point and was essential in developing the results presented.
Although many insights are expected to carry over to more general models, the generalization to larger models with fewer constraints on their potentials is primarily left open.
Moreover, the general models considered here are seemingly too small as to exhibit an actual disordered behavior (characterized by the existence of multiple fixed points).
The systems of polynomial equations considered so far, however, are already larger than most solvable problems; the generalization to even larger systems therefore remains problematic.

One key feature of the presented framework was the estimation of a favorable upper bound on the number of solutions.
In particular, the BKK bound utilized the sparsity of the polynomial system induced by the graph structure and was tight in all our experiments. 
Computing the BKK bound and creating the start system, however, took up the major part of the overall runtime in our experiments.
One will agree that the structure of the graph -- together with the update equations -- literally specifies the structure of the polynomial system. 
Yet, we only relied on generic, albeit well-established, methods to compute the BKK bound and to create the start-system.
This suggests that one should take the graph structure into account, potentially yielding a much more efficient algorithm for computing the BKK bound.\\

Regarding the approximation quality of BP, we exemplified an accuracy-gap between fixed points obtained by BP and the best possible fixed points (obtained by NPHC).
In practice this justifies the exploration of multiple fixed points:
one can then either consider a combination of weighted marginals, providing strikingly accurate results in the models considered;
or one can select the fixed point that maximizes $\partitionBethe$, often leading to the best approximation of the marginals.
We will further investigate this topic in Chapter~\ref{chp:accuracyBP} where we discuss the advantages and pitfalls of both approaches.
Moreover, we analyzed the fixed point maximizing the partition function in detail: we revealed how it continuously deforms under varying parameters and how the stability depends on the graph structure. 
The observation of a continuous deformation will serve as the main inspiration for Chapter~\ref{chp:selfguided} where we build upon this observation to propose an enhanced version of BP.\\

Regarding the convergence properties of BP, we answered several questions on the basis of our empirical observations and our theoretical analysis.
Graphs with vanishing local potentials are indeed a worst-case scenario;
strong local potentials reduce the magnitude of all eigenvalues, thus helping to achieve convergence, and additionally increase the accuracy of the most accurate fixed point.

The graph structure affects the convergence properties as the spectral radius increases with the model size, which consequently degrades the performance of BP.
Moreover, bipartite graphs exhibit a symmetric spectrum so that damping cannot help to achieve convergence;
note that the ineffectiveness of damping for bipartite models was recently also observed for models with more general potentials~\cite{ping2017belief}.
Based on our observations it seems reasonable to conjecture that damping can only be used to stabilize a fixed point if it is unique.
Whether this generalizes to spin glasses is by no means obvious.
It would, for the purpose of understanding the convergence properties of spin glass models, be of interest to specify a distribution of the eigenvalues based on the distribution of the potentials.\footnote{ 
	One could hope to obtain similar results as in the study of spectral densities on random matrices, where it is for example well-established that the eigenvalues of sparse symmetric matrices are distributed according to the Wigner semicircle law~\cite{wigner,erdos}.}

  \emptydoublepage
\newchapter{Enhancing Belief Propagation: Self-Guided~Belief~Propagation}{selfguided}
\openingquote{The most significant dimension of freedom\\ is the freedom from one’s own ego - in other words, from the feeling that I am the center of everything.}{Voytek Kurtyka}
\renewcommand{\pwd}{selfguidedBP}

This chapter builds upon the insights from the preceding chapter and presents one possible way to make the performance of BP more robust.
All models analyzed so far show that the minimum of the Bethe free energy is continuously deformed as the coupling strength increases.
As a consequence of this observation, we progressively incorporate the pairwise potentials and keep track of the evolving fixed point.
Although the idea of this algorithm was written down by the present author, such an approach may seem rather obvious;
in fact Alexander Ihler revealed that he also had a similar idea in his mind already for quite some time~\cite{ihler_personal}.

We begin this chapter with the introduction of our proposed algorithm \emph{self-guided belief propagation} (SBP) in Section~\ref{sec:selfguided:SBP}.
In Section~\ref{sec:selfguided:experiments} we apply SBP to a wide range of models and discuss our empirical observations before we provide a formal analysis for the special case of attractive models with unidirectional local potentials in Section~\ref{sec:selfguided:theory}.

All theoretical results have been conducted by the present author.
The implementation of the algorithm and the empirical evaluation in Section~\ref{sec:selfguided:experiments} were performed by the author's student Florian Kulmer. 
Both the theoretical and empirical results have been prepared for publication in~\cite{knoll_sbp}.

\newsection{Motivation}{stabilityBP:Introduction}
The analysis of BP's solution space in Chapter~\ref{chp:solutionsBP} revealed that accurate fixed points may exist even though BP fails to obtain them.
This is in accordance with the observation in~\cite{weller2013approximating} that the Bethe approximation is often accurate despite the failure of BP to converge.
The main aim of this chapter lies in enhancing BP so that it converges to accurate fixed points while preserving its simple nature (of just considering local interactions).

Besides empirical analyses and some restricted theoretical studies, it remains an open problem to obtain a rigorous understanding of the limitations of BP for general graphs.
Implementation details, as for example the initialization, play an important role in the case of multiple fixed points.
In this case, it may depend on the initialization whether BP provides accurate marginals or not.
The dependence of BP on such implementation details obviously poses a serious issue for providing performance guarantees.

One way to get convergence guarantees is to consider the equivalent optimization problem and to minimize the Bethe free energy $\FB$.
This, however, comes at the cost of an increased runtime complexity; 
polynomial-time algorithms only exist for restricted classes of problems and even approximating the global minimum might be problematic for graphical models with arbitrary potentials~\cite{chandrasekaran2011counting, shin2012complexity, weller2013approximating}. Hence, the pursuit for methods that approximate the marginals with both runtime- and convergence-guarantees is still ongoing.
These limitations motivate 
the search for modifications of BP that overcome these issues in order to increase the accuracy and enhance the convergence properties.
In this chapter, we introduce self-guided belief propagation (SBP) that aims to fill this gap.

The evolution of the fixed points revealed a close relationship between the coupling strength  and the performance of BP.
Especially strong couplings reduce the accuracy and deteriorate the convergence properties.
This and the observation that tuning the coupling strength continuously deforms the fixed point minimizing the Bethe free energy inspired us to construct a homotopy.

More precisely, we first consider only local potentials (where BP is exact and has a unique fixed point) and subsequently modify the model by increasing the pairwise potentials to the desired values.
SBP thus solves a deterministic sequence of models that iteratively refines the Bethe approximation towards an accurate solution that is uniquely defined by the initial model.

We evaluate SBP for grid-graphs, complete graphs, and random graphs with Ising potentials and, compared to BP, we observe superior performance in terms of accuracy; in fact SBP achieves more accurate results than Gibbs sampling in a fraction of runtime.
We theoretically demonstrate optimality of the selected fixed point for \emph{attractive} models with unidirectional local potentials.
Additionally SBP enhances the convergence properties and excels for \emph{general} models where SBP provides accurate results despite the non-convergence of BP.
We further expect that the ease of use lowers the hurdle for practical applications.

\newsection{Self-Guided Belief Propagation (SBP)}{selfguided:SBP}
In this section we present an intuitive justification of the proposed method and subsequently introduce SBP in detail. We further present practical considerations and pseudocode of SBP.  
A formal treatment of SBP is deferred to Section~\ref{sec:selfguided:theory}.

The current understanding of BP is that strong (pairwise) potentials negatively influence BP and that in incorporating the potentials slowly~\cite{braunstein2007encoding} may reduce the overall number of iterations.
Inspired by our observations in Chapter~\ref{chp:solutionsBP} that strong local potentials increase accuracy and lead to better convergence properties, we aim to reduce the influence of the pairwise potentials that negatively influence BP. 

SBP starts from a simple model with independent random variables and slowly incorporates the potential's strength, i.e., it solves the simple problem first and -- by repetitive application of BP, keeps track of the fixed point as the interaction strength is increased by a scaling term.
Again we resort to the homotopy continuation method for this purpose.

More formally, SBP considers an increasing length-$K$ sequence $\{\scaling_k\}$ where $k=1,\ldots,K$ such that $\scaling_{k} < \scaling_{k+1}$ and $\scaling_k \in [0,1]$ with $\scaling_1 = 0$ and $\scaling_K = 1$. This further indexes a sequence of probabilistic graphical models $\{\ugm[k]\}$ that converges to the model of interest $\ugm[K] = \ugm$. 
Every probabilistic graphical model has a set of potentials $\setOfPotentialsSBP{k} = \{\pairwiseSBP{i}{j}{k},\localSBP{i}{k}\}$ associated, where $\localSBP{i}{k} = \local{x}{i}$ and the pairwise potentials at index $k$ are exponentially scaled by
\begin{align}
	\pairwiseSBP{i}{j}{k} &=   \exp(\coupling{i}{j}\scaling_k x_i x_j)\nonumber \\
	&= \pairwise{x}{i}{j}^{\scaling_k}.
	\label{eq:scaling:potentials}
\end{align}
We further denote the fixed points of BP for $\ugm[k]$ by $\setOfMessages[\circ]_\iteration{k}$.
The initialization determines the performance of BP if multiple fixed points exist; SBP always provides a favorable initialization for the model $\ugm[k]$ by the preceding fixed point $\setOfMessages[\circ]_\iteration{k-1}$ and performs the composite function
\begin{align}
	\setOfMessages[\circ]_\iteration{K} = \BP^\circ_\iteration{K} \left( \BP^\circ_\iteration{K-1}\left( \cdots \BP^\circ_\iteration{1}\left( \setOfMessages[1]_{\iteration{1}} \right)   \right)\right).
	\label{eq:BP_composition}
\end{align}
This may lead to problems if the fixed point becomes unstable for some value $k<K$ in which case we cannot rely on BP to keep track of the fixed point anymore.
Instead, SBP  provides the last stable fixed point in that case, i.e., $\setOfMessages[\circ]_\iteration{k-1}$ as the final estimate.

In other words, SBP relaxes the problem of minimizing $\FB$ by making all variables independent (and the Bethe approximation exact).
Then, the problem is deformed into the original one by increasing $\scaling$ from  zero to one. 
Thereby, a stationary point $\FBStationary$ emerges as a well-behaved path (cf. Proposition~\ref{prop:properties} in Section~\ref{sec:selfguided:theory}) and SBP keeps track of it with BP constantly correcting the stationary point.

We illustrate how SBP 
approximates the marginals for a problem where BP fails to converge in Figure~\ref{fig:example}.
Initially, SBP obtains the pseudomarginals for $\scaling=0$ and then estimates the marginals of the desired problem by successively increasing $\scaling$ and running BP to keep track of the emerging solution path. 
Note that the approximated marginals are already close to the exact ones in this example; experiments show that this is often the case (cf. Section~\ref{sec:selfguided:experiments}).

\begin{figure}[t]
	\centering
	\includegraphics[width = 0.5\linewidth]{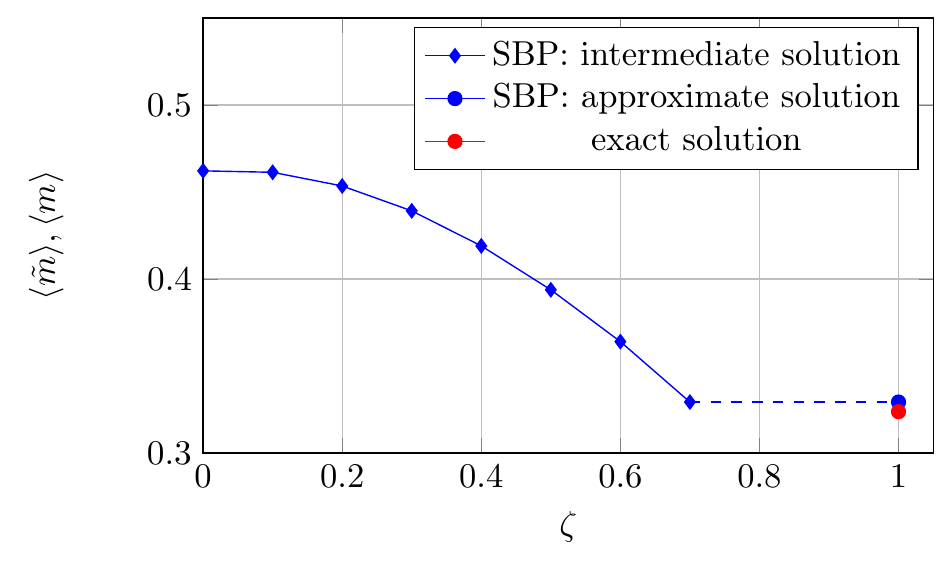} 
	\caption{ Illustrative example: SBP proceeds along the smooth solution path and obtains accurate marginals despite instability of the terminal fixed point.
		Note that the fixed point becomes unstable for $\scaling > 0.7$; SBP stops and provides the last stable solution (that is already close to the exact one) as an approximation. }
	\label{fig:example}
\end{figure}

\newsubsection{Practical Considerations}{selfguided:practical}
The procedure of SBP is essentially a path-tracking problem.
As already discussed for the NPHC method in Section~\ref{sec:solving:nphc}, such a problem consists of three key steps: 
prediction, correction, and step-size adaption.
While the implementation details of these steps may influence the performance, we have found in our experiments that the efficiency of SBP remains largely unaffected.
We will now describe the specific details used for all three steps throughout our experiments.\\

In practice the runtime of SBP is influenced by the difference between two successive fixed points $\setOfMessages[\circ]_{\iteration{k}}$ and $\setOfMessages[\circ]_{\iteration{k-1}}$ -- 
the difference is primarily determined by the number of steps $K$.
Ideally, to reduce the difference, $K$ should be as large as possible. 
This, however, increases the runtime as well (cf. Theorem~\ref{thm:complexity}); in practice we would choose $K$ as small as possible but as large as necessary.
Moreover, one can adaptively increase the step size if two successive fixed points are close, i.e., if $\setOfMessages[\circ]_{\iteration{k}}\simeq \setOfMessages[\circ]_{\iteration{k-1}}$ (cf.~\cite[pp.23]{sommese2005numerical},~\cite{allgower2003numerical}). Our experiments show that it is sufficient to use rather coarse steps (we used $K\leq 10$ for all reported experiments).

Additionally, instead of initializing $\BP_{\iteration{k}}$ with its preceding fixed point messages, i.e., $\setOfMessages[1]_{\iteration{k}} = \setOfMessages[\circ]_{\iteration{k-1}}$ one can (e.g., by spline  extrapolation)  estimate $\setOfMessages[1]_{\iteration{k}} = f(\setOfMessages[\circ]_{\iteration{k-1}},\setOfMessages[\circ]_{\iteration{k-2}},\ldots,\setOfMessages[\circ]_{\iteration{k-l}})$ so that $\setOfMessages[1]_{\iteration{k}} \cong \setOfMessages[\circ]_{\iteration{k}}$ to reduce the overall number of iterations. We empirically observed that the benefit diminishes for $l > 3$.

\newsubsection{Pseudocode}{selfguided:pseudocode}
Pseudocode of SBP is presented in Algorithm~\ref{alg:sbp}. 
The maximum number of iterations for BP is given by 
$N_{BP} = 10^3$. 
We randomly initialize $\setOfMessages[1]_{1}$ and either use fixed step size or adaptive step size ($ adaptive \ stepsize = 1 $).
The sequences of messages is contained in $\{\setOfMessages[\circ]_\iteration{k}\} = \{\setOfMessages[\circ]_\iteration{1},\ldots,\setOfMessages[\circ]_\iteration{k}\}$.

Cubic spline extrapolation is applied in \textsf{ExtrapolateMsg} to estimate the initial messages of the subsequent model.

We further present the pseudocode for the adaptive step size controller in 
Algorithm~\ref{alg:adaptivestepsize}.

\begin{algorithm}
	\caption{Self-Guided Belief Propagation (SBP)}\label{alg:sbp}
	\LinesNumbered
	\DontPrintSemicolon
	\SetKwFunction{BP}{BP}\SetKwFunction{Add}{add}\SetKwFunction{Extrapolate}{ExtrapolateMsg}\SetKwFunction{ScalePotentials}{ScalePotentials}\SetKwFunction{AdaptiveStepsize}{AdaptiveStepSize}
	\SetKwInOut{Input}{input}\SetKwInOut{Output}{output}
	\Indm
	\Input{Graph $\mathcal{G} = (\mathbf{X},\mathbf{E})$, Potentials $\setOfPotentials$}
	\Output{Fixed point messages $  \setOfMessages[\circ] $}
	\BlankLine
	\Indp
	initialization $ \setOfMessages[1]_\iteration{1} \leftarrow \setOfMessages[1] $\;
	\BlankLine
	$k \leftarrow 1$\;
	$step_{init}\leftarrow \frac{1}{10}$\;
	$\scaling_1\leftarrow 0$\;
	\While{$\scaling \le 1$}
	{
		$\setOfPotentials(\scaling_k)$ $\leftarrow$ \ScalePotentials{$\setOfPotentials,\scaling_k$}\;
		$(\setOfMessages,n)$ $\leftarrow$ \BP{$\setOfMessages[1]_\iteration{k},\setOfPotentials(\scaling_k),N_{BP}$}\;
		\eIf{ $n < N_{BP}$}
		{   $\setOfMessages[\circ]_\iteration{k} \leftarrow \setOfMessages$\; 
		}
		{ break\;	    
		}
		
		\eIf{ $ adaptive \ stepsize $}
		{   
			$\scaling_{k+1} \leftarrow \scaling_k +$ \AdaptiveStepsize{$ \{\setOfMessages[\circ]_\iteration{k}\} $, $ step_{init} $, $ k $}\;
		}
		{     $\scaling_{k+1} \leftarrow \scaling_k + step_{init}$\; 
		}
		
		$\setOfMessages[1]_\iteration{k+1}$ $\leftarrow$ \Extrapolate{$\{\setOfMessages[\circ]_\iteration{k}\}$,$\{\scaling_k\}$}\;
		$k \leftarrow k+1$\;
	}
	$ \setOfMessages[\circ] \leftarrow \setOfMessages[\circ]_\iteration{k-1}  $
\end{algorithm}
%

\begin{algorithm}
	\caption{Adaptive Step Size Controller}\label{alg:adaptivestepsize}
	\LinesNumbered
	\DontPrintSemicolon
	\SetKwFunction{Add}{add}\SetKwFunction{ComputeMarginals}{ComputeMarginals}\SetKwFunction{UnnormalizedProbability}{UnnormalizedMarginals}\SetKwFunction{MSE}{MSE}
	\SetKwInOut{Input}{input}\SetKwInOut{Output}{output}
	\Indm
	\Input{Sequence of messages $ \{\setOfMessages[\circ]_\iteration{k}\} $, $ step_{init} $, $ k $}
	\Output{$ step $}
	\BlankLine
	\Indp
	$step \leftarrow step_{init}$\;
	\BlankLine
	$ threshold $ $ \leftarrow $ $ 1 \cdot 10^{-3} $\;
	$l \leftarrow 1$\;
	\While{ $ \big( $ \MSE{$ \setOfMessages[\circ]_\iteration{k} $} $ - $ \MSE{$ \setOfMessages[\circ]_\iteration{k-l} $} $ \big) $ $ < threshold $}
	{
		$l \leftarrow l + 1$\;
		$step \leftarrow step + step_{init} \cdot l$\;
	}
\end{algorithm}
\renewcommand{\BP}{\mathcal{BP}}

\newsection{Experiments}{selfguided:experiments}
We apply SBP to attractive (Section~\ref{sec:selfguided:experiments:attractive}) and general  (Section~\ref{sec:selfguided:experiments:general}) models on $n \times n$ grid graphs of different size, and to complete and random graphs (average degree of $\averageDegree[\graph] = 3$) with $N=10$ random variables.
Experiments were performed for these graphs in order to render the computation of the exact marginals feasible and to make the results comparable to previous work~\cite{weller2014understanding,sontag2008new,meshi2009convexifying,srinivasa2016survey}.

\newsubsection{Experimental Settings}{selfguided:experiments:evaluation}

SBP is evaluated and compared to BP, $\BPD$ (BP with damping), and Gibbs sampling.
The exact marginals are obtained by the junction tree algorithm and the accuracy of the marginals is evaluated by the mean squared error $\EMarginal{m}$ between the approximate marginals at the $m$\textsuperscript{th} fixed point
$\pmfApprox{\RV{i}}^m(\RVval{i})$  and the exact marginals $\marginals{\RV{i}}(\RVval{i})$.
Additionally, we approximate the global minimum $\FBGlobalMin$ by~\cite{weller2013approximating} and evaluate the mean squared error ($\mseb$) between the approximate marginals and the marginals obtained at the global minimum of the Bethe free energy $\pmfApprox{\RV{i}}^{*}(\RVval{i})$.
We further compare the runtime of all methods by counting the overall number of BP iterations and the number of iterations for Gibbs sampling.\footnote{
	Computing the acceptance-probability requires  similar runtime as one BP message update.}

We consider $L=100$ models with random potentials for every experiment. The initial messages are randomly initialized 100 times for each of these $L$ models, before applying BP with and without damping. 
We consider BP (and $\BPD$) as converged if at least a single message initialization (out of 100) exists for which BP converges. We report the convergence ratio, i.e., the number of experiments (or probabilistic graphical models) for which BP converged at least once divided by the overall number of models $L$.

The reported mean-squared error $\EMarginal{m}$ and the number of iterations are averaged over all convergent runs of BP and $\BPD$ (i.e., BP\textsuperscript{$\circ$} and $\BPD^\circ$) while all runs that did not converge are discarded.
SBP, on the other hand, allows obtaining an approximation of the terminal fixed point in case that this fixed point is unstable, which prevents BP and $\BPD$ from converging.
Therefore, we average the error and the number of iterations over all $L$ models for SBP ($\SBP_{all}$), Gibbs sampling ($\text{Gibbs}_{all}$), and for minimization of the Bethe approximation ($\FBGlobalMin_{all}$).

For BP and SBP we set the maximum number of iterations to $N_{BP} = 10^3$ and use random message scheduling. 
For $\BPD$ we choose a large damping factor $\epsilon=0.9$ to account for strong couplings.
Such a large damping factor helps to prioritize convergence over runtime -- this admits comparison of marginal accuracy for a wide range of models. 
The maximum number of iterations, however, has to be increased to $N_{BP}=10^4$ in order to account for the slower convergence.
Carefully selecting a damping factor that depends on a given model may reduce the number of iterations until convergence but can not increase the accuracy; moreover, if chosen too small $\BPD$ may fail to converge.
The accuracy of SBP is only marginally affected by its parameters and we use the following parameters for all experiments: $ K\leq 10 $ , adaptive step size, and  cubic spline extrapolation. Gibbs sampling is run for $10^5$ iterations.

\begin{figure*}[t]
	\centering
	\includegraphics[width =\linewidth]{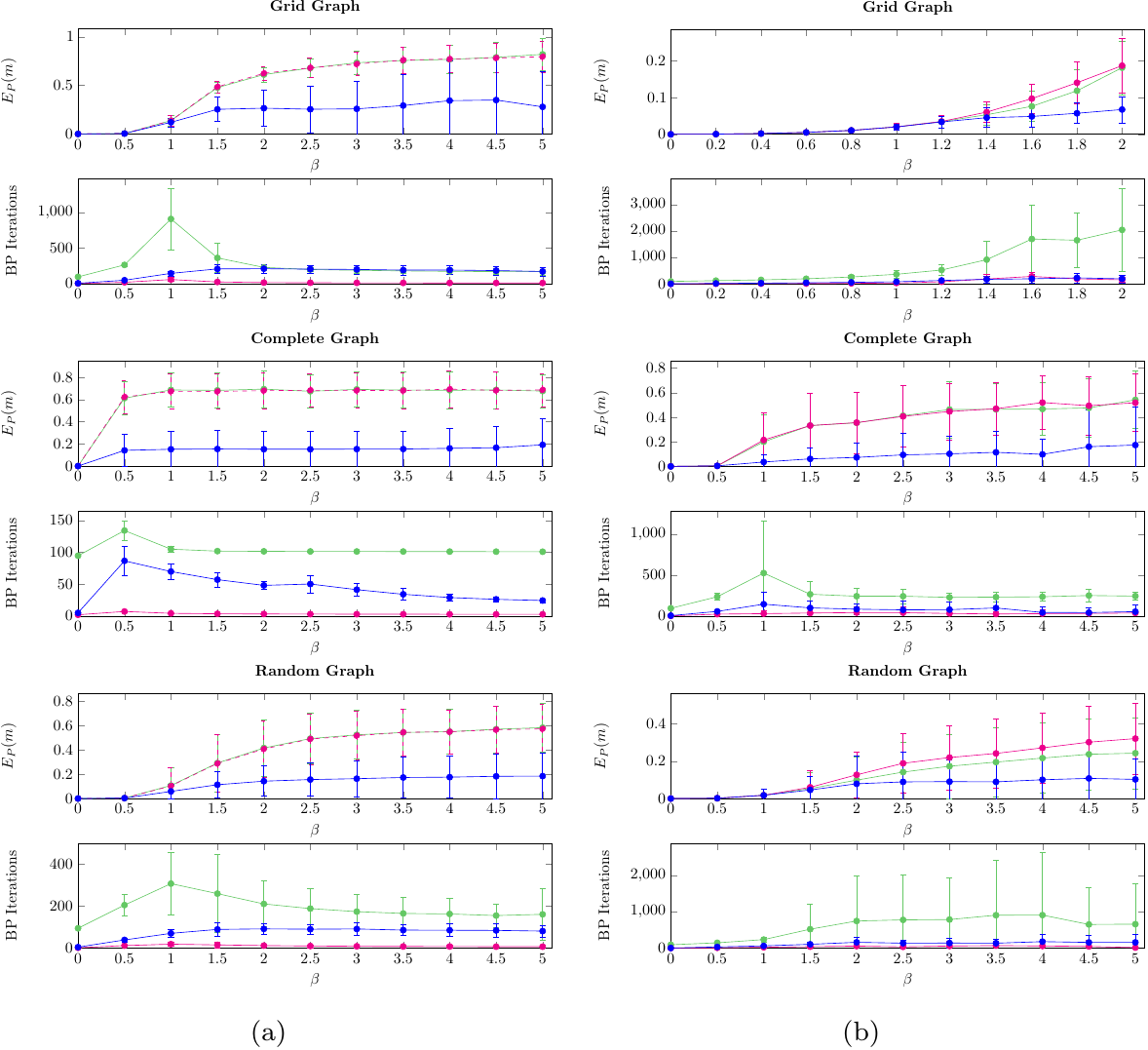}
	\caption{MSE, i.e., $\EMarginal{m}$, and number of iterations for: $ \SBP_{all} $ (blue), $ \text{BP}^\circ $ (magenta), and $ \BPD^\circ $ (green);	
		the local potentials are specified by $\field{i} \sim \mathcal{U}(-0.5, 0.5)$ and the pairwise potentials are specified by (a) $\coupling{i}{j} \sim \mathcal{U}(0,\beta)$ (attractive model); (b) $\coupling{i}{j} \sim \mathcal{U}(-\beta,\beta)$ (general model).}
	\label{fig:sbp:results}
\end{figure*}
\newsubsection{Attractive Models}{selfguided:experiments:attractive}

We consider grid graphs with $N=100$ random variables ($10 \times 10$), random graphs with $N=10$ random variables, and complete graphs with $N=10$ random variables. 
For each graph, we generate $L=100$ models for every value of $\beta \in \{0,0.5,\ldots,5\}$ and sample the potentials according to $\field{i} \sim \mathcal{U} (-0.5, 0.5)$ and $\coupling{i}{j} \sim \mathcal{U} (0,\beta)$; i.e., we consider 1100 different parametrizations.
Note that BP is randomly initialized 100 times for every considered parametrization.
We compute $\EMarginal{m}$ for every value of $\beta$ and visualize  the mean and the standard deviation of the $\mse$\footnote{Note that the mean-squared error is not Gaussian distributed but we report the standard deviation for simplicity.} as well as the number of iterations in Figure~\ref{fig:sbp:results} (a).

Note that BP (magenta) converges rapidly for all graphs considered; hence, there is no additional benefit for $\BPD$ (green) that only increases the number of iterations.
SBP (blue) only slightly increases the number of iterations as compared to BP and converges in fewer iterations than $\BPD$. 
Note that SBP is guaranteed to capture the global optimum if all local potentials are unidirectional (cf. Theorem~\ref{thm:attractive}-\ref{thm:attractive_vanishing}). 
But even if we do allow for random local potentials, we empirically observe that SBP consistently outperforms BP with respect to accuracy.
This becomes especially evident for models with strong couplings: 
these models exhibit multiple stable fixed points~\cite{knoll_fixedpoints} such that, depending on the initialization, BP often converges to inaccurate fixed points.

\begin{table*}[t]
	\caption{\textsc{Results for general models with $\coupling{i}{j} \in \{-1,1\}$ on grid graphs ($ N \!=\! 25$ and $ N \!=\! 100$), complete graphs ($ N \!=\! 10$), and random graphs ($ N\!=\!10$). 
			We report the MSE to the exact marginals, i.e.,  $\EMarginal{m}$, and the MSE to the Bethe approximation, i.e.,  $\mseb$, convergence ratio, and the overall number of BP iterations.
			Only converged runs are considered for $ \text{BP}^\circ $ and $ \BPD^\circ$ but all runs are considered for $\text{SBP}_{all}$, $\text{Gibbs}_{all}$, and $\FB^*_{all}$.}}
	\label{table:general}
	\centering
	\tiny
	\begin{center}
		\begin{tabular}{llcccccccccccc} 
			\toprule
			\multicolumn{2}{c}{}& \multicolumn{3}{c}{Grid Graph ($5\times5$)} & \multicolumn{3}{c}{Grid Graph ($10\times10$)} & \multicolumn{3}{c}{Complete Graph} & \multicolumn{3}{c}{Random Graph} \\
			\cmidrule(lr){3-5} \cmidrule(lr){6-8} \cmidrule(lr){9-11} \cmidrule(lr){12-14}
			&$\mathbf{\theta}$	&  ${0}$ & ${0.1} $  & ${0.4} $ & ${0} $ & ${0.1} $  & ${0.4} $& ${0} $ & ${0.1} $  & ${0.4} $ & ${0} $ & ${0.1} $  & ${0.4} $  \\ 
			\midrule
			
			\multirow{5}{*}{$\EMarginal{m}$ }		& $\text{BP}^\circ$  & 0.338 & 0.251 & 0.102		& -     & - & 0.184 &   0.463  & 0.466 & 0.356 & 0.252     & 0.202 & 0.101 \\
			& $\BPD^\circ$  & 0.226 & 0.198 & 0.066 	& 0.186 & 0.240 & 0.154 &  0.463 & 0.473 &  0.422 & 0.128 & 0.116 & 0.083  \\
			& $\SBP_{all}$  & \textbf{0.000}& 0.029 & \textbf{0.047}& \textbf{0.000}& \textbf{0.026} & \textbf{0.077}& \textbf{0.000}& \textbf{0.055}  & \textbf{0.074}  & \textbf{0.000}& 0.048 & 0.049  \\
			& $\FB^*_{all}$ &  0.036    &  0.042    &  0.069& - & - & - & -& - & -& - & - &- \\
			& $\text{Gibbs}_{all}$ & 0.001 & \textbf{0.016}    &   0.064& 0.001 & 0.037 &   0.120 &  0.096 &  0.096  & 0.077  & 0.001 & \textbf{0.011} & \textbf{0.048}   \\  \midrule
			
			\multirow{2}{1.5cm}{Convergence ratio} & $\text{BP}^\circ$  & 0.05    & 0.11    & 0.26  & 0.00    & 0.00    &  0.02   &   0.41    &   0.42   &   0.50   & 0.30     & 0.33    &   0.49 \\
			& $\BPD^\circ$ & 0.11     & 0.16    & 0.69 & 0.01     & 0.02    &  0.12  &  0.41    &   0.41  &   0.50   & 0.62     & 0.64    &   0.80    \\ \midrule
			
			\multirow{4}{1.5cm}{Number of iterations}  & $\text{BP}^\circ$   &    40  &  52  &  84 	&    -  &  -  &  102  &  17     &  17   &    18  	&    42  &  53  &   50   \\
			& $\BPD^\circ$ &   1370   &  1449  &  1735 &   2711  &  2313  &  2599 &   211    &  207   &   234 	&   1077   &  1057  &  873   \\
			&$\SBP_{all}$&   5   &  182  &   146&   5   &  149  &   209  &   5   &   51  &    110 	&   5   &  149  &  131  \\
			&$\text{Gibbs}_{all}$& $10^5$ &  $10^5$  &   $10^5$& $10^5$ &  $10^5$  &   $10^5$  &  $10^5$   & $10^5$  & $10^5$	&$10^5$ & $10^5$ & $10^5$\\ \midrule
			$\mseb$ &    $\SBP_{all}$  & 0.036 & 0.037 & 0.022 & - & - & - & -& - & -& - & - &-\\ 
			Global Min& $\SBP $& 100 & 10 & 23& - & - & - & -& - & -& - & - &- \\
			\bottomrule
		\end{tabular}
	\end{center}
\end{table*}

\newsubsection{General Models}{selfguided:experiments:general}

General models admit frustrated cycles and traditionally pose problems for BP and other methods that aim to minimize the Bethe approximation.

First, in order to evaluate the performance of SBP we consider $\field{i} = \field{} \in \{0,0.1,0.4\}$ and draw the couplings with equal probability from $\coupling{i}{j} \in \{-1,1\}$; the results are summarized in Table~\ref{table:general}.
Although BP and $\BPD$ fail to converge for most models we observe that SBP stops after only a few iterations and significantly outperforms BP in terms of accuracy. In fact, SBP achieves accuracy competitive with Gibbs sampling but requires three orders of magnitude fewer iterations.

Second, we further apply SBP to general graphs and evaluate
whether SBP provides a good approximation of the pseudomarginals that correspond to the global minimum of the Bethe free energy according to $\pseudomarginalsMinGlobal = \argmin_{\LPolytope} \FB(\pseudomarginals)$ (cf.~\eqref{eq:pseudomarginals_global}).
Therefore we consider grid graphs (of size $5\times 5$), which still allows us to approximate $\FBGlobalMin$ -- and the related pseudomarginals -- reasonable well by~\cite{weller2013approximating}. The results are summarized in Table~\ref{table:general} and show that SBP approximates $\pseudomarginalsMinGlobal$ within the accuracy of our reference method ($\mseb$).
We further report the number of times where SBP obtains the terminal fixed point, i.e., for $\ugm[K]$, in Table~\ref{table:general} (Global Min).
It becomes obvious that SBP  approximates the terminal fixed point reasonably well, despite frequently stopping for  $\scaling_k < 1$.
Moreover, closer inspection of the accuracy reveals that SBP does not only approximate the ``correct'' pseudomarginals well ($\mseb$), but concurrently provides an accurate approximation of the exact marginals ($\mse$).

\begin{figure}[!t]
	\centering
	\includegraphics[width=0.5\linewidth]{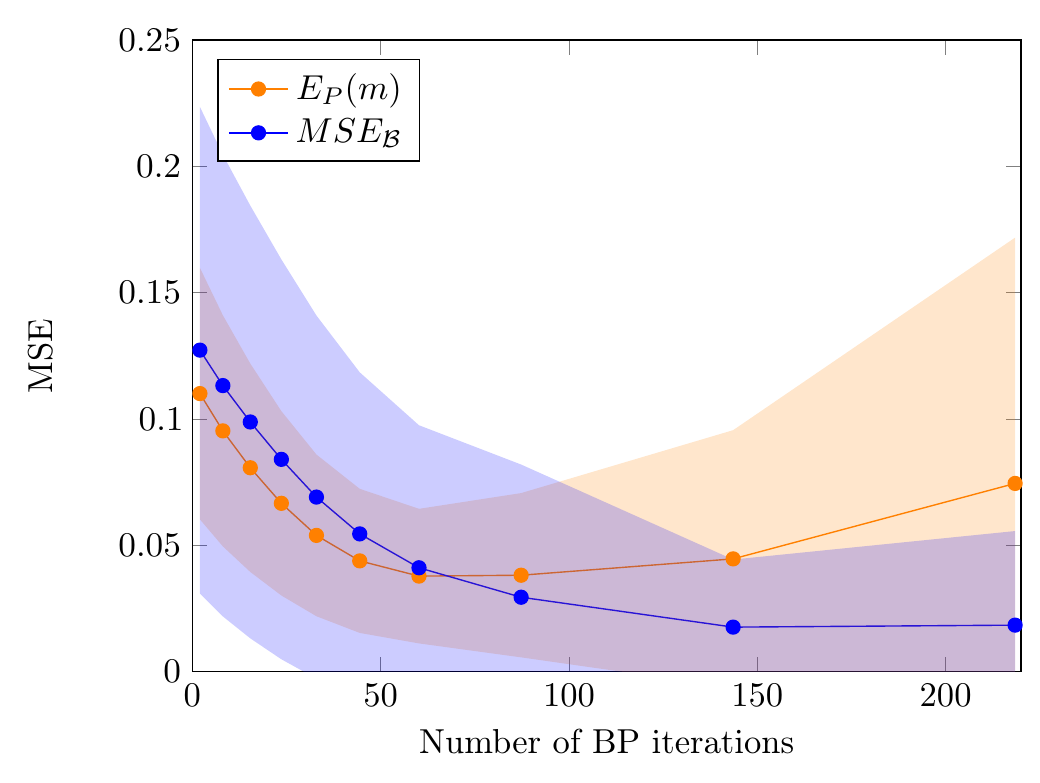}
	\caption{MSE, i.e., $\EMarginal{m}$, (orange) and $ \mseb $ (blue) over the cumulative number of iterations. Results are averaged over 100 grid graphs ($5 \times 5$) with $\field{i} = 0.4$ and $\coupling{i}{j} \in \{-1,1\}$.}
	\label{fig:mse_iterations}
\end{figure}

Third, we investigate how the approximation quality depends on the scaling parameter $\scaling_k$. Therefore, we depict the evolution of the MSE (to the exact solution) and $\mseb$ (to the approximate solution) in Figure~\ref{fig:mse_iterations}.
We observe that $\mseb$ (blue) decreases monotonically with every iteration, which empirically verifies that SBP proceeds along a well-behaved solution path (cf. Proposition~\ref{prop:properties}).
Note that $\mseb$ decreases rapidly in the first iterations and SBP spends a major part of the overall runtime for slight improvements. 
The MSE to the exact solution, on the other hand, decreases first until it increases again as SBP incorporates stronger couplings. 
Stronger couplings tend to degrade the quality of the Bethe approximation in loopy graphs and lead to marginals that are increasingly biased towards one state~\cite{weiss2000correctness,knoll_fixedpoints}.
This explains why the MSE to the exact solution increases as SBP converges towards the terminal fixed point.
One could exploit this behavior and restrict the runtime by stopping SBP after consumption of a fixed iteration budget; this may even increase the accuracy with respect to the exact solution.

Finally, we aim to investigate the influence of the coupling strength: therefore we consider $\field{i} \sim \mathcal{U} (-0.5, 0.5)$ and $\coupling{i}{j} \sim \mathcal{U} (-\beta,\beta)$ . 
For every $\beta \in [0,5]$ we execute $L=100$ experiments and present the averaged results in Figure~\ref{fig:sbp:results} (b). 
Note that we restrict the results to $\beta \leq 2$ on the grid graph because BP did only converge sporadically for models with stronger couplings.
SBP requires only slightly more iterations than BP and fewer than $\BPD$, even though we compare only to  models where BP (or $\BPD$) converged.  
The benefits of SBP become increasingly evident as the coupling strength increases. Again SBP (blue) significantly  outperforms BP\textsuperscript{$\circ$} (magenta) and $\BPD^\circ$ (green) on all graphs with respect to accuracy. 

\newsection{Theoretical Analysis of Self-Guided Belief Propagation}{selfguided:theory}
Here we present some more formal arguments and discuss the properties of SBP to understand under which conditions the algorithm (presented in Section~\ref{sec:selfguided:SBP}) can be expected to perform well.
We only present the most important Theorems and their implications below and defer the longer proofs to Appendix~\ref{sec:selfguided:proofs}.

\newsubsection{Definitions}{selfguided:theory:def}

First, we fix our notation: 
we denote the pseudomarginals, corresponding to local minima of the Bethe free energy, of $\ugm[k]$ by $\pseudomarginalsMinLocal(\scaling_k)$, and, with slight abuse of notation, we refer to the corresponding stationary point of the Bethe free energy by $\FBLocalMin{m}(\scaling_k) = \FB(\pseudomarginalsMinLocal(\scaling_k))$.
Note that the superscript $m$ accounts for the fact that we are only looking at the local minima of $\FB$ (cf. Section~\ref{sec:bp:variational:correspondence}).

It is beneficial to study the behavior of SBP as $K$ tends towards infinity. Therefore we consider the unit interval $\scaling \in [0,1]$ to be the compact support of the functions $\FB(\scaling)$ and $\pseudomarginals(\scaling)$.
SBP is inspired by the idea to proceed along a so-called \emph{solution path} as $\scaling$ increases from zero to one in order to obtain the marginal distributions for the model of interest.
Therefore, we shall consider a continuous homotopy function $\homotopy(\setOfMessages,\scaling): \mathbb{R}^{|\setOfMessages|+1} \rightarrow \mathbb{R}^{|\setOfMessages|}$ that is defined by
\begin{align}
	\homotopy(\setOfMessages,\scaling) = \setOfMessages - \BP(\setOfMessages) \quad \text{where} \quad \setOfPotentials = \setOfPotentials(\scaling).\label{eq:homotopy_sbp}
\end{align}
Then, a \emph{solution path} 
\begin{align}
	c(\scaling): \homotopy(\setOfMessages,\scaling) =0
\end{align}
exists that (i) has a start point $c(\scaling=0) =\setOfMessages: \homotopy(\setOfMessages,\scaling=0) = 0$, (ii) an endpoint $c(\scaling=1) =\setOfMessages: \homotopy(\setOfMessages,\scaling=1) = 0$, and (iii) is continuous over $\scaling \in [0,1]$, i.e., it connects the start- with the endpoint.
SBP then proceeds along some solution path from a given start- to  its endpoint. 
Note that the solution path $c(\scaling)$ is defined along the solutions to the fixed point equations and thus it implicitly defines the pseudomarginals $\pseudomarginalsMinLocal(\scaling)$ by~\eqref{eq:marginals:single} and~\eqref{eq:marginals:pw}.
In particular, we refer to the start- and endpoint by $\pseudomarginalsMinLocal(\scaling=0)$ and $\pseudomarginalsMinLocal(\scaling=1)$ respectively.

The following example in Figure~\ref{fig:solution_path} illustrates the fixed point evolution for a grid graph with attractive couplings.
This example exhibits a unique solution path according to our definition; note, however,
that a second curve exists, which lacks a start point and is therefore of no 
relevance for any method that proceeds along a solution path defined by the homotopy in~\eqref{eq:homotopy_sbp}.

%
\begin{figure}[t]
	\centering
	\includegraphics[width=0.5\linewidth]{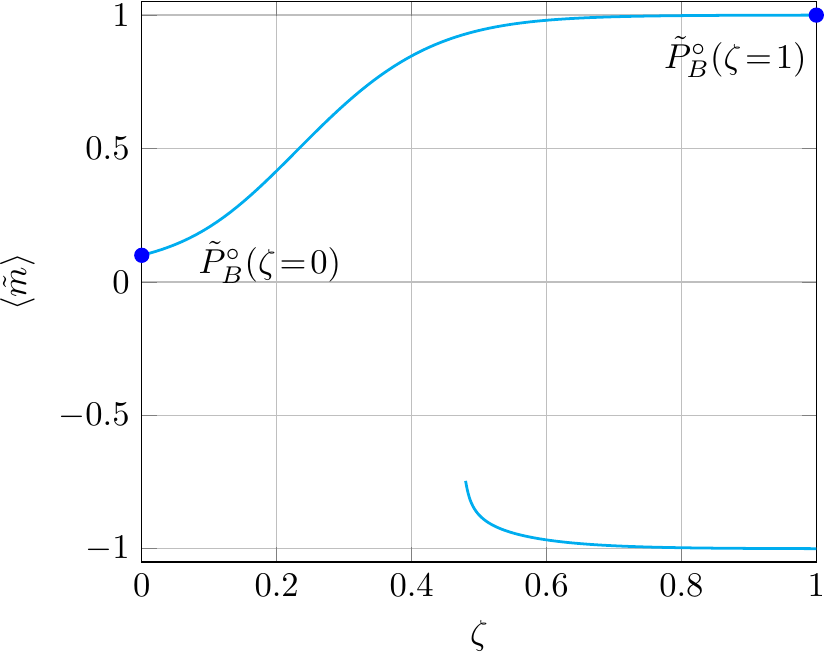}
	\caption{ Solution path (cf. Section~\ref{sec:selfguided:theory:def}) for a grid graph; start- and endpoint are depicted by blue points. Note how Propositions~\ref{prop:properties}.1~-~\ref{prop:properties}.2 are fulfilled.}
	\label{fig:solution_path}
\end{figure}

\newsubsection{Properties of Self-Guided Belief Propagation}{selfguided:properties}
The following proposition summarizes the main properties of the solution path that is specified and followed by SBP.
\begin{prop}[Properties for attractive and general models] $ $\par
	\begin{enumerate} 
		\item [(1)] BP has a unique fixed point $\setOfMessages[\circ]_{\iteration{1}}$ for $\scaling_{1}=0$, so that SBP has a unique start point $\startp$. (cf. Theorem~\ref{thm:init})
		\item [(2)] A smooth (i.e., continuous) solution path originates from the start point $\startp$. \\(cf. Theorem~\ref{thm:smooth})
		\item [(3)] SBP efficiently proceeds along this (unique) solution path. \\(cf. Theorem~\ref{thm:complexity})
	\end{enumerate}
	\label{prop:properties}
\end{prop}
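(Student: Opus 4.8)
\textbf{Proof plan for Proposition~\ref{prop:properties}.}

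The plan is to verify the three claims in turn, each of which corresponds to a separate theorem deferred to the appendix, so the real work is to set up the right framework and identify what machinery each part needs.

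\emph{Part (1): unique start point.} First I would observe that at $\scaling_1 = 0$ the scaled pairwise potentials become $\pairwiseSBP{i}{j}{1} = \pairwise{x}{i}{j}^{0} = 1$, i.e., all pairwise potentials are trivial (constant), so that $\ugm[1]$ factorizes completely over the nodes: $\joint = \frac{1}{\partitionFunction}\prod_{\RV{i}\in\setOfNodes}\localShort{i}$. The random variables are therefore independent and the Bethe approximation is exact on this model. It then suffices to check that the BP update equations~\eqref{eq:update} on $\ugm[1]$ decouple: with $\pairwiseShort{i}{j}\equiv 1$, the update $\msg[n+1]{i}{j}{}\propto \sum_{\RVval{i}}\localShort{i}\prod_{\RV{k}\in\{\neighbors{i}\backslash\RV{j}\}}\msg[n]{k}{i}{}$ no longer depends on $\RVval{j}$, hence after normalization every message is the uniform distribution $\frac{1}{|\sampleSpace{X}|}$ already after one iteration. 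This is the unique fixed point (one checks the Jacobian~\eqref{eq:jacobian-long}, or simply notes convexity of $\FB$ when all couplings vanish, cf.~\cite{heskes2004uniqueness}), and via~\eqref{eq:marginals:single}--\eqref{eq:marginals:pw} it induces a uniquely defined $\startp = \pseudomarginalsMinLocal(\scaling=0)$. So the start point exists and is unique.

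\emph{Part (2): a smooth solution path emanates from the start point.} Here the natural tool is the implicit function theorem applied to the homotopy $\homotopy(\setOfMessages,\scaling) = \setOfMessages - \BP(\setOfMessages)$ of~\eqref{eq:homotopy_sbp}. The map $\BP$ depends smoothly (indeed analytically) on both $\setOfMessages$ and $\scaling$, since the scaling in~\eqref{eq:scaling:potentials} is through $\pairwise{x}{i}{j}^{\scaling_k} = \exp(\coupling{i}{j}\scaling_k x_i x_j)$, which is smooth in $\scaling$. At $\scaling = 0$ the Jacobian $\frac{\partial\homotopy}{\partial\setOfMessages} = I - \Jacobian[\scaling=0]$ is invertible because, as argued in Part (1), all eigenvalues of $\Jacobian[\scaling=0]$ vanish (every row of~\eqref{eq:jacobian-long} is zero when $\cavityField{i}{j}$-dependence is killed by trivial couplings — more carefully, $\tanh(J_{ij}\cdot 0) = 0$), so $1 \notin \Spectrum{\Jacobian[\scaling=0]}$. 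The implicit function theorem then yields a locally unique $C^\infty$ curve $c(\scaling)$ with $c(0) = \setOfMessages[\circ]_{\iteration{1}}$ and $\homotopy(c(\scaling),\scaling) = 0$. This curve extends as long as the Jacobian $I - \Jacobian$ stays nonsingular along it; generically (for almost all parameter choices, echoing the probability-one argument for well-behaved paths in~\cite[Lemma~7.1.3]{sommese2005numerical}) it continues without turning points, and the path is then continuous on all of $[0,1]$, reaching the endpoint $\terminalp$. I would also record, as in Figure~\ref{fig:solution_path}, that a second branch of solutions may exist but has no preimage at $\scaling = 0$ and is therefore irrelevant to SBP; only the branch through the unique start point is followed.

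\emph{Part (3): efficient tracking.} This is the predictor--corrector argument familiar from homotopy continuation (cf. Section~\ref{sec:solving:nphc} and~\cite{allgower2003numerical}). Having established in Part (2) that $c(\scaling)$ is smooth with a nonsingular Jacobian (off a measure-zero set), the plan is: discretize $[0,1]$ into $K$ steps $\{\scaling_k\}$; at step $k$ use the previous fixed point (or a spline extrapolation of the last few) as the predictor $\setOfMessages[1]_{\iteration{k}}$; then run BP on $\ugm[k]$ as the corrector, which converges because the initialization lies in the basin of attraction of the nearby fixed point (the step size being small); by adaptive step control (Algorithm~\ref{alg:adaptivestepsize}) the total number of BP iterations stays bounded, giving the complexity statement of Theorem~\ref{thm:complexity}. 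When the Jacobian develops an eigenvalue crossing the unit circle before $\scaling = 1$ (the fixed point destabilizes), BP can no longer correct and SBP halts, returning $\setOfMessages[\circ]_{\iteration{k-1}}$ — this caveat is already built into the algorithm and into the statement via Figure~\ref{fig:example}.

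\emph{Main obstacle.} The delicate point is Part (2): guaranteeing that the solution path extends all the way to $\scaling = 1$ rather than hitting a fold (turning point) where $I - \Jacobian$ becomes singular. The honest resolution is that this holds generically — for all parameter values outside a measure-zero set, exactly as in the NPHC setting — and that at an isolated fold the path can still be continued by arclength parametrization rather than by $\scaling$; but a fold can genuinely cause SBP to lose the fixed point short of $\scaling = 1$, which is precisely why the algorithm is stated with the fallback to the last stable solution and why the strong optimality guarantee is proved only for attractive models with unidirectional local potentials (Theorems~\ref{thm:attractive}--\ref{thm:attractive_vanishing}), where monotonicity of the fixed point in $\scaling$ rules out folds. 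I would therefore phrase Proposition~\ref{prop:properties}.2 carefully — ``a smooth solution path originates from the start point'' — asserting local smoothness and generic global continuation, and defer the unconditional statements to the attractive case.
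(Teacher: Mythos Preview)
Your approach is correct and Parts~(1) and~(3) are essentially what the paper does. Part~(2), however, is handled differently.

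For continuity of the solution path, you apply the implicit function theorem directly to the homotopy $\homotopy(\setOfMessages,\scaling)=\setOfMessages-\BP(\setOfMessages)$: you check that $I-\Jacobian$ is nonsingular at $\scaling=0$ (since $\tanh(J_{ij}\cdot 0)=0$ makes every entry of $\Jacobian$ vanish) and conclude that a locally unique smooth branch $c(\scaling)$ exists, extending as long as nonsingularity persists. The paper instead works on the variational side: it differentiates the Bethe free energy with respect to $\scaling$, obtaining $\partial\FB/\partial\scaling=-\sum_{(i,j)}J_{ij}\chi_{ij}$, a finite sum of finite terms, so $\FB(\scaling)$ is analytic; it then argues that the minimum emerging from $\FBGlobalMin(\scaling=0)$ varies continuously and transfers this to the BP fixed points via the one-to-one correspondence. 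Your route is the more standard dynamical-systems argument and is arguably more self-contained and rigorous, since the paper's step from ``$\FB(\scaling)$ is analytic in $\scaling$'' to ``the minimizer varies continuously'' is exactly the kind of statement that needs the implicit function theorem anyway. The paper's route, on the other hand, makes the link to the energy landscape explicit and lets it cite a specific obstruction---pitchfork bifurcations occur only for $\field{i}=0$---rather than appealing to genericity as you do. Both treatments are honest about the possibility of folds and defer the unconditional global statement to the attractive, unidirectional-field case.
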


\begin{thm}[Proposition~\ref{prop:properties}.1]\label{thm:init}
	A unique solution exists for $\scaling=0$, i.e., a single start point $\startp$ exists, and BP is guaranteed to converge. Moreover, this start point is exact, i.e.,  $\singleApprox{i}(\scaling=0) = \singleExact{i}(\scaling=0)$ for all $\RV{i} \in \setOfNodes$.
\end{thm}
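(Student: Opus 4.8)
The plan is to prove Theorem~\ref{thm:init} in three easy pieces, all of which follow almost immediately from the reparameterization of BP introduced in Section~\ref{sec:stabilityBP:reformulation} together with the definition of the scaled potentials in~\eqref{eq:scaling:potentials}. First I would observe that at $\scaling_1 = 0$ the scaled pairwise potentials become $\pairwiseSBP{i}{j}{1} = \exp(\coupling{i}{j}\cdot 0 \cdot x_i x_j) = 1$ for all edges and all states, i.e., every pairwise potential is trivial. Hence the model $\ugm[1]$ is the product model $\pmf{\setOfNodes}(\RVvalSet{}) = \frac{1}{\partitionFunction}\prod_{\RV{i}\in\setOfNodes}\localShort{i}$ in which all random variables are independent and, in particular, the graph is effectively edgeless.

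Next I would verify the claimed properties. For \emph{uniqueness} and \emph{convergence}, the cleanest route is the cavity-field recursion~\eqref{eq:reparam}: with $\coupling{i}{j}\scaling_1 = 0$ we have $\tanh(\coupling{i}{j}\scaling_1) = 0$, so~\eqref{eq:reparam} collapses to $\tanh(\msgReparam[n+1]{i}{j}) = 0$, i.e., $\msgReparam[\circ]{i}{j} = 0$ for every directed edge, \emph{independently of the incoming messages and of the iteration index $n$}. Equivalently, in the original parameterization $\msg[\circ]{i}{j}{} = \tfrac12$ for all $x_j$. This is the unique solution of the fixed point equations~\eqref{eq:SetOfEq} for $\ugm[1]$, and BP reaches it after a single update regardless of initialization; this establishes Proposition~\ref{prop:properties}.1 and yields the unique start point $\startp = \pseudomarginalsMinLocal(\scaling=0)$. (Alternatively one can invoke that an edgeless graph is trivially a tree, so the general guarantee that BP is exact and unique on trees applies; I would state the cavity-field argument as the primary one since it is self-contained and also pins down the exact message values.)

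For \emph{exactness} of the start point, I would plug the fixed point messages $\msg[\circ]{i}{j}{} = \tfrac12$ into the BP marginal formula~\eqref{eq:marginals:single}, giving $\singleApprox{i}(\scaling=0) = \frac{1}{Z_i}\localShort{i}\prod_{\RV{k}\in\neighbors{i}}\tfrac12 = \frac{1}{Z_i'}\localShort{i}$, which is exactly the true marginal of the product distribution $\pmf{\setOfNodes}$ of $\ugm[1]$ obtained by summing~\eqref{eq:joint_pairwise} (with trivial pairwise factors) over all $\RV{k}\neq\RV{i}$. Hence $\singleApprox{i}(\scaling=0) = \singleExact{i}(\scaling=0)$ for all $\RV{i}\in\setOfNodes$. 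One could equally argue via the Bethe free energy: when all pairwise potentials are trivial, $\EB$ and $\SB$ in~\eqref{eq:energy}--\eqref{eq:entropy} reduce (up to the $\nodeDegree{i}-1$ bookkeeping terms, which cancel the double-counted singleton entropies against the pairwise entropies) to exactly the Gibbs free energy of the product model, so $\FB = \FGibbs{}$ and the unique minimizer coincides with the exact distribution.

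I do not anticipate a genuine obstacle here; the only point requiring a little care is the bookkeeping in the exactness argument — making sure that the constant $Z_i$ in~\eqref{eq:marginals:single} is handled correctly (the product $\prod_{\RV{k}\in\neighbors{i}}\tfrac12 = 2^{-\nodeDegree{i}}$ is state-independent and so is absorbed into normalization), and, if one takes the free-energy route instead, that the degree-weighted singleton-entropy terms in~\eqref{eq:entropy} are properly tracked so that $\FB$ really does collapse to $\FGibbs{}$ on an edgeless graph. Both are routine. I would present the cavity-field computation as the main line of proof for uniqueness/convergence and the direct substitution into~\eqref{eq:marginals:single} for exactness, mentioning the tree/Bethe-collapse viewpoints as remarks.
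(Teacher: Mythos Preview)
Your proposal is correct. The route differs slightly from the paper's: you go directly through the cavity-field recursion~\eqref{eq:reparam}, observe that $\tanh(\coupling{i}{j}\scaling_1)=0$ forces $\msgReparam[\circ]{i}{j}=0$ (equivalently $\msg[\circ]{i}{j}{}=\tfrac12$) after one update, and then substitute into~\eqref{eq:marginals:single}. The paper instead starts from the \emph{pairwise} marginals~\eqref{eq:marginals:pw}, marginalizes over $x_j$, and uses the fact that with $\pairwiseSBP{i}{j}{1}=1$ the inner sum over $x_j$ reproduces the BP update~\eqref{eq:update} for $\msg[\circ]{j}{i}{}$; this recovers~\eqref{eq:marginals:single} and yields $\pmfApprox{\RV{i}}(+1)=e^{\field{i}}/(e^{\field{i}}+e^{-\field{i}})$ directly. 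Your argument is more explicit about uniqueness and one-step convergence (the paper leaves these somewhat implicit), whereas the paper's argument highlights the local-consistency structure between singleton and pairwise pseudomarginals. Either is fine; your cavity-field version is the more self-contained of the two.
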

\begin{proof}
	First, let us obtain the singleton marginals by summing over the pairwise marginals such that $\pmfApprox{\RV{i}}(x_i) = \sum_{x_j \in \sampleSpace{X}} \pairwiseApprox{i}{j}(x_i,x_j)$.
	For $\scaling_1 = 0$, that is for  $\pairwiseSBP{i}{j}{\iteration{1}} = 1$ it follows that marginalizing over the pairwise marginals (cf.~\eqref{eq:marginals:pw}) equates to 
	\begin{align}
		\pmfApprox{\RV{i}}(x_i)= \localShort{i}\!\!\! \prod_{\RV{k} \in \{\neighbors{i} \backslash \RV{j}\}} \msg[\circ]{k}{i}{} 
		\cdot \sum_{x_j \in \sampleSpace{X}} \localShort{j} \!\!\!\prod_{\RV{l} \in \{\neighbors{j} \backslash \RV{i}\}} \msg[\circ]{l}{j}{}. \label{eq:init} 
	\end{align}
	Note that according to~\eqref{eq:update} $\msg[\circ]{j}{i}{}$ is equivalent to the second part of~\eqref{eq:init} so that
	\begin{align}
		\pmfApprox{\RV{i}}(x_i) =  \localShort{i} \prod_{\RV{k} \in \neighbors{i}} \msg[\circ]{k}{i}{},
	\end{align}
	which equals~\eqref{eq:marginals:single}. It follows that $\pmfApprox{\RV{i}}(+1)  =  e^{\field{i}}/(e^{\field{i}}+e^{-\field{i}}) = \marginals{\RV{i}}(+1) $.
\end{proof}
Note that Theorem~\ref{thm:init} concurs with the sandwich-bound~\cite[Th.4]{weller2013bethe} that reduces to  
$\marginals{\RV{i}}(+1) = {\field{i}}/(e^{\field{i}}+e^{-\field{i}}) = \pmfApprox{\RV{i}}(+1)$ for $\coupling{i}{j}=0$. 
Theorem~\ref{thm:init} thus reduces the problem of initializing SBP to computing $\setOfMessages[\circ]_{1}$, which can be done in linear time.

\begin{thm}[Proposition~\ref{prop:properties}.2]\label{thm:smooth}
	Let $\pseudomarginalsMinLocal(\scaling)$ be the 
	pseudomarginals that are defined along the unique solution path that originates from $\startp$. Then, $\pseudomarginalsMinLocal(\scaling)$ and the associated stationary points $\FBLocalMin{m}(\scaling)$ are continuous on their compact support $\scaling \in [0,1]$.
\end{thm}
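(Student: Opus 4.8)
The plan is to prove continuity of the solution path (and the associated stationary points/pseudomarginals) via the implicit function theorem applied to the homotopy map $\homotopy(\setOfMessages,\scaling) = \setOfMessages - \BP(\setOfMessages)$ from~\eqref{eq:homotopy_sbp}, supplemented by a compactness argument to handle points where the Jacobian is singular. First I would observe that $\BP$, viewed through the reparameterized messages of Section~\ref{sec:stabilityBP:reformulation}, is a composition of $\tanh$, $\arctanh$, sums, and the exponential scaling $\pairwiseSBP{i}{j}{k} = \pairwise{x}{i}{j}^{\scaling}$; hence $\homotopy$ is jointly real-analytic in $(\setOfMessages,\scaling)$ on the relevant open domain (messages are bounded away from $0$ and $1$ by Lemma~\ref{lm:msg} and positivity of potentials). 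By Theorem~\ref{thm:init} the start point $\startp$ is unique and exact, so the path has a well-defined origin at $\scaling=0$.

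The core step is: wherever the partial Jacobian $\partial \homotopy / \partial \setOfMessages = \mathbf{I} - \Jacobian$ is nonsingular (equivalently, $\Spectrum{\Jacobian}$ does not contain the eigenvalue $1$ — this is exactly the ``well-behaved'' regime discussed around Figure~\ref{fig:overview_stability_eigenvalues} and used in the linearization of Section~\ref{sec:stabilityBP:linearization}), the implicit function theorem yields a locally unique $C^\infty$ (indeed analytic) branch $\setOfMessages = c(\scaling)$ solving $\homotopy(c(\scaling),\scaling)=0$. Since the pseudomarginals are obtained from the fixed-point messages by the smooth algebraic expressions~\eqref{eq:marginals:single} and~\eqref{eq:marginals:pw}, $\pseudomarginalsMinLocal(\scaling)$ inherits this smoothness, and $\FBLocalMin{m}(\scaling) = \FB(\pseudomarginalsMinLocal(\scaling))$ is smooth because $\FB$ in~\eqref{eq:f_bethe} is a smooth function of the pseudomarginals on the interior of $\LPolytope$. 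Continuity on the full compact interval $[0,1]$ then follows by a continuation/connectedness argument: the set of $\scaling$ for which the branch extends continuously is open (by the IFT step) and nonempty (it contains $\scaling=0$); one shows it is also closed by taking limits along the path — the messages live in the compact set $[0,1]^{|\setOfMessages|}$, so any sequence $\scaling_n \to \scaling_*$ has a convergent subsequence of message-values whose limit again solves $\homotopy(\cdot,\scaling_*)=0$ by continuity of $\homotopy$, giving a limit point on the variety; invoking that the path under consideration is the one SBP actually tracks (the branch with a start point, cf. the remark accompanying Figure~\ref{fig:solution_path}), this limit point is the continuous extension.

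The main obstacle I anticipate is the treatment of the critical values $\scaling = \scaling_C$ where $\Radius{\Jacobian}$ crosses $1$ and $\mathbf{I}-\Jacobian$ becomes singular — precisely the phase-transition points where, per Section~\ref{sec:stabilityBP:theoretical_analysis}, new fixed points bifurcate off the path. At such points the IFT fails and the path may cease to be a graph over $\scaling$ (it can develop a vertical tangent or a fold). Here I would argue as follows: for attractive models the relevant branch is the one continuously deformed from $\startp$, and — as noted after Theorem~\ref{thm:external-field} — the dominant eigenvalue approaching $1$ from below corresponds to a saddle-node where an \emph{unstable} fixed point and a second stable one are born, while the tracked branch itself remains stable and hence $1 \notin \Spectrum{\Jacobian}$ along it up to $\scaling=1$; so for attractive models the singular case does not arise on the path. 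For general models, SBP by construction (Section~\ref{sec:selfguided:SBP}) stops at the last stable fixed point $\setOfMessages[\circ]_{\iteration{k-1}}$ if the path loses stability, so the statement should be read as continuity on $[0,\scaling_{\max}]$ where $\scaling_{\max}$ is the supremum of scalings for which a stable branch exists; continuity at the endpoint $\scaling_{\max}$ follows again by the compactness limit argument. I would make this scope explicit in the proof and, if a cleaner uniform statement is wanted, invoke the generic-smoothness guarantee of homotopy continuation (cf.~\cite[Lemma 7.1.3]{sommese2005numerical} and the $\gamma$-trick in~\eqref{eq:homotopy}) to assert that for all but finitely many admissible perturbations the path is smooth and fold-free throughout $[0,1]$.
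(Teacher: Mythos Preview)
Your approach is correct but takes a genuinely different route from the paper. The paper's proof works on the variational side: it computes the partial derivative
\[
\frac{\partial \FB(\scaling)}{\partial \scaling} = -\sum_{\edge{i}{j}\in\setOfEdges} \coupling{i}{j}\,\correlation{i}{j},
\]
observes that this is a finite sum of finite terms (hence $\FB$ is analytic in $\scaling$ for any finite model), and then asserts that the stationary point $\FBLocalMin{m}(\scaling)$ emerging from the unique start point varies continuously. Bifurcations are dispatched in one line by citing~\cite{pitkow2011learning} that pitchfork bifurcations can only occur when $\field{i}=0$, which is handled separately by Theorem~\ref{thm:attractive_vanishing}. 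In contrast, you work on the fixed-point side, applying the implicit function theorem directly to $\homotopy(\setOfMessages,\scaling)=\setOfMessages-\BP(\setOfMessages)$ and supplementing it with an explicit compactness/closedness argument to extend across potential singularities of $\mathbf{I}-\Jacobian$.

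Your argument is more self-contained and more explicit about the mechanism (nonsingularity of $\mathbf{I}-\Jacobian$ along the stable branch, limits via compactness of the message space), whereas the paper's argument is shorter and leans on the physics fact that finite systems admit no true phase transitions, together with an external reference for the bifurcation structure. What the paper's route buys is brevity and a direct link to the free-energy landscape; what yours buys is a rigorous treatment of the folds and a clear statement of scope (continuity on $[0,\scaling_{\max}]$ when the stable branch terminates), which the paper leaves implicit.
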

\begin{proof}
	First, we show that the Bethe free energy $\FB(\scaling)$ itself is an analytic function. Consider~\eqref{eq:f_bethe} with the pairwise potentials defined by~\eqref{eq:scaling:potentials}. Then, the derivative with respect to $\scaling$ is given by
	\begin{align}
		\frac{\partial \FB(\scaling)}{\partial \scaling} &=\!\! -\frac{\partial }{\partial \scaling}\!\!\sum_{\edge{i}{j} \in \setOfEdges} \sum_{x_i,x_j} \!\!\pmfApprox{\RV{i},\RV{j}}(x_i,x_j) \ln \pairwiseShort{i}{j}  \nonumber  \\
		&= \!\!-\frac{\partial}{\partial \scaling}\!\!\sum_{\edge{i}{j} \in \setOfEdges} \sum_{x_i,x_j} \!\! \pmfApprox{\RV{i},\RV{j}}(x_i,x_j) \cdot \scaling \cdot \coupling{i}{j} \cdot x_i x_j\nonumber\\
		& = - \sum_{\edge{i}{j} \in \setOfEdges} \coupling{i}{j} \cdot \correlation{i}{j}.  \label{eq:continuouslyDifferentiable}
	\end{align}
	As an immediate consequence, we observe that $\FB(\scaling)$ is continuously differentiable\footnote{Strictly speaking $\FB(\scaling)$ is an analytic function.} as \eqref{eq:continuouslyDifferentiable} is a finite sum over finite terms.\footnote{This is in accordance with the fact that true phase transitions (singularities in the derivative of the free energy) can occur only in the thermodynamic limit, where \eqref{eq:continuouslyDifferentiable} is an infinite sum that equates to infinity.}
	
	We specifically consider the minimum $\FBLocalMin{m}(\scaling)$ that emerges from the global minimum $\FBLocalMin{m}(\scaling=0) = \FBGlobalMin(\scaling=0)$; this start point is unique by Theorem~\ref{thm:init}. 
	It follows by~\eqref{eq:continuouslyDifferentiable} that 
	$\FBLocalMin{m}(\scaling)$ varies in a continuous fashion along the unique solution path for $\scaling  \in [0,1]$.
	Further note that stationary points are in a one-to-one correspondence with fixed points of BP which completes the proof.
	
	Further note that the set of stationary points is finite (cf. Section~\ref{sec:bp:variational:energy_landscape})\footnote{This is also required for the one-step replica symmetry breaking assumption~\cite[Section19]{mezard2009}.} and that pitchfork bifurcations may only  occur if $\field{i} = 0$~\cite{pitkow2011learning}, in which case SBP obtains the exact solution (cf. Theorem~\ref{thm:attractive}).
\end{proof}

Theorem~\ref{thm:smooth} substantiates the claim that a smooth solution path emerges from the simple problem.

\begin{thm}[Proposition~\ref{prop:properties}.3]\label{thm:complexity}
	There exists some scaling factor $\scaling_k \leq 1$ for which  SBP converges to $\pseudomarginalsMinLocal(\scaling_k)$ in $\bigO(K N_{BP})$.
\end{thm}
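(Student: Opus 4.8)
The plan is to read the running time directly off the two-level loop structure of SBP (Algorithm~\ref{alg:sbp}) and to establish the ``converges to $\pseudomarginalsMinLocal(\scaling_k)$'' part by an induction along the scaling sequence that invokes the results already proved in Theorems~\ref{thm:init} and~\ref{thm:smooth}. First I would fix the graph $\graph=(\setOfNodes,\setOfEdges)$, so that one sweep of the BP update~\eqref{eq:update} costs a constant $c_{\graph}=\bigO(|\setOfEdges|\,|\sampleSpace{X}|^2)$, and split the total work of SBP into the contributions of the outer while-loop iterations indexed by $k$.

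For the runtime bound I would argue as follows. The scaling sequence starts at $\scaling_1=0$, is strictly increasing, terminates once $\scaling>1$, and -- whether the step size is held fixed at $step_{init}$ or enlarged by the adaptive controller (Algorithm~\ref{alg:adaptivestepsize}), which only ever \emph{increases} the step -- contains at most $K$ entries by construction; the loop is moreover exited early by the \texttt{break} whenever BP fails. Hence the outer loop performs at most $K$ passes. Each pass consists of (i) one call to \textsf{ScalePotentials}, costing $\bigO(|\setOfEdges|)$; (ii) one call to $\BP$ with the hard iteration cap $N_{BP}$, costing at most $N_{BP}\,c_{\graph}$; (iii) the convergence test and, if applicable, the adaptive step-size routine, whose inner loop increments $l$ only while $\scaling_k+l\cdot step_{init}\le 1$ and hence runs $\bigO(K)$ times, each time evaluating an MSE at cost $\bigO(|\setOfNodes|)$; and (iv) the spline extrapolation \textsf{ExtrapolateMsg} over the last $l\le 4$ stored fixed points, costing $\bigO(|\setOfEdges|)$. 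Under the standing choice $K\ll N_{BP}$ (the paper uses $K\le 10$, $N_{BP}=10^3$) the overheads (i), (iii), (iv) are dominated by (ii), so the total cost is $\bigO\bigl(K\,N_{BP}\,c_{\graph}\bigr)=\bigO(K\,N_{BP})$ for fixed $\graph$, which is the claimed bound. This part is pure bookkeeping and carries through regardless of whether BP converges at every step.

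For the ``converges to $\pseudomarginalsMinLocal(\scaling_k)$'' claim I would induct on $k$. The base case is Theorem~\ref{thm:init}: for $\scaling_1=0$ the model has independent variables, BP has the unique fixed point $\setOfMessages[\circ]_{\iteration{1}}$ and converges to it, so SBP's state after step $1$ is the unique start point $\startp$ (which is moreover exact). For the inductive step, suppose SBP has reached $\pseudomarginalsMinLocal(\scaling_{k-1})$, i.e.\ the point of the unique solution path of the homotopy~\eqref{eq:homotopy_sbp} at $\scaling_{k-1}$, which exists and varies continuously by Theorem~\ref{thm:smooth}. If the fixed point $\setOfMessages[\circ]_{\iteration{k}}$ lying on this path at $\scaling_k$ is stable, then by the linearization analysis of Section~\ref{sec:stabilityBP:linearization} the map $\BP$ is a local contraction around it (its Jacobian has spectral radius strictly below one); since continuity of the path forces $\setOfMessages[\circ]_{\iteration{k}}\to\setOfMessages[\circ]_{\iteration{k-1}}$ and hence the extrapolated initialization $\setOfMessages[1]_{\iteration{k}}\to\setOfMessages[\circ]_{\iteration{k}}$ as $\scaling_k-\scaling_{k-1}\to 0$, there is a choice of $K$ (equivalently, a fine enough step size) for which $\setOfMessages[1]_{\iteration{k}}$ lies in the basin of attraction of $\setOfMessages[\circ]_{\iteration{k}}$ and BP reaches it geometrically within $N_{BP}$ iterations; thus SBP's state after step $k$ is $\pseudomarginalsMinLocal(\scaling_k)$. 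If instead the path point has destabilized at some step $k$, BP fails to converge within $N_{BP}$ iterations, the \texttt{break} fires, and SBP returns its last state $\setOfMessages[\circ]_{\iteration{k-1}}$, i.e.\ $\pseudomarginalsMinLocal(\scaling_{k-1})$ with $\scaling_{k-1}<1$. In either case the output equals $\pseudomarginalsMinLocal(\scaling_k)$ for the scaling value $\scaling_k\le 1$ at which SBP terminates, proving the statement.

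The main obstacle is the quantitative content of the inductive step: stability of a fixed point does not a priori guarantee that BP reaches it in at most $N_{BP}$ iterations from the extrapolated start, because the local convergence rate is governed by the spectral radius of the Jacobian, which tends to one near the bifurcations where new fixed points appear (cf.\ Theorem~\ref{thm:ev-scaling} and Figures~\ref{fig:spectrum:2x2}--\ref{fig:spectrum:toric}). I would handle this by phrasing the guarantee existentially in $K$ -- for any stable portion of the path there is a step size fine enough that a single Newton-like correction by BP stays inside the contraction region -- while stressing that the $\bigO(K\,N_{BP})$ \emph{runtime} bound itself needs none of this: it follows unconditionally from the hard cap $N_{BP}$ per BP call and the at-most-$K$ outer passes. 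A secondary subtlety worth checking explicitly is that the adaptive step-size controller cannot inflate the number of outer passes beyond $K$; this is immediate, since it only enlarges steps and $\scaling$ is confined to $[0,1]$.
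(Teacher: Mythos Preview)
Your proposal is correct and follows the same approach as the paper: the $\bigO(K\,N_{BP})$ bound comes from at most $K$ outer steps, each capped at $N_{BP}$ BP iterations, and the existential $\scaling_k\le 1$ is simply the last value at which BP still converged before the \texttt{break}. The paper's own proof is only a few sentences and does not spell out the inductive tracking argument or the overhead bookkeeping you provide; your treatment is strictly more detailed than what appears there.
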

\begin{proof}[Proof of Theorem~\ref{thm:complexity}]
	SBP increases $\scaling_k$ as long as BP converges in fewer than $N_{BP}$ iterations, and stops otherwise. Consequently, BP 
	corrects the accuracy of the fixed point for 
	each value $\scaling_k$ within a bounded number of iterations. The runtime of SBP is further determined by the choice of $K$, i.e., the step-size (cf. Section~\ref{sec:selfguided:practical}).
	Assume that SBP converges for $\scaling_k$, then it does so in $\bigO(K \cdot N_{BP})$.
\end{proof}
SBP is consequently capable of efficiently tracking the fixed point that emerges as $\scaling$ increases and requires $KN_{BP}$ iterations at most.
SBP may, however, only converge to a surrogate model for $\scaling_k < 1$ and is not guaranteed to obtain the pseudomarginals of the desired problem.
One can characterize this error by computing a bound on  $|\FBLocalMin{m}(\scaling_k) - \FBLocalMin{m}(\scaling_K)|$ given the difference between $\setOfPotentials_{k}$ and $\setOfPotentials_{K}$ (cf.~\cite[Theorem 16]{ihler2005loopy}).
\begin{cor}
	SBP obtains the pseudomarginals of the desired problem if and only if the endpoint $\terminalp$ is stable, i.e., if $\terminalp \in \setOfStableSol$. 
	This is an immediate consequence of the fact that the convergence properties can only degrade along a given solution path (cf. Section~\ref{sec:stabilityBP:theoretical_analysis}).
\end{cor}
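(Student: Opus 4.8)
The plan is to prove both directions of the ``if and only if'' by leveraging two facts already established in the excerpt: (i) Theorem~\ref{thm:smooth}, which guarantees that a unique smooth solution path $c(\scaling)$ originates from the start point $\startp$ and that the associated pseudomarginals $\pseudomarginalsMinLocal(\scaling)$ vary continuously on $\scaling \in [0,1]$; and (ii) the monotonicity result from Section~\ref{sec:stabilityBP:theoretical_analysis}, namely that along a fixed solution path the spectral radius of the Jacobian $\Radius{\Jacobian}$ can only increase (the convergence properties ``can only degrade''). The key object throughout is the terminal point $\terminalp$ of this path, which by construction coincides with a stationary point of $\FB$ for the target model $\ugm[K] = \ugm$.

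For the ``if'' direction, I would assume $\terminalp \in \setOfStableSol$, i.e., the terminal fixed point is stable, so $\Radius{\Jacobian} < 1$ at $\scaling = 1$. By the monotonicity along the path, $\Radius{\Jacobian}$ is below one at every $\scaling_k \leq 1$ visited by SBP, hence BP converges at every step $k$ of the homotopy within a bounded number of iterations (invoking the linearization-based stability criterion of Section~\ref{sec:solving:stability}). Therefore SBP never stops prematurely: the \texttt{while} loop in Algorithm~\ref{alg:sbp} runs until $\scaling_K = 1$, and the returned messages $\setOfMessages[\circ] = \setOfMessages[\circ]_{\iteration{K}}$ are exactly the fixed point messages of the target model. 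By Theorem~\ref{thm:init} the path starts at a well-defined point, and by continuity (Theorem~\ref{thm:smooth}) the path reached at $\scaling = 1$ is $\terminalp$; hence $\decisionP(\setOfMessages[\circ]) = \terminalp$, the pseudomarginals of the desired problem.

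For the ``only if'' direction (contrapositive), I would assume $\terminalp \notin \setOfStableSol$, i.e., $\Radius{\Jacobian} > 1$ at $\scaling = 1$. Since $\Radius{\Jacobian}$ varies continuously along the path and starts below one (the start point is a unique stable fixed point by Theorem~\ref{thm:init}), there is a critical value $\scaling^\ast < 1$ at which the path's fixed point first loses stability. For any discretization $\{\scaling_k\}$ with steps beyond $\scaling^\ast$, BP applied at the corresponding model fails to converge within $N_{BP}$ iterations, so SBP hits the \texttt{break} statement and returns the last stable fixed point $\setOfMessages[\circ]_{\iteration{k-1}}$ with $\scaling_{k-1} \leq \scaling^\ast < 1$; consequently SBP returns the pseudomarginals of a surrogate model, not those of the desired problem. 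Combining the two directions gives the claimed equivalence.

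The main obstacle I anticipate is the boundary case $\Radius{\Jacobian} = 1$ at the terminal point, where linearization is inconclusive (as flagged in the footnotes of Section~\ref{sec:stabilityBP:theoretical_analysis}); I would handle this by adopting the same convention used earlier in the excerpt, treating marginally stable fixed points as outside $\setOfStableSol$ and, if needed, introducing the $\epsilon$-perturbation device already invoked in Theorem~\ref{thm:ev-scaling}. A secondary subtlety is that the statement should be read as ``for a sufficiently fine discretization'': strictly speaking, a coarse step could in principle jump past an unstable segment; I would note that in the limit $K \to \infty$ (the continuous homotopy of Section~\ref{sec:selfguided:theory:def}) the argument is exact, and for finite $K$ the step-size control of Algorithm~\ref{alg:adaptivestepsize} ensures the intermediate fixed points are tracked faithfully.
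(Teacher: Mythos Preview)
Your proposal is correct and follows essentially the same approach as the paper, which treats the claim as self-evident from the monotone degradation of stability along the path (the justification is embedded directly in the corollary's statement rather than given as a separate proof). Your version is simply a more careful elaboration of the same two-direction argument, with the added value of flagging the $\Radius{\Jacobian}=1$ boundary case and the discretization caveat that the paper leaves implicit.
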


Theorem~\ref{thm:init}-\ref{thm:complexity} are of fundamental importance but do not relate to the accuracy of the obtained stationary point.  
Assessing the quality of the Bethe approximation and the accuracy of BP for general models is still an open research question that is beyond the scope of this thesis.
However, we further present Proposition~\ref{prop:attractive} to discuss the accuracy of the obtained solution for \emph{attractive} models.
\begin{prop}[Properties for attractive models with unidirectional fields] 
	The solution path $c(\scaling)$ leads towards an accurate solution with $\pseudomarginalsMinLocal(\scaling_k) = \pseudomarginalsMinGlobal(\scaling_k)$ and $\FB(\pseudomarginalsMinLocal(\scaling_k)) = \FB(\!\pseudomarginalsMinGlobal(\scaling_K))$. (cf. Theorem~\ref{thm:attractive} and Theorem~\ref{thm:attractive_vanishing})
	\label{prop:attractive}
\end{prop}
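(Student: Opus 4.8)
The plan is to establish Proposition~\ref{prop:attractive} by combining three ingredients: (i) the structure of the solution path established in Proposition~\ref{prop:properties}, (ii) known monotonicity/uniqueness results for the Bethe free energy on attractive models with unidirectional fields, and (iii) the correspondence between the global minimum of $\FB$ and the fixed point that BP can actually reach. The key claim to prove is that along the solution path $c(\scaling)$ originating from the unique start point $\startp$, the pseudomarginals $\pseudomarginalsMinLocal(\scaling_k)$ coincide with the global minimizer $\pseudomarginalsMinGlobal(\scaling_k)$ for every $\scaling_k$, and in particular that $\FB(\pseudomarginalsMinLocal(\scaling_k)) = \FB(\pseudomarginalsMinGlobal(\scaling_K))$ once $\scaling_k$ reaches (or, in the bidirectional-to-unidirectional edge cases, the analysis still produces) the terminal model.

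First I would invoke Theorem~\ref{thm:init}: at $\scaling = 0$ the start point $\startp$ is unique and exact, hence it is trivially the global minimum $\FBGlobalMin(\scaling=0)$. Next I would use Theorem~\ref{thm:smooth} to get that the branch $\FBLocalMin{m}(\scaling)$ emanating from this global minimum is continuous on $\scaling \in [0,1]$, so $\pseudomarginalsMinLocal(\scaling)$ is a continuously-varying family of local minima of $\FB$. The crucial step is then to argue that for attractive models with unidirectional local fields this continuously-tracked local minimum stays the global minimum. Here I would appeal to the structural results for attractive (log-supermodular) models: the Bethe free energy has the property that its minimizer over the local polytope is attained at a distinguished point that moves monotonically with the coupling scaling, and — crucially — for unidirectional fields there is no symmetry-induced bifurcation (pitchfork bifurcations occur only when $\field{i} = 0$, as noted in the proof of Theorem~\ref{thm:smooth} citing~\cite{pitkow2011learning}). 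Since the set of stationary points is finite and varies continuously, the global minimum cannot "jump" to a different branch without either a bifurcation or a branch-crossing; I would rule out the former by the unidirectional-field assumption and rule out the latter using the monotone-in-$\scaling$ behavior of $\FB$ at the tracked point together with the ordering/sandwich-type bounds available for attractive models (e.g.~\cite{weller2013bethe, ruozzi2012bethe}, with the clamping argument of~\cite{weller2014clamping} available as backup). This forces $\pseudomarginalsMinLocal(\scaling_k) = \pseudomarginalsMinGlobal(\scaling_k)$ throughout, and hence equality of the corresponding Bethe free energies.

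The proof then splits into two cases mirroring the two cited theorems. For strictly unidirectional nonzero fields (Theorem~\ref{thm:attractive}) the argument above applies directly and the tracked minimum is stable all the way to $\scaling_K = 1$ by the degradation-of-convergence monotonicity from Section~\ref{sec:stabilityBP:theoretical_analysis}, so SBP actually reaches $\pseudomarginalsMinGlobal(\scaling_K)$. For the boundary/degenerate case of vanishing fields $\field{i} = 0$ (Theorem~\ref{thm:attractive_vanishing}) I would instead observe that the start point is the uniform fixed point, which is exact, and that even if a pitchfork bifurcation occurs at some critical coupling, the two emerging stable branches are symmetric and each yields the same $\FB$-value as the global minimum; combined with Theorem~\ref{thm:init}'s exactness at $\scaling=0$ and a symmetry argument this gives the stated equality. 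Finally I would conclude by chaining: $\FB(\pseudomarginalsMinLocal(\scaling_k)) = \FB(\pseudomarginalsMinGlobal(\scaling_k))$ for all $k$, and monotone continuity plus the terminal stability (or the symmetric-branch argument) gives $\FB(\pseudomarginalsMinLocal(\scaling_k)) = \FB(\pseudomarginalsMinGlobal(\scaling_K))$.

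The main obstacle I expect is rigorously excluding a branch-crossing — i.e.\ the possibility that at some $\scaling^\ast$ a different local minimum of $\FB$ (one not continuously connected to the start point) overtakes the tracked branch and becomes the new global minimum without any bifurcation along $c(\scaling)$. Handling this cleanly requires either a genuinely global statement about $\FB$ on attractive unidirectional-field models (not just local/branch-wise smoothness), or an argument that any competing minimum would itself have to be continuously deformable back to $\scaling=0$ and hence coincide with the (unique) start point. I would try to borrow such a statement from the attractive-model literature (the exactness-of-Bethe-lower-bound results~\cite{ruozzi2012bethe} together with the clamping monotonicity~\cite{weller2014clamping}); if a fully self-contained argument is needed, the fallback is to prove that on attractive models with $\field{i}$ all of one sign the Bethe free energy restricted to the "all-marginals-biased-toward-the-field" region is the relevant region and is handled by a convexity-type or monotone-operator argument, deferring the technical estimates to Appendix~\ref{sec:selfguided:proofs}.
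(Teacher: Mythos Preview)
Your high-level architecture (unique exact start point via Theorem~\ref{thm:init}, continuous tracking via Theorem~\ref{thm:smooth}, then argue the tracked branch stays global) matches the paper's, but the concrete mechanism you propose for the branch-crossing obstacle is not the one that actually works, and your vanishing-field case is off-target.

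For the strictly positive field case, the paper does \emph{not} rely on abstract log-supermodular structural results or sandwich/clamping bounds to exclude branch-crossing. Instead it proves a Griffiths-type inequality for BP fixed points (Lemma~\ref{lm:griffith}): for attractive models with $\field{i}>0$, increasing the couplings monotonically increases both the means $\mean{i}$ and the correlations $\correlation{i}{j}$ at the tracked fixed point. This pins down the tracked branch as the one with all $\mean{i}\in(0,1]$. The globality argument is then direct and finite: attractive models with unidirectional fields have at most \emph{two} local minima of $\FB$ (cf.\ Lemma~\ref{lm:optimality_2sol} and~\cite{weller2014understanding}), and the competing one has means of the wrong sign. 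Comparing $\FB=\EB-\SB$ term by term, the entropies are (near-)equal by near-symmetry while the energy $\EB$ of the wrong-sign minimum is strictly larger because its singleton marginals are misaligned with the local potentials in~\eqref{eq:energy}. Hence the tracked minimum is global; Ruozzi's bound then converts this into optimality for $\EPartition{m}$, and the two-minima RSB decomposition gives optimality for $\EMarginal{m}$. Your proposal never isolates the Griffiths step or the two-minima energy comparison, and the borrowed results you cite do not by themselves rule out a competing branch overtaking.

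For $\field{i}=0$ (Theorem~\ref{thm:attractive_vanishing}) your argument is the wrong one. The point is \emph{not} that the symmetric post-bifurcation branches share the global $\FB$ value and SBP lands on one of them. Rather, the trivial fixed point $\msg{i}{j}{}=1/2$ coincides with the exact solution ($\mean{i}=0$) for all $\scaling$, and SBP, starting exactly at $\setOfMessages=1/2$, remains exactly there under the update (the messages are representable without quantization error), even after this point becomes an unstable local maximum of $\FB$. So SBP recovers the exact marginals, which is stronger than tracking the global Bethe minimum; your symmetric-branch argument would instead have SBP leave the exact point and land on a biased, non-exact minimum.
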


We start by generalizing Griffiths' inequality~\cite{griffiths1967correlations} to the fixed points of BP (Lemma~\ref{lm:griffith}) and subsequently provide Theorem~\ref{thm:attractive}-\ref{thm:attractive_vanishing} that discuss the accuracy of the fixed point obtained by SBP.

\begin{lm}\label{lm:griffith}
	Consider two attractive probabilistic graphical models $\ugm[0]$ and $\ugm[1]$ with equal $\graph$ and with the potentials specified by $\field{i} > 0$ and by $\coupling{i}{j}^{0}$ and $\coupling{i}{j}^{1}$, where $\coupling{i}{j}^{0} < \coupling{i}{j}^{1}$ for all $\edge{i}{j} \in \setOfEdges$.
	Let us consider a fixed point of BP with positive means $\meanMinLocal{i}^{0} \in (0,1]$.
	Then,  $\meanMinLocal{i}^{0} < \meanMinLocal{i}^{1}$ and $\chi_{ij}^{0} < \chi_{ij}^{1}$.
\end{lm}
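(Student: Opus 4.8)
The plan is to prove Lemma~\ref{lm:griffith} by viewing the reparameterized BP update as a monotone operator with respect to the partial order on messages (equivalently, on cavity fields) and then showing that increasing the couplings pushes the fixed point upward coordinate-wise. Recall from Section~\ref{sec:stabilityBP:reformulation} that for binary models BP reduces to the cavity recursion
\begin{align}
	\tanh(\msgReparam{i}{j}) = \tanh(\coupling{i}{j})\tanh(\cavityField{i}{j}), \qquad
	\cavityField{i}{j} = \field{i} + \sum_{\RV{k}\in\neighborsWO{i}{j}} \msgReparam{k}{i}.
\end{align}
For an attractive model ($\coupling{i}{j}>0$) and unidirectional positive fields ($\field{i}>0$), every fixed point with positive means has all reparameterized messages positive: starting from nonnegative messages the right-hand side is nonnegative because $\tanh$ is odd and $\cavityField{i}{j}\geq\field{i}>0$, so positivity is preserved under the update and hence holds at any fixed point reached from a nonnegative initialization. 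First I would record this as a preliminary observation, so that throughout we may work in the region $\msgReparam{i}{j}\geq 0$, $\cavityField{i}{j}\geq\field{i}>0$.

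The key step is a monotonicity argument. Define the map $\Psi_{J}$ on the vector of messages $\vm{\nu}=\{\msgReparam{i}{j}\}$ by the right-hand side of the update above, with coupling vector $J$. Two facts are needed. (i) $\Psi_J$ is order-preserving on the positive orthant: if $\vm{\nu}\leq\vm{\nu}'$ componentwise then $\Psi_J(\vm{\nu})\leq\Psi_J(\vm{\nu}')$, because $\cavityField{i}{j}$ is a nonnegative combination of the $\msgReparam{k}{i}$ and both $\tanh$ and $u\mapsto\arctanh(\tanh(\coupling{i}{j})\tanh(u))$ are increasing for $\coupling{i}{j}>0$. (ii) $\Psi_J$ is monotone in the coupling: for fixed $\vm{\nu}\geq 0$ with $\cavityField{i}{j}>0$, the quantity $\arctanh(\tanh(\coupling{i}{j})\tanh(\cavityField{i}{j}))$ is strictly increasing in $\coupling{i}{j}$, so $J^0< J^1$ (componentwise) gives $\Psi_{J^0}(\vm{\nu})\leq\Psi_{J^1}(\vm{\nu})$, with strict inequality in at least one coordinate. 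Now let $\fpSetOfMessagesReparam[0]$ be the fixed point of $\ugm[0]$ with positive means. Then
\begin{align}
	\fpSetOfMessagesReparam[0] = \Psi_{J^0}(\fpSetOfMessagesReparam[0]) \leq \Psi_{J^1}(\fpSetOfMessagesReparam[0]),
\end{align}
so $\fpSetOfMessagesReparam[0]$ is a subsolution for the $\ugm[1]$ iteration. Iterating $\Psi_{J^1}$ from $\fpSetOfMessagesReparam[0]$ yields a componentwise nondecreasing sequence (by (i)); it is bounded above because all messages stay in a compact set, hence it converges to a fixed point $\fpSetOfMessagesReparam[1]$ of $\ugm[1]$ with $\fpSetOfMessagesReparam[1]\geq\fpSetOfMessagesReparam[0]$, and strictly larger in at least one coordinate by the strict part of (ii). Connectedness of the graph and positivity of the off-diagonal Jacobian entries (irreducibility, cf. Lemma~\ref{lm:irreducible}) then propagate strictness to every coordinate, so $\msgReparam[\circ]{i}{j}{}^{1} > \msgReparam[\circ]{i}{j}{}^{0}$ for all edges. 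Since $\meanMinLocal{i} = \tanh(\field{i}+\sum_{\RV{k}\in\neighbors{i}}\msgReparam{k}{i})$ is strictly increasing in each incoming message, we obtain $\meanMinLocal{i}^{0}<\meanMinLocal{i}^{1}$, and similarly $\chi_{ij}$ — which for attractive binary pairwise marginals is an increasing function of the local fields/messages feeding into the edge — satisfies $\chi_{ij}^{0}<\chi_{ij}^{1}$; here I would write out $\chi_{ij}$ in terms of $\cavityField{i}{j},\cavityField{j}{i},\coupling{i}{j}$ and check the sign of the partial derivatives explicitly.

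The main obstacle I anticipate is the bookkeeping around \emph{which} fixed point the monotone iteration converges to, and ensuring it is the one SBP actually tracks (i.e., the one continuously connected to $\fpSetOfMessagesReparam[0]$ along the homotopy of Section~\ref{sec:selfguided:theory:def}). The order-theoretic argument above produces \emph{some} fixed point of $\ugm[1]$ above $\fpSetOfMessagesReparam[0]$, and one should argue it is the least such fixed point and coincides with the endpoint of the solution path; this uses Theorem~\ref{thm:smooth} (continuity of the branch) together with the fact that, along the path, the means can only increase — which is exactly the infinitesimal version of the above, obtained by differentiating the fixed-point equation in $\scaling$ and checking that $(I-\Jacobian)^{-1}$ has nonnegative entries (a consequence of $\Radius{\Jacobian}<1$ plus nonnegativity, via the Neumann series). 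The secondary subtlety is the strict-versus-weak inequality: strictness can fail trivially if $\coupling{i}{j}^{0}=\coupling{i}{j}^{1}$ on some edges, so I would either assume strict inequality on at least one edge or state the conclusion with $\leq$ and note when equality holds. Modulo these points, the proof is a fairly standard application of monotone-operator / Griffiths-type comparison adapted to the BP fixed-point map.
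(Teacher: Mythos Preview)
Your proposal is correct and takes a genuinely different route from the paper. The paper's proof (Appendix~\ref{sec:selfguided:proofs:2}) is purely computational: it writes out the BP update for both states, observes directly that increasing $\coupling{i}{j}$ by $\epsilon$ multiplies the larger summand by $e^\epsilon$ and the smaller by $e^{-\epsilon}$, so the message ratio $\msg[\circ]{i}{j}{+1}/\msg[\circ]{i}{j}{-1}$ grows; the mean comparison then follows by plugging the shifted messages into~\eqref{eq:marginals:single}. For the correlation the paper writes $\chi_{ij}$ explicitly (under a regular-graph, constant-coupling simplification), shows $\chi>0$ by completing a square, and shows $\partial\chi/\partial\mu\geq 0$ by another direct estimate. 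Your argument instead packages the first half as a monotone-operator comparison: $\Psi_J$ is order-preserving in $\vm{\nu}$ and increasing in $J$, so the old fixed point is a subsolution for the new system and monotone iteration yields a componentwise larger fixed point. This is cleaner, avoids the regularity assumption for the mean part, and makes the Griffiths-type structure explicit; it also surfaces the ``which fixed point'' issue that the paper leaves implicit. For $\chi_{ij}$ you correctly anticipate that an explicit derivative check is still needed --- that is precisely where the paper's computation lives, so the two approaches converge there. One small point when you carry it out: $\chi_{ij}$ depends on $\coupling{i}{j}$ both directly through the pairwise potential in~\eqref{eq:marginals:pw} and indirectly through the messages, so you need both contributions to be nonnegative (the direct one is immediate).
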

The proof of Lemma~\ref{lm:griffith} contains some tedious, although not too complicated, algebraic manipulations and is thus deferred to Appendix~\ref{sec:selfguided:proofs:2}

\begin{thm}[Prop.~\ref{prop:attractive}]\label{thm:attractive}
	Consider an attractive model with $\field{i} > 0$.\footnote{Note that equal results can be obtained for $\field{i} < 0$ because of symmetry properties.}
	Then, $\meanMinLocal{i}(\scaling)$ increases monotonically along the solution path $c(\scaling)$; in particular SBP minimizes the Bethe approximation error and is optimal with respect to marginal accuracy, i.e.,
	\begin{align}
		\pseudomarginalsMinLocal(\scaling) &=\argmin\limits_{\pseudomarginals^{k}\in\setOfMinimaSol}\EMarginal{k} \nonumber \\
		&= \argmin\limits_{\pseudomarginals^{k}\in\setOfMinimaSol}\EPartition{k}. 
	\end{align}
\end{thm}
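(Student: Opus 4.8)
The plan is to exploit the monotonicity established in Lemma~\ref{lm:griffith} together with the smoothness of the solution path from Theorem~\ref{thm:smooth}. First I would observe that SBP starts (by Theorem~\ref{thm:init}) at the exact solution for $\scaling=0$, which has strictly positive means $\meanMinLocal{i}(\scaling=0) = \tanh(\field{i}) > 0$ since $\field{i} > 0$. The solution path $c(\scaling)$ is continuous (Theorem~\ref{thm:smooth}), so the means $\meanMinLocal{i}(\scaling)$ vary continuously. Applying Lemma~\ref{lm:griffith} locally --- i.e., between any two nearby scaling values $\scaling < \scaling'$ along $c(\scaling)$ where the means remain in $(0,1]$ --- gives $\meanMinLocal{i}(\scaling) < \meanMinLocal{i}(\scaling')$; a continuity/bootstrapping argument then shows the positivity of the means is preserved for all $\scaling \in [0,1]$, so $\meanMinLocal{i}(\scaling)$ is monotonically increasing along the whole path. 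This is the step I expect to require the most care: one must rule out the means leaving the region $(0,1]$ (and check that pitchfork bifurcations, which by Theorem~\ref{thm:smooth} can only occur when $\field{i}=0$, do not interfere), so that Lemma~\ref{lm:griffith} applies uniformly along the path.

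Next I would connect this monotone fixed point to the global minimum of the Bethe free energy. For attractive (log-supermodular) models, the global minimum of $\FB$ is attained at the fixed point whose singleton marginals are maximal in the partial order on $\MPolytope$ (this is essentially the content of the Bethe-lower-bound / clamping results cited in Section~\ref{sec:bp:evaluation}, combined with the fact that, among all BP fixed points of an attractive model, the one with the largest means is the one maximizing $\partitionBethe$; cf.~\cite{ruozzi2012bethe} and the sandwich bound~\cite{weller2013bethe}). Since Lemma~\ref{lm:griffith} shows that the fixed point on $c(\scaling)$ dominates every other fixed point in $\setOfMinimaSol$ componentwise in the means (each such fixed point also originates, by continuation, from a smaller-coupling model, and dominance is preserved), we conclude $\pseudomarginalsMinLocal(\scaling) = \pseudomarginalsMinGlobal(\scaling)$ for every $\scaling$, and in particular at the terminal value.

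Finally I would translate ``global minimum of $\FB$'' into the two optimality statements in the theorem. By definition~\eqref{eq:error_partition}, minimizing $\EPartition{k}$ over $\setOfMinimaSol$ is exactly maximizing $\partitionBethe^k$, i.e., minimizing $\FB^k$, which the path-fixed-point achieves; hence $\pseudomarginalsMinLocal(\scaling) = \argmin_{\pseudomarginals^{k}\in\setOfMinimaSol}\EPartition{k}$. For the marginal-error claim, I would use that for an attractive model the exact means $\mean{i}$ likewise dominate those of any BP fixed point (the Bethe approximation underestimates the magnetization for attractive models with $\field{i}>0$, again by the sandwich bound and Griffiths-type inequalities), so the fixed point with the largest means is the one closest to the exact marginals; by~\eqref{eq:error_marginal} this minimizes $\EMarginal{k}$. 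Combining, $\pseudomarginalsMinLocal(\scaling) = \argmin_{\pseudomarginals^{k}\in\setOfMinimaSol}\EMarginal{k} = \argmin_{\pseudomarginals^{k}\in\setOfMinimaSol}\EPartition{k}$, which is the assertion. The main obstacle throughout is the first paragraph's bootstrapping argument --- establishing that the monotone, positive-mean branch is globally well-defined on $[0,1]$ and genuinely dominates all other minima --- whereas the remaining steps are bookkeeping once the structural inequalities for attractive models are invoked.
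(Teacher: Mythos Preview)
Your first paragraph (monotonicity of $\meanMinLocal{i}(\scaling)$ along $c(\scaling)$ via Theorem~\ref{thm:init}, Lemma~\ref{lm:griffith}, and Theorem~\ref{thm:smooth}) matches the paper's argument essentially verbatim; the bootstrapping concern you flag is handled exactly as you describe.

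The optimality part diverges from the paper in two places, and one of them contains a real error. For $\EPartition{}$, the paper does \emph{not} invoke a general ``maximal-means fixed point minimizes $\FB$'' principle. Instead it uses the structural fact (for attractive models with unidirectional $\field{i}>0$) that there are at most two local minima of $\FB$: the path fixed point $m$ with all $\meanMinLocal{i}>0$, and possibly a second one $n$ that is nearly the mirror image, with $\meanMinLocal{i}<0$. A direct comparison of $\EB$ and $\SB$ then gives $\FB^m<\FB^n$ (the entropies nearly coincide by symmetry, while the energy of $n$ is strictly larger because its singleton marginals oppose the local fields). Your appeal to Lemma~\ref{lm:griffith} for componentwise dominance over \emph{other} fixed points at the \emph{same} $\scaling$ is not justified: that lemma compares one fixed point at two coupling strengths, and the second minimum does not originate from $\scaling=0$ at all---it appears only past a threshold.

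The genuine gap is your $\EMarginal{}$ argument. You assert that the exact means dominate all BP means (``Bethe underestimates magnetization''), so the largest-mean fixed point is closest to the truth. For attractive models this is the wrong direction: BP is well known to produce \emph{overconfident} marginals, i.e., the good fixed point typically has $\meanMinLocal{i}$ \emph{larger} than the exact $\mean{i}$ (cf.\ the discussion in Section~\ref{sec:selfguided:conclusion} and Figures~\ref{fig:3x3-slice}--\ref{fig:2x2-slice}). Hence ``largest means $\Rightarrow$ smallest $\EMarginal{}$'' does not follow. The paper instead uses the RSB decomposition~\eqref{eq:rsb}: with at most two minima, the exact marginals are a $\partitionBethe$-weighted convex combination of $\pseudomarginals^{m}$ and $\pseudomarginals^{n}$, so the one with the larger weight $\partitionBethe^{m}$---already shown to be the path fixed point---is necessarily the closer one in $\EMarginal{}$ (this is Lemma~\ref{lm:optimality_2sol}). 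You should replace your sandwich-bound step by this RSB-plus-two-minima argument.
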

We will only sketch the proof of Theorem~\ref{thm:attractive} here and refer to Appendix~\ref{sec:selfguided:proofs:th9} for a more elaborate treatment.
Essentially, we have to bring the preceding Theorems together to show that SBP obtains the most accurate marginals for attractive models with unidirectional fields.

Theorem~\ref{thm:init} explains how the initial model can be solved exactly and that it has positive mean and correlation for $\field{i}>0$.
Consequently, Lemma~\ref{lm:griffith} and Theorem~\ref{thm:smooth} apply and guarantee a continuous solution path with monotonically increasing means $\meanMinLocal{i}(\scaling)$.
Optimality of the obtained fixed point then is an immediate consequence of the definition of $\FB$ (cf.~\eqref{eq:f_bethe}), the RSB assumption (which holds for these models, cf. Section~\ref{sec:solutionsBP:experiments:gridGraphUniform}), and the existence of at most two local minima, i.e., $\setOfMinimaSol \leq 2$, for the considered models (cf. Lemma~\ref{lm:optimality_2sol} and \cite{weller2014understanding}).

Note that an attractive model with unidirectional fields is a special case of a (larger) attractive model with vanishing fields (cf.~\cite{fisher1967critical} and~\cite[Section 1.2]{saade2017spectral}).

\begin{thm}[Proposition~\ref{prop:attractive}] \label{thm:attractive_vanishing}
	Consider an attractive model with $\field{i} = 0$. Then, SBP obtains the exact solution, i.e.,  $\pseudomarginalsMinLocal(\scaling=1) = \pseudomarginalsExact(\scaling=1)$.
\end{thm}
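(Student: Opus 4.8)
The plan is to show that for an attractive model with vanishing local fields, $\field{i}=0$, the solution path that SBP follows terminates in the fully symmetric fixed point $\fpMsg{i}{j} = \tfrac12$, which corresponds exactly to the uniform marginals $\singleExact{i}(\RVval{i}) = \tfrac12$ -- and that these uniform marginals are in fact the exact marginals by the symmetry of the model. First I would invoke Theorem~\ref{thm:init}: at $\scaling_1 = 0$ BP has a unique fixed point, and for $\field{i}=0$ this fixed point has $\meanMinLocal{i}(\scaling=0) = 0$ for all $i$; by the same theorem this start point is exact, and indeed the exact marginals of a zero-field model are uniform by the global $\pm1$ spin-flip symmetry of the energy in~\eqref{eq:binary_pw_energy}. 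So the starting point of the solution path is the uniform (paramagnetic) fixed point $\startp = \pseudomarginalsExact(\scaling=0)$.

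The key step is to track this fixed point along $\scaling \in [0,1]$ and argue it stays at $\meanMinLocal{i}(\scaling) = 0$. I would argue this directly from the reparameterized update equations~\eqref{eq:reparam}--\eqref{eq:cavity}: the all-zero message configuration $\msgReparam[\stationaryPoint]{i}{j} = 0$ is a fixed point of BP for \emph{every} value of the couplings whenever $\field{i}=0$, since $\cavityField{i}{j} = \field{i} + \sum_{k} \msgReparam{k}{i} = 0$ forces $\tanh(\msgReparam{i}{j}) = \tanh(\scaling \coupling{i}{j})\tanh(0) = 0$. Hence the curve $c(\scaling) \equiv \{\msgReparam{i}{j} = 0\}$ is a genuine solution path of the homotopy~\eqref{eq:homotopy_sbp} originating at the unique start point. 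By Theorem~\ref{thm:smooth} the solution path originating from $\startp$ is unique (a single continuous branch emanates from the unique, exact start point), so this constant-zero curve \emph{is} the path SBP follows. The marginals read off from it via~\eqref{eq:marginals:single} are uniform for all $\scaling$, in particular at $\scaling_K = 1$: $\singleApprox{i}(\scaling=1) = \tfrac12 = \singleExact{i}(\scaling=1)$, where the last equality is again the spin-flip symmetry of the zero-field model. One subtlety to address: for $\field{i}=0$ pitchfork bifurcations can occur along the path (the symmetric fixed point becomes unstable and two symmetric stable branches split off, cf.\ Section~\ref{sec:stabilityBP:empirical_analysis:vanishing} and the remark in the proof of Theorem~\ref{thm:smooth}). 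I would note that SBP, being defined by tracking the branch continuously via repeated BP correction from the preceding fixed point, stays on the symmetric branch through the bifurcation -- BP started at $\msgReparam{i}{j} = 0$ (the exact preceding fixed point) returns $\msgReparam{i}{j} = 0$, so the algorithm never leaves this branch -- and the pseudomarginals it reports remain $\pseudomarginalsExact$ regardless of whether that branch is stable.

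The main obstacle I anticipate is the interplay with stability and the Corollary following Theorem~\ref{thm:complexity}, which says SBP obtains the pseudomarginals of the terminal model only if the endpoint is stable. If the symmetric endpoint at $\scaling=1$ is unstable (strong attractive couplings), SBP as literally stated would stop at the last \emph{stable} $\scaling_k$ -- but since the symmetric fixed point is unstable only \emph{after} the bifurcation, and since the reported pseudomarginals are uniform for every $\scaling_k$ on the path anyway, the conclusion $\pseudomarginalsMinLocal(\scaling=1) = \pseudomarginalsExact(\scaling=1)$ still holds for the reported value (the uniform marginals are returned either way). I would make this precise by observing that along the symmetric branch the pseudomarginals are \emph{constant} in $\scaling$, so stopping early changes nothing about the output; this is the one point where the statement's phrasing "$\pseudomarginalsMinLocal(\scaling=1) = \pseudomarginalsExact(\scaling=1)$" needs to be read as the marginals SBP produces, and I would add a sentence clarifying that. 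The remainder is routine: continuity and exactness of the path follow from Theorem~\ref{thm:init} and Theorem~\ref{thm:smooth} already established, and the exactness of uniform marginals for zero-field Ising models is immediate from $E(\RVvalSet{}) = E(-\RVvalSet{})$.
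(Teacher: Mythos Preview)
Your proposal is correct and follows essentially the same approach as the paper: both argue that the symmetric fixed point $\msgReparam{i}{j}=0$ (equivalently $\msg{i}{j}{}=1/2$) persists for every $\scaling$, that SBP starts there by Theorem~\ref{thm:init}, and that SBP never leaves it because applying the BP update to these messages returns them exactly. The paper phrases the last point as ``these fixed point messages can be represented exactly, without any quantization errors in binary arithmetic,'' which is precisely your observation that BP initialized at the symmetric point returns the symmetric point; your additional remark that the pseudomarginals are constant in $\scaling$ along this branch (so early stopping is harmless) is a nice clarification the paper leaves implicit.
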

\begin{proof}
	Only considering attractive models makes it straightforward to calculate the exact solution if all $\field{i}=0$. 
	The exact solution always has zero mean for all random variables, i.e.,   $\mean{i}(\scaling)=0$ for all values of $\scaling$. 
	Moreover, the exact solution coincides with a stationary point  $\FBLocalMin{m}(\scaling_k)$~\cite{mooij2007sufficient}.
	Depending on the coupling strength, this stationary point is a stable minimum (for sufficiently small values of $\coupling{i}{j}<J_C$) and a local maximum (for $\coupling{i}{j}>J_C$) (cf. Section~\ref{sec:stabilityBP:empirical_analysis:vanishing} or to \cite[pp.385]{mezard2009}).
	
	SBP consistently obtains this fixed point irrespective of its stability. 
	Therefore, note that by Theorem~\ref{thm:init}  all $\meanMinLocal{i}(\scaling=0) = 0$ and  all messages are equal, i.e.,  $\setOfMessages[\circ]_{1}(\scaling=0) = 1/2$ for all $\edge{i}{j} \in \setOfEdges$.
	SBP remains exactly on this fixed point and obtains the exact marginals as these fixed point messages can be represented exactly, without any quantization errors in binary arithmetic. 
\end{proof}

To conclude, for attractive models with $\field{i} \leq 0$ or $\field{i} \geq 0$,  SBP either obtains the fixed point that corresponds to the global minimum $\FBGlobalMin$ (Theorem~\ref{thm:attractive}), or it obtains the fixed point that corresponds to the exact solution, i.e., to $\min \FGibbs{}$ (Theorem~\ref{thm:attractive_vanishing}).

For \emph{general models} $\mean{i}$, need not increase monotonically along the solution path and it is not obvious whether SBP obtains the most accurate fixed point. 
However, the experimental results in Section~\ref{sec:selfguided:experiments:general} at least corroborate that SBP converges to accurate fixed points in many cases.

\newsection{Conclusion}{selfguided:conclusion}
This chapter aimed at deriving a simple and robust method to efficiently perform approximate inference for models where BP fails to converge.
The observations from our solution space analysis in Chapter~\ref{chp:solutionsBP} suggested that an accurate fixed point lies at the end of a solution path that emerges from the origin (where $\coupling{i}{j} = 0$).
We managed to enforce convergence towards this fixed point by changing the updated function accordingly.
To do so, we resorted to the homotopy method once again and utilized its proven capability of keeping track of solution paths, where we relied on BP in the correction step.

Our theoretical analysis of SBP for attractive models with unidirectional local fields revealed the existence of a unique, well-behaved solution path that leads to the best possible fixed point of BP and that can be tracked successfully by SBP.
We further applied SBP to general models, where, despite the lack of theoretical guarantees, it exhibited a promising performance.
Overall SBP consistently obtained marginals of better quality than BP with and without damping and approximated the marginals well on models for which BP did fail to converge at all.\\

One surprising effect of SBP was that it sometimes provided marginals that were even more accurate than the ones at the global minimum of $\FB$.
This occurred frequently, specifically whenever SBP terminated early because of BP failing to converge in the correction step before the pairwise potentials were set to their desired value.

So far we can only provide some hand-waving arguments for this behavior:
Therefore, note that common wisdom of BP tells us that BP usually fails to converge because of strong pairwise potentials;
additionally, we have seen how the marginals become overconfident and tend to be strongly biased towards one state as the coupling-strength increases.
The latter property explains why the approximation quality of $\FB$ degrades as its minimum moves further away from the minimum of $\FGibbs{}$.
This implies that following the solution path to its very end only makes the accuracy of the estimated marginals worse.
It seems that the former property implicitly prevents this from happening and that the stability breaks down somewhat close to the exact solution.

First empirical evaluations suggest that this is indeed quite often the case.
This raises an intriguing theoretical question:
Is there really some fundamental connection between the exact marginals and the approximated marginals that are obtained at the onset of instability?-- 
and if so,  what is the nature of this connection?\\

In a more concrete manner, it also remains to generalize our theoretical analysis.
So far we have only discussed the case of attractive models with unidirectional local fields, which -- admittedly -- is a rather restrictive setting.
Although this restriction simplified our analysis significantly and provided some interesting insights into the nature of SBP, it still leaves some room for improvement.

Going back to our theoretical analysis once more, our statement of optimality relied on the fact that the global minimum of $\FB$ is optimal with respect to the accuracy of both the marginals and the free energy.
Yet, it is not obvious if such a relationship generalizes to more general models as well.
We will thus devote the next chapter to studying the underlying principles that relate  the approximation quality of the marginals and of the free energy to each other.

  \emptydoublepage
\newchapter{Understanding Belief Propagation: Accurate~Marginals~or~Partition~Function}{accuracyBP}
\openingquote{To the wise, life is a problem; \\to the fool, a solution.}{Marcus Aurelius}
\renewcommand{\pwd}{accuracyBP}

This chapter introduces and investigates a novel class of models; these are attractive models with varying local potentials that we term \emph{patch-potential models}.
The inspiration for this chapter is twofold:
First, our solution space analysis was restricted to relatively small models so far, which limits the insights; this fueled the search for models that exhibit more than three fixed points.
Second, the proof for optimality of SBP relied on the fact that the global minimum of the Bethe free energy provides the most accurate marginals.
The reliance on this crucial detail led to questioning the common assumption that such a correspondence extends to more general models as well.

We begin with the introduction of patch potential models and the specification of the considered models in Section~\ref{sec:accuracyBP:models}.
The main contribution of Section~\ref{sec:accuracyBP:evalulate_accuracy} is the discussion of the solution space properties by means of an example.
In particular, this includes a detailed evaluation of the error in the marginals and the partition function for a wide range of parameters.
We then provide theoretical arguments in Section~\ref{sec:accuracyBP:theory} that explain the differences between accurate marginals and an accurate partition function and show under which conditions a fixed point exists that approximates both quantities well.

The content of this chapter constitutes the major part of the work in~\cite{knoll_accuracy}.

\newsection{Motivation}{accuracyBP:intro}
We have restrained ourselves to relatively well-understood models so far:
both the exhaustive analysis of the solution space in Chapter~\ref{chp:solutionsBP} and the validation of the proposed algorithm in Chapter~\ref{chp:selfguided} are limited to models with identical or random potentials.
While models with identical potentials either have one or three fixed points -- and are thus too restrictive as to fully understand  the behavior of BP --
models with random potentials lend themselves to an analysis in terms of expected behavior.
The randomness of the potentials, however, introduces a solution space so complex that it cannot be analyzed exhaustively anymore.

We would like to have a model class, simple enough so that it can be well understood; yet complex enough so that it exhibits a rich solution space with more than just three fixed points.
For this particular purpose, we introduce \emph{patch potential models}, a rich class of attractive models with inherent structure.
These models exhibit many interesting phenomena and provide deep insights into the relationship between the approximation quality of the marginals and the partition function.\\

So far we have observed that the approximation quality may be severely affected by the existence of multiple fixed points with varying accuracy.
In the optimization literature, this issue is often eluded by obtaining and combining all fixed points (cf. RSB theory in Section~\ref{sec:intro:bp:improving:evaluation}).
Although a similar approach seems promising for the models studied so far (cf. Chapter~\ref{chp:solutionsBP}), 
computing all fixed points is problematic for more general models. 
It is often simply not possible to obtain the set of all fixed points in an efficient manner.
Remember that one particular aim of Chapter~\ref{chp:solutionsBP} was to specify model classes for which a unique fixed point exists;
with the focus on patch potential models, we adhere to this tradition
but aim to specify model classes for which a structured and well-behaved solution space is present.
Instead of conditions for uniqueness, we are interested in conditions that allow the set of all fixed points to be obtained efficiently.\\

On the other hand, if one is only interested in selecting a single fixed point, one should obviously strive for selecting the best possible fixed point available.
Unfortunately, however, fixed points cannot be compared with respect to the marginal accuracy unless the exact solution is available.

Considering the equivalent variational interpretation admits various provable convergent algorithms (cf. Section~\ref{sec:bp:variational:variants}). 
Note that the correspondence to the Bethe free energy allows one to make some more quantitative statements regarding the approximation quality.
At least for attractive models, it is well established that the Bethe free energy upper bounds the Gibbs free energy~\cite{ruozzi2012bethe} so that the global minimum provides the most accurate approximation of the partition function.
Similar properties are not known for the marginal accuracy, however, and, except for rather simple models (as analyzed in Chapter~\ref{chp:solutionsBP}), it remains an open question whether accurate marginals are to be obtained at the global minimum of the Bethe free energy.
It is a common conjecture that this is the case;
if true, this would provide a simple way of comparing the marginal accuracy of multiple solutions in terms of their corresponding partition function.
This chapter studies this relationship in detail and further provides sufficient conditions for the global minimum of $\FB$ to yield the most accurate marginals.

 \newsection{Model Specifications}{accuracyBP:models}
We focus on one specific model class that represents a special case of binary pairwise models.
We will consider only finite-size attractive models
i.e., where all couplings $\coupling{i}{j}> 0 $ are positive; 
specifically, we consider models with equal couplings $\coupling{i}{j} = \coupling{}{}$ for all edges $\edge{i}{j} \in \setOfEdges$. 
We shall further distinguish three different types of attractive models that show increasingly complex behavior:
(i) attractive models with vanishing local fields $\field{i} = 0$;
(ii) attractive models with unidirectional fields, i.e., either $\field{i} < 0$ or $\field{i} > 0$; and
(iii) finally, attractive models with arbitrary local fields. Such models are particularly interesting in terms of their phase transitions and are studied under the name of ferromagnetic random-field Ising models (RFIM) in physics where all $\field{i}$ are drawn according to some distribution.

\newsubsection{Definitions}{accuracyBP:models:definitions}
Attractive models with vanishing fields either have a unique or two \emph{symmetric} fixed points both for infinite-size models~\cite{mezard2009} as well as for finite-size models (cf. Chapter~\ref{chp:solutionsBP}). 
The marginals of two fixed points $m$ and $k$ are considered as \emph{symmetric} if 
\begin{align}
	\singleApprox{i}^{m}(+1) =  1-\singleApprox{i}^{k}(+1)
\end{align}
for all $\RV{i}$. 
An eminent consequence of how the Bethe free energy was defined as a function of the pseudomarginals (cf.~\eqref{eq:f_bethe}) is that symmetric fixed points must also have the same value of $\FB$.
Attractive models with unidirectional fields show a similar behavior and -- although not exactly symmetric -- have two fixed points that are almost symmetric.

Another important concept are \emph{flipped} random variables: a random variable is flipped if the marginals are not aligned with the local potential, i.e., if
\begin{align}
	\Big(\frac{\pmfApprox{\RV{i}} (+1)}{\pmfApprox{\RV{i}} (-1)} - 1\Big) \field{i} < 0.
\end{align}
We further say that a fixed point is \emph{state-preserving} if 
no random variable is flipped. If all marginals are in favor of the same state $\RVval{i}$, i.e., if $\pmfApprox{\RV{i}}(\RVval{i}) > 0.5$ for all $\RV{i} \in \setOfNodes$ we call the corresponding fixed point \emph{biased towards $\RVval{i}$}.

Attractive models with arbitrary local fields  exhibit many non-trivial properties, may have a complex solution space, and are  studied as one of the simplest forms of disordered systems~\cite{young1998spin}. 
Disordered systems are systems that potentially have many fixed points, whereas many random variables are flipped.

\newsubsection{Patch Potential Models}{accuracyBP:models:patch_potential_models}
The definition of patch potential models follows the definitions of the RFIM, with the main difference that the local potentials are not i.i.d but obey a correlation between neighboring random variables. 
Moreover, we will only consider models with identical values for all local fields, albeit possibly with different sign, i.e., $\field{i} \in \{-\field{},+\field{}\}$.
\begin{defn}
	Patch potential models are binary pairwise models in accordance with~\eqref{eq:exponential} that have attractive couplings $\coupling{i}{j} =  J>0$ 
	and that consist of multiple non-overlapping patches $\patch{i}$ with $\graph = \bigcup_{i} \patch{i}$. 
	A patch $\patch{i} = (\patchRV{i},\patchEdges{i})$
	is a connected subgraph that is induced by  a subset of nodes $\patchRV{i} \subset \setOfNodes$ with identical local potentials $\field{i} = \field{}$ or $\field{i} = -\field{}$, where
	$\patchEdges{i} = \{\edge{i}{j}\in\setOfEdges: \RV{i}, \RV{j} \in \patchRV{i}\}$.
	\label{def:model}
\end{defn}

Note that we will only consider models with sufficiently large patches, so that the exact marginals are state-preserving.
Let us first consider a minimal example that is rich enough to exhibit some non-trivial (i.e., non-symmetric) fixed points while being structured enough 
to admit only few fixed points.
This example serves as a model that allows us to get some intuition (cf. Section~\ref{sec:accuracyBP:evalulate_accuracy}) before we discuss the properties of patch potential models in a more general manner (cf. Section~\ref{sec:accuracyBP:theory}).
\begin{example}[Patch Potential Model]\label{ex:patch}$ $\newline
	Let $\graph = (\setOfNodes, \setOfEdges)$ be a regular two-dimensional grid graph of size $n\times n$ with two equal-sized patches.
	All variables in $\patch{1}$ experience a positive local field $\field{1}=\field{}$ whereas all variables in $\patch{2}$ experience the same negative local field $\field{2}=-\field{}$ (cf. Figure~\ref{fig:exact}).   
	
	\begin{center}
		\includegraphics[width=0.3\linewidth]{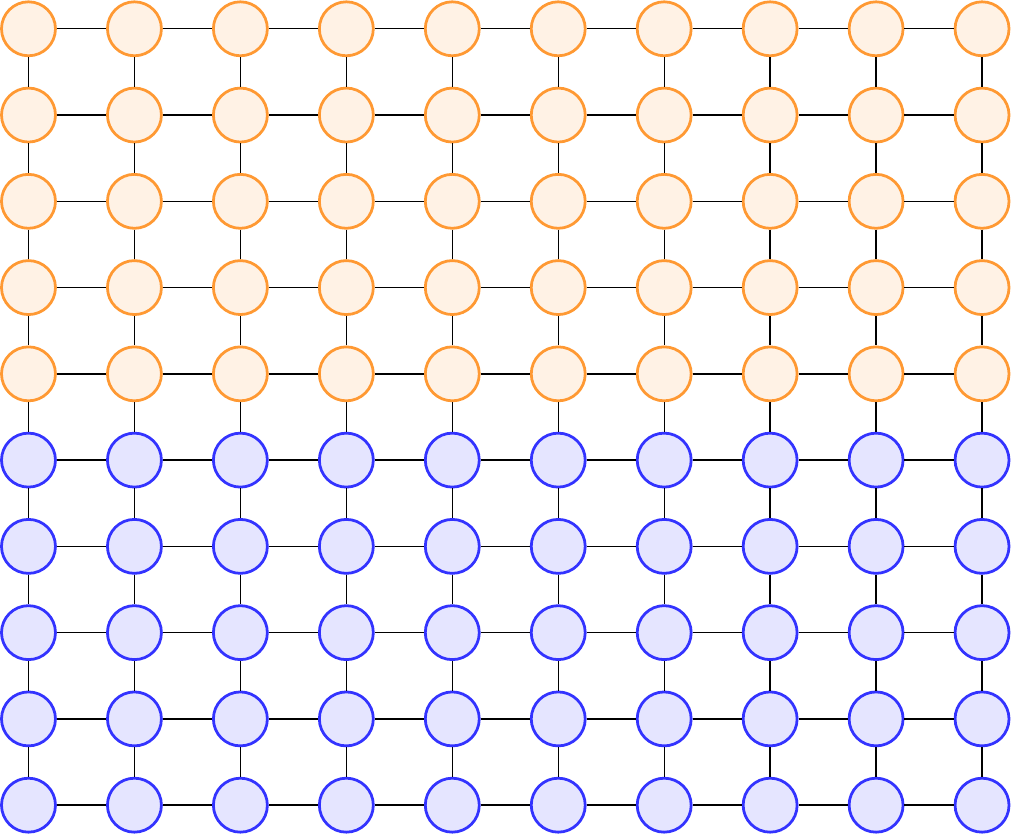}
		\captionof{figure}{Exact solution for \emph{Example~\ref{ex:patch}}. Nodes are depicted in orange if $\pmf{\RV{i}}(X_i=1) > 0.5$ and in blue otherwise; the opacity illustrates the value of the marginals.}
		\label{fig:exact}
	\end{center}
\end{example}

The patch potential model is especially appealing as the composition of relatively few patches admits a simplified treatment and comes with a couple of beneficial properties.
In particular, we can identify a region in the parameter space $\parameter$ that features a structured and well-behaved solution space (cf. Section~\ref{sec:accuracyBP:solution_space}).

\newsection{Fixed Point Behavior}{accuracyBP:evalulate_accuracy}
If BP converges, it often provides accurate results; if multiple fixed points exist, however, the performance may vary considerably between different fixed points. 
We briefly discuss the RSB (replica symmetry breaking) assumption that expresses the exact marginals as a combination of all fixed points and illustrate why its success is limited to optimization problems so far (Section~\ref{sec:accuracyBP:combination}). 

Then, we discuss the solution space of Example~\ref{ex:patch} over a range of parameters and specify different regions according to the structure of the solution space (Section~\ref{sec:accuracyBP:solution_space}). 

Assessing the approximation quality of a specific fixed point is required to state performance guarantees of BP.
We recap existing results (Section~\ref{sec:accuracyBP:accuracy}) and discuss how the error of the pseudomarginals and the Bethe partition function are related for patch potential models (Section~\ref{sec:accuracyBP:relationship}).

\newsubsection{Combination Of Fixed Points}{accuracyBP:combination}
(Non-) convexity of the Bethe free energy depends on the structure of the graph and the potentials. 
If the model has loops and sufficiently strong couplings multiple local fixed points will exist (cf. Chapter~\ref{chp:solutionsBP}).
The common narrative considers the existence of multiple fixed points as particularly problematic.
While this arguably creates a scenario where the performance of BP is more inconsistent, it also opens the door for methods that rely on the RSB assumption (cf. Section~\ref{sec:intro:bp:improving:evaluation}).
Remember that this allows us to form the exact solution according to
\begin{align}
	\pmf{\RV{i}}(\RVval{i}) = \frac{1}{\sum_m \partitionBethe^m }\sum_{m=1}^M \partitionBethe^m \singleApprox{i}^m(\RVval{i}), \nonumber
\end{align}
if all local minima of $\FB$ are known and $\setOfMinimaSol=\big\{ \fixedPointTuple{1},\ldots, \fixedPointTuple{M} \big\}$ is available.
Note that the RSB-assumption is well-established for infinite-size models~\cite{mezard1987spin};
the validity of it for finite-size models, however, remains controversial.
Yet, the doubt about finite-size models primarily stems from the lack of rigorous analysis and apart from that, the RSB assumption is empirically well-confirmed for finite-size models as well (an exhaustive overview on recent developments is presented in~\cite{lage2013rsb}).
Additionally, we computed the exact solution for all considered models using the junction tree algorithm and confirmed the RSB assumption for the patch-potential models as well.

One efficient way to evaluate~\eqref{eq:rsb} for constrained satisfaction problems is known as survey propagation~\cite{braunstein2005survey}.
The extension to more general models,however, still remains somewhat elusive.

\newsubsubsection{Approximate Survey Propagation}{accuracyBP:survey}
We have seen in Chapter~\ref{chp:solutionsBP} that a convex combination of the fixed points yields the exact solution in the presence of multiple fixed points.
Despite such promising results, the estimation of the marginals according to this combination is hindered by the need for all $M$ solutions.
Obtaining all fixed points that correspond to local minima of the Bethe free energy is a complex task only possible for small-scale models (as in Chapter~\ref{chp:solutionsBP}) and models with certain structure (e.g., random graphs \cite{coja2019bethe}),  or potential-type (e.g., for optimization problems~\cite{zdeborova2016statistical}).

An approximate version of survey propagation was recently applied to similar models as in this work~\cite{srinivasa2016survey}.
This was achieved by assuming that the fraction of randomly initialized BP runs $P^m_{\mu}$  that converges to the $m^{th}$ fixed point provides an approximation of the partition function $\partitionBethe^m$.
This assumption is valid for attractive models with vanishing local fields; yet it is unclear how this generalizes to models with non-vanishing local fields.

We aim to validate the assumption for regular grid graphs with $n \times n$ variables, $\field{} \neq 0$, and with couplings large enough to admit two fixed points.
Therefore, we compare both measures for both fixed points by relating the ratio between the partition functions
$\partitionBethe^1/\partitionBethe^2$ to the ratio $P^1_{\mu}/P^2_{\mu}$.
The log-ratio\footnote{The log-ratio is independent of the coupling strength as long as $\coupling{}{}$ is large enough to admit two fixed points.} between both measures is depicted in Figure~\ref{fig:message_distribution}.
One would expect a constant value close to zero if $P^m_{\mu}$ provides a good estimate of $\partitionBethe^m$;
this is obviously not the case as $\partitionBethe^1/\partitionBethe^2$ grows more rapidly.
We conclude that the fraction of convergent BP runs serves as a poor estimate of the partition function with the consequence that an approximate evaluation of~\eqref{eq:rsb} leads to inaccurate marginals.
This is particularly true as the local field and the model size increase.
\begin{figure}[t]
	\centering
	\includegraphics[width=0.6\linewidth]{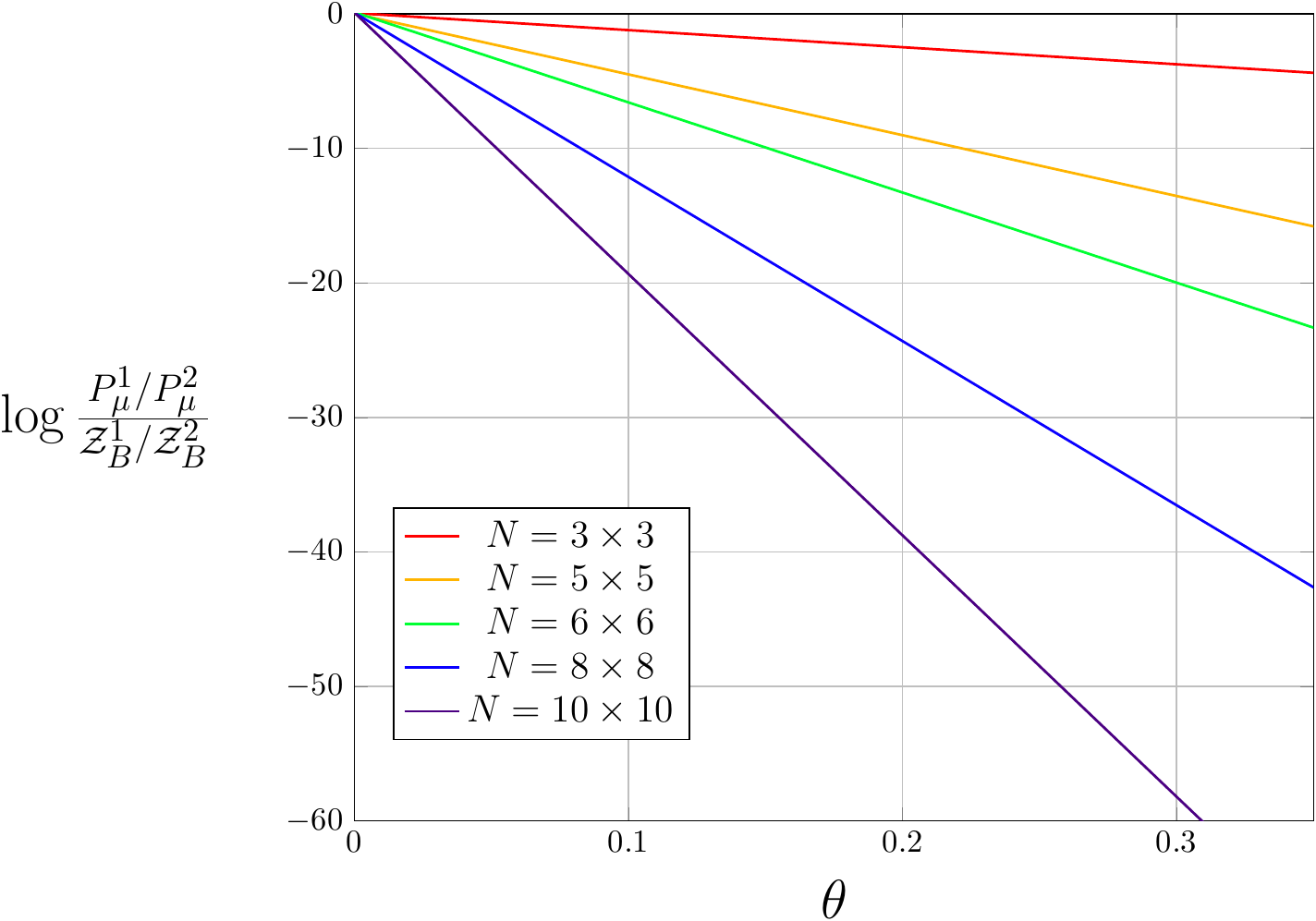}
	\caption{ $P^m_{\mu}$ is the fraction of BP runs that converge to fixed point $m$ with the corresponding Bethe partition function $\partitionBethe^m$. The mismatch increases with $N$ and $\field{}$.}
	\label{fig:message_distribution}
\end{figure}

This raises two immediate questions:
(i) Can we specify certain model-structures or parameter configurations that grant efficient methods to obtain all fixed points in order to evaluate~\eqref{eq:rsb}?
(ii) If we obtain a subset of all fixed points $\tilde{\setOfMinimaSol} \subset \setOfMinimaSol$, can we compare the available fixed points and  select the best one?

\newsubsection{Solution Space of Patch Potential Models}{accuracyBP:solution_space}
The solution space for a wide range of patch potential models is analyzed to answer whether parameter configurations exist for which all fixed points can be obtained efficiently. 
A more formal analysis that explains the subsequent observations is presented in Section~\ref{sec:accuracyBP:theory}.

Let $\graph$ be a $10 \times 10$ grid graph with two equal-sized patches (Example~\ref{ex:patch}). 
This model exhibits three different regions, separated by critical values $J_A(\field{})$ and $J_C(\field{})$; see
Figure~\ref{fig:sol_space_sketch} and Figure~\ref{fig:regions} for an illustration of 
the decomposition into multiple fixed points according to~\eqref{eq:rsb}.

\begin{figure}[!h]
	\centering
	\includegraphics[width=0.6\linewidth]{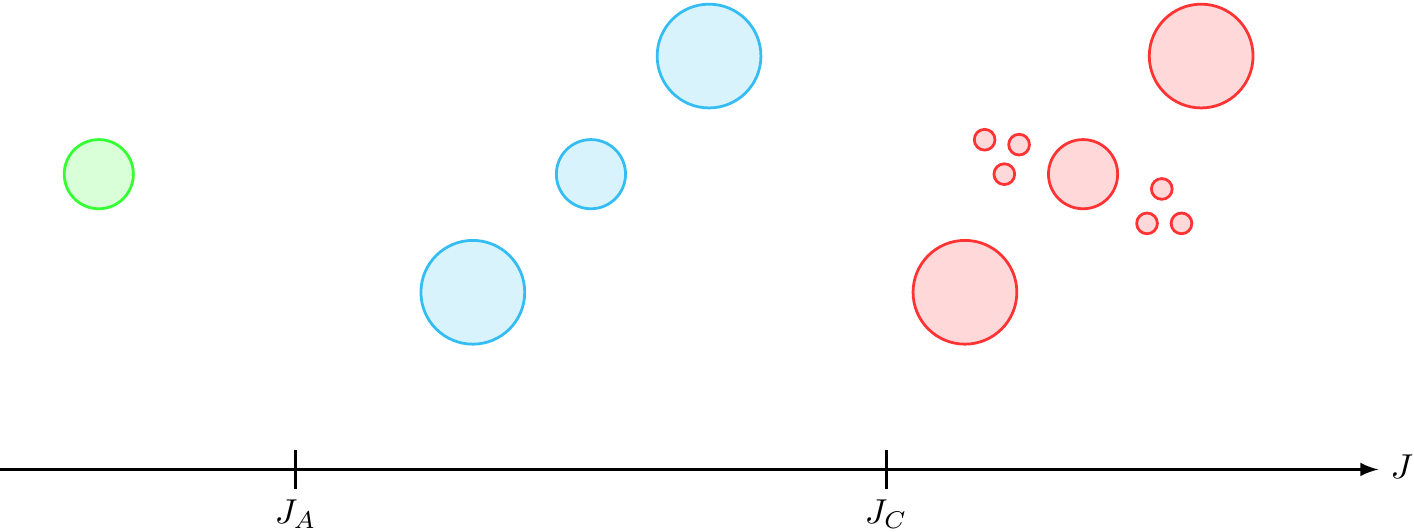}
	\caption{Illustration of the fixed points for all regions. The circle-width corresponds to the value of $\FB$.}
	\label{fig:sol_space_sketch}
\end{figure}

A unique fixed point exists for $J<J_A(\field{})$, i.e., inside region $(I)$, and BP converges; this fixed point is state-preserving but slightly overestimates the marginals (cf. Section~\ref{sec:accuracyBP:relationship}).
Additional fixed points emerge inside region $\Region$ as the coupling strength increases to $J_A(\field{}) < J < J_C(\field{})$. There are three fixed points (cf. Theorem~\ref{thm:existence}) and all three fixed points are stable (cf. Theorem~\ref{thm:stability}). These fixed points consist of two symmetric fixed points where all marginals favor one particular state and one state-preserving fixed point (cf. Section~\ref{sec:accuracyBP:relationship}).
As the coupling strength increases even further to $J>J_C(\field{})$, i.e., inside region $(III)$, all three fixed points remain but are suddenly accompanied by many more fixed points. It will therefore be increasingly hard to obtain all fixed points numerically, so that one can only hope to obtain a subset of all fixed points in practice.

The actual boundaries between the regions are numerically estimated and are depicted in Figure~\ref{fig:regions}. 
The fixed points are obtained by repeated application of BP (2000 times for each $\parameter$) with different random initial conditions.
Furthermore, we apply random scheduling to enhance the convergence properties as any predetermined schedule would favor a specific fixed point.

To answer question (i) from Section~\ref{sec:accuracyBP:survey}: one region exists in the parameter space (illustrated in blue) for which 
all fixed points can be obtained efficiently.
For region $(III)$ (illustrated in red), however, the number of fixed points suddenly increases and we cannot rely on BP to obtain all fixed points.

\newsubsection{Approximation Accuracy}{accuracyBP:accuracy}
\begin{figure}[t]
	\centering
	\includegraphics[width=0.5\linewidth]{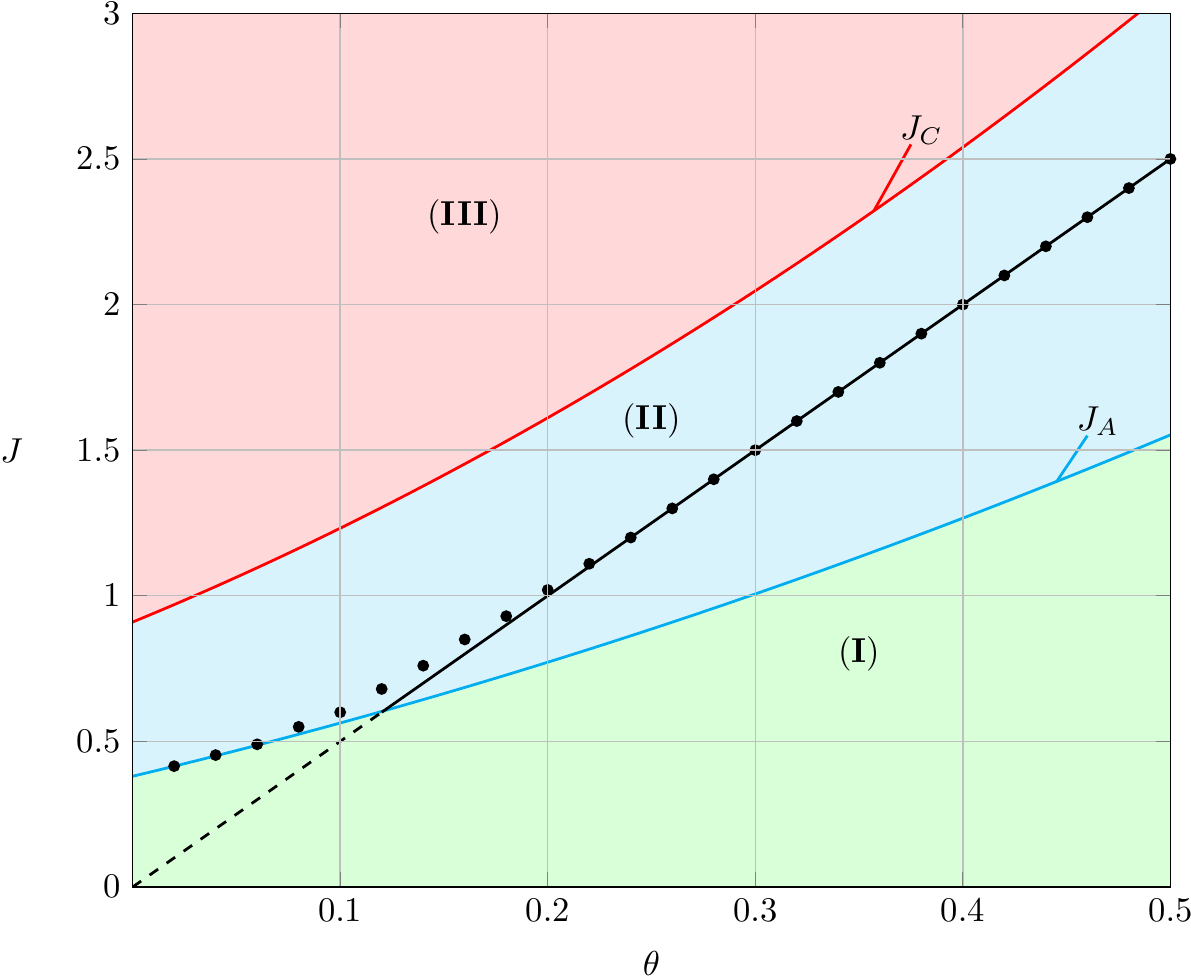}
	\caption{Illustration of all regions and boundaries for Example~\ref{ex:patch}: 
		the black dots depict the boundary below which~\eqref{eq:marginal_equal_partition} holds;
		the approximated boundary according to~\eqref{eq:threshold_approx} is depicted by the solid black line.}
	\label{fig:regions}
\end{figure}
Let $\coupling{}{} > \coupling{c}{}(\field{})$ and assume that a subset $\tilde{\setOfMinimaSol} \subset \setOfMinimaSol$ of all fixed points that constitute minima of $\FB$   is provided; then, how can we select the best one?
Unfortunately, there is no way to tell us how accurate a particular fixed point is (if we do not have access to the exact solution). It is therefore an important problem in its own to measure the accuracy, or at least provide a bound on the approximation error.
We will first discuss established results regarding the accuracy of both the Bethe partition function and the pseudomarginals.
Subsequently, we will delve into the particularities for patch potential models and show how the accuracy may differ between both objectives.\\

We measure the error of the partition function $\partitionBethe^m = \partitionBethe(\pseudomarginals^m)$ of the $m\textsuperscript{th}$ fixed point by $\EPartition{m}$ in terms of the relative error between the log-partition functions according to~\eqref{eq:error_partition}.
Note that we only consider attractive models, for which the Bethe partition function also bounds the partition function, i.e., $\partitionBethe < \partitionFunction$~\cite{ruozzi2012bethe};
obtaining the global minimum of $\FB$ is therefore optimal with respect to the error of the partition function as $\argmin_{\partitionBethe^m}(\EPartition{m}) = \exp(-\min_{\LPolytope} \FB(\pseudomarginals^m))$.

The error of the singleton marginals $\EMarginal{m}$ is measured by the mean squared error (MSE) according to~\eqref{eq:error_marginal}.
We are not aware of an explicit relationship that connects both worlds and relates the error of the marginals to the error of the partition function, except for
homogeneous\footnote{These are models that have a single value $J$ for all edges and a single value $\field{}$ for all variables}
attractive models (cf. Lemma~\ref{lm:optimality_2sol}). It is therefore often assumed that minimizing $\FB$ will be optimal in terms of marginal accuracy for more general models as well (cf. Chapter~\ref{chp:selfguided} and~\cite{weller2014understanding}).
More formally this assumption states that
\begin{align}
	\argmin_{m\in\setOfMinimaSol} \EPartition{m} = \argmin_{m\in\setOfMinimaSol} \EMarginal{m}.
	\label{eq:marginal_equal_partition}
\end{align}
This assumption is, however, not valid in general as we will now show in Section~\ref{sec:accuracyBP:relationship}.

\newsubsection{Comparison Of Marginal Accuracy and Partition Function Accuracy}{accuracyBP:relationship}
We aim to evaluate the relationship between the accuracy of the pseudomarginals and the accuracy of the partition function and whether ~\eqref{eq:marginal_equal_partition} holds in general.
First, we state that~\eqref{eq:marginal_equal_partition} does hold for homogeneous attractive models that have two fixed points at most~\cite{weller2014understanding}.
This is a direct consequence of~\eqref{eq:rsb} or by casting the model as a larger one with vanishing local fields (cf. discussion after Theorem~\ref{thm:attractive}).
\begin{lm}\label{lm:optimality_2sol}
	Attractive models with identical values $\field{i}=\field{}$ have two fixed points for $J> J_A(\field{})$. The fixed point $m$ that minimizes $\EPartition{m}$ further provides the global minimum $\min_{\LPolytope}(\FB)$ and minimizes $\EMarginal{m}$ as well. 
\end{lm}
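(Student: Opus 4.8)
The plan is to establish Lemma~\ref{lm:optimality_2sol} by reducing the case of identical (unidirectional or possibly vanishing) local fields to the already-understood structure of attractive models, and then invoking the RSB decomposition~\eqref{eq:rsb} together with the known bound $\partitionBethe < \partitionFunction$ for attractive models~\cite{ruozzi2012bethe}. First I would recall from Section~\ref{sec:solutionsBP:experiments:gridGraphUniform} and Chapter~\ref{chp:solutionsBP} that for an attractive model with a single coupling $\coupling{}{}=J$ and a single field $\field{i}=\field{}$, the Bethe free energy has either a unique stationary point or exactly three (two stable minima and one stationary point that is either the intermediate saddle or, for $\field{}=0$, the symmetric point); the threshold at which the additional pair emerges I would call $J_A(\field{})$, consistent with the notation used elsewhere in the chapter. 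Thus $|\setOfMinimaSol| \le 2$ for $J>J_A(\field{})$: there are at most two \emph{minima}, which I label $\pseudomarginals^1$ and $\pseudomarginals^2$ with Bethe partition functions $\partitionBethe^1 \ge \partitionBethe^2$.

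Next I would use the RSB assumption~\eqref{eq:rsb}, which was verified empirically for exactly these homogeneous attractive models in Chapter~\ref{chp:solutionsBP} (and is classical for attractive/ferromagnetic models, cf.~\cite{mezard1987spin}), to write the exact singleton marginal as the $\partitionBethe$-weighted convex combination $\singleExact{i}(\RVval{i}) = \frac{1}{\partitionBethe^1+\partitionBethe^2}\big(\partitionBethe^1 \singleApprox{i}^1(\RVval{i}) + \partitionBethe^2 \singleApprox{i}^2(\RVval{i})\big)$. Since for $\field{}\neq 0$ the two fixed points are close to symmetric — one biased to $+1$, one biased to $-1$ — the dominant fixed point (the one with the larger $\partitionBethe$) is the one aligned with the sign of $\field{}$, and because the convex combination is pulled towards the dominant term, $\singleApprox{i}^1(\RVval{i})$ lies closer to $\singleExact{i}(\RVval{i})$ than $\singleApprox{i}^2(\RVval{i})$ does. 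More precisely, writing $w = \partitionBethe^1/(\partitionBethe^1+\partitionBethe^2)\ge 1/2$, one has $\singleExact{i} - \singleApprox{i}^1 = (1-w)(\singleApprox{i}^2 - \singleApprox{i}^1)$ and $\singleExact{i} - \singleApprox{i}^2 = w(\singleApprox{i}^1 - \singleApprox{i}^2)$, and since $w\ge 1-w$ the per-variable squared error of fixed point $1$ is no larger than that of fixed point $2$; summing over $\RV{i}\in\setOfNodes$ and using~\eqref{eq:error_marginal} gives $\EMarginal{1}\le\EMarginal{2}$. Finally, that $\pseudomarginals^1$ is also the global minimum of $\FB$ (equivalently maximizes $\partitionBethe$) follows because $\partitionBethe^1\ge\partitionBethe^2$ by labeling, and $\partitionBethe < \partitionFunction$ together with $\argmin_{\partitionBethe^m}(\EPartition{m}) = \exp(-\min_{\LPolytope}\FB)$ (stated in Section~\ref{sec:accuracyBP:accuracy}) identifies the maximal-$\partitionBethe$ fixed point as both the global minimum and the minimizer of $\EPartition{m}$; combined with $\EMarginal{1}\le\EMarginal{2}$, the same fixed point minimizes $\EMarginal{m}$, which is exactly~\eqref{eq:marginal_equal_partition} restricted to this model class.

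The main obstacle I anticipate is making the ``close to symmetric'' step fully rigorous: for $\field{}=0$ the two minima are \emph{exactly} symmetric (so $\singleApprox{i}^1+\singleApprox{i}^2 = 1$, $\partitionBethe^1=\partitionBethe^2$, and the argument above degenerates gracefully to $\EMarginal{1}=\EMarginal{2}=0$ after noting the exact marginal is $1/2$), but for $\field{}\neq 0$ the relation $\singleApprox{i}^2(\RVval{i}) = 1 - \singleApprox{i}^1(\RVval{i})$ is only approximate, so the clean convexity argument $\EMarginal{1}\le\EMarginal{2}$ needs the sign of $\singleApprox{i}^1 - \singleExact{i}$ to be consistent with that of $\singleApprox{i}^2 - \singleExact{i}$ in the opposite direction for every $\RV{i}$. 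I would handle this by appealing to Griffiths-type monotonicity (Lemma~\ref{lm:griffith}) and the fact that for unidirectional fields the state-aligned fixed point has all means with the sign of $\field{}$, so the ordering $\singleApprox{i}^2(\RVval{i}) \le \singleExact{i}(\RVval{i}) \le \singleApprox{i}^1(\RVval{i})$ (for the state $\RVval{i}$ favored by $\field{i}$) holds componentwise; once this sandwich is in place the weighted-combination inequality closes the proof. For the degenerate case $\field{}=0$, falling back on Theorem~\ref{thm:attractive_vanishing} (SBP obtains the exact solution, and the symmetric fixed point coincides with the exact marginals) gives $\EMarginal{m}=0$ at the global minimum directly, so no approximation argument is needed there.
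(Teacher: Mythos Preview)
Your approach is correct and matches the paper's: the text surrounding the lemma states that it is ``a direct consequence of~\eqref{eq:rsb} or by casting the model as a larger one with vanishing local fields,'' and you take the first route, expanding it in more detail than the paper does.

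One remark: the ``main obstacle'' you flag is not actually an obstacle. Once you have the RSB decomposition with two minima and write $w = \partitionBethe^1/(\partitionBethe^1+\partitionBethe^2) \ge 1/2$, the identities $\singleExact{i} - \singleApprox{i}^1 = (1-w)(\singleApprox{i}^2 - \singleApprox{i}^1)$ and $\singleExact{i} - \singleApprox{i}^2 = w(\singleApprox{i}^1 - \singleApprox{i}^2)$ give $|\singleExact{i} - \singleApprox{i}^1|^2 = (1-w)^2 d_i^2 \le w^2 d_i^2 = |\singleExact{i} - \singleApprox{i}^2|^2$ per variable, with $d_i = |\singleApprox{i}^1 - \singleApprox{i}^2|$. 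Summing over $\RV{i}$ yields $\EMarginal{1}\le\EMarginal{2}$ directly; no approximate symmetry, sign consistency, or componentwise sandwich is needed. The near-symmetry of the two fixed points is only heuristic motivation, not a logical ingredient. The paper's alternative route --- embedding the unidirectional-field model into a larger attractive model with vanishing fields (cf.\ the remark after Theorem~\ref{thm:attractive}) --- is a different way to the same conclusion, but your argument via~\eqref{eq:rsb} is already complete.
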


Second, we empirically validate  whether minimizing $\FB$ will provide the most accurate marginals for Example~\ref{ex:patch}.
Figure~\ref{fig:ez_ep} illustrates the error in the marginals and the error in the partition function for all fixed points.
The fixed point that provides the global minimum to $\FB$, and thus minimizes $\EPartition{m}$ is emphasized in blue, the fixed point minimizing $\EMarginal{m}$ is emphasized in red, whereas the fixed point minimizing both quantities jointly is emphasized in green.

Let us take a closer look at region $\Region$ in particular:
three fixed points exist that can be combined to yield the exact solution (see Figure~\ref{fig:exact} for the exact solution).
Two of these fixed points, $r$ and $q$, are each biased towards one state and, because of the symmetric model, have identical values $\FB^r=\FB^q$.
The state preserving fixed point $p$ on the other hand provides the most accurate marginals inside $\Region$.
However, while $p$ also provides the global minimum of $\FB$ for small values of $\coupling{}{}$, 
Figure~\ref{fig:ez_ep} shows that $\FB^p$ turns into a local minimum for $J\geq0.65$ .
No principle relationship between the accuracy of the marginals and the partition function can therefore be observed inside $\Region$ and~\eqref{eq:marginal_equal_partition} does not necessarily hold (cf. Theorem~\ref{thm:marginal_not_equal_partition}).

For region $(III)$ many more fixed points $(u,v,\ldots)$ emerge that all have similar values $\EPartition{u}$ and $\EMarginal{u}$; we visualize some of them in Figure~\ref{fig:ez_ep}.
These fixed points provide slightly more accurate marginals than the state-preserving one, although it should be noted that all fixed points do not approximate the marginals well inside $(III)$.
On the contrary, considering $\EPartition{u}$, these additional fixed points provide the worst approximation to the partition function and have even higher values $\FB^u> \FB^p > \FB^q$.
The biased fixed points $p,q$ that approximate the marginals worst, on the other hand, approximate the partition function relatively well.

Why fixed points exist that minimize the marginal error but are only local minima of $\FB$ can, however, not be answered by the above observations.
Closer inspection of $\FB$ for different types of fixed points reveals a threshold (black dots in Figure~\ref{fig:regions}) below which~\eqref{eq:marginal_equal_partition} holds.
Some mild assumptions on the solution space lead to a lower bound on this threshold (cf. Theorem~\ref{thm:marginal_equal_partition}) according to
\begin{align}
	2\coupling{}{}\sqrt{N}-\field{}N = 0.
	\label{eq:threshold_approx}
\end{align}
This bound, illustrated by the solid black line in Figure~\ref{fig:regions}, becomes asymptotically exact. Note that the slope, defined by~\eqref{eq:threshold_approx} increases with the model size $N$
so that the global minimum of $\FB$ provides the most accurate marginals for a wider range of parameters.
\begin{figure*}[t]
	\centering
	\includegraphics[width=\textwidth]{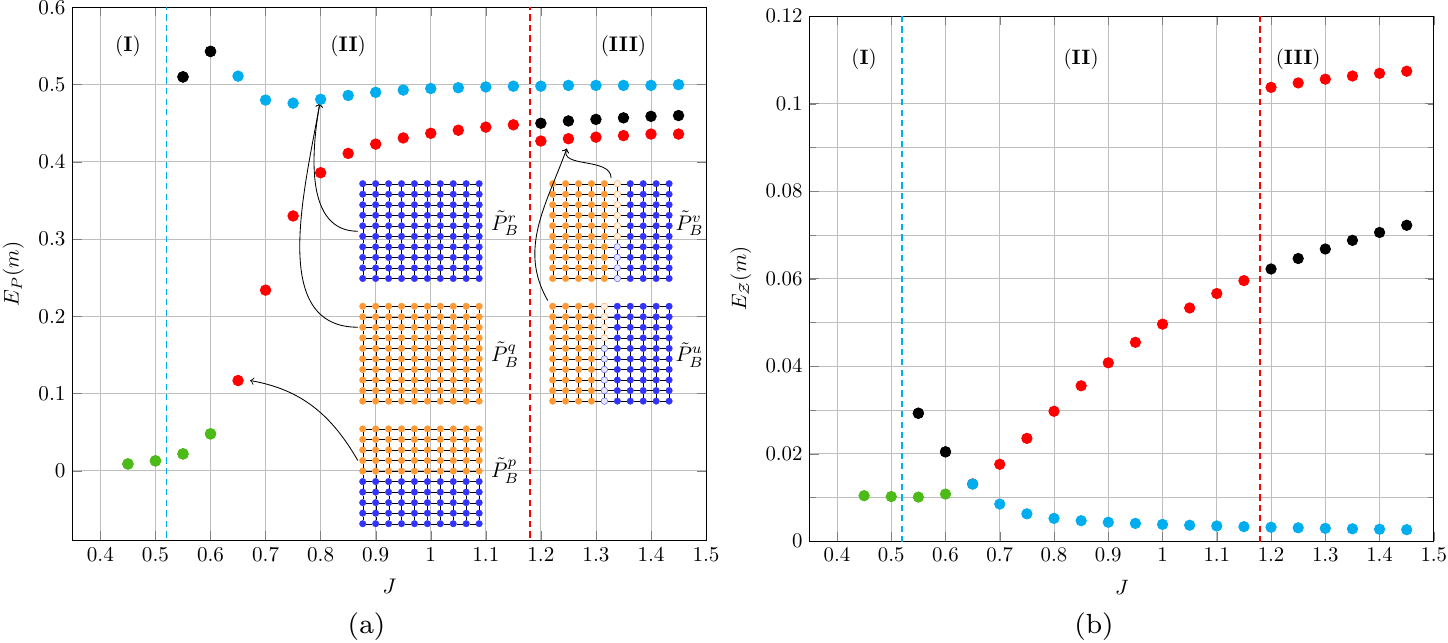}
	\caption{Accuracy of the marginals (a) and of the partition function (b) for Example~\ref{ex:patch} with $|\field{i}|=0.1$: we emphasize the fixed points minimizing $\EPartition{m}$ (blue), minimizing $\EMarginal{m}$ (red), and minimizing both quantities (green).}
	\label{fig:ez_ep}
\end{figure*}

\newsection{Theoretical Analysis}{accuracyBP:theory}
Here we properly define the boundaries $J_A(\field{})$ and $J_C(\field{})$ between different regions and provide formal arguments that explain the observations from Section~\ref{sec:accuracyBP:solution_space}. While some properties are directly attributable to~\eqref{eq:rsb}, several results are based on the fact that the patch potential model consists of multiple patches with a unidirectional local field.
First, we need to prepare an alternative update equation that makes the interactions between two patches more explicit. For that purpose, we will introduce an effective field that acts on the boundary of each patch and incorporates the influence form all other patches.

We only present the most insightful proofs below and defer some longer proofs to the Appendix~\ref{sec:accuracy:proofs}.
Additionally, we prepare some corollaries that simplify the results for models with two equal-sized patches as in Example~\ref{ex:patch}.

\newsubsection{Effective Field}{accuracyBP:theory:effective}
We introduce an effective field $\effectiveField{i}$ 
for all variables that lie on the patch-boundary to incorporate the interactions with the neighboring patches.
\begin{thm}[Effective Field] \label{thm:effective_field}
	Let $\RV{i}$ be a variable on the boundary of patch $\patchRV{i}$  that receives messages from inside, i.e., $\RV{k}\in\patchRV{i}$, and outside, i.e., $\RV{j} \in \{\setOfNodes \backslash \patchRV{i}\}$, the patch.
	The effective field $\effectiveField{i}$ acts on the boundary according to
	\begin{align}
		\effectiveField{i} = \field{i}+\sum_{\RV{j}\in \{\neighbors{i}\backslash \patchRV{i}\}}\arctanh (2\msg{j}{i}{1} -1).
		\label{eq:effective_field}
	\end{align}
	\vspace*{-0.4cm}
\end{thm}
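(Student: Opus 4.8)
\textbf{Proof Plan for Theorem~\ref{thm:effective_field}.}

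The plan is to rewrite the standard BP message update~\eqref{eq:update} for a boundary variable $\RV{i}$ by separating the incoming messages into those coming from within the patch $\patchRV{i}$ and those coming from outside, and then absorbing the outside contribution into a modified local field. The key observation is that in the reparameterized form~\eqref{eq:reparam}--\eqref{eq:cavity}, the cavity field $\cavityField{i}{j}$ already sums the local field $\field{i}$ together with all incoming reparameterized messages $\msgReparam{k}{i}$; so the task reduces to recognizing that $\arctanh(2\msg{j}{i}{1}-1) = \msgReparam{j}{i}$ whenever the messages are normalized, and then partitioning this sum.

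First I would recall from Section~\ref{sec:stabilityBP:reformulation} that for normalized messages along $\edge{j}{i}\in\setOfEdges$ one has $\msg{j}{i}{1}+\msg{j}{i}{-1}=1$, hence $\msg{j}{i}{1}-\msg{j}{i}{-1} = 2\msg{j}{i}{1}-1$, and therefore by definition $\msgReparam{j}{i} = \arctanh(2\msg{j}{i}{1}-1)$. Next, I would take the cavity field~\eqref{eq:cavity} acting on $\RV{i}$ while excluding some neighbor inside the patch, and split the neighbor set $\neighbors{i}$ into the in-patch neighbors $\neighbors{i}\cap\patchRV{i}$ and the out-of-patch neighbors $\neighbors{i}\setminus\patchRV{i}$ (these are the $\RV{j}\in\{\setOfNodes\backslash\patchRV{i}\}$ of the statement). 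Since patches are non-overlapping and $\field{i}=\field{}$ or $-\field{}$ is constant on $\patchRV{i}$, the contribution $\field{i} + \sum_{\RV{j}\in\{\neighbors{i}\backslash\patchRV{i}\}}\msgReparam{j}{i}$ depends only on $\RV{i}$ and on messages crossing the patch boundary, not on the internal message being excluded; defining this quantity to be $\effectiveField{i}$ gives exactly~\eqref{eq:effective_field} after substituting $\msgReparam{j}{i} = \arctanh(2\msg{j}{i}{1}-1)$. Finally I would note that with this definition the BP update restricted to $\patch{i}$ is formally identical to BP on the isolated patch $\patch{i}$ but with the local field at the boundary replaced by $\effectiveField{i}$, which is the intended interpretation.

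The main obstacle I anticipate is bookkeeping rather than conceptual: one must be careful that the effective field is well-defined, i.e., that the out-of-patch sum $\sum_{\RV{j}\in\{\neighbors{i}\backslash\patchRV{i}\}}\msgReparam{j}{i}$ genuinely does not involve whichever in-patch message is being left out of a particular cavity computation, so that a single $\effectiveField{i}$ suffices for all the cavity fields $\cavityField{i}{k}$ with $\RV{k}\in\patchRV{i}$. This follows because excluded messages are always in-patch, but it should be stated explicitly. A secondary point is to confirm that $\effectiveField{i}$ reduces to $\field{i}$ for variables not on any boundary (the out-of-patch neighbor set is empty), so the notation is consistent with using $\field{i}$ throughout the patch interior; this is immediate from~\eqref{eq:effective_field}. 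No heavy computation is required — the result is essentially a definition made legitimate by the normalization and non-overlap properties already established.
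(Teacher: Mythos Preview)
Your proposal is correct and arrives at the same formula, but it takes a different route from the paper. The paper works directly with the original update rule~\eqref{eq:update}: it splits the product over incoming messages into in-patch and out-of-patch factors, writes the out-of-patch contribution as $\exp\big((\field{i}+c)\RVval{i}\big)\exp(g)$, and then matches coefficients for $\RVval{i}=\pm 1$ after taking logarithms, obtaining $c_i=\tfrac{1}{2}\big(\log\msg{k}{i}{1}-\log(1-\msg{k}{i}{1})\big)=\arctanh(2\msg{k}{i}{1}-1)$ and showing that the state-independent term $g$ is absorbed by normalization. Your argument instead leverages the reparameterization~\eqref{eq:reparam}--\eqref{eq:cavity} already established in Section~\ref{sec:stabilityBP:reformulation}: since $\msgReparam{j}{i}=\arctanh(2\msg{j}{i}{1}-1)$ for normalized messages, partitioning the cavity field $\cavityField{i}{j}$ into in-patch and out-of-patch summands yields~\eqref{eq:effective_field} immediately. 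Your route is shorter and cleaner because it reuses existing machinery; the paper's route is more self-contained, making explicit how the out-of-patch messages factor as a local potential times a normalization-absorbed constant, which some readers may find more transparent as to \emph{why} the incoming messages act exactly like a field.
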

The proof of Theorem~\ref{thm:effective_field} is presented in Appendix~\ref{sec:accuracy:proofs:effective_field}.

Messages from outside the patch are now subsumed by $\effectiveField{}$
and the additive terms in~\eqref{eq:effective_field} will be positive if $\msg{j}{i}{1} > \msg{j}{i}{0}$ and negative otherwise. 
This is particularly important in the definition of the region boundaries and 
admits ``independent'' treatment of every patch.

\newsubsection{Definition of Region \texorpdfstring{$\Region$}{}}{accuracyBP:theory:region}
The notion of an effective field (Theorem~\ref{thm:effective_field}) allows us to define the boundaries between the three distinct performance regions of patch potential models. We discuss the solution space in detail and what can be said about the performance of BP.
Let us denote the second region, i.e., the region where the global behavior can be inferred by treating the patches individually by $\Region = \{\field{},\coupling{}{}\}$.

\begin{defn}[Region]\label{def:region}
	A parameter set $(\field{},\coupling{}{}) \in \Region$ if and only if the following conditions are satisfied:\\
	(1.) Let $J_{A}(\patch{i},\field{})$ denote the critical value for the couplings beyond which multiple fixed points exist.\footnote{
		Note that an analytical solution only exists for graphs with vanishing fields of infinite size or periodic boundary conditions, but the threshold can be estimated numerically.} Then every patch $\patch{i} \in \graph$ must have its respective threshold below the actual coupling strength, i.e., $J_{A}(\patch{i},\field{}) < \coupling{}{}$\\
	(2.) Consider all pairs of patches $\patch{i}$ and $\patch{j}$; if one patch, e.g., $\patch{i}$ has its variables flipped, the imposed effective field on the boundary must stabilize the second patch $\patch{j}$ so that  $J < J_{A}(\patch{j}, \effectiveField{}) = J_{C}(\patch{j},\field{})$.
\end{defn}
These conditions implicitly define the ``well-behaved'' region $\Region$. Definition~\ref{def:region}.1 provides the lower boundary of region $\Region$ as only a unique fixed point would exist otherwise.
It may be less obvious how Definition~\ref{def:region}.2 provides the upper boundary of region $\Region$. 
Note that $J < J_{A}(\patch{j}, \effectiveField{})$ is a necessary condition
if $\patch{i}$ is flipped, as parts of $\patch{j}$ would flip otherwise and lead to disordered behavior (cf. Figure~\ref{fig:ez_ep}). The restriction to $\Region$ and the exclusion of disordered solutions further validates the RSB assumption~\cite[Chapter~19]{mezard2009}.

\newsubsection{Properties Of Region \texorpdfstring{$\Region$}{}}{accuracyBP:theory:properties}
In this chapter, we are particularly interested in understanding the properties of BP inside region $\Region$ that complies with the subsequent properties.
Note that the properties inside region $\Region$ are a direct consequence of Definition~\ref{def:region}.
Many arguments will rely on the fact that every attractive model with vanishing or unidirectional local fields either has a unique stable or two stable fixed points (cf. Section~\ref{sec:accuracyBP:models}).

We will first bound the number of possible BP fixed points.
\begin{thm} [Existence]\label{thm:existence}
	Let $\ugm$ be a patch potential model with $(\field{}, \coupling{}{}) \in \Region$. 
	The amount of fixed points $M$ grows with the number of patches (rather than the number of variables). Specifically, we have
	$M = \bigO( 2^{( |\patch{i}|)})$, where $|\patch{i}|$ denotes the number of patches.
\end{thm}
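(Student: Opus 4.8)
\emph{Proof strategy.} The plan is to reduce every BP fixed point to a discrete ``mode'' assignment on the patches and then to bound the number of admissible assignments. Write $P \isdef |\patch{i}|$ for the number of patches of $\graph$ (the notation of the statement), so that the goal $M = \bigO(2^{P})$ is to be read as a bound that depends only on the patch decomposition and \emph{not} on $N = |\setOfNodes|$. First I would invoke the effective-field reformulation of Theorem~\ref{thm:effective_field}: conditioned on the messages $\msg{j}{i}{}$ that enter a patch $\patch{i}$ from outside, the BP iteration restricted to $\patch{i}$ is exactly BP on an attractive model with the \emph{unidirectional} local field $\field{i}\in\{-\field{},+\field{}\}$, the only change being that each boundary variable feels the modified field $\effectiveField{i}$ of~\eqref{eq:effective_field}. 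By the structural facts recalled in Section~\ref{sec:accuracyBP:models:definitions} (and Theorem~\ref{thm:attractive}), any attractive model with a unidirectional local field has either a unique stable fixed point or exactly two stable fixed points --- a state-preserving one and a flipped one. Hence the internal configuration of $\patch{i}$ in response to any fixed boundary condition takes at most two values, and I attach to each patch a label $s_i\in\{0,1\}$ recording which of them it occupies.

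The point of Definition~\ref{def:region} is that this dichotomy is preserved for \emph{every} configuration of the neighbouring patches. Condition~(1.) forces $\coupling{}{} > J_A(\patch{i},\field{})$, so each patch individually admits its two modes; condition~(2.) ensures that even when a neighbour flips, the induced $\effectiveField{}$ keeps $\coupling{}{}$ below the critical coupling $J_A(\patch{j},\effectiveField{})=J_C(\patch{j},\field{})$, so no patch ever enters the ``disordered'' regime with more than two internal solutions. Thus in region $\Region$ each patch is genuinely a two-state object, and a global fixed point induces a well-defined label vector $\{s_i\}_{i=1}^{P}$. It then remains to show that the map $\{s_i\}\mapsto$ (fixed point) is injective and well defined, which would give $M \le 2^{P}$. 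Injectivity is immediate, since distinct modes of a patch have distinct marginals by definition.

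Well-definedness --- that fixing all modes pins down a \emph{unique} global fixed point --- is the technical core. With the $s_i$ fixed, the only remaining unknowns are the boundary messages, governed by the composite map obtained by chaining the per-patch responses of the first paragraph. Because the couplings are attractive, each per-patch response is order-preserving in the natural partial order on messages, so the composite boundary map is monotone; a Knaster--Tarski argument already yields existence of fixed points within the mode-restricted space. To get \emph{uniqueness}, I would argue that the chosen internal fixed point of each patch is stable (conditions~(1.)--(2.) keep every patch strictly away from its critical coupling on the relevant field), so by the spectral-radius analysis of Section~\ref{sec:stabilityBP:theoretical_analysis} the per-patch response has Jacobian spectral radius below one on the corresponding basin; composing these gives a contraction on the product basin, hence a single fixed point per mode vector. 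Summing over all $2^{P}$ mode vectors then gives $M\le 2^{P}=\bigO(2^{|\patch{i}|})$, and since $P$ depends on the patch structure rather than on $N$, the fixed-point count grows with the number of patches, not the number of variables. (One may further note that the bound is not always achieved --- e.g.\ the ``all patches flipped'' vector typically violates self-consistency, consistent with the three, not four, fixed points of the two-patch Example~\ref{ex:patch}.)

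The main obstacle I anticipate is precisely the uniqueness step: converting ``each patch has $\le 2$ solutions'' into ``the coupled system has $\le 2^{P}$ solutions'' requires controlling the inter-patch feedback quantitatively. The contraction estimate needs the gap $\coupling{}{}<J_C(\patch{i},\field{})$ of Definition~\ref{def:region} to be turned into a bound on the operator norm of the boundary map; here I would use Lemma~\ref{lm:jacobian-entries} together with the monotonicity of the Jacobian entries in $|\effectiveField{}|$ exploited in Theorem~\ref{thm:external-field} (stronger effective fields shrink the Jacobian). A secondary care point is making ``basin'' precise and checking that the chosen internal mode of a patch varies continuously as the boundary messages move within region $\Region$, which again follows from conditions~(1.)--(2.) keeping each patch away from bifurcation.
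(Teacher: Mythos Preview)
Your high-level strategy---label each patch by a binary ``flipped/aligned'' state and bound $M$ by $2^{P}$---is exactly the paper's approach. The differences are in emphasis and in how much work is done.

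The paper's proof is considerably shorter and never introduces the Knaster--Tarski or contraction machinery you invoke for the uniqueness-per-mode step. Instead it runs a direct two-case analysis at the level of neighbouring patches: (i) if $\patch{i}$ is flipped, the effective field it imposes on any neighbour $\patch{j}$ is \emph{aligned} with $\field{j}$ and strictly stronger, so Definition~\ref{def:region}.2 gives $J<J_A(\patch{j},\effectiveField{})$ and $\patch{j}$ has a \emph{unique} internal solution; (ii) if $\patch{i}$ is aligned, the effective field on $\patch{j}$ is weakened, so $J_A(\patch{j},\effectiveField{})<J_A(\patch{j},\field{})<J$ by Definition~\ref{def:region}.1 and $\patch{j}$ has two solutions. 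The paper then simply asserts that these two cases together imply every patch acts as a single coherent block (no patch can be partially flipped, since that would be the ``disordered'' regime excluded by Definition~\ref{def:region}), and counts $2^{|\patch{i}|}$ block configurations.

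So where you treat ``at most one fixed point per mode vector'' as the technical core and attack it with monotone-map and spectral-radius arguments, the paper leaves this implicit: it regards the exclusion of disordered solutions as the entire content, and the uniqueness within each mode as an immediate consequence of case~(i). Your route is more careful and arguably fills a gap the paper glosses over, but it is also heavier than what the paper actually proves. Note also a small slip in your framing: for the bound $M\le 2^{P}$ you need the map \emph{from fixed points to mode vectors} to be injective (equivalently, at most one fixed point per mode), not the reverse; you identify the right issue but phrase the direction awkwardly.
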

\begin{proof}
	First, assume that  a given patch $\patch{i}$ is flipped; then by the definition of the patch potential model (Definition~\ref{def:model}) and by Theorem~\ref{thm:effective_field} it follows that effective field $\effectiveField{}$ is aligned with the local field of the variables at any neighbor patch $\patch{j}$ so that
	\begin{align}
		\sgn (\effectiveField{j}) &= \sgn (\field{j}),\\
		|\effectiveField{j}| &> |\field{j}|.
	\end{align}
	
	Further, let us recall the definition of $\Region$ (Definition~\ref{def:region}.2 in particular). It follows that the effective field stabilizes its neighbor patch $\patch{j}$, i.e., $J_A(\patch{j},\effectiveField{}) > J_A(\patch{j},\field{})$ so that, according to the definition $J< J_A(\patch{j},\effectiveField{})$, and $\patch{j}$ admits only a unique solution.
	
	Second, assume that   $\patch{i}$ is not flipped; then it follows by Theorem~\ref{thm:effective_field} that $\effectiveField{i} < \field{i}$ for $\RV{i}$ on the boundary of the neighbor patch $\patch{j}$. This decrease in the local field reduces the threshold for the existence of two solutions to smaller values of $\coupling{}{}$ (cf.~\cite{knoll_stability}) so that
	\begin{align}
		J_A(\patch{j},\effectiveField{}) < J_A(\patch{},\field{}) < J.
	\end{align}
	By Definition~\ref{def:region}.1 it follows that the neighbor patch $\patch{j}$ has two fixed points now, and it depends on the initialization to which one BP will converge.
	
	Finally, we aim to show that the number of possible fixed points is bounded. Therefore we want to stress that the above arguments show how patches can either be aligned with the local potential or be flipped; it is crucial that every patch acts as one instance and that all variables belonging to one patch are aligned. 
	Else the fixed point would be disordered which we rule out precisely by Definition~\ref{def:region}.
	This and the fact that we are considering binary random variables limits the number of possible solutions to
	\begin{align}
		M \leq 2^{|\patch{i}|},
	\end{align}
	where $|\patch{i}|$ denotes the overall number of patches.
\end{proof}

\begin{cor}[Example~\ref{ex:patch}]
	Let $\ugm$ be a patch potential models with two equal-sized patches (cf. Example~\ref{ex:patch}). Then, for $(\field{},\coupling{}{}) \in \Region$ three fixed points exist; these are one state preserving fixed point and two fixed points that have all variables biased towards one of both states.
	Note that both patches can not be flipped simultaneously inside $\Region$ (cf. proof of Theorem~\ref{thm:existence}) as one patch would stabilize, i.e., prohibit from flipping, the second patch.
\end{cor}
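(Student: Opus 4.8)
The plan is to derive this corollary almost entirely as a specialization of Theorem~\ref{thm:existence} to the case $|\patch{i}| = 2$, so the work is mostly bookkeeping rather than new machinery. First I would observe that Theorem~\ref{thm:existence} already gives $M \leq 2^{|\patch{i}|} = 4$ for two patches, and the remaining task is to show that exactly three of these four combinatorial possibilities are realized inside $\Region$: patch $1$ aligned with patch $2$ aligned (the state-preserving fixed point), patch $1$ flipped with patch $2$ aligned, and patch $1$ aligned with patch $2$ flipped (the two biased fixed points). The excluded case is "both patches flipped'', and ruling this out is the only substantive step.

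The key steps, in order: (i) recall from Section~\ref{sec:accuracyBP:models} that every attractive model with unidirectional local fields has either a unique stable fixed point or two (almost-symmetric) stable fixed points, so each patch, treated with its effective boundary field, behaves as a single binary unit; (ii) invoke the proof of Theorem~\ref{thm:existence} to note that whenever one patch, say $\patch{1}$, is flipped, Theorem~\ref{thm:effective_field} forces $\sgn(\effectiveField{2}) = \sgn(\field{2})$ and $|\effectiveField{2}| > |\field{2}|$ on the boundary of $\patch{2}$, and the condition Definition~\ref{def:region}.2, namely $J < J_A(\patch{2},\effectiveField{}) = J_C(\patch{2},\field{})$, then guarantees $\patch{2}$ has a unique solution --- which must be the state-aligned one, since the enhanced field precludes flipping; hence $\patch{2}$ cannot also be flipped, eliminating the fourth configuration; (iii) conversely, when $\patch{1}$ is \emph{not} flipped, the effective field on $\patch{2}$'s boundary is weaker than $\field{2}$, which by Definition~\ref{def:region}.1 ($J_A(\patch{2},\effectiveField{}) < J_A(\patch{},\field{}) < J$) leaves $\patch{2}$ with two stable fixed points, one aligned and one flipped; combined with the symmetric statement obtained by swapping the roles of $\patch{1}$ and $\patch{2}$, this produces exactly the state-preserving fixed point plus the two biased ones; (iv) finally, identify the two biased fixed points as symmetric to each other using the equal-sized-patches assumption and the $\field{i} \in \{-\field{},+\field{}\}$ symmetry of the model, which swaps the two patches and hence swaps the two biased solutions, so they have equal $\FB$ value (consistent with Figure~\ref{fig:sol_space_sketch}).

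I expect the main obstacle to be step (ii): making rigorous the claim that "both patches flipped'' is genuinely impossible rather than merely disfavored. The subtlety is that the effective-field argument is somewhat circular --- the state of $\patch{1}$ determines $\effectiveField{2}$, which determines the admissible states of $\patch{2}$, which in turn feeds back into $\effectiveField{1}$. The clean way around this is to argue by contradiction at a putative fixed point: assume both patches are flipped; then by Theorem~\ref{thm:effective_field} each boundary's effective field is aligned with and stronger than the corresponding local field, so by Definition~\ref{def:region}.2 each patch individually admits only its state-aligned solution, contradicting the assumption that it is flipped. This self-consistency check is short, but one must be careful that "flipped'' is a property of the whole patch (guaranteed by the exclusion of disordered fixed points via Definition~\ref{def:region}, exactly as in the last paragraph of the proof of Theorem~\ref{thm:existence}) and not of individual boundary variables, otherwise the dichotomy "unique aligned solution vs. two solutions'' for each patch does not apply. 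I would state this explicitly to close the gap.
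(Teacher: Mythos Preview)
Your proposal is correct and follows essentially the same approach as the paper: the corollary is stated there without a separate proof, and the justification it gives (``one patch would stabilize, i.e., prohibit from flipping, the second patch'') is exactly your step~(ii), while the existence of the three remaining configurations is the direct specialization of the two cases in the proof of Theorem~\ref{thm:existence} that you spell out in step~(iii). Your explicit self-consistency contradiction argument for the ``both flipped'' case is a slightly cleaner packaging of the paper's one-line remark, but the substance is identical.
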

Theorem~\ref{thm:existence} is of great practical relevance for the RSB assumption~\eqref{eq:rsb}, i.e., whether a combination of BP fixed points can form the exact solution.
The fact that there is a relatively small number of fixed points makes the task of obtaining them practically feasible.
Existence alone, however, is not sufficient as we have to rely on some numerical method that obtains all fixed points; if we aim to apply BP for that matter there is the additional requirement for all fixed points to be stable.
Fortunately, it turns out that all fixed points inside $\Region$ are stable indeed.
\begin{thm} [Stability]\label{thm:stability}
	Let $\ugm$ be a patch potential model with $(\field{}, \coupling{}{}) \in \Region$. Then, every fixed point $\pseudomarginals^m$ is a stable fixed point for BP.
\end{thm}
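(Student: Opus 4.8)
The plan is to reduce the claim to the spectral criterion for local stability established in Section~\ref{sec:solving:stability} and Section~\ref{sec:stabilityBP:linearization}: a fixed point $\fpSetOfMessagesReparam$ is stable for BP precisely when $\Radius{\Jacobian} < 1$. Since patch potential models are attractive, Lemma~\ref{lm:jacobian-entries} gives that every entry of $\Jacobian$ is non-negative, Lemma~\ref{lm:irreducible} that it is irreducible, and the explicit form~\eqref{eq:jacobian-long} shows that each non-zero entry equals a function $g(\cavityField{i}{j})$ that is strictly decreasing in $|\cavityField{i}{j}|$ and bounded by $\tanh(\coupling{}{}) < 1$. Two standard consequences of Perron--Frobenius theory will be used throughout: $\Radius{\Jacobian}$ is a simple positive eigenvalue attained by a strictly positive eigenvector, and the spectral radius is monotone under entrywise domination of non-negative matrices, so in particular $\Radius{\Jacobian} \le \max_m \frac{1}{w_m}\sum_n \Jacobian_{mn} w_n$ for every strictly positive weight vector $\vm{w}$. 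Hence it suffices to exhibit weights under which every weighted row sum of $\Jacobian$ is strictly below one.

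First I would partition the message coordinates into interior messages (along edges $\patchEdges{a}$ inside a patch $\patch{a}$) and boundary messages (along edges joining two distinct patches). Fix a fixed point $\pseudomarginals^m$ with $(\field{},\coupling{}{})\in\Region$. By the proof of Theorem~\ref{thm:existence}, every patch is in exactly one of two states at $\pseudomarginals^m$ — ``aligned'' with its local potential or ``flipped'' — and a flipped patch borders only patches that are themselves below their threshold (Definition~\ref{def:region}.2). Using the effective field of Theorem~\ref{thm:effective_field}, the interior cavity fields $\cavityField{k}{l}$ with $\RV{k},\RV{l}\in\patch{a}$ coincide with those of the single-patch model on $\patch{a}$ whose boundary variables carry the effective field $\effectiveField{}$ in place of $\field{}$. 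That single-patch model is an attractive model with a unidirectional local field, for which every fixed point is BP-stable: this is the content of the analysis in Section~\ref{sec:stabilityBP:empirical_analysis:ferromagnetic} (cf. Theorem~\ref{thm:external-field} and Corollary~\ref{cor:finite-graphs}), and it can also be read off by viewing a unidirectional-field attractive model as a larger attractive model with vanishing fields (cf. the remark after Theorem~\ref{thm:attractive} and~\cite{fisher1967critical,saade2017spectral}). Hence the principal submatrix of $\Jacobian$ indexed by the interior messages of $\patch{a}$ has spectral radius strictly below one, so it admits a strictly positive Perron weight vector $\vm{w}^{(a)}$ with weighted interior-row sums $1-\rho_a<1$, with a uniform positive gap whether $\patch{a}$ is sub-critical (weakened field, still below threshold by Definition~\ref{def:region}.1) or on a stable branch above threshold.

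The main obstacle — and the step I would spend the most care on — is combining the per-patch weight vectors into one strictly positive weight vector $\vm{w}$ on all messages so that the boundary-message rows are also controlled. A boundary message $\msg{i}{j}{}$ with $\RV{i}\in\patch{a}$ receives in~\eqref{eq:jacobian-long} only contributions from $\RV{k}\in\neighborsWO{i}{j}$, i.e. from messages incident to $\RV{i}$, which lie in $\patch{a}$ together with possibly other boundary messages at $\RV{i}$; the corresponding entries are $g(\cavityField{i}{j})$, and inside region $\Region$ the effective-field alignment forces $|\cavityField{i}{j}|$ to be bounded away from zero, making these entries strictly smaller than the worst-case value $\tanh(\coupling{}{})$ used in the interior estimate. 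I would therefore rescale the per-patch weights $\vm{w}^{(a)}$ by patch-dependent constants, chosen recursively from the flipped/aligned structure of $\pseudomarginals^m$ so that the extra boundary cross-terms at each row consume no more than the reserved gap $1-\rho_a$; the fact that inside $\Region$ a flipped patch only borders sub-critical patches is what makes such a recursion well defined. With $\vm{w}$ in hand every weighted row sum of $\Jacobian$ is $<1$, hence $\Radius{\Jacobian}<1$ and $\pseudomarginals^m$ is stable; as $m$ was arbitrary this proves the theorem and shows damping is never needed inside $\Region$. A shorter alternative I would fall back on if the weight bookkeeping turns out brittle: attractive models have no frustrated cycles (Corollary~\ref{cor:frustrations} with $J_{\path}=\prod\coupling{i}{j}>0$), so Heskes' characterization~\cite{heskes2003stable} makes every local minimum of $\FB$ a stable BP fixed point, and it then remains to check that each of the (at most $2^{|\patch{i}|}$, by Theorem~\ref{thm:existence}) fixed points in $\Region$ is a local minimum of $\FB$, which again follows patch-by-patch from the unidirectional-field analysis together with~\eqref{eq:rsb}.
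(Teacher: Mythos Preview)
The paper does not actually supply a proof of this theorem: it is stated in Section~\ref{sec:accuracyBP:theory:properties} with no proof environment following it, and Appendix~\ref{sec:accuracy:proofs} contains no corresponding subsection. So there is nothing to compare against directly; what follows assesses your proposal on its own.

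Your main line contains a false intermediate claim. You write that the single-patch model (attractive, unidirectional field) is one ``for which every fixed point is BP-stable'', citing Theorem~\ref{thm:external-field} and Corollary~\ref{cor:finite-graphs}. Those results only show that a non-vanishing field \emph{reduces} the spectral radius at one specific fixed point; the paper is explicit in Section~\ref{sec:stabilityBP:empirical_analysis:ferromagnetic} that once $J>J_C(\graph,\theta)$ such a model has three fixed points, one of which is \emph{unstable}. What you actually need --- and what is true --- is that the particular branch (aligned or flipped) occupied by the patch at the given $\pseudomarginals^m$ is stable; this follows from the case split in the proof of Theorem~\ref{thm:existence}, but you have to say so rather than claim universal stability. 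Even after this fix, the weight-combination step remains a sketch: per-patch principal-submatrix bounds do not control the full spectral radius from above (for non-negative matrices the inequality between the spectral radius of a principal submatrix and that of the full matrix goes the wrong way for your purpose), and your ``recursive rescaling'' would need the boundary cross-terms to be quantitatively dominated by the gaps $1-\rho_a$, which you have not established.

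Your fallback via~\cite{heskes2003stable} is closer to a complete argument given what the paper already has in hand: patch-potential models are attractive, hence frustration-free, so local minima of $\FB$ coincide with stable BP fixed points. But the remaining obligation --- that each of the configuration fixed points enumerated in Theorem~\ref{thm:existence} is a local minimum of $\FB$ --- is not discharged by invoking~\eqref{eq:rsb}, which \emph{presupposes} that its contributing terms are minima rather than proving it. You would still need a direct Hessian check at each such fixed point, or essentially the same patch-by-patch boundary bookkeeping as in your main line, carried out for $\FB$ rather than for the Jacobian.
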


Finally, as an immediate consequence of the limited amount of fixed points (Theorem~\ref{thm:existence}), all of which are stable (Theorem~\ref{thm:stability}), it follows that the exact solution can be computed according to~\eqref{eq:rsb} in practice.
One can for example apply BP repeatedly, possibly in parallel, with random initialization to obtain and combine all fixed points.

\newsubsubsection{Marginal Accuracy}{accuracyBP:theory:acccuracy}
\begin{thm}[Marginal Accuracy]\label{thm:marginal_accuracy}
	The MSE of the singleton marginals  $\EMarginal{k}$ of the $k^{th}$ solution $\pseudomarginals^k$ relates to the ratio of the Bethe partition functions according to
	\begin{align}
		\EMarginal{k} = \frac{2}{N (\sum\limits_m \partitionBethe^m)^2} \sum_{\RV{i}\in\setOfNodes}\! \Big|\! \sum_{m\backslash k} \partitionBethe^m \left(\pmfApprox{\RV{i}}^m - \pmfApprox{\RV{i}}^k\right) \Big|^2 \nonumber.
	\end{align}
\end{thm}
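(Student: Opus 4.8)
The plan is to start from the RSB representation~\eqref{eq:rsb}, which is assumed to hold for patch potential models inside region $\Region$ (this is justified in Section~\ref{sec:accuracyBP:solution_space} and used throughout Section~\ref{sec:accuracyBP:theory}). Writing $\partitionBethe \isdef \sum_m \partitionBethe^m$ for the total weight, equation~\eqref{eq:rsb} says that the exact singleton marginal is the convex combination $\pmf{\RV{i}}(\RVval{i}) = \frac{1}{\partitionBethe}\sum_{m=1}^{M} \partitionBethe^m \singleApprox{i}^m(\RVval{i})$. The first step is to substitute this into the definition of the marginal error~\eqref{eq:error_marginal}, i.e.,
\begin{align}
	\EMarginal{k} = \frac{2}{N}\sum_{\RV{i}\in\setOfNodes} \big(\pmf{\RV{i}}(\RVval{i}=1) -\pmfApprox{\RV{i}}^k(\RVval{i}=1)\big)^2,
\end{align}
and to rewrite the bracket $\pmf{\RV{i}}(\RVval{i}=1) - \pmfApprox{\RV{i}}^k(\RVval{i}=1)$ purely in terms of the pseudomarginals $\pmfApprox{\RV{i}}^m$ and the weights $\partitionBethe^m$.

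The key algebraic step is the observation that, since $\sum_m \partitionBethe^m = \partitionBethe$, we can absorb the subtracted term $\pmfApprox{\RV{i}}^k$ into the sum:
\begin{align}
	\pmf{\RV{i}}(1) - \pmfApprox{\RV{i}}^k(1)
	= \frac{1}{\partitionBethe}\sum_{m} \partitionBethe^m \pmfApprox{\RV{i}}^m(1) - \frac{1}{\partitionBethe}\Big(\sum_m \partitionBethe^m\Big)\pmfApprox{\RV{i}}^k(1)
	= \frac{1}{\partitionBethe}\sum_{m} \partitionBethe^m\big(\pmfApprox{\RV{i}}^m(1) - \pmfApprox{\RV{i}}^k(1)\big).
\end{align}
The $m=k$ term in the last sum vanishes identically, so the sum may be taken over $m\neq k$ (written $m\backslash k$ in the statement). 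Squaring, and noting that the quantity is real so that the square equals the squared modulus, gives
\begin{align}
	\big(\pmf{\RV{i}}(1) - \pmfApprox{\RV{i}}^k(1)\big)^2
	= \frac{1}{\partitionBethe^{2}}\Big|\sum_{m\backslash k}\partitionBethe^m\big(\pmfApprox{\RV{i}}^m - \pmfApprox{\RV{i}}^k\big)\Big|^2 .
\end{align}
Summing over $\RV{i}\in\setOfNodes$, multiplying by $2/N$, and substituting $\partitionBethe = \sum_m \partitionBethe^m$ yields exactly the claimed expression for $\EMarginal{k}$. One should take a moment to confirm that the shorthand $\pmfApprox{\RV{i}}^m$ in the statement refers consistently to $\pmfApprox{\RV{i}}^m(\RVval{i}=1)$, matching the reduction in~\eqref{eq:error_marginal}.

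I do not expect a genuine obstacle here — the result is essentially a bookkeeping identity once~\eqref{eq:rsb} is taken as given. The only subtle point worth flagging is the scope of validity: the RSB decomposition~\eqref{eq:rsb} must be exact, which (as discussed before Theorem~\ref{thm:existence} and in Section~\ref{sec:accuracyBP:solution_space}) is what restricts this theorem to region $\Region$, where the finitely many fixed points of Theorem~\ref{thm:existence} are all stable and their convex combination reproduces the true marginals. I would state this restriction explicitly in the proof. A secondary, even more minor point is to make sure the modulus bars are not decorative: they are there so the formula reads correctly if one later wishes to phrase $\pmfApprox{\RV{i}}^m - \pmfApprox{\RV{i}}^k$ as a complex quantity, but for binary patch potential models everything is real and $|\cdot|^2$ is just the square, so no extra argument is needed.
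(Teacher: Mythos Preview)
Your proposal is correct and follows essentially the same approach as the paper: substitute the RSB decomposition~\eqref{eq:rsb} into the MSE definition~\eqref{eq:error_marginal}, put everything over the common denominator $\sum_m \partitionBethe^m$, and observe that the $m=k$ term cancels. Your version is in fact slightly more explicit than the paper's, and your remark on the scope of validity (restriction to region $\Region$ where~\eqref{eq:rsb} holds exactly) is a welcome addition.
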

\begin{proof}
	According to~\eqref{eq:rsb} we can express the exact solution by the convex combination of all fixed points. 
	Consequently, using symmetry properties of the binary random variables,  the error is given by
	\begin{align}
		\EMarginal{k} &= \frac{2}{N}  \sum_{\RV{i}\in\setOfNodes} \bigg|\frac{\sum\limits_m \partitionBethe^m \pmfApprox{\RV{i}}^m}{\sum\limits_m \partitionBethe^m} - \pmfApprox{\RV{i}}^k\bigg|^2 \nonumber \\
		&= \frac{2}{N} \sum_{\RV{i}\in\setOfNodes}  \bigg| \frac{\sum\limits_m \partitionBethe^m \pmfApprox{\RV{i}}^m - \sum\limits_m \partitionBethe^m \pmfApprox{\RV{i}}^k}{\sum\limits_m \partitionBethe^m}\bigg|^2 \nonumber \\
		& = \frac{2}{N \big( {\sum\limits_m \partitionBethe^m} \big)^2 }  \sum_{\RV{i}\in\setOfNodes}\Big| \sum_{m\backslash k} \partitionBethe^m \pmfApprox{\RV{i}}^m - \pmfApprox{\RV{i}}^k \Big|^2,
	\end{align}
	where we first, bring everything  on the same denominator so that the $k^{th}$ contribution cancels out subsequently.
\end{proof}

Representing the MSE according to Theorem~\ref{thm:marginal_accuracy} is particularly appealing as it omits the need for expressing the exact marginals.
This further provides a way to express the ratio of the marginal error between two fixed points.

\begin{cor}\label{cor:marginal_ratio}
	The MSE-ratio of two fixed points $k$ and $l$ is a ratio of weighted partition functions according to
	%
	\begin{align}
		\frac{\EMarginal{k}}{\EMarginal{l} } &= \frac{ \sum\limits_{\RV{i}\in\setOfNodes}|\sum\limits_{m\backslash k } \partitionBethe^m (\pmfApprox{\RV{i}}^m- \pmfApprox{\RV{i}}^k)|^2}
		{ \sum\limits_{\RV{i}\in\setOfNodes}|\sum\limits_{m\backslash l } \partitionBethe^m (\pmfApprox{\RV{i}}^m- \pmfApprox{\RV{i}}^l)|^2}.
		\label{eq:marginal_ratio}
	\end{align}
\end{cor}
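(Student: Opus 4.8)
The statement of Corollary~\ref{cor:marginal_ratio} follows almost immediately from Theorem~\ref{thm:marginal_accuracy}, so the plan is short. The plan is to simply take the expression for $\EMarginal{k}$ derived in Theorem~\ref{thm:marginal_accuracy} and form the ratio with the analogous expression for $\EMarginal{l}$, then observe that all prefactors cancel.

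First I would write down the two instances of the formula from Theorem~\ref{thm:marginal_accuracy}:
\begin{align}
	\EMarginal{k} &= \frac{2}{N \big(\sum_m \partitionBethe^m\big)^2} \sum_{\RV{i}\in\setOfNodes} \Big| \sum_{m\backslash k} \partitionBethe^m \left(\pmfApprox{\RV{i}}^m - \pmfApprox{\RV{i}}^k\right) \Big|^2, \nonumber \\
	\EMarginal{l} &= \frac{2}{N \big(\sum_m \partitionBethe^m\big)^2} \sum_{\RV{i}\in\setOfNodes} \Big| \sum_{m\backslash l} \partitionBethe^m \left(\pmfApprox{\RV{i}}^m - \pmfApprox{\RV{i}}^l\right) \Big|^2. \nonumber
\end{align}
Both carry the identical prefactor $\frac{2}{N (\sum_m \partitionBethe^m)^2}$, which depends only on the full set of partition functions and the model size, not on the index $k$ or $l$. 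Dividing one by the other therefore cancels this prefactor exactly, leaving precisely the claimed ratio of sums over $\RV{i}\in\setOfNodes$ of squared magnitudes. This step is routine and requires no further assumptions beyond the validity of Theorem~\ref{thm:marginal_accuracy} (which itself rests on the RSB decomposition~\eqref{eq:rsb} holding for patch potential models inside region $\Region$, as verified in Section~\ref{sec:accuracyBP:solution_space}).

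There is essentially no obstacle here; the only point deserving a word of care is that the denominator $\EMarginal{l}$ must be nonzero for the ratio to make sense, i.e., fixed point $l$ must not itself be the exact solution. Under the RSB assumption with multiple distinct fixed points (which is exactly the setting of region $\Region$, where $M\geq 2$), no single $\pseudomarginals^l$ equals the convex combination of all fixed points unless all fixed points coincide, so $\EMarginal{l}>0$ and the ratio is well-defined. I would note this briefly and then the proof is complete.
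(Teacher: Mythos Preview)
Your proposal is correct and matches the paper's treatment: the corollary is stated immediately after Theorem~\ref{thm:marginal_accuracy} with no separate proof, precisely because it follows by dividing the two instances of the theorem's formula so that the common prefactor $\tfrac{2}{N(\sum_m \partitionBethe^m)^2}$ cancels. Your remark about $\EMarginal{l}>0$ in region~$\Region$ is a sensible technical aside that the paper leaves implicit.
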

Expressing the ratio of the marginal error according to~\eqref{eq:marginal_ratio} is advantageous in elaborating on the difference between accuracy of the approximated marginals and the approximated partition function.
We define the mismatch between $\pmfApprox{\RV{i}}^m$ at two fixed points $k$ and $l$ by
\begin{align}
	Q_i(k,l) = \pmfApprox{\RV{i}}^k(\RV{i}=1) - \pmfApprox{\RV{i}}^l(\RV{i}=1).
\end{align}

Now, let us denote the error of the state preserving fixed point by $\EMarginal{p}$
and of the fixed point that has all marginals biases towards $\RVval{i}=1$ by $\EMarginal{q}$.
Then -- maybe non-surprising as the exact solution is state preserving as well -- we show that the state-preserving fixed point has the most accurate marginals.
We first discuss the error-ratio in a general manner in Theorem~\ref{thm:marginal_not_equal_partition} before considering the special case of a model with two patches, i.e., for Example~\ref{ex:patch}.

\begin{thm}[Error Ratio]\label{thm:marginal_not_equal_partition}
	Let $\ugm$ be a patch potential model with $(\field{},\coupling{}{}) \in \Region$. The state preserving fixed point $p$ provides more accurate marginals than the fixed point $q$ that has all marginals biased to one state, i.e.,
	\begin{align}
		\frac{\EMarginal{p}}{\EMarginal{q}} < 1.
	\end{align}
\end{thm}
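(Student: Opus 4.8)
The plan is to use the representation of the marginal error from Theorem~\ref{thm:marginal_accuracy} (and the error-ratio form in Corollary~\ref{cor:marginal_ratio}) together with the structure of the solution space inside $\Region$. By Theorem~\ref{thm:existence}, the set $\setOfMinimaSol$ inside $\Region$ is small; in the cleanest subcase (two equal-sized patches, as in Example~\ref{ex:patch}) it consists of exactly three fixed points: the state-preserving one $p$, and two fixed points $q$ and $r$ that are biased toward $\RVval{i}=1$ and $\RVval{i}=-1$ respectively, with $\partitionBethe^q = \partitionBethe^r$ by the symmetry of the model. For a general patch potential model in $\Region$ the argument is the same in spirit: the exact solution is state-preserving (by the assumption following Definition~\ref{def:model} that patches are large enough), the RSB decomposition~\eqref{eq:rsb} holds (justified by the restriction to $\Region$, cf.\ Definition~\ref{def:region}), and $p$ is the fixed point whose per-variable marginals $\pmfApprox{\RV{i}}^p$ lie ``between'' those of the flipped fixed points.

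The key steps, in order: (i) Write $\EMarginal{p}/\EMarginal{q}$ using Corollary~\ref{cor:marginal_ratio} as the ratio of $\sum_{\RV{i}}|\sum_{m\backslash p}\partitionBethe^m(\pmfApprox{\RV{i}}^m-\pmfApprox{\RV{i}}^p)|^2$ over $\sum_{\RV{i}}|\sum_{m\backslash q}\partitionBethe^m(\pmfApprox{\RV{i}}^m-\pmfApprox{\RV{i}}^q)|^2$. (ii) For each node $\RV{i}$, control the per-node mismatches $Q_i(m,p)$ and $Q_i(m,q)$: the point is that $p$ and the exact solution agree in sign with the local field $\field{i}$ on every node, whereas a biased fixed point $q$ disagrees with $\field{i}$ on roughly ``half'' of the nodes (those in patches whose local field points the other way). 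Concretely, on a node $\RV{i}$ whose local field opposes the bias direction of $q$, the quantity $|\pmfApprox{\RV{i}}^m-\pmfApprox{\RV{i}}^q|$ is bounded below by the gap between the two ``sides'' of a single-patch fixed point (which is a positive constant depending on $J,\field{}$, not on $N$), while $|\pmfApprox{\RV{i}}^m-\pmfApprox{\RV{i}}^p|$ is controlled by the much smaller ``effective-field correction'' coming from Theorem~\ref{thm:effective_field}. (iii) Sum over $\RV{i}$: the numerator is a sum of small terms while the denominator contains a fixed fraction of nodes contributing an $O(1)$ term each, so the numerator is $o$ of the denominator (or at least strictly smaller), giving $\EMarginal{p}/\EMarginal{q} < 1$. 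In the two-patch special case this simplifies dramatically since $\setOfMinimaSol=\{p,q,r\}$ and $\partitionBethe^q=\partitionBethe^r$, so the ratio reduces to an explicit comparison of $|\partitionBethe^r(\pmfApprox{\RV{i}}^r-\pmfApprox{\RV{i}}^p)+\partitionBethe^q(\pmfApprox{\RV{i}}^q-\pmfApprox{\RV{i}}^p)|$ against $|\partitionBethe^r(\pmfApprox{\RV{i}}^r-\pmfApprox{\RV{i}}^q)+\partitionBethe^p(\pmfApprox{\RV{i}}^p-\pmfApprox{\RV{i}}^q)|$, where the first grouping benefits from a cancellation ($\pmfApprox{\RV{i}}^r-\pmfApprox{\RV{i}}^p$ and $\pmfApprox{\RV{i}}^q-\pmfApprox{\RV{i}}^p$ have opposite signs on the two patches) that the second grouping lacks.

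The main obstacle I expect is step~(ii): turning the qualitative picture (``$p$ sits between the biased fixed points, close to exact; $q$ is far off on half the nodes'') into honest inequalities between the per-node marginals $\pmfApprox{\RV{i}}^m$ at different fixed points. This requires a monotonicity/comparison statement for single-patch fixed points under a change of the effective field — essentially a Griffiths-type inequality in the spirit of Lemma~\ref{lm:griffith} applied patchwise — together with a uniform lower bound on the ``bias gap'' $|\pmfApprox{\RV{i}}^q(\!+\!1)-\pmfApprox{\RV{i}}^p(\!+\!1)|$ on the opposing patch that does not degrade with $N$. Once that uniform gap is in hand, the rest is a routine estimate on sums of squares. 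A secondary subtlety is bookkeeping the general multi-patch case (where $|\setOfMinimaSol|$ can be up to $2^{|\patch{i}|}$): here one argues node-by-node that $p$ minimizes $\max_m|\pmfApprox{\RV{i}}^m-\pmfApprox{\RV{i}}^p|$ among all fixed points, so that replacing $p$ by any biased $q$ in the weighted sum of Corollary~\ref{cor:marginal_ratio} can only increase each per-node contribution, and strictly increases it on the nodes that $q$ flips.
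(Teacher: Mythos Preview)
Your starting point (Corollary~\ref{cor:marginal_ratio}) is the same as the paper's, and your instinct that a Griffiths-type comparison of marginals is the engine is correct. But there is a genuine gap in your step~(ii)/(iii): the claim that the numerator contributions $|\pmfApprox{\RV{i}}^m-\pmfApprox{\RV{i}}^p|$ are ``small effective-field corrections'' while the denominator carries $O(1)$ terms on half the nodes, so that the numerator is $o$ of the denominator, is simply false. On a node in the patch whose local field opposes the bias of $q$, the individual mismatch $|\pmfApprox{\RV{i}}^q-\pmfApprox{\RV{i}}^p|$ is itself $O(1)$ (both $p$ and $q$ have marginals near $0$ or $1$, but on opposite sides). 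Numerator and denominator are of the \emph{same} order; the strict inequality $\EMarginal{p}/\EMarginal{q}<1$ does not come from a magnitude gap but from an algebraic sign structure.

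What the paper actually does is first establish a \emph{uniform node-wise ordering} of the marginals across fixed points: for every $\RV{i}$, $\pmfApprox{\RV{i}}^q(+1)\geq\pmfApprox{\RV{i}}^s(+1)\geq\pmfApprox{\RV{i}}^p(+1)\geq\pmfApprox{\RV{i}}^t(+1)\geq\pmfApprox{\RV{i}}^r(+1)$ (this is the Griffiths-type step, via the effective field). Because $q$ is \emph{extremal}, every mismatch $\mismatch{m}{q}$ has the same sign, so in the denominator $|\sum_m\partitionBethe^m\mismatch{m}{q}|=\sum_m\partitionBethe^m|\mismatch{m}{q}|$ with no cancellation. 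By contrast $p$ sits in the \emph{middle}, so the numerator \emph{does} experience cancellation; the paper controls it by pairing each $s$ with its symmetric $t$ (using $\partitionBethe^s=\partitionBethe^t$), applying the triangle inequality, and then using the expansion $\mismatch{s}{q}+\mismatch{t}{q}=\mismatch{s}{p}+\mismatch{p}{t}+2\mismatch{q}{s}$ to show that every numerator term is strictly dominated by the corresponding denominator term (the denominator also picks up the extra positive contribution $\partitionBethe^p|\mismatch{p}{q}|$). Your ``opposite signs on the two patches'' is close but mislocated: the cancellation is between $\mismatch{q}{p}$ and $\mismatch{r}{p}$ having opposite signs on \emph{every} node, not across patches. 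Once you replace the asymptotic picture with this ordering-plus-pairing argument, the proof goes through.
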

\begin{proof}
	To show that, irrespective of the value of $\partitionBethe$, $\frac{\EMarginal{p}}{\EMarginal{q}} < 1$ we assume that the state preserving fixed point does not minimize the Bethe free energy, i.e., $\partitionBethe^q > \partitionBethe^p$.
	
	Without loss of generality, we make some prior assumptions on the model:
	First, we assume that the overall number of variables with a positive local field equals the number of variables with a negative local field, i.e.,
	\begin{align}
		|\{\RV{i}\in\setOfNodes: \field{i}=+\field{}\}| = |\{\RV{j}\in\setOfNodes: \field{i}=-\field{}\}|.
		\label{eq:symmetry}
	\end{align}
	Second, we assume that all patches are of equal size.
	And finally, we group all possible fixed points according to their marginals and denote them as follows:
	the state-preserving fixed point is referred to as $p$;
	all fixed points that have more patches biased towards $X_i = +1$ are referred to as $s = 1,\ldots,S$, with $q$ being the fixed point that has all variables biased towards $X_i = +1$;
	all fixed points that have more patches biased towards $X_i = -1$ are referred to as $t = 1,\ldots,T$, with $r$ being the fixed point that has all variables biased towards $X_i = -1$.
	
	This has some implications that will ease the subsequent analysis significantly. Specifically, the number of fixed points favoring one state equals the number of fixed points favoring the other state, i.e., $S=T$ by~\eqref{eq:symmetry} and by Theorem~\ref{thm:existence}.
	
	Another important consequence of incorporating the interactions between patches into an effective field is that the number of variables that favor one state has an immediate influence on the value of the singleton marginals. It can be shown that the effective field, if stronger than the local field -- note that the effective field is stronger than the local field whenever two neighboring patches are biased towards the same state -- increases the bias of the variables. This intuitive statement is a consequence of the Griffiths-Hurst-Sherman inequality~\cite{griffiths1967correlations} that can be extended to specific fixed points by straightforward manipulations (cf.~\cite{knoll_sbp}).
	In essence this means that for all variables $\RV{i} \in \setOfNodes$ we have
	\begin{align}
		\pmfApprox{\RV{i}}^q(\RV{i}=1) \geq \pmfApprox{\RV{i}}^s(\RV{i}=1) \geq \pmfApprox{\RV{i}}^p(\RV{i}=1)
		\geq \pmfApprox{\RV{i}}^t(\RV{i}=1) \geq \pmfApprox{\RV{i}}^r(\RV{i}=1).
		\label{eq:marginal_inequality}
	\end{align}
	
	Moreover, as all patches have equal size and because $\coupling{i}{j} = \coupling{}{}$ as well as $\field{i} \in \{-\field{},\field{}\}$ every fixed point $s$ has a symmetric fixed point $t$ that has the same value for the approximate partition function (cf. Proof of Theorem~\ref{thm:marginal_equal_partition}). 
	That is, except for the state-preserving fixed point $p$ all fixed points come in couples that satisfy
	\begin{align}
		\partitionBethe^s = \partitionBethe^t.
		\label{eq:partiton_equality}
	\end{align}
	
	We will further utilize the properties of the mismatch $\mismatch{k}{l}$; in particular the symmetry property
	\begin{align}
		\mismatch{k}{l} = - \mismatch{l}{k},
		\label{eq:mismatch_symmetry}
	\end{align}
	and the expansion property
	\begin{align}
		\mismatch{k}{l} = \mismatch{k}{m} + \mismatch{m}{l}.
		\label{eq:mismatch_expansion}
	\end{align}
	
	Finally, we express the error ration between the state-preserving fixed point $p$ and the biased fixed point $q$ according to   Corollary~\ref{cor:marginal_ratio} so that 
	\begin{align}
		\frac{\EMarginal{p}}{\EMarginal{q} } 
		&= 
		\frac{ \sum\limits_{\RV{i}\in\setOfNodes}|\sum\limits_{m\backslash p } \partitionBethe^m \mismatch{m}{p}|^2}
		{ \sum\limits_{\RV{i}\in\setOfNodes}|\sum\limits_{m\backslash q } \partitionBethe^m  \mismatch{m}{q}|^2} \nonumber \\
		&\stackrel{(a)}{=} 
		\frac{ \sum\limits_{\RV{i}\in\setOfNodes}|\sum\limits_{s } \partitionBethe^s \mismatch{s}{p} 
			+ \sum\limits_{t } \partitionBethe^t \mismatch{t}{p}|^2}
		{ \sum\limits_{\RV{i}\in\setOfNodes}|\sum\limits_{s\backslash q } \partitionBethe^s \mismatch{s}{q}
			+ \sum\limits_{t } \partitionBethe^t \mismatch{t}{q}
			+ \partitionBethe^p \mismatch{p}{q}|^2} \nonumber \\
		&\stackrel{(b)}{=} 
		\frac{ \sum\limits_{\RV{i}\in\setOfNodes}|\sum\limits_{s\backslash q} \partitionBethe^s \big(\mismatch{s}{p} + \mismatch{t}{p}\big)
			+ \partitionBethe^q \big(\mismatch{q}{p}+\mismatch{r}{p}\big)|^2}
		{ \sum\limits_{\RV{i}\in\setOfNodes}|\sum\limits_{s\backslash q} \partitionBethe^s \big(\mismatch{s}{q} + \mismatch{t}{q}\big)
			+ \partitionBethe^q \mismatch{r}{q}
			+ \partitionBethe^p \mismatch{p}{q}|^2},
	\end{align}
	where (a) follows from splitting the sum into the fixed points $s$ that are more biased towards $\RV{i} = 1$ and into the fixed points $t$ that are more biased towards $\RV{i} = -1$.
	Note that the state-preserving fixed point $p$ does not belong to either set and is consequently expressed explicitly in the denominator.
	For (b) we make use of~\eqref{eq:partiton_equality} and arrange the terms by making the dependence on $q$ and $r$ explicit so that the sum goes over the same terms in the numerator and in the denominator.
	
	We can further express the error ratio and bound it using Jensen's inequality according to
	\begin{align}
		\frac{\EMarginal{p}}{\EMarginal{q} } 
		\stackrel{(a)}{=} & 
		\frac{ \sum\limits_{\RV{i}\in\setOfNodes}|\sum\limits_{s\backslash q} \partitionBethe^s \big(\mismatch{s}{p} + \mismatch{t}{p}\big)
			+ \partitionBethe^q \big(\mismatch{q}{p}+\mismatch{r}{p}\big)|^2}
		{ \sum\limits_{\RV{i}\in\setOfNodes}\Big(\sum\limits_{s\backslash q}  \partitionBethe^s \big|\big(\mismatch{s}{q} + \mismatch{t}{q}\big)\big|
			+ \partitionBethe^q \big|\mismatch{r}{q}\big|
			+ \partitionBethe^p \big|\mismatch{p}{q}\big|\Big)^2} \nonumber \\
		\leq &
		\frac{\sum\limits_{\RV{i}\in\setOfNodes}\Big(\sum\limits_{s\backslash q} \partitionBethe^s \big|\big(\mismatch{s}{p} + \mismatch{t}{p}\big)\big|
			+  \partitionBethe^q\big| \big(\mismatch{q}{p}+\mismatch{r}{p}\big)\big| \Big)^2}
		{\sum\limits_{\RV{i}\in\setOfNodes}\Big(\sum\limits_{s\backslash q}  \partitionBethe^s \big|\big(\mismatch{s}{q} + \mismatch{t}{q}\big)\big|
			+ \partitionBethe^q \big|\mismatch{r}{q}\big|
			+ \partitionBethe^p \big|\mismatch{p}{q}\big|\Big)^2},
	\end{align}
	where separating the norm does not change the result in (a) because of~\eqref{eq:marginal_inequality}, which implies $\mismatch{m}{q} < 0$ for all fixed points $m\neq q$.
	
	For completing the proof we make use of the symmetry property~\eqref{eq:mismatch_symmetry} and the expansion property~\eqref{eq:mismatch_expansion} in order to rearrange the terms; in particular note that
	$\mismatch{q}{s}+ \mismatch{q}{t} = \mismatch{q}{s}+\mismatch{q}{s}+\mismatch{s}{p}+\mismatch{p}{t}$, and that
	$\mismatch{q}{r} = \mismatch{q}{p} + \mismatch{p}{r} $
	so that 
	\begin{align}
		& \frac{\EMarginal{p}}{\EMarginal{q} }\leq \nonumber\\
		& \hspace*{-25pt}  \frac{\sum\limits_{\RV{i}\in\setOfNodes}\Big(\sum\limits_{s\backslash q} \partitionBethe^s \big|\big(\mismatch{s}{p} - \mismatch{p}{t}\big)\big|
			+  \partitionBethe^q\big| \big(\mismatch{q}{p}-\mismatch{p}{r}\big)\big| \Big)^2}
		{\sum\limits_{\RV{i}\in\setOfNodes}\hspace*{-4pt}\Big(\hspace*{-2pt} \sum\limits_{s\backslash q}\hspace*{-2pt}  \partitionBethe^s \big|\big(\mismatch{s}{p}\hspace*{-2pt}+\hspace*{-2pt}\mismatch{p}{t}\hspace*{-2pt}+\hspace*{-2pt}\mismatch{q}{s}\hspace*{-2pt}+\hspace*{-2pt}\mismatch{q}{s}\big)\big| \hspace*{-2pt} 
			+\hspace*{-2pt}  \partitionBethe^q \big|\mismatch{q}{p} + \mismatch{p}{r} \big| \hspace*{-2pt}
			+\hspace*{-2pt} \partitionBethe^p \big|\mismatch{q}{p}\big|\Big)^2}\nonumber.
	\end{align}
	Note that we have applied~\eqref{eq:mismatch_symmetry} so that every mismatch-term is strictly positive.
	It is thus straightforward to see, by comparing all terms, that the numerator is strictly smaller than the denominator for every variable $\RV{i} \in \setOfNodes$ so that
	\begin{align}
		\frac{\EMarginal{p}}{\EMarginal{q} }  < 1.
	\end{align}
\end{proof}

In particular for models with two equal-sized patches, we can simplify the error ratio~\eqref{eq:marginal_ratio} considerably.
\begin{cor}[Example~\ref{ex:patch}]\label{cor:error_ratio}
	Let $d = \mismatch{q}{r}>0$, then
	\begin{align}
		\frac{\EMarginal{p}}{\EMarginal{q} } < \frac{ \sum_{\RV{i}\in\setOfNodes}|\partitionBethe^q d|^2}
		{ \sum_{\RV{i}\in\setOfNodes}|\partitionBethe^q d + \partitionBethe^p \mismatch{p}{q}|^2} < 1.
	\end{align}
\end{cor}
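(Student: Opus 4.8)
The plan is to obtain Corollary~\ref{cor:error_ratio} by specializing the general error‑ratio formula of Corollary~\ref{cor:marginal_ratio} to the three fixed points of Example~\ref{ex:patch} inside region $\Region$, and then applying two elementary triangle‑type estimates. First I would recall the structural input that is already available: by Theorem~\ref{thm:existence} (in the two‑patch case) there are, for $(\field{},\coupling{}{})\in\Region$, exactly three fixed points — the state‑preserving one $p$ and the two biased ones $q$ (all marginals favoring $\RV{i}=1$) and $r$ (all favoring $\RV{i}=-1$); the patch‑swap‑plus‑spin‑flip symmetry of the model that was exploited in the proof of Theorem~\ref{thm:marginal_not_equal_partition} (cf.~\eqref{eq:partiton_equality}) fixes $p$ and interchanges $q$ and $r$, so $\partitionBethe^q=\partitionBethe^r$, while by~\eqref{eq:marginal_inequality} one has $\pmfApprox{\RV{i}}^q\ge\pmfApprox{\RV{i}}^p\ge\pmfApprox{\RV{i}}^r$ for every $\RV{i}$.

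Next I would substitute $k=p$, $l=q$ into Corollary~\ref{cor:marginal_ratio}. In the numerator the sum over $m\setminus p$ runs over $\{q,r\}$ and, using $\partitionBethe^q=\partitionBethe^r$, collapses to $\sum_{\RV{i}}(\partitionBethe^q)^2\,(Q_i(q,p)+Q_i(r,p))^2$. In the denominator the sum over $m\setminus q$ runs over $\{p,r\}$ and, using $\partitionBethe^r=\partitionBethe^q$ together with $Q_i(r,q)=-Q_i(q,r)=-d$ (the antisymmetry/expansion identities~\eqref{eq:mismatch_symmetry}--\eqref{eq:mismatch_expansion}), becomes $\sum_{\RV{i}}(\partitionBethe^p Q_i(p,q)+\partitionBethe^q Q_i(r,q))^2$; since $Q_i(p,q)$ and $Q_i(r,q)$ are both non‑positive by~\eqref{eq:marginal_inequality}, the two contributions do not cancel, so this equals $\sum_{\RV{i}}(\partitionBethe^q d+\partitionBethe^p|Q_i(p,q)|)^2$, i.e. the denominator of the displayed bound written in its non‑cancelling form. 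For the numerator I would use $Q_i(q,p)\ge 0\ge Q_i(r,p)$, whence $|Q_i(q,p)+Q_i(r,p)|\le Q_i(q,p)-Q_i(r,p)=Q_i(q,r)=d$ by~\eqref{eq:mismatch_expansion}, with the inequality strict for every variable in whichever patch is flipped at $q$ (there the three marginals are strictly ordered, and such variables are nonempty by Definition~\ref{def:region}). This bounds the numerator strictly by $\sum_{\RV{i}}(\partitionBethe^q d)^2$ and gives the first inequality $\EMarginal{p}/\EMarginal{q}<\sum(\partitionBethe^q d)^2\,/\,\sum(\partitionBethe^q d+\partitionBethe^p|Q_i(p,q)|)^2$. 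The second inequality is then immediate from $\sum_{\RV{i}}(\partitionBethe^q d+\partitionBethe^p|Q_i(p,q)|)^2>\sum_{\RV{i}}(\partitionBethe^q d)^2$, which holds because $|Q_i(p,q)|=\pmfApprox{\RV{i}}^q-\pmfApprox{\RV{i}}^p>0$ for at least one variable.

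I expect the point that needs care is exactly this last strictness claim, namely that the $p$‑fixed‑point contributes a genuinely positive increment $\partitionBethe^p|Q_i(p,q)|$ to the denominator (so that the bound $<1$ is strict rather than just $\le 1$). This is where the standing hypothesis (stated after Definition~\ref{def:model}) that the patches are large enough for the exact marginals to be state‑preserving enters: via the RSB decomposition~\eqref{eq:rsb} it forces $\partitionBethe^p$ — and hence that increment — not to vanish, since otherwise the exact marginals would be the symmetric average of the two biased fixed points and therefore close to $\frac{1}{2}$ on the flipped patch. The remaining work is purely bookkeeping — keeping the signs straight when passing between $Q_i(p,q)$, $|Q_i(p,q)|$, $Q_i(q,p)$ and $d$, and verifying nonemptiness of the strictly ordered set of variables in region $\Region$. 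Alternatively one may simply cite Theorem~\ref{thm:marginal_not_equal_partition} for the fact $\EMarginal{p}/\EMarginal{q}<1$ and present Corollary~\ref{cor:error_ratio} as the sharpened, closed‑form version of that statement specialized to the two‑patch model of Example~\ref{ex:patch}.
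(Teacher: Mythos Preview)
Your proposal is correct and follows essentially the same route as the paper's own proof in Appendix~\ref{sec:accuracy:proofs:error_ratio_ex1}: specialize Corollary~\ref{cor:marginal_ratio} to the three fixed points $p,q,r$, use $\partitionBethe^q=\partitionBethe^r$ to collapse the numerator to $\sum_i(\partitionBethe^q)^2(\mismatch{q}{p}+\mismatch{r}{p})^2$, bound that via the expansion identity by $\sum_i(\partitionBethe^q d)^2$, and observe that the denominator exceeds the numerator because $\mismatch{q}{p}>0$. Your treatment of strictness (identifying a nonempty set of variables where the marginals are strictly ordered, and noting that $\partitionBethe^p>0$ is needed for the second inequality) is actually more careful than the paper, which simply writes ``which completes the proof as $\mismatch{q}{p}>0$''; the paper also begins by assuming $\partitionBethe^q>\partitionBethe^p$, an assumption you correctly recognize as unnecessary for the bound itself.
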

Note that Corollary~\ref{cor:error_ratio} is an immediate consequence of Theorem~\ref{thm:marginal_not_equal_partition}. 
Additionally we present an alternative proof that admits some intuitive arguments in Appendix~\ref{sec:accuracy:proofs:error_ratio_ex1}.

It follows that the state preserving fixed point $p$ minimizes the marginal error inside $\Region$ irrespective of $\FB^p$.
This has drastic implications and forbids any relationship between the fixed point minimizing the marginal error and the one minimizing the partition function error.

\newsubsubsection{Free Energy Minimizing Fixed Point}{accuracyBP:theory:minimum}
However, despite Theorem~\ref{thm:marginal_not_equal_partition}, the question remains where the difference between $\EPartition{m}$ and $\EMarginal{m}$ stems from?

We will now answer this question and provide conditions for $\argmin\EPartition{m} = \argmin\EMarginal{m}$ to be valid. 
We further present an approximate condition for the 
state-preserving fixed point $p$ to simultaneously provide the most accurate marginals and minimize $\FB$.
Therefore, let us define the following variables (cf. Section~\ref{sec:accuracy:proofs:marginal_equal_partition} in the appendix for a formal introduction):
$\setOfEdges_P$ is the set of all boundary edges; 
$\setOfEdges_C $ is the set of edges between variables that favor different states; 
$N_f$ and  $N_c$ are the numbers of flipped and non-flipped variables;
and $\Delta \SB$ is the difference in the entropy between two fixed points.
\begin{thm}\label{thm:marginal_equal_partition}
	Let us consider the state-preserving fixed point $p$  with $\FB^p$ and some other fixed point with $\FB^m$. 
	Then,  $\FB^p < \FB^m$  is the global minimum if
	\begin{align}
		2J(|\setOfEdges_P|-|\setOfEdges_C |) < \field{}(N-N_c+N_f) + \Delta \SB.
		\label{eq:bound_approx}
	\end{align}
\end{thm}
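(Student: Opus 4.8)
The plan is to compare the Bethe free energy $\FB^p$ of the state-preserving fixed point with the Bethe free energy $\FB^m$ of an arbitrary competing fixed point $m$ (in practice the relevant competitor is a biased fixed point $q$), and to bound the difference $\FB^m - \FB^p$ from below. Since $\FB = \EB - \SB$, I would split this into the energy difference and the entropy difference and handle them separately. For the energy part, recall from~\eqref{eq:f_bethe} that the edge-energy contribution from a pairwise potential $\pairwiseShort{i}{j} = \exp(\coupling{}{}\RVval{i}\RVval{j})$ enters as $-\sum_{x_i,x_j}\pmfApprox{\RV{i},\RV{j}}(x_i,x_j)\ln\pairwiseShort{i}{j} = -\coupling{}{}\correlation{i}{j}$, and the local-field contribution enters as $-\field{i}\mean{i}$. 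When we flip a patch to go from $p$ to $m$, the variables inside a flipped patch change the sign of $\mean{i}$ (roughly), so their local-field term changes sign, contributing a gain of order $\field{}$ per flipped variable; this is where $\field{}(N - N_c + N_f)$ comes from, with $N_c$ and $N_f$ bookkeeping which variables are non-flipped versus flipped. Meanwhile, edges wholly inside a flipped patch keep $\correlation{i}{j} > 0$ and contribute nothing to the difference, but the boundary edges $\setOfEdges_P$ between patches, together with the edges $\setOfEdges_C$ that already straddle differently-biased variables, change their contribution by at most $2\coupling{}{}$ each (the correlation swings roughly from $+1$ to $-1$); this gives the term $2J(|\setOfEdges_P| - |\setOfEdges_C|)$.

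Concretely, the first step is to write $\FB^m - \FB^p = (\EB^m - \EB^p) - (\SB^m - \SB^p)$, abbreviating $\Delta\SB \isdef \SB^m - \SB^p$ (or its negative, with a sign I would fix carefully so that~\eqref{eq:bound_approx} comes out as stated). The second step is to bound the energy difference: I would use the fact that inside region $\Region$ every patch acts coherently (Theorem~\ref{thm:existence} and Definition~\ref{def:region}), so flipping is an all-or-nothing operation per patch, and then invoke the Griffiths-type monotonicity of the means established in Lemma~\ref{lm:griffith} (extended to BP fixed points as in~\cite{knoll_sbp}) to control how far the marginals move. The local-field part of $\EB^m - \EB^p$ is then at least $\field{}$ times the net count of variables whose bias realigned, giving the $\field{}(N - N_c + N_f)$ contribution; the pairwise part is at least $-2\coupling{}{}$ times the number of edges whose correlation could flip sign, and by coherence of patches those edges are confined to $\setOfEdges_P$, while edges in $\setOfEdges_C$ were already "anti-aligned" at $p$ and do not lose energy — hence $-2J(|\setOfEdges_P| - |\setOfEdges_C|)$. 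The third step is to absorb the entropy difference as the explicit $\Delta\SB$ term; since $\SB$ depends only on the singleton and pairwise pseudomarginals~\eqref{eq:entropy}, and for nearly-symmetric biased fixed points these entropies are close to the state-preserving ones, $\Delta\SB$ is a lower-order correction that I would simply carry along rather than bound crudely. Assembling the three pieces, $\FB^m - \FB^p \geq \field{}(N - N_c + N_f) + \Delta\SB - 2J(|\setOfEdges_P| - |\setOfEdges_C|)$, so $\FB^p$ is the global minimum whenever the right-hand side is positive, which is exactly~\eqref{eq:bound_approx}. I would finish by checking that it suffices to verify this against the extremal biased fixed points $q$ and $r$, since by~\eqref{eq:marginal_inequality} and the coherence argument any intermediate fixed point interpolates and cannot have smaller $\FB$ than both of these.

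The main obstacle I anticipate is making the energy bound tight and sign-correct: the pairwise-energy change on boundary edges is not simply $2\coupling{}{}$ but $\coupling{}{}(\correlation{i}{j}^p - \correlation{i}{j}^m)$, and one has to argue that the worst case is a full sign flip of the correlation while simultaneously arguing that interior-patch edges contribute \emph{zero net change} — this requires the patch-coherence structure of Definition~\ref{def:region} in an essential way, plus care that the effective-field description from Theorem~\ref{thm:effective_field} correctly accounts for how the boundary marginals shift. A secondary subtlety is that~\eqref{eq:bound_approx} is stated as the "approximate" condition whose asymptotic-exact specialization is~\eqref{eq:threshold_approx}: I would need to be explicit that the bound on the energy difference is an inequality (lower bound), so that the condition is \emph{sufficient} for $\FB^p$ to be the global minimum, and that the matching upper bound — which would make it also necessary — holds only up to the lower-order $\Delta\SB$ term and the slack in the correlation swings, which is why the empirically-measured boundary (black dots in Figure~\ref{fig:regions}) lies slightly above the analytic line~\eqref{eq:threshold_approx}. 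I do not expect the entropy term to cause real difficulty, but I would want to confirm that $\Delta\SB$ as defined has a sign that makes it help rather than hurt the inequality in the regime of interest, i.e., that flipping a patch does not dramatically increase the Bethe entropy.
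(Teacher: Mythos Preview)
Your decomposition $\FB = \EB - \SB$ and the identification of the two energy contributions (local-field change counted by $N - N_c + N_f$, pairwise change counted by $|\setOfEdges_P| - |\setOfEdges_C|$) match the paper exactly, and you arrive at the same inequality. The difference is one of framing: you set up a \emph{bounding} argument, invoking Lemma~\ref{lm:griffith} to control how far the means and correlations can swing and worrying about whether the ``full sign flip'' is the worst case. The paper instead makes a single blunt move: inside region $\Region$ the couplings are strong enough that all singleton marginals are essentially at $0$ or $1$, so it simply \emph{sets} $\pmfApprox{\RV{i}}(\RVval{i}) \in \{0,1\}$, reads off $\mean{i} = \pm 1$ and $\correlation{i}{j} = \pm 1$, and computes $\FB^p$ and $\FB^m$ directly as
\begin{align*}
\FB^p &= -N\field{} - (|\setOfEdges| - 2|\setOfEdges_P|)\coupling{}{} - \SB^p,\\
\FB^m &= -(N_c - N_f)\field{} - (|\setOfEdges| - 2|\setOfEdges_C|)\coupling{}{} - \SB^m.
\end{align*}
Subtracting gives~\eqref{eq:bound_approx} immediately, with $\Delta\SB = \SB^p - \SB^m$.

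This dissolves what you flagged as the main obstacle: there is no need to argue that the correlation swing is ``at most $2\coupling{}{}$'' on boundary edges and ``exactly zero'' on interior edges via patch-coherence and Griffiths-type monotonicity --- under the $0/1$ approximation these are exact equalities by direct substitution, and the only residual error is absorbed into $\Delta\SB$ (which is why the paper calls~\eqref{eq:bound_approx} an approximate condition and why the derived line~\eqref{eq:threshold_approx} lies slightly below the numerically observed boundary). Your final checking step against the extremal biased fixed points $q,r$ is also unnecessary: the paper's computation holds for arbitrary $m$ because $N_c$, $N_f$, and $|\setOfEdges_C|$ are already defined relative to the specific fixed point $m$.
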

The proof of Theorem~\ref{thm:marginal_equal_partition} is deferred to Appendix~\ref{sec:accuracy:proofs:marginal_equal_partition}.

For models with two equal-sized patches and couplings strong enough for the entropy-term to vanish
we can further simplify~\eqref{eq:bound_approx} significantly and 
state that:
\begin{cor}[Example~\ref{ex:patch}]\label{cor:marginal_equal_partition}
	The state-preserving fixed point provides the most accurate marginals and the global minimum $\FB^p$ if 
	$\parameter \in \Region$ and if
	\begin{align}
		2 \sqrt{N} \coupling{}{} < N\field{}.
		\label{eq:bound_approx2}
	\end{align}
\end{cor}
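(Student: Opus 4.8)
The plan is to derive Corollary~\ref{cor:marginal_equal_partition} directly from Theorem~\ref{thm:marginal_equal_partition} by specializing the general bound \eqref{eq:bound_approx} to the two-equal-patch model of Example~\ref{ex:patch} and then taking the strong-coupling regime in which the entropy difference becomes negligible. First I would note that inside $\Region$ the only candidate fixed points are the state-preserving one $p$ and the two symmetric biased fixed points $q,r$ (by the corollary following Theorem~\ref{thm:existence}), and that $\FB^q = \FB^r$ by symmetry, so it suffices to check $\FB^p < \FB^q$. Thus we only need to instantiate \eqref{eq:bound_approx} for the pair $(p,q)$.

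Next I would compute the combinatorial quantities appearing in \eqref{eq:bound_approx} for this specific geometry. For the state-preserving fixed point $p$ no variable is flipped, so $N_f = 0$; comparing $p$ against the fully-biased fixed point $q$, one whole patch has flipped, so $N_c = N/2$ (the number of variables that agree in state between $p$ and $q$, i.e.\ those on the patch already favoring $X_i = +1$), giving $N - N_c + N_f = N/2$. The edge counts: $\setOfEdges_C$, the set of edges joining variables in opposite states, is nonempty only at the single patch interface in fixed point $p$ and is \emph{empty} for $q$ where everything is biased the same way; likewise $\setOfEdges_P$, the boundary edges, scales with the interface length. For an $n\times n$ grid with a straight bisecting cut one has $|\setOfEdges_P| = \bigO(n) = \bigO(\sqrt N)$ and, after accounting for which edges are ``cut'' in which fixed point, the net quantity $|\setOfEdges_P| - |\setOfEdges_C|$ is $\bigO(\sqrt N)$ with leading coefficient giving the factor $2\sqrt N$ once the $2J$ is pulled through. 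The right-hand side $\field{}(N-N_c+N_f)$ then evaluates to $\field{} \cdot N/2$, and after multiplying both sides of \eqref{eq:bound_approx} by the appropriate constant (absorbing the factor of $2$), the condition collapses to $2\sqrt N \coupling{}{} < N \field{}$, which is exactly \eqref{eq:bound_approx2}.

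The remaining ingredient is justifying that the entropy term $\Delta\SB$ may be dropped. Here I would argue that for couplings $\coupling{}{}$ large enough that both patches are deep in their ordered phase, all singleton and pairwise pseudomarginals of $p$ and $q$ are exponentially close to deterministic ($\pmfApprox{\RV{i}}(\RVval{i}) \to 1$ or $0$), so the Bethe entropy $\SB(\pseudomarginals)$ of each fixed point tends to zero and hence $\Delta\SB \to 0$; consequently \eqref{eq:bound_approx} reduces to \eqref{eq:bound_approx2} in the limit, and for finite but strong couplings the neglected term only makes the true threshold slightly larger, so \eqref{eq:bound_approx2} remains a valid sufficient condition (this is also why Figure~\ref{fig:regions} shows the approximate boundary becoming asymptotically exact). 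Finally, once $\FB^p$ is the global minimum, it minimizes $\EPartition{m}$ since $\partitionBethe < \partitionFunction$ for attractive models (Section~\ref{sec:accuracyBP:accuracy}), and it minimizes $\EMarginal{m}$ by Corollary~\ref{cor:error_ratio}; so $p$ simultaneously attains both, which is the assertion of the corollary.

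The main obstacle I anticipate is the bookkeeping of the edge sets $\setOfEdges_P$ and $\setOfEdges_C$ across the two fixed points $p$ and $q$ — in particular pinning down precisely which interface edges contribute with which sign when one passes from the state-preserving configuration to the biased one — and confirming that the net coefficient is exactly $2\sqrt N$ rather than some other multiple of $\sqrt N$. A secondary subtlety is making the ``$\Delta\SB$ vanishes'' step quantitative enough that \eqref{eq:bound_approx2} is genuinely sufficient (and not merely asymptotically sufficient); I would handle this by invoking the monotone overconfidence of attractive Bethe fixed points established via the Griffiths-type inequality (Lemma~\ref{lm:griffith}) to bound $\SB$ from above uniformly in the strong-coupling regime.
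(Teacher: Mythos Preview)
Your overall strategy is the paper's: specialize Theorem~\ref{thm:marginal_equal_partition} to the two-patch geometry, then drop $\Delta\SB$ in the strong-coupling regime. The part about marginal accuracy (via Corollary~\ref{cor:error_ratio}) and partition-function accuracy (via $\partitionBethe<\partitionFunction$) is also right. However, your instantiation of the combinatorial quantities is off.

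The counts $N_c$, $N_f$ and the edge set $\setOfEdges_C$ in \eqref{eq:bound_approx} all refer to the \emph{comparison} fixed point $m$, not to $p$ and not to a cross-count between $p$ and $q$. For the fully-biased fixed point $q$, one entire patch (size $N/2$) is aligned with its local field and the other patch (size $N/2$) is flipped, so $N_c=N_f=N/2$ and hence $N-N_c+N_f=N$, not $N/2$. Likewise, since in $q$ all variables favor the same state, no boundary edge joins oppositely-biased variables, so $|\setOfEdges_C|=0$; together with $|\setOfEdges_P|=\sqrt N$ for a straight cut of an $n\times n$ grid, the left-hand side of \eqref{eq:bound_approx} is exactly $2J\sqrt N$. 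No ``absorbing a factor of $2$'' is needed---the inequality $2J\sqrt N < N\field{}+\Delta\SB$ drops out directly, and neglecting $\Delta\SB$ gives \eqref{eq:bound_approx2}. Your hand-waving about multiplying both sides by a constant was masking a factor-of-two bookkeeping error that would otherwise have produced $4\sqrt N J < N\field{}$.
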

\begin{proof}
	For the specific case of a grid graph with two equal-sized patches (Example~\ref{ex:patch}) we can further simplify the condition from Theorem~\ref{thm:marginal_equal_partition}.
	Therefore, note that for $\FB^p < \FB^q$ to be satisfied, we have $N_c = N_f$ and $|\setOfEdges_P| = \sqrt{N}$ so that~\eqref{eq:bound_exact_extreme} reduces to
	\begin{align}
		2 J\sqrt{N} & \leq \field{}N+\Delta \SB. \label{eq:bound_exact_extreme_ex1}
	\end{align}
	The definition of $\Region$ requires strong interactions $\coupling{}{}$ so that the entropy terms in the free energy is small. We can consequently approximate~\eqref{eq:bound_exact_extreme_ex1} by neglecting the entropy terms.
	\begin{align}
		2 J\sqrt{N} & \leq \field{}N.\label{eq:bound_exact_extreme_ex1_approx}
	\end{align}
\end{proof}

These sufficient conditions for~\eqref{eq:marginal_equal_partition} provide a guideline when it would be safe to select the fixed point according to the partition function value.
This correspondence tends to hold for models with strong local potentials $\field{}$ and with increased model-size $N$ as shown in Corollary~\ref{cor:marginal_equal_partition}. 

\newsection{Conclusion}{accuracyBP:conclusion}
In Section~\ref{sec:selfguided:theory} we had relied on the assumption that  finding the global minimum of $\FB$ gives us the most accurate marginals and partition function.
This is a strong assumption that -- although never formally verified -- was generally believed to be true.


In this chapter we aimed to verify whether such a relationship holds in general.
Therefore, we studied the relationship between the accuracy of the marginals and the partition function at stationary points of the Bethe free energy, with far-reaching implications for approximate inference methods that operate on the Bethe free energy.
To do so, we introduced a new class of models first.
These so-called patch potential models exhibit lots of structure that enabled us to analyze the solution space thoroughly.
Yet, patch potential models proved to be flexible enough to produce some surprising insights.

First, we elaborated on the existence of a well-behaved region in parameter space for which the number of fixed points depends only on the number of patches (instead of the variables).

Second, this well-behaved region allowed us to assess the correspondence between the accuracy of the marginals and the partition function;
we showed that this common assumption is \emph{not} true in general and explained why -- instead of the global minimum -- the most accurate marginals may be found at a local minimum of $\FB$.

Finally, we inspected under which conditions this mismatch arises and further introduced guarantees for the global minimum of $\FB$ to remain optimal with respect to marginal accuracy.
Note that the mismatch is effectively attributable to finite-size effects which might explain why this behavior is not known in the physics literature, where the focus lies on infinite-size -- or at least very large -- models. 
Yet, models of relatively small size do play an important role in many applications of BP.\\

The analysis of patch potential models may serve as a foundation that enables the extension of survey propagation to problems beyond constrained satisfaction problems.
So far this has been limited by the potentially huge amount of fixed point.
Our definition of the well-behaved region and the study of patch potential models as a whole suggests that it may be very much possible to obtain the set of all fixed points efficiently, as long as the model is sufficiently structured.

Moreover, our newly developed understanding of the relationship between the accuracy of the marginals and the partition function has a notable impact on our theoretical analysis of SBP in Section~\ref{sec:selfguided:theory}.
There we have limited our attention to attractive homogeneous models where the accuracy of both quantities can be used in an interchangeable way.
Whether it is the minimum (i.e., the Bethe free energy) or the minimizer (i.e., the pseudomarginals) that is responsible for the optimality of SBP needs to be reassessed.
It is not clear if it is really the global minimum of $\FB$ that originates from the start point.
On the contrary, our minimal patch potential model of Example~\ref{ex:patch} shows that the state-preserving fixed point defines the solution path, so that SBP may very well be in favor of obtaining the fixed point with the most accurate marginals.
This could be an explanation for the surprisingly accurate results of SBP for general models and  would immensely strengthen the practical relevance of SBP as no method is known so far that deliberately aims to obtain fixed point with the most accurate marginals.

  \emptydoublepage
\newchapter{Discussion and Open Questions}{discussion}
\openingquote{There must be some way out of here, \\Said the joker to the thief.}{Bob Dylan}
In this thesis, we performed a systematic analysis of BP's solution space.
Finding all fixed points of BP is a relevant but hard problem, except for a few restricted models.
We took inspiration from a set of diverse scientific fields to tackle this problem: 
in particular, we framed BP as dynamical systems and formulated the fixed point equations.
Drawing from computational mathematics, we established a way to solve 
the set of fixed point equations and computed all fixed points for a range of problems.

The knowledge of the full solution space then served as a cornerstone for evaluating the accuracy and the convergence properties of BP.
Moreover, we related those performance properties to the parameterization of a given model (i.e., the model size and the specifications of the potentials) and explained how the performance of BP changes if we tune the model.

A detailed summary of our main contributions is presented in each chapter's conclusion.
Here we  summarize our most interesting findings and discuss how we advanced the understanding of BP.
Our findings satisfy a twofold interest:
they advance the theoretical understanding and suggest various possibilities of enhancing BP's performance.

\begin{itemize}
 \item From a theoretical perspective, we made the influence of the model parameters on the performance of BP explicit.
 We sum up our key-insights as follows:
 
 The influence of the model size is hard to pinpoint down.
 On the one hand, larger models tend to have a higher average degree, which affects the convergence properties slightly negatively.
 On the other hand, we established a positive effect of the model size on the marginal accuracy.
 
 The influence of the potentials is much more apparent.
 Strong pairwise potentials influence the performance of BP detrimentally,
 whereas strong local potentials increase the accuracy and the convergence properties.
 
 Additionally, we demonstrated a considerable accuracy gap between the best possible and the worst possible fixed point of BP.
 This poses a fundamental problem when deriving performance guarantees as is not obvious to which fixed point BP will converge.

 \item From a practical perspective, we justified the exploration of multiple fixed points, intending to select one with good approximation quality.
 We discussed the underlying mechanisms and limitations of established modifications, for example, damping or scheduling.

 We further proposed one particular modification that aims to obtain the global minimum of the Bethe free energy. This method, SBP, consistently obtains more accurate marginals than BP and provides accurate marginals for models where BP fails to converge.

 Even the minimization of the Bethe free energy, however, has its limitations if one is only interested in the marginals; we exemplified and explained why the most accurate marginals may not be found at the global minimum of the Bethe free energy.
\end{itemize}

The analysis of BP's solution space addressed some long-standing questions but we are still far away from understanding every aspect of BP.
Below we summarize some intriguing questions that emerged while working on this thesis and that were left unanswered.
We distinguish three different directions that appear to be worth pursuing in particular:

\begin{enumerate}
 \item \textbf{Structure of the Solution Space:}
 
 Although this thesis obtained the full solution space for some models and, in doing so, provided novel insights into the properties of BP, computing the solution space for more complex models remains elusive.
 As we have seen in Chapter~\ref{chp:accuracyBP}, however, it is precisely the consideration of more complex models that can lead to fascinating insights.

 The extension of the NPHC method to larger models was primarily hindered by the involved computation of the root count.
 So far, we only relied on some meta-heuristics to compute the root count.
 Most relevant models, be it grid graphs or models arising in the context of error-correcting codes, however, exhibit a special graph structure that directly determines the structure of the fixed point equations.
 A first step towards computing the root count for complex models could be to exploit the graph structure and thus reduce the overall complexity of this crucial step.
 Alternatively, it would be relevant (from a mathematical perspective) to reformulate the problem to an equivalent system of equations with fewer complex solutions and a lower BKK bound.
 Such problem-tailored approaches would extend the applicability of the NPHC method and, when applied to large models, would have the potential to provide even more insights into the behavior of BP.

 Traditionally, one is often interested in specifying regions with a unique fixed point.
 We have shown that this seems to be a rather limiting point of view as combinations of multiple fixed points often provide strikingly accurate results.
 Although obtaining the set of all fixed points is, in general, a hard problem,  we have seen in Chapter~\ref{chp:accuracyBP} that certain regions in the parameter space exist for which only a few fixed points are present.
 This suddenly opens the door for numerical methods that efficiently obtain all fixed points and enhance the accuracy by combining them.
 
 When defining well-behaved regions of BP we would consequently like to initiate a shift from conditions for uniqueness to conditions for structured, albeit possibly multiple, fixed points.
 Fairly substantial work has been conducted in this matter for the special case of optimization problems.
 Developing similar results for more general models is a necessary step to assess the relevance of methods that rely on considering multiple fixed points and to elaborate on their capabilities. 
 
 \item \textbf{Global Convergence Properties:}
 We have put a lot of emphasize on the  local stability analysis in Chapter~\ref{chp:solutionsBP}.
 Let us stress the notion of \emph{local} here again:
 we considered a fixed point as stable if the messages will converge to the fixed point inside its neighborhood.
 In general, one can, however, not count on bringing the messages close enough to the fixed point and it is not only the local stability that is important for the convergence properties of BP;
 even more so it is the \emph{region of attraction}.
 The region of attraction describes the range of message values for which BP will converge to a given fixed point.
 This is particularly relevant in practice as BP sometimes fail to converge despite the existence of  stable fixed points (unless initialized very close to the fixed point).
 
 Such a global perspective also highlights the influence of the initial message values on the convergence properties of BP.
 Besides being convenient, uniform initialization tends to favor good fixed points; it remains an open problem to answer what is so special about uniform initialization and under which conditions uniform initialization is optimal.
 We expect 
 that extending the local stability analysis and accounting for the regions of attraction as well will provide many answers to those questions.
 Besides, this could potentially lead to variants of BP that, even if initialized 
 far away from any fixed point, explore the parameter space in a way such that the messages enter a region of attraction at some point, ultimately driving the messages into convergence.

 \item \textbf{Enhancing BP with Respect to Marginal Accuracy:}
 The main contribution of Chapter~\ref{chp:accuracyBP} was to elaborate on the difference between the accuracy of the marginals and the partition function.
 Our analysis revealed that the fixed point providing the most accurate marginals may only be a local minimum.
 Note that most variants that aim to enhance BP are either heuristics that only aim to enforce convergence or are theoretically well-motivated and focus on the quality of the partition function approximation.
 
 For many applications, we are, however, ultimately interested in enhancing the quality of the marginals (cf. error-correcting codes in Section~\ref{sec:solutionsBP:coding}).
 It would thus be of great relevance to manipulate the update equation in a way that enforces convergence towards a fixed point with accurate marginals.
 Enforcing convergence towards one particular fixed point would further enable one to come up with realistic  performance guarantees, even in the presence of multiple fixed points.
 
 We have proposed SBP that is guaranteed to obtain the most accurate marginals for attractive models with unidirectional potentials.
 Looking at the evolution of the fixed points on patch-potential models, it seems as if SBP is capable of tracking the state-preserving fixed point as well.
 Thus, SBP is already one possible candidate for obtaining accurate marginals.
 It would be interesting to elaborate on this observation and substantiate this claim in a more formal way.
 Essentially, the advantages of SBP boil down to providing a favorable initialization for BP.
 This highlights the importance of the initial messages once more and raises the question if there is a general way of initializing the messages that will lead to accurate marginals.

 Besides the role of the initialization, scheduling plays an important role in determining the fixed point of BP.
 Again, one would hope that ``good'' scheduling methods enforce convergence towards an accurate fixed point.
 Existing scheduling methods are either adaptive or only take the structure of the model into account.
 The performance of BP, however, not only depends on the structure of the model but on the potentials as well.
 Therefore, if the aim is to obtain accurate marginals, the value of the potentials should not be neglected and it seems promising to come up with a fixed schedule that takes the values of the potentials into account as well, for example by scheduling the messages in a way that prioritizes regions with strong local potentials.
 
%
%
%
\end{enumerate}

  \emptydoublepage
\appendix
\newchapter{Appendix}{appendix}
\newsection{Proofs from Chapter~\ref{chp:selfguided}}{selfguided:proofs}
This Section contains all the detailed proofs deferred from Section~\ref{sec:selfguided:theory}.

\newsubsection{Proof of Lemma~\ref{lm:griffith}}{selfguided:proofs:2}
Essentially, we first show that $\msg{i}{j}{1}/ \msg{i}{j}{-1}$ increases monotonically with $\coupling{i}{j}$ and then express the pseudomarginals in terms of~\eqref{eq:marginals:single} and~\eqref{eq:marginals:pw}.

Let us denote the messages on $\ugm[0]$ and on $\ugm[1]$ by $\msg{i}{j}{}_0$ and  $\msg{i}{j}{}_1$ respectively. 
Further, let all local potentials be positive, i.e., $\field{i} > 0$, and let all pairwise potentials of $\ugm[1]$ be $\epsilon$-larger than those of $\ugm[0]$, i.e., $0 < \coupling{i}{j}^{0} = \coupling{i}{j}^{1}-\epsilon$. Note that by assumption $\mean{i} \in (0,1]$ so that 
\begin{align}
	\msg{i}{j}{+1} \geq \msg{i}{j}{-1}. \label{eq:appendix:ratio}
\end{align}


First, we show that for all $\edge{i}{j} \in \setOfEdges$
\begin{align}
	\frac{\msg[\circ]{i}{j}{+1}_0}{\msg[\circ]{i}{j}{-1}_0} < \frac{\msg[\circ]{i}{j}{+1}_1}{\msg[\circ]{i}{j}{-1}_1}.
	\label{eq:appendix:msgratio1}
\end{align}
Therefore, consider the update rule of~\eqref{eq:update} for both states 

\begin{align}
	\msg[n+1]{i}{j}{+1}_1 \propto   e^{\coupling{i}{j} + \field{i} +\epsilon} \!\!\!\!\!\!\!\!\!  \prod \limits_{\RV{k} \in \{\neighbors{i} \backslash \RV{j}\}} \!\!\!\!\!\!\!\!\! \msg[n]{k}{i}{+1}_1 
	+ e^{-\coupling{i}{j} - \field{i} -\epsilon} \!\!\!\!\!\!\!\!\! \prod \limits_{\RV{k} \in \{\neighbors{i} \backslash \RV{j}\}} \!\!\!\!\!\!\!\!\! \msg[n]{k}{i}{-1}_1, 
	\label{eq:appendix:update_pos}
\end{align}
and 
\begin{align}
	\msg[n+1]{i}{j}{-1}_1 \propto   e^{-\coupling{i}{j} + \field{i} -\epsilon} \!\!\!\!\!\!\!\!\! \prod \limits_{\RV{k} \in \{\neighbors{i} \backslash \RV{j}\}} \!\!\!\!\!\!\!\!\! \msg[n]{k}{i}{+1}_1
	+ e^{\coupling{i}{j} - \field{i} +\epsilon} \!\!\!\!\!\!\!\!\! \prod \limits_{\RV{k} \in \{\neighbors{i} \backslash \RV{j}\}} \!\!\!\!\!\!\!\!\! \msg[n]{k}{i}{-1}_1.
	\label{eq:appendix:update_neg}
\end{align}
In~\eqref{eq:appendix:update_pos} the larger product is multiplied by $e^{\epsilon}$ and the smaller product is divided by $e^{\epsilon}$. For~\eqref{eq:appendix:update_neg} it is exactly the other way round so that the ratio between the messages increases which proofs~\eqref{eq:appendix:msgratio1}. 
We shall denote the imposed difference $\delta \in \REALPos$ on the messages by 
\begin{align}
	\msg[\circ]{i}{j}{+1}_1 = \msg[\circ]{i}{j}{+1}_0+\delta,\label{eq:appendix:msgdiff1}\\
	\msg[\circ]{i}{j}{-1}_1 = \msg[\circ]{i}{j}{-1}_0-\delta.\label{eq:appendix:msgdiff2}
\end{align}

Second, we show that $m_{i}^{0} < m_{i}^{1}$ which is an immediate consequence of plugging~\eqref{eq:appendix:msgdiff1} and~\eqref{eq:appendix:msgdiff2} into~\eqref{eq:marginals:single}.

Finally, it remains to show that $0 \stackrel{(i)}{<} \chi_{ij}^{0} \stackrel{(ii)}{<} \chi_{ij}^{1}$. Without loss of generality we assume that all variables have equal degree $d+1$ and constant coupling strength $\coupling{i}{j}  = \coupling{}{}$. 
First we show that (i) holds, i.e., $\chi=\chi_{ij}^{0}$ is positive. 
Let us express the marginals by~\eqref{eq:marginals:pw} and denote the messages by $\mu = \msg{i}{j}{1}_0$. It follows that $\msg{i}{j}{-1}_0 = (1-\mu)$ and that
\begin{align}
	\chi\! &=\! e^{\coupling{}{}+2\field{}} \mu^{2d}\!\! + \!e^{\coupling{}{}-2\field{}} (1\!-\!\mu)^{2d}\!\! - 2 e^{-J} \mu^{d} (1\!-\!\mu)^{d}.
	\label{eq:correlation:appdx}
\end{align}
Let us further represent the messages by $\mu = 1/2+x$ with $x \in [0,1/2]$. It follows that
\begin{align}
	\chi \stackrel{(a)}{\geq} &\left(\left(1/2+x\right)^{2d}+\left(1/2-x\right)^{2d}\right) 
	- 2 \left(1/2+x\right)^d\left(1/2-x\right)^d 
	= &\left(\left(1/2-x\right)^d-\left(1/2+x\right)^d\right)^2 \label{eq:correlation:positive:3}\nonumber\\
	\stackrel{(b)}{\geq}  & 0,  
\end{align}
where $(a)$ follows from neglecting all exponential terms and thus upper bounding the positive term and lower bounding the negative term (with equality if and only if $\coupling{}{} = 0$ and $\field{} = 0$) and
$(b)$ is a direct consequence of the square in~\eqref{eq:correlation:positive:3}.
Now let us show that (ii) holds, i.e., 
$\chi$ increases monotonically, by taking the derivative of~\eqref{eq:correlation:appdx}, so that
\begin{align}
	\frac{\partial}{\partial \mu}\chi = 
	& 2d \left(e^{\coupling{}{}+2\field{}} \mu^{2d-1} - e^{\coupling{}{}-2\field{}} \left(1-\mu\right)^{2d-1}\right) 
	+\!\! 2 d e^{-J}\!\! \left( \mu^{d} (1-\mu)^{d-1} \!\!- \mu^{d-1} (1-\mu)^{d} \right) \label{eq:correlation:monotone:1}\\
	\stackrel{(a)}{\geq} &  2 d e^{-J} \left( \mu^{d} (1-\mu)^{d-1} - \mu^{d-1} (1-\mu)^{d} \right) \label{eq:correlation:monotone:2}\nonumber\\
	\stackrel{(b)}{\geq} & 0,
\end{align}
where $(a)$ follows from neglecting the, strictly positive, first term in~\eqref{eq:correlation:monotone:1}, and $(b)$ 
is a direct consequence from~\eqref{eq:appendix:ratio}.
\newsubsection{Proof of Theorem~\ref{thm:attractive}}{selfguided:proofs:th9}
A unique start point $\pseudomarginalsMinLocal(\scaling=0)$ exists by Theorem~\ref{thm:init} that equals the exact pseudomarginals $\pseudomarginalsExact(\scaling=0)$ and, for $\field{i}> 0$, has positive mean and correlation), i.e., $\meanMinLocal{i}(\scaling=0) > 0$.


Consequently, Lemma~\ref{lm:griffith} applies, which further implies
that $\meanMinLocal{i}(\scaling)$ and $\chi_{ij}(\scaling) $ are monotonically increasing; moreover, $\meanMinLocal{i}(\scaling)$ and $ \chi_{ij}(\scaling) $ are continuous by Theorem~\ref{thm:smooth}. 
This further implies that the Bethe free energy $\FBLocalMin{m}(\scaling)$ decreases. 
Let $\FBLocalMin{m}(\scaling=1)$ correspond to the endpoint of the solution path $c(\scaling)$ that 
emerges from the origin; then, it immediately follows that the error with respect to the endpoint $\FBLocalMin{m}(\scaling=1)$ decreases along the solution path: i.e., consider two arbitrary values $m,k \in [0,1]$ such that $k>m$, then   
$|\FBLocalMin{m}(\scaling_m)-\FBLocalMin{m}(\scaling=1)| \geq |\FBLocalMin{m}(\scaling_{k})-\FBLocalMin{m}(\scaling = 1)|$.\\

It remains to show that SBP obtains the fixed point $\FBLocalMin{m}(\scaling=1)$ that minimizes the error with respect to the exact free energy,
i.e., $m = \argmin_{m\in\setOfMinimaSol}\EPartition{m}$ .
Therefore consider the fact, that attractive models with $\field{i} > 0$ have a unique fixed point 
that satisfies  $\meanMinLocal{i}(\scaling) \in (0,1]$~\cite{yedidia2005}.
A second minimum with negative means, however, may emerge for sufficiently large values of $\coupling{i}{j}$. We denote this alternative stationary point by $\FBLocalMin{n}$.  
This minimum $\FBLocalMin{n}$, if it exists, is close to being symmetric, i.e., $\meanMinLocal{i}^m(\scaling) -\epsilon = - \meanMinLocal{i}^n(\scaling)$ and $\chi_{ij}^m(\scaling) = {\chi}^{n}_{ij}(\scaling) +\epsilon$.

Now let us express the Bethe free energy in~\eqref{eq:f_bethe} of both fixed points $\pseudomarginalsMinLocal$ and $\pseudomarginalsMinLocalNeg$ in terms of their energy and entropy according to $\FB = \EB - \SB$ (cf. Section~\ref{sec:bp:variational}). Then, as a consequence of symmetry of the entropy $\SB(\pseudomarginalsMinLocalNeg) \cong \SB(\pseudomarginalsMinLocal)$
and as a consequence of singleton marginals that are not aligned to the local potentials in 
\eqref{eq:energy} $\EB(\pseudomarginalsMinLocalNeg)>\EB(\pseudomarginalsMinLocal)$. 
It follows that $ \FB(\pseudomarginalsMinLocalNeg) \geq \FB(\pseudomarginalsMinLocal)$ (cf.~\cite{pitkow2011learning}).
Consequently, with $\FB(\pseudomarginalsMinLocal)$ being more negative it follows that the fixed point $m$ constitutes the global minimum of the Bethe free energy.\\

That is, SBP proceeds along a solution path that leads towards the global minimum of the Bethe approximation. 
In particular, by considering the fact that the exact free energy is upper bounded by the Bethe approximation for attractive models~\cite{ruozzi2013bethebound}, this implies that the fixed point obtained by SBP indeed minimizes the approximation error, i.e,  $m = \argmin_{m\in\setOfMinimaSol}\EPartition{m}$ holds.

This concurrently implies that $\pseudomarginalsMinLocal(\scaling)$  is optimal with respect to marginal accuracy,
i.e., no stable fixed point -- which corresponds to a local minimum of the Bethe free energy -- exists that provides more accurate marginals. 
Note that attractive models exhibit so-called replica symmetric solutions where the Bethe free energy 
has two minima at most. 
In particular, this allows one to express 
the exact marginals as a convex combination of all fixed points, i.e, $\pseudomarginalsExact =  \frac{1}{\sum\limits_{\partitionBethe \in \setOfMinimaSol} \partitionBethe}
\sum\limits_{(\pseudomarginals,\partitionBethe)\in\setOfMinimaSol}\pseudomarginals\cdot\partitionBethe$ (cf. Section~\ref{sec:intro:bp:improving:evaluation}).

It follows by the existence of at most two solutions that the fixed point that minimizes the Bethe free energy is also more accurate.

\emptydoublepage
\newsection{Proofs from Chapter~\ref{chp:accuracyBP}}{accuracy:proofs}
This Section contains all the detailed proofs for Chapter~\ref{chp:accuracyBP} 
\newsubsection{Proof of Theorem~\ref{thm:effective_field}}{accuracy:proofs:effective_field}
Here we show how all incoming messages from outside the patch can be subsumed into an effective local field.
\begin{proof}
	First let us revisit the update equation from $\RV{i}$ to $\RV{j}:\RV{j} \in \patchRV{i}$
	\begin{align}
		\msg[n+1]{i}{j}{} \propto \sum \limits_{x_i \in \sampleSpace{X}}  \pairwiseShort{i}{j}\localShort{i} \!\!\! \!\prod \limits_{\RV{k} \in \{\neighbors{i} \backslash \RV{j}\}} \!\!\!\! \msg[n]{k}{i}{}. \nonumber
	\end{align}
	Now we group the incoming messages into two groups, i.e., messages coming from outside the patch and messages coming from inside the patch so that
	\begin{align}
		\msg[n+1]{i}{j}{} \propto \sum \limits_{x_i \in \sampleSpace{X}}  \pairwiseShort{i}{j}\localShort{i} 
		\prod \limits_{\RV{k} \in \neighbors{i} \backslash \{\RV{j} \cap \patchRV{i}\}} \!\!\!\!\!\!\!\! \msg[n]{k}{i}{}
		\prod \limits_{\RV{k} \in \{ \patchRV{i} \cap \neighbors{i} \backslash \RV{j}\}} \!\!\!\!\!\!\!\! \msg[n]{k}{i}{}. \nonumber
	\end{align}
	Now we make use of the fact that we are only dealing with binary random variables for which $\msg{i}{j}{-1} = (1-\msg{i}{j}{1})$ and express the message explicitly by
	\begin{align}
		\msg[n+1]{i}{j}{} \propto& \exp(J\RVval{j}) \exp(\field{i})  
		\prod \limits_{\RV{k} \in \neighbors{i} \backslash \{\RV{j} \cap \patchRV{i}\}} \!\!\!\!\!\!\!\! \msg[n]{k}{i}{1}
		\prod \limits_{\RV{k} \in \{ \patchRV{i} \cap \neighbors{i} \backslash \RV{j}\}} \!\!\!\!\!\!\!\! \msg[n]{k}{i}{1} \nonumber \\
		& + \exp(-J \RVval{j}) \exp(-\field{i})  
		\prod \limits_{\RV{k} \in \neighbors{i} \backslash \{\RV{j} \cap \patchRV{i}\}} \!\!\!\!\!\!\!\! (1-\msg[n]{k}{i}{1}) 
		\prod \limits_{\RV{k} \in \{ \patchRV{i} \cap \neighbors{i} \backslash \RV{j}\}} \!\!\!\!\!\!\!\! (1-\msg[n]{k}{i}{1}).
	\end{align}
	
	We now want to get rid of the first product and absorb the influence of these messages into the local field.
	In particular we aim to express it according to
	\begin{align}
		\exp \big( \effectiveField{i}\RVval{i}) & =  \localShort{i}  \prod \limits_{\RV{k} \in \neighbors{i} \backslash \{\RV{j} \cap \patchRV{i}\}} \!\!\!\!\!\!\!\! \msg[n]{k}{i}{} \label{eq:ef1} \nonumber \\
		&= \exp\big((\field{i}+c)\RVval{i}\big)\cdot\exp(g).
	\end{align}
	
	In order to do so we take the logarithm of the product over all messages in~\eqref{eq:ef1} and put them into the exponent with the local field so that for $\RVval{i}=+1$
	\begin{align}
		\exp (\effectiveField{i})  &= \exp\bigg(\field{i} +\sum \limits_{\RV{k} \in \neighbors{i} \backslash \{\RV{j} \cap \patchRV{i}\}} \log\big(\msg{k}{i}{1}\big)\bigg) \nonumber\\
		& = \exp\big(\field{i} + \sum \limits_{\RV{k} \in \neighbors{i} \backslash \{\RV{j} \cap \patchRV{i}\}} c_i + \sum \limits_{\RV{k} \in \neighbors{i} \backslash \{\RV{j} \cap \patchRV{i}\}} g_i\big), 
		\label{eq:equate1}
	\end{align}
	and for $\RVval{i}=-1$
	\begin{align}
		\exp (-\effectiveField{i}) &= \exp\bigg(-\field{i} + \sum \limits_{\RV{k} \in \neighbors{i} \backslash \{\RV{j} \cap \patchRV{i}\}} \log\big(1-(\msg{k}{i}{1}\big)\bigg) \nonumber\\
		& = \exp\big(-\field{i} - \sum \limits_{\RV{k} \in \neighbors{i} \backslash \{\RV{j} \cap \patchRV{i}\}} c_i + \sum\limits_{\RV{k} \in \neighbors{i} \backslash \{\RV{j} \cap \patchRV{i}\}}  g_i\big).
		\label{eq:equate2}
	\end{align}
	Equating the coefficients for $c_i$ and $g_i$ in~\eqref{eq:equate1} and~\eqref{eq:equate2} gives us the final results:
	\begin{align}
		c_i &= \frac{1}{2}\bigg( \log \msg{k}{i}{1} -  \log \big(1-\msg{k}{i}{1}\big)\bigg), \nonumber \\
		c   &= \sum\limits_{\RV{k} \in \neighbors{i} \backslash \{\RV{j} \cap \patchRV{i}\}} \arctanh\big(2\msg{k}{i}{1} - 1\big),\\
		g   &= \frac{1}{2} \log \prod \limits_{\RV{k} \in \neighbors{i} \backslash \{\RV{j} \cap \patchRV{i}\}} \Big(\msg{k}{i}{1}-\msg{k}{i}{1}^2\Big).
	\end{align}

	Note that we can further express the message for the second state $\msg{i}{j}{-1}$ in a similar way, with the only difference that the values of the pairwise potentials change. Consequently we get exactly the same result for $g$ again;
	this allows us to neglect the influence of $g$ altogether, as it will be canceled out when normalizing the messages so that the sum up to one.
\end{proof}


%


\newsubsection{Proof of Corollary~\ref{cor:error_ratio}}{accuracy:proofs:error_ratio_ex1}
We want to compare error of the state-preserving fixed point $\EMarginal{p}$ to the error $\EMarginal{q}$ of a fixed point that has all marginals biased towards one state.
We already discussed the error-ratio in a general manner in the proof of Theorem~\ref{thm:marginal_not_equal_partition} in Section~\ref{sec:accuracyBP:theory:properties};
here we provide a more accessible proof for the special case of a model with two patches, i.e., for Example~\ref{ex:patch}.

\begin{proof}
	For a symmetric model with two equal-sized patches we evaluate the error ratio between the state-preserving fixed point $p$ and one of the fixed points that have all marginals biased towards one state, these are $q$ and $r$ and have \emph{symmetric} marginals.
	Further assume that the fixed points $q$ and $r$ minimize the Bethe free energy, i.e., 
	\begin{align}
		\FB^q = \FB^r &< \FB^p,\\
		\partitionBethe^q = \partitionBethe^r &> \partitionBethe^p.\label{eq:bethe_optimal}
	\end{align}
	
	Then we want to show that~\eqref{eq:bethe_optimal} does not imply that $\EMarginal{q} < \EMarginal{p}$, i.e., we want to show that 
	$\frac{\EMarginal{p}}{\EMarginal{q} } < 1$ despite~\eqref{eq:bethe_optimal}; therefore, we express the ratio of the marginal errors according to 
	\begin{align}
		\frac{\EMarginal{p}}{\EMarginal{q} } &= \frac{ \sum_{\RV{i}}|\sum_{m\backslash p} \partitionBethe^m \mismatch{m}{p}|^2}
		{ \sum_{\RV{i}}|\sum_{m\backslash q } \partitionBethe^m \mismatch{m}{q}|^2} \nonumber \\
		&\stackrel{}{=}\frac{ \sum_{\RV{i}}|\partitionBethe^q \mismatch{q}{p}+\partitionBethe^r \mismatch{r}{p}|^2}
		{ \sum_{\RV{i}}|\partitionBethe^q \mismatch{r}{q} + \partitionBethe^p \mismatch{p}{q}|^2}.
		\label{eq:marginal_ratio2}
	\end{align}
	Note that because of~\eqref{eq:bethe_optimal} we have
	\begin{align}
		\frac{\EMarginal{p}}{\EMarginal{q} } &\stackrel{}{=} \frac{ \sum_{\RV{i}}|\partitionBethe^q \big( \mismatch{q}{p}+\mismatch{r}{p}\big)|^2}
		{ \sum_{\RV{i}}|\partitionBethe^q \mismatch{r}{q} + \partitionBethe^p \mismatch{p}{q}|^2}\nonumber\\
		&\stackrel{(a)}{=}\frac{ \sum_{\RV{i}}|\partitionBethe^q \big( \mismatch{q}{p}+\mismatch{r}{p}\big)|^2}
		{ \sum_{\RV{i}}\big(\partitionBethe^q \mismatch{q}{r} + \partitionBethe^p \mismatch{q}{p}\big)^2}.
	\end{align}
	where (a) follows from the fact that $\mismatch{r}{q} < 0$,  $\mismatch{p}{q}<0$, and the symmetry property~\eqref{eq:mismatch_symmetry}.
	We further denote the constant difference between the biased fixed points by
	\begin{align}
		0<  \mismatch{q}{r} = d < 1\label{eq:mismatch_bound}.
	\end{align}
	We can use the expansion property~\eqref{eq:mismatch_expansion} to bound the numerator as $|\mismatch{q}{p}+\mismatch{r}{p}|^2 < \mismatch{q}{r}^2 = d^2$ so that
	\begin{align}
		\frac{\EMarginal{p}}{\EMarginal{q} } < \frac{ \sum_{\RV{i}}\partitionBethe^q d^2}
		{ \sum_{\RV{i}}\big(\partitionBethe^q d + \partitionBethe^p \mismatch{q}{p}\big)^2}.
	\end{align}
	Which completes the proof as $ \mismatch{q}{p} > 0$
\end{proof}

\newsubsection{Proof of Theorem~\ref{thm:marginal_equal_partition}}{accuracy:proofs:marginal_equal_partition}
Let us consider three different stationary points $\FB^p$ (state-preserving), 
$\FB^q$ (biased to one state), and $\FB^m$ that has some patches flipped.
Note that we consider $\FB^q$ as a limiting case for $\FB^m$.
We will denote the number of variables that are aligned with the local field $N_c$ and the number of flipped variables $N_f$.

The set of boundary edges that connects two patches is denoted by
\begin{align}
	\setOfEdges_P = \{\edge{i}{j}\in\setOfEdges : \RV{i} \in \patchRV{i}, \RV{j} \in \patchRV{j} \neq \patchRV{i}\},  
\end{align}
and the set of edges that connects two patches that have their variables not aligned is denoted by
\begin{align}
	\setOfEdges_C = \{\edge{i}{j} \in \setOfEdges: \RV{i} \in \patchRV{i}, \RV{j} \in \patchRV{j} \neq \patchRV{i}, \sgn \big( \pmfApprox{\RV{i}}(\RV{i}=1)-0.5 \big) \neq \sgn \big( \pmfApprox{\RV{j}}(\RV{j}=1) -0.5\big)\}.
\end{align}
Note that $\setOfEdges_P$ is constant for a specified model, whereas $\setOfEdges_C$ depends on the specific fixed point.
We consequently have $\setOfEdges_C  \leq \setOfEdges_P$ with equality for the state preserving fixed point $p$.

Our analysis is restricted to $\parameter \in \Region$ per definition: one crucial consequence is that $J>J_A(\field{})$ and that most marginals either have $\pmfApprox{\RV{i}}(\RVval{i}) \approx 1$ or $\pmfApprox{\RV{i}}(\RVval{i}) \approx 0$.
We will exploit this fact and express all marginals according to $\pmfApprox{\RV{i}}(\RVval{i}) \in \{0,1\}$ which allows us to simplify $\FB$, as defined in Section~\ref{sec:bp:variational}, according to 
\begin{align}
	\FB^p &= -N\field{}-\big(|\setOfEdges|-2|\setOfEdges_P| \big)\coupling{}{} - \SB^p\\
	\FB^m &= -\big(N_c - N_f \big)\field{}-\big(|\setOfEdges|-2|\setOfEdges_C| \big)\coupling{}{} - \SB^m.
\end{align}
Let $\Delta \SB = \SB^p-\SB^m$ be the difference in the entropy, then we can express the conditions for the state-preserving fixed point to have a lower value $\FB^p \leq \FB^m$ according to
\begin{align}
	-N\field{}-\big(|\setOfEdges|-2|\setOfEdges_P| \big)\coupling{}{} &\leq -\big(N_c - N_f \big)\field{}-\big(|\setOfEdges|-2|\setOfEdges_C| \big)\coupling{}{} +\Delta \SB \nonumber \\
	2 J(-|\setOfEdges_C|+|\setOfEdges_P|) & \leq \field{}(N-N_c+N_f)+\Delta \SB \label{eq:bound_exact}
\end{align}
Now let us express~\eqref{eq:bound_exact} for the fixed points that has all variables biased to one state, i.e., $|\setOfEdges_C|=0$. Then, the state-preserving fixed point has a lower value $\FB^p < \FB^q$ if
\begin{align}
	2 J|\setOfEdges_P| & \leq \field{}(N-N_c+N_f)+\Delta \SB \label{eq:bound_exact_extreme}
\end{align}



\newpage
\newsection{Pseudocode}{appendix:pseudocode}
We present the pseudocode for NIBP and WDBP. Removing the if then else clause in line 8 to 11 of NIBP and substituting it with $\mu_{m}^{n}$ $\leftarrow$ $\mu_{m}^{n+1}$ reduces Algorithm~\ref{alg:nibp} to RBP. The maximum number of iterations is denoted by $N_{BP} = 2.5 \cdot 10^5$ and $\epsilon = 10^{-3}$. $\text{NrOfMessages} = 2|\setOfEdges|$ denotes the overall number of messages in the graph.
\begin{algorithm}
	{
		\caption{Noise Injection Belief Propagation (NIBP)}\label{alg:nibp}}
	\LinesNumbered
	\DontPrintSemicolon
	\SetKwData{NrOfMessages}{NrOfMessages}\SetKwData{NrUpdates}{NrUpdates}\SetKwData{Aand}{\textbf{and}}
	\SetKwFunction{ComputeUpdate}{ComputeUpdate}\SetKwFunction{IndexMax}{IndexMax}\SetKwFunction{OscillationDetection}{OscillationDetection}
	\SetKwInOut{Input}{input}\SetKwInOut{Output}{output}
	\Indm  
	\Input{Graph $G = (\mathbf{X},\mathbf{E})$}
	\Output{Converged messages $\setOfMessages[n]$}
	\BlankLine
	\Indp
	initialization\;
	\BlankLine
	\For{$m \leftarrow 1$ \KwTo \NrOfMessages}
	{ 
		$\mu^{n+1}_{m}$ $\leftarrow$ \ComputeUpdate{$\setOfMessages[n]$}\; 
		$r_m \leftarrow$ $|\mu_m^{n} - \mu_m^{n+1}|$
	}
	$k\leftarrow1$\;
	\While{$n < N_{BP}$ \Aand $\max |\setOfMessages[n] - \setOfMessages[n+1]| > \epsilon$}
	{ 
		$m \leftarrow \argmax_m \vm{r}^{n}$\;
		\eIf{ \OscillationDetection{$\mu_m^{n}$,L}}
		{	$\mu_{m}^{n}$ $\leftarrow$ $\mu_{m}^{n+1} + \mathcal{N}(0,\sigma)$ \;
		}
		{ $\mu_{m}^{n}$ $\leftarrow$ $\mu_{m}^{n+1}$ \;
		}
		\For{$j \leftarrow 1$ \KwTo \NrOfMessages}
		{
			$\mu_{j}^{n+1} \leftarrow$ \ComputeUpdate{$\setOfMessages[n]$}\;
			$r_j \leftarrow |\mu_{j}^{n+1} - \mu_{j}^{n}|$
		}   
		$n = n + 1$
	}
	
\end{algorithm}
\begin{algorithm}{
		\LinesNumbered
		\DontPrintSemicolon
		\caption{Weight Decay Belief Propagation (WDBP)}\label{alg:wdbp}
		\SetKwData{NrOfMessages}{NrOfMessages}\SetKwData{NrUpdates}{NrUpdates}\SetKwData{Aand}{\textbf{and}}
		\SetKwFunction{ComputeUpdate}{ComputeUpdate}\SetKwFunction{IndexMax}{IndexMax}
		\SetKwInOut{Input}{input}\SetKwInOut{Output}{output}
		\Indm  
		\Input{Graph $\graph = (\mathbf{X},\mathbf{E})$}
		\Output{Converged messages $\setOfMessages[n]$}
		\BlankLine
		\Indp
		initialization\;
		\BlankLine
		\For{$m \leftarrow 1$ \KwTo \NrOfMessages}
		{ 
			$\mu_{m}^{n+1}$ $\leftarrow$ \ComputeUpdate{$\setOfMessages[n]$}\; 
			$r_m \leftarrow$ $|\mu_m^{n} - \mu_m^{n+1}|$\;
			\NrUpdates(m) $\leftarrow$ $1$\;
		}
		$k\leftarrow1$\;
		\While{$n < N_{BP}$ \Aand $\max |\setOfMessages[n] - \setOfMessages[n+1]| > \epsilon$}
		{ 
			$m \leftarrow \argmax_m \vm{r}^{n}$\;
			$\mu_{m}^{n}$ $\leftarrow$ $\mu_{m}^{n+1}$ \;
			\NrUpdates(m) $\leftarrow$ \NrUpdates(m) + 1 \;
			\For{$j \leftarrow 1$ \KwTo \NrOfMessages}
			{
				$\mu_{j}^{n+1} \leftarrow$ \ComputeUpdate{$\setOfMessages[n]$}\;
				$r_j \leftarrow \frac{|\mu_{j}^{n+1} - \mu_{j}^{n}|}{\textnormal{NrUpdates(j)}}$
			}   
			$n = n + 1$
		}
	}
\end{algorithm}

\emptydoublepage
\newchapter{Guide to Quotes}{quotes}

The attentive reader has probably recognized that this thesis features an opening quote for every chapter. 
It was important to me to connect with all the quotes on a personal level (be it by one's art or by a general appreciation of a person's life) 
Yet, this was not enough:
every quote must also have a close connection to the respective chapter.
This connection, however, may not always be as obvious to the reader as it was to me.
It is precisely for this reason that I will now reveal the perceived meaning of each quote, and how I see it connect to the content of the chapter.\\

Chapter~\ref{chp:intro}: We quote the commonly used English translation \emph{``Uncertainty is an uncomfortable position. But certainty is an absurd one.''} of the original line \emph{``Le doute n'est pas une état bien agréable, mais l'assurance est un état ridicule''}~\cite[p.703]{voltaire}.
This statement from Voltaire elegantly connects to the central concept of this thesis, probabilistic graphical model, as it can be interpreted as a praise for probability.\\

Chapter~\ref{chp:background}: 
\emph{``A mind is like a parachute. It doesn't work if it is not open.''} This quote by Frank Zappa seems to align nicely with a background-chapter that brings different scientific fields together into one powerful concept.
In particular, since it was Frank Zappa's ability to draw inspiration from an unimaginable wide range of musical genres that finally coalesced together and created some stunning albums. \\

Chapter~\ref{chp:bp}: Ludwig Boltzmann played a formative role in the foundational years of statistical physics and laid out the groundwork for many concepts discussed in this chapter.
Beyond that, there is undeniable truth in his statement \emph{``The stars bend like slaves to laws not decreed for them by human intelligence, but gleaned from them.''} quoted from~\cite{greenstein1991science}.
One could argue that this whole thesis adheres to this statement as we put considerable effort into  carefully studying the behavior of belief propagation to develop a clearer picture of its underlying working-principles.\\

Chapter~\ref{chp:solving}: \emph{``Inside a broken clock; Splashing the wine; With all the rain dogs; Taxi, we'd rather walk...''}~\cite{waits1985rain}.
The song figuratively refers to the rain dogs as the straying dogs that are seemingly lost after the rain has washed away all their scent.
The underlying meaning is elegantly encompassed by just another quote of Tom Waits that states: \textit{``We are buried beneath the weight of information, which is being confused with knowledge; quantity is being confused with abundance and wealth with happiness.''}.
Although arguable more general and touching a couple of interesting points of today's society, Tom Waits' statement reflects the experiences one makes when acquainting oneself with the subtleties of a new field.
When embarking into uncharted territory and trying to grasp all impressions one can only sympathize with the rain dogs that must feel similar.\\

Chapter~\ref{chp:solutionsBP}: 
\emph{``Travel makes one modest. You see what a tiny place you occupy in the world.''} \cite{flaubert1996}.
Flaubert urges the need for always taking a look at the surrounding to put things into perspective.
BP often behaves like a creature of habit, once it decides for one specific fixed point it neglects all others.
In this chapter we break up with this habit and take a global perspective on the solution space, essentially traveling through the solution space.\\

Chapter~\ref{chp:selfguided}: 
\emph{``The most significant dimension of freedom is the freedom from one's own ego - in other words, from the feeling that I am the center of everything.''}
Voytek Kurtyka is an outstanding mountaineer that played an important role in bringing the modern alpine-style climbing into the greater ranges.
More importantly, he was always honest about his inner struggle between the desire to nourish one's ego and the awareness that there is no inbred reason to take oneself to serious.
One could argue that our proposed algorithm, self-guided BP, undergoes a similar struggle.
At the beginning, all variables take up a purely ego-centered standpoint and no interactions between them take place. 
Only incrementally, as the couplings are incorporate, the variables contribute to the joint assignment of the model and put their own constraints behind.\\

Chapter~\ref{chp:accuracyBP}: 
\emph{``To the wise, life is a problem; to the fool, a solution.''}
Wise and fool are maybe strong words;
but one could have been satisfied with the preceding chapters and could have neglected the unanswered questions lingering around instead of being curious and trying to gain some further insights.
On a side-node, Marcus Aurelius was a great proponent of the philosophy of stoicism, a philosophy that is arguable beneficial when working on a thesis for many years.\\

Chapter~\ref{chp:discussion}: 
\emph{``There must be some way out of here; Said the joker to the thief``} \cite{dylan1967watchtower}.
This song puts you in the midst of a discussion, with the joker struggling with his purpose.
Even though it was inevitably to experience some struggles in the process of writing everything up, there is no easy way out and, ultimately, it is just a question of getting things done before one can appreciate the completed task.

\emptydoublepage

\bibliographystyle{alpha}
\bibliography{thesis_arxiv}


\FloatBarrier\label{end-of-document}
\end{document}